\pdfoutput=1
\documentclass[letterpaper, 10 pt]{article}
\usepackage{style}
\usepackage{cancel}
\usepackage{blindtext}

\usepackage{amsmath,amssymb}
\usepackage{hyperref}
\usepackage{amsthm}
\usepackage{arydshln}
\usepackage{natbib}
\usepackage{xcolor}
\usepackage{threeparttable}
\usepackage{tikz}
\usepackage{tikz-cd}
\usetikzlibrary{shapes.arrows}
\usetikzlibrary{shapes}
\usetikzlibrary{decorations}
\usetikzlibrary{backgrounds}
\usetikzlibrary{calc}
\usetikzlibrary{arrows}
\usetikzlibrary{automata,positioning}
\usetikzlibrary{decorations.pathreplacing,calligraphy}

\usepackage{pgfplots}
\pgfplotsset{compat=1.18} 
\usepackage{adjustbox}
\usepackage{bm}
\usepackage{caption}
\usepackage{subcaption}
\usepackage{graphicx}
\usepackage{float}  
\usepgfplotslibrary{fillbetween}
\usepgfplotslibrary{groupplots}
\usetikzlibrary{plotmarks}
\usetikzlibrary{patterns}
\pgfplotsset{
tick label style={font=\footnotesize},
label style={font=\footnotesize},
legend style={font=\footnotesize},
}

\allowdisplaybreaks
\newcommand{\E}{\mathbb{E}}

\usepackage{multirow}
\RequirePackage{tikz}
\usepackage{pifont}
\usepackage{xcolor,colortbl}

\title{\LARGE \bf How Accurately Can a Gaussian Approximate Stochastic Approximation Iterates?}

\author{
{\normalsize Shaan Ul Haque,\ Zedong Wang,\ Zixuan Zhang,\ Siva Theja Maguluri}
}
\date{}

\begin{document}

\maketitle

\setlength{\abovedisplayskip}{5pt}
\setlength{\belowdisplayskip}{5pt}

\begin{abstract}
  Stochastic approximation (SA) is a method for finding the root of an operator perturbed by noise. The focus of this paper is studying the distribution of SA iterates in finite time. In general, it is not possible to characterize the exact distribution, and therefore our goal is to find an approximation which can yield useful tail bounds. Inspired by the rich literature on the asymptotic normality of rescaled SA iterates, we approximate the pre-limit distributions by a sequence of Gaussians whose covariance is recursively defined. In particular, we establish explicit bounds on the Wasserstein-1 distance between the rescaled iterate at time $k$ and the aforementioned Gaussian for various choices of step-sizes. Since these covariances converge to the classical asymptotic limit, our analysis also provides a convergence rate for asymptotic normality as a by-product. As an immediate consequence of our bounds, we obtain tail bounds on the error of SA iterates at any time. Finally, we establish the sharpness of our rates by providing matching lower bounds and validate our findings through simulations. 

    We obtain the sharp rates by first studying the convergence rate of the discrete Ornstein–Uhlenbeck (O-U) process driven by general noise, whose stationary distribution is identical to the limiting Gaussian distribution of the rescaled SA iterates. We believe that this is of independent interest, given its connection to sampling literature. The analysis involves adapting Stein’s method for Gaussian approximation to handle the matrix weighted sum of i.i.d. random variables. The desired finite-time bounds for SA are obtained by characterizing the error dynamics between the rescaled SA iterate and the discrete time O-U process and combining it with the convergence rate of the latter process.
\end{abstract}

\section{Introduction}
Stochastic Approximation (SA) \cite{robbins1951} is a foundational algorithm spanning applications from machine learning to solving the Bellman equation in reinforcement learning (RL) \cite{Borkar2023, sutton2018, felisa2025}. At its core, SA provides a general framework for finding the root $x^*$ of an operator $F:\mathbb{R}^d\to\mathbb{R}^d$ using only noisy observations. Formally, starting from an arbitrary $x_0\in\mathbb{R}^d$, we run the following iteration:
\begin{align}\label{eq:SA_rec}
    x_{k+1}=x_k+\alpha_k(F(x_k)+M_k).
\end{align}
where $\{M_k\}_{k\geq 0}$ is a zero-mean noise sequence which can potentially be composed of multiplicative and additive components, and $\{\alpha_k\}_{k \geq 0}$ is the step-size schedule. 

The asymptotic almost sure (a.s.) convergence of SA was established in \cite{Borkar2023, benveniste2012, KushnerYin2003, yu2012, liu2025} under diminishing step-size sequence and various assumptions on noise. These works typically utilize the ordinary differential equation (ODE) viewpoint, which interprets the SA algorithm as the noisy discretization of the ODE, $\dot{x}=F(x)$ and then study the fluctuations between the SA and the ODE. A parallel line of work analyzes the asymptotic statistics of SA. The primary focus of these studies is obtaining asymptotic normality, i.e., the distributional convergence of the rescaled iterate $(x_k-x^*)/\sqrt{\alpha_k}$ to a Gaussian limit under some mild conditions \cite{Chung1954SA, Fabian1968, Fort2015, borkar2024odemethodasymptoticstatistics, hu2024central}. A stronger result within this context, analogous to the ODE method, is constructing the corresponding stochastic differential equation (SDE) whose stationary distribution coincides with the Gaussian limit and studying the error between rescaled iterate and this SDE \cite{benveniste2012}. 
These asymptotic normality behavior forms the basis for a vast amount of literature that focuses on improving the SA algorithm such as to achieve optimal asymptotic variance \cite{polyak1992acceleration, devraj2017zap, chen2021statistical, li2022root}.

However, in practice, one is interested in the behavior of the iterates at finite time. 
This discrepancy prompts the following question:
\begin{center}
    \textit{What can one say about the distribution of the rescaled SA iterates at finite time?}
\end{center}
It turns out that the exact distribution of the rescaled SA iterates at finite time is intractable in general. So, we show that the finite time rescaled SA iterates can be approximated by a Gaussian and quantify the approximation by presenting a bound on the Wasserstein-1 distance. We characterize the covariance of this Gaussian and show that it converges to the asymptotic covariance in the limit.

\subsection{Langevin Dynamics and Normality of Stochastic Approximation}
Theoretical analysis indicates that appropriately rescaled SA transitions from global deterministic ODE-like behavior to local stochastic SDE-like behavior as iterates approach $x^*$. Consequently, the limiting Gaussian depends only on the local geometry of $F(\cdot)$ around $x^*$. Thus, studying the convergence rate of normality requires us to consider an SDE with linear drift that depends upon the Jacobian of $F(\cdot)$ at $x^*$.

To this end, we turn to the sampling literature that has extensively studied SDEs and their discrete time analogs, especially within the context of Langevin dynamics \cite{robert1999monte}. For a continuously differentiable $\Phi:\mathbb{R}^d\to \mathbb{R}$, the Langevin dynamics is given by
\begin{align}\label{eq:langevin}
    dX_t=-\nabla \Phi(X_t)dt+\sqrt{2}dW_t,
\end{align}
where $\nabla$ is the gradient operator and $W_t$ is standard Brownian motion in $\mathbb{R}^d$. Under some mild technical conditions, $p(x)\propto e^{-\Phi(x)}$ (known as Gibbs distribution) is the stationary distribution of this SDE. To simulate this process, one typically uses the Euler-Maruyama discretization of Eq. \eqref{eq:langevin}, also termed as Unadjusted Langevin Algorithm (ULA), given by
\begin{align}\label{eq:ULA}
    X_{k+1}=X_k-\alpha_k\nabla \Phi(X_k)+\sqrt{2\alpha_k}Z_k,
\end{align}
where $Z_k\sim \mathcal{N}(0, I)$. A long line of literature focuses on studying its convergence rates for diminishing step-size schedule \cite{durmus2017nonasymptotic, fang2019multivariate, pages2023unadjusted, li2025convergence}.

Returning to SA setting, the local linearization of $F(\cdot)$ around $x^*$ leads us to the well-known O-U process, an SDE with a linear drift term
\begin{align}
    dX_t = J^{(\alpha, \xi)} X_t dt + \Sigma^{1/2}_bdW_t,
\end{align}
where $J^{(\alpha, \xi)}$ is some matrix depending on the Jacobian of the operator at $x^*$ and $\Sigma_b$ is the asymptotic covariance of $M_k$. To understand the finite-time behavior, we consider the SA inspired discretized version:
\begin{align}\label{eq:z_k}
    \hat{z}_{k+1}=\hat{z}_k+\alpha_kJ^{(\alpha, \xi)}_k\hat{z}_k+\sqrt{\alpha_k}M_k,
\end{align}
where $J^{(\alpha, \xi)}_k$ represents $\mathcal{J}$ plus a step-size dependent correction term that goes to zero in the limit. We note that the key difference between Eq. \eqref{eq:z_k} and Eq. \eqref{eq:ULA} is the noise term which is always standard Gaussian for ULA while the noise perturbing the SA \eqref{eq:SA_rec} can be arbitrary. Due to this distinction, we christen this process ``Discrete O-U with Generalized noise'' or DOUG. Furthermore, DOUG is more general than ULA since it allows for non-symmetric matrices in the drift which commonly appear in SA (recall that for ULA, $\mathcal{J}$ can only be symmetric since it Hessian of a function). Nevertheless, it is simultaneously more restrictive, as it assumes the drift term to be strictly linear.

Therefore, a crucial intermediate step for understanding the finite-time distributional behavior of SA is studying DOUG which may be of independent interest given its connection to sampling. We first study the convergence rate for the DOUG iterates to the Gaussian limit whose covariance matrix is identical to the asymptotic covariance $\Sigma$ of the rescaled SA and is given by the unique solution to the Lyapunov equation
\begin{align*}
    J^{(\alpha, \xi)})\Sigma +\Sigma (J^{(\alpha, \xi)})^T+\Sigma_b&=0.
\end{align*}

\subsection{Main Contributions}
Now, we outline our main contributions in this work:

\textbf{Quantifying Normality for SA:} We first recursively define a sequence of covariance matrix which characterize the finite-time behavior of rescaled SA more precisely than the asymptotic covariance. Furthermore, we show that this sequence converges to the asymptotic covariance in the limit. We leverage this sequence of covariance to construct a Gaussian approximation for the non-asymptotic distribution and establish a $\tilde{\mathcal{O}}(\sqrt{\alpha_k})$ (where $\tilde{\mathcal{O}}(\cdot)$ contains polynomial log factors) convergence rate under Wasserstein-1 distance. We obtain this result by decomposing the analysis into two components: an error dynamics between the rescaled iterate and DOUG and the distributional convergence of DOUG itself. We provide a matching lower bounds for our results confirming their sharpness. Finally, we empirically validate the tightness of our theoretical bounds and distributional approximation across various step-size schedules via numerical simulations.

\textbf{Finite-time analysis of the DOUG via Stein’s method:} We obtain the aforementioned results by studying the distributional convergence of DOUG \eqref{eq:z_k} to the time-varying Gaussian. We first analyze DOUG with $M_k$ that only has additive i.i.d. component and establish a $\tilde{\mathcal{O}}(\sqrt{\alpha_k})$ rate of convergence in Wasserstein-1 distance by using Stein's method for normal approximation \cite{stein1972bound}. This method was developed to establish the convergence rate in the classical central limit theorem (CLT). Unlike CLT, we have matrix weighted sum of the i.i.d. random variables, and so, we adapt the Stein's proof machinery to handle these weighted sums. To eventually handle the general noise case which consists of the multiplicative martingale difference, we construct a careful coupling between the general DOUG and DOUG with purely additive noise, and show that the error terms are higher order. Finally, we establish matching lower bounds demonstrating that this convergence rate is indeed sharp.

\textbf{Tail deviations for SA:} As an immediate corollary of the Wasserstien bound, we obtain upper bounds on the  tail behavior of  SA at any finite time. More formally, for any unit vector $\mathfrak{u}\in \mathbb{R}^d$, we show that the tail probability $P(\langle x_k-x^*, \mathfrak{u}\rangle > \sqrt{\alpha_k}a)$ is given by Gaussian tail and an additional weaker tail of the form $\tilde{\mathcal{O}}(\alpha_k^{1/4})/a$.

\textbf{Finite-time bounds for constant step-size SA:} For the constant step-size, we show that the Wasserstein-1 distance is upper bounded by an exponentially decaying transient term and an $\tilde{\mathcal{O}}(\sqrt{\alpha})$ steady-state term. In the context of stochastic gradient descent (SGD), these upper bound are structurally similar (up to log factors) to the Kolmogorov-Smirnov distance bound obtained in \cite{WeiLiLouWu2025GaussianApprox}. As pointed out in \cite{WeiLiLouWu2025GaussianApprox}, these bounds complement the asymptotic results of similar nature in \cite{Zaiwei2021, wei2025online, zedong2026icml}.

\section{Related Literature}\label{sec:lit_survey}
\paragraph{Asymptotic Normality of diminishing step-size SA:} Early results on asymptotic normality for SA include \cite{Chung1954SA} for one-dimensional SA, \cite{Sacks1958} for broad classes of stochastic approximation schemes with explicit variance formulas, and \cite{Fabian1968} for Lyapunov-style characterizations of the limit covariance. These CLT-type theorems and many variants are treated systematically in \cite{Borkar2023, benveniste2012,KushnerYin2003}. Subsequent refinements include the almost sure CLT in \cite{Pelletier1999}, and the Markovian-noise extension in \cite{Fort2015}. 

\paragraph{Asymptotic Normality for constant step-size SA:}
Prior works for constant step-size SA typically adopt an asymptotic approach. Specifically, it involves fixing a constant step-size, analyzing the induced ergodic Markov chain at stationarity, and then take a limiting on the step-size to obtain a tractable approximation of the steady-state fluctuations. For SGD, these characterizations were established in \cite{dieuleveut2020bridging, wei2025online}. Similarly, in the context of contractive SA, \cite{Zaiwei2021, zhang2024prelimit} established a Gaussian weak limits. In contrast, Theorem \ref{thm_main:main_thm} provides explicit Wasserstein error bounds with clear parameter dependence, and in particular covers the constant step-size regime.

\paragraph{Stein's method for Gaussian approximation:}
Stein's method for Gaussian approximation was first introduced in \cite{stein1972bound}, where the idea is to characterize the reference measure via a Stein identity and then bound the associated solution/operator terms. This framework was later extended to random vector case in \cite{Gallouet2018, fang2019multivariate}. A popular approach within this literature involves probabilistic couplings which were explored in \cite{arratia2018sizebias, chatterjee2010newapproachstrongembeddings, Ross2011}. While powerful, coupling-based approaches typically exploit problem-specific structure. In our diminishing step-size SA setting, such tailored probabilistic constructions are generally unavailable, which motivates a Stein's approach driven instead by the algorithmic dynamics and the associated operators.

\paragraph{Stein's method for distributional convergence of SA:} Recent years have witnessed growing interest for obtaining finite-time distributional rate of convergence for averaged SA iterates. These CLT-type results specifically characterize the rate at which the rescaled average, $\sum_{i=0}^{k-1}(x_i-x^*)/\sqrt{k}$, converges to the Gaussian limit. For SGD, these rates were established in \cite{yu2020analysisconstantstepsize} while analogous results for linear SA were established in \cite{srikant2024rates, kong2025nonasymptotic}. Furthermore, \cite{anastasiou2024wasserstein} extended this analysis to the estimation of the covariance matrices in time series procedures. Notably, our results characterize the convergence of the distribution for the rescaled SA itself and do not necessitate the additional averaging step. A contemporary work \cite{kong2026finitesamplewassersteinerrorbounds} also examines finite-time convergence rates for Wasserstein-1 distance with Markovian noise and Wasserstein-$p$ ($p>1$) distance with i.i.d. noise in a similar SA setup as considered in our paper. However, in comparison to the sharper rates in our analysis, the rate $\mathcal{O}(\alpha_k^{1/6})$ obtained in their work is sub-optimal. Furthermore, while our proof technique allows us to address constant step-size SA as a special case, their framework strictly restricts to the diminishing step-size schedule. Although both our work and \cite{kong2026finitesamplewassersteinerrorbounds} employ a similar decomposition of SA into a discrete time O-U process and the residual error, the construction of this discrete process is fundamentally different.

\paragraph{Finite-time Convergence for Unadjusted Langevin Dynamics:}
Finite-time, non-asymptotic convergence results in both total variation and Wasserstein metrics for the Euler discretization of overdamped Langevin diffusion with diminishing step-size schedules were provided in \cite{durmus2017nonasymptotic, durmus2019high}. These results were subsequently extended under various generalizations in \cite{fang2019multivariate, chen2023approximationinvariantmeasurestable, pages2023unadjusted}. In contrast to our DOUG framework, these rates are established under the assumption that the noise in the algorithm is specifically Gaussian. Whereas, our analysis is distribution free and only requires finite third moment.

\section{Problem Setup}
In this section, we will set up the notations and specify the main set of assumption in the studying the distributional convergence of iteration \eqref{eq:SA_rec}. Throughout the paper, we will denote $\ell_2$ norm as $\|x\|$ and $W$-weighted norm as $\|x\|_W:=\sqrt{x^TWx}$, where $x\in \mathbb{R}^d$ and $W\in \mathbb{R}^{d\times d}$ is a positive definite symmetric matrix. Furthermore, we denote $\|U\|_W$ as the weighted operator norm for any square matrix $U$, i.e., $\|U\|_W:=\sup_{x:\|x\|_W=1}\|Ux\|_W$. In what follows, we will denote $Z$ as the standard normal random variable in $\mathbb{R}^d$. For any two $d$-dimensional random variables $X$ and $Y$, we define the Wasserstein-1 distance between as
\begin{align}\label{eq:wass_dist}
    d_{\mathcal{W}}(X, Y):=\inf_{\gamma\in \Gamma(\nu_X, \nu_Y)}\E_{(X, Y)\sim \gamma}[\|X-Y\|],
\end{align}
where $\nu_X$ and $\nu_Y$ are the distribution for $X$ and $Y$, respectively and $\Gamma(\nu_X, \nu_Y)$ denotes the set of all couplings between $X$ and $Y$.  We will work under the following set of assumptions.

\begin{assumption}\label{assump:iterate}
    There exists a Lyapunov function $\Phi(x)$ and some constants $\gamma, L_s, l, u>0$, such that the following relations hold
    \begin{align}
        &\langle \nabla \Phi(x-x^*), F(x)\rangle\leq -2\gamma\Phi(x-x^*),\tag{Negative drift}\\
        &\Phi(y)\leq \Phi(x)+\langle \nabla \Phi(x), y-x\rangle+\frac{L_s}{2}\|x-y\|_s^2, \tag{Smoothness w.r.t to $\|\cdot\|_s$}\\
        &l\Phi(x)\leq \|x\|^2\leq u\Phi(x).\tag{Norm equivalence}
    \end{align}
\end{assumption}
\begin{remark}
    The above relation ensures the convergence and the overall stability of the SA algorithm \eqref{eq:SA_rec}. There is an extensive body of work that constructs such a function to establish finite-time mean square error (MSE) bounds for SA under various general settings \cite{moulines2011non, srikant2019, chen2021finitesampleanalysisstochasticapproximation, chen2023, haque2025stochasticapproximationunboundedmarkovian}.
    Existence of a such a smooth function immediately implies MSE bounds of the form $\E[\|x_k-x^*\|^2]\leq \mathcal{O}(\alpha_k)$ which will be extremely useful for our analysis (see Appendix \ref{sec:proof_main_thm} for the derivation).
\end{remark}

\begin{assumption}\label{assump:operator}
    The operator $F(\cdot)$ is continuously differentiable at $x^*$ with the Jacobian given by $J_F$. Furthermore, the operator admits the following local linear approximation:
    \begin{align*}
        F(x)&=J_F(x-x^*)+R(x)~~ \forall x\in \mathbb{B}_{r}(x^*)
    \end{align*}
    where $\mathbb{B}_{r}(x^*)$ is an $\ell_2$ ball in $\mathbb{R}^d$ of radius $r$ centered at $x^*$ and $\|R(x)\|\leq C_r\|x-x^*\|^{1+\delta}$ with $\delta\in (0,1]$. The eigenvalues of $J_F$ have strictly negative real parts (Hurwitz matrix). Furthermore, the operator $F(x)$ is Lipschitz, i.e., there exists a constant $L_F>0$ such that
    \begin{align*}
        \|F(x)-F(y)\|\leq L_F\|x-y\|.
    \end{align*}
\end{assumption}
\begin{remark}
    To our best knowledge, prior works on the asymptotic normality of SA typically require differentiability up to second order near the optimal point $x^*$ such as in \cite{Chung1954SA, benveniste2012, Fort2015, borkar2024odemethodasymptoticstatistics}.
    By Taylor's remainder theorem, this condition immediately implies $\delta=1$. In contrast, our assumption accommodates $\delta<1$ which allows for lower-order approximation of the operator. Moreover, in Proposition \ref{prop:global_linear}, we show that local linear approximation combined with the Lipschitz property of the operator implies global linear approximation which will be crucial to our analysis. The Hurwitz condition is standard in literature and necessary for asymptotic stability of the algorithm.
\end{remark}

\begin{assumption}\label{assump:noise}
    Let $\{v_k\}_{k\geq 0}$ be a sequence of (possibly non-i.i.d.) random variables  and let $\{w_k\}_{k\geq 0}$ be an i.i.d. sequence. Define $\mathcal{F}_k=\sigma(x_0, v_0, w_0, v_1, w_1 \dots, v_{k-1}, w_{k-1})$ as the increasing $\sigma$-field generated by the sequences. Then, the noise sequence $\{M_k\}_{k\geq 0}$ can be decomposed as the sum of two components $M_k=A(v_k, x_k)+b(w_k)$ where the noise terms satisfy the following conditions:
    \begin{enumerate}
        \item $\E[A(v_k, x_k)|\mathcal{F}_k]=0$ and $\E[b(w_k)]=0$.
        \item For some constant $A_1$, $\E[\|A(v_k, x_k)\|^2|\mathcal{F}_k]\leq A_1\|x_k-x^*\|^2$ almost surely.
        \item $\E[b(w_k)b(w_k)^T]=\Sigma_b$ and $\E[\|b(w_k)\|^\mu]=B_\mu<\infty$ for all $\mu\in [0, 3]$. Also, let $B_{max}^{(n)}=\max_{\mu\in [0, n]}B_\mu$.
    \end{enumerate}
\end{assumption} 
\begin{remark}
     These conditions hold for cases where the noise grows linearly with the iterate as captured by the multiplicative term $A(v_k, x_k)$. As a concrete example, for purely multiplicative case where $A(v_k, x_k)=A(v_k)x_k$ with $\E[A(v_k)|\mathcal{F}_k]=0$ and $\E[\|A(v_k)\|^2|\mathcal{F}_k]\leq A_1$. The bounded third moment ($B_3<\infty$) for the additive i.i.d. noise $b(w_k)$ is consistent with the prior literature for Berry-Esseen theorem and quantitative CLT-style results \cite{berry1941accuracy, rollin2018quantitative, Gallouet2018}.
\end{remark}

These assumptions are motivated by practical applications of interest such as RL and SGD. Specifically, they are readily satisfied for generative and synchronous RL settings as well as for SGD with i.i.d. noise \cite{chen2022finite, chen2021finitesampleanalysisstochasticapproximation, haque2025stochasticapproximationunboundedmarkovian,zhang2024constant}. To go beyond, one only needs to relax the noise assumption to Markovian.

\begin{assumption}\label{assump:step-size}
    Finally, we assume that the step-size sequence is of the following form: $\alpha_k=\alpha/(k+K)^\xi$, where $\alpha>0$, $K\geq 2$, and $\xi\in [0,1]$. For ease of exposition, we will assume that $\alpha_k\leq 1$ for all $k\geq 0$ and any choice of $\xi$.
\end{assumption}

\section{Normality of Stochastic Approximation}
Now, we are ready to state our main results. We first provide non-asymptotic convergence rate to a time-varying Gaussian for the rescaled SA. Recall that the SA iteration is given by:
\begin{align}\label{eq:SA_rec2}
    x_{k+1}=x_k+\alpha_k(F(x_k)+M_k).
\end{align}
Denote $y_k=(x_k-x^*)/\sqrt{\alpha_k}$ and define the following sequence of matrices
\begin{equation}\label{eq:J_k}
    \begin{split}
        J^{(\alpha, \xi)}_k&=J_F+\frac{\alpha^{-1}\xi}{2(k+K)^{1-\xi}}I\\
        &\stackrel{k\uparrow \infty}{\to} J^{(\alpha, \xi)}: = J_F+\frac{\alpha^{-1}\mathbbm{1}_{\xi=1}}{2}I.
    \end{split}
\end{equation}
where $\mathbbm{1}_{\cdot}$ is the indicator function. It is well-known that the asymptotic covariance is sensitive to the choice of step-size \cite{Chung1954SA, Fabian1968, Borkar2008, benveniste2012}. More specifically, it is given by the solution to the Lyapunov equation
\begin{align}\label{eq:lyap_eq}
    J^{(\alpha, \xi)}\Sigma^{(\alpha, \xi)}+\Sigma^{(\alpha, \xi)}(J^{(\alpha, \xi)})^T+\Sigma_b=0.
\end{align}
Note that $J^{(\alpha, \xi)}_k\stackrel{k\uparrow \infty}{\to} J_F$ for $\xi<1$ meaning that $\Sigma^{(\alpha, \xi)}$ is independent of both $\alpha$ and $\xi$ in this regime. However, a phase transition occurs in the drift dynamics when $\xi=1$ where the rate of decay for the step-size becomes comparable to the negative drift term $\alpha_kJ_Fy_k$. To address this elegantly in our analysis and get a tight rate of convergence in all regimes, we have introduced the sequence $\{J_k^{(\alpha, \xi)}\}_{k\geq 0}$. For ease of notation, we will denote $\Sigma^{(\alpha, \xi)}=\Sigma$ for $\xi<1$ and $\Sigma^{(\alpha, 1)}=\Sigma^{(\alpha)}$ for $\xi=1$ in the following.

To understand the finite time Gaussian approximation of rescaled iterates $y_k$, we define a sequence of covariance matrices $\{\Sigma^{(\alpha, \xi)}_k\}_{k\geq 0}$ governed by the following recursion:
\begin{align}\label{eq:sigma_k_eq}
    \Sigma_{k+1}^{(\alpha, \xi)}&=(I + \alpha_k J^{(\alpha, \xi)}_k)\Sigma_k^{(\alpha, \xi)}(I + \alpha_k J^{(\alpha, \xi)}_k)^T+\alpha_k\Sigma_b.
\end{align}
where $\Sigma_0=\E[y_0y_0^T]$ and $\Sigma_k^{(\alpha, \xi)}\to \Sigma^{(\alpha, \xi)}$ (see Corollary \ref{cor:asymp_gaussian_rate}). The rationale behind the introduction of this specific sequence is the observation that $y_k$ is given by the following update rule
\begin{align*}
     y_{k+1}&=(I+\alpha_kJ^{(\alpha, \xi)}_k)y_k+\sqrt{\alpha_k}M_k+\mathfrak{R}_k.
\end{align*}
where $\mathfrak{R}_k$ is a higher order remainder term (see proof sketch in Appendix \ref{sec:proof_sketch} for more details). Recall that $M_k=A(v_k, x_k)+b(w_k)$ and the multiplicative component satisfies $\E[\|A(v_k, x_k)\|^2]\leq A_1\E[\|x_k-x^*\|^2]$. Since Assumption \ref{assump:iterate} implies $\E[\|x_k-x^*\|^2]\leq \mathcal{O}(\alpha_k)\to 0$ (see Appendix \ref{sec:proof_main_thm}), the dominant behavior of the covariance of $y_k$ is governed by the additive component $b(w_k)$. In particular, the covariance of $y_k$ evolves approximately as
\begin{align*}
    \E[y_{k+1}y_{k+1}^T]&\approx(I + \alpha_k J^{(\alpha, \xi)}_k)\E[y_ky_k^T](I + \alpha_k J^{(\alpha, \xi)}_k)^T+\alpha_k\Sigma_b.
\end{align*}
Therefore, the above discussion suggests that to neatly approximate the finite-time distribution of $y_k$ with a Gaussian distribution, one must use $\mathcal{N}(0, \Sigma_k^{(\alpha, \xi)})$ as the reference measure. The higher order terms can be bounded separately using tools from MSE bounds for SA analysis.

Finally, to study the convergence rate, define $V$ as the solution to the following Lyapunov equation
\begin{align}\label{eq:lyap_gen_<1_I}
    J_FV+VJ_F^T+I=0.
\end{align}
Let $\iota_V=1/(4\lambda^{max}_V)$ where $\lambda^{max}_V$ denote the maximum eigenvalue of $V$. Furthermore, define a sequence $\{\tau_k\}_{k\geq 0}$ as $$\tau_k = 
\begin{cases} 
1, & \text{if } 2\iota_V \neq 3\gamma, \\
(k+K)^{1-\xi}, & \text{if } 2\iota_V = 3\gamma \text{ and } \xi < 1\\
\log(k+K), & \text{if } 2\iota_V = 3\gamma \text{ and } \xi = 1.
\end{cases}$$ Now we present the following convergence bound, whose proof is given in Appendix \ref{sec:proof_main_thm}.

\begin{theorem}\label{thm_main:main_thm}
Let $\eta=\min(\iota_V/2, 3\gamma/4)$ and $\mathcal{E}_0=\E[\|y_0\|^2]$. Then, under Assumptions \ref{assump:iterate}-\ref{assump:step-size}, the sequence $\{y_k\}_{k\geq 0}$ given by rescaling the iterate $x_k$ in update equation \eqref{eq:SA_rec2} satisfies the following bounds for all $k\geq 1$.
    \begin{enumerate}
        \item When $\xi=0$, and $\alpha$ is small enough, we have{\normalfont :}
        \begin{align*}
            d_{\mathcal{W}}(y_k, (\Sigma_k^{(\alpha, 0)})^{1/2}Z)&\leq \mathcal{O}\left(\alpha^{\frac{\delta}{2}}\right)+\mathcal{O}\left(\sqrt{\alpha}\log\left(\frac{1}{\alpha}\right)\right)+\mathcal{O}\left(\mathcal{E}_0\tau_ke^{-\eta\alpha k}\right).
        \end{align*}
        \item When $\xi\in (0, 1)$ and $K$ is large enough, we have{\normalfont :}
        \begin{align*}
            d_{\mathcal{W}}(y_k, (\Sigma_k^{(\alpha, \xi)})^{1/2}Z)&\leq 
            \mathcal{O}\left((k+K)^{-\frac{\xi\delta}{2}}\right)+\mathcal{O}\left(\frac{\log(k+K)}{(k+K)^{\frac{\xi}{2}}}\right)+\mathcal{O}\left(\mathcal{E}_0\tau_ke^{-\frac{\eta\alpha}{1-\xi}\left((k+K)^{1-\xi}\right)}\right).
        \end{align*}
        \item When $\xi=1$, $\iota_V\alpha>2$, $3\delta\gamma\alpha>2$, and $K$ is large enough, we have {\normalfont :}
        \begin{align*}
            d_{\mathcal{W}}(y_{k}, (\Sigma_k^{(\alpha, 1)})^{1/2}Z)\leq \mathcal{O}\left((k+K)^{-\frac{\delta}{2}}\right)+\mathcal{O}\left(\frac{\log(k+K)}{\sqrt{k+K}}\right)+\mathcal{O}\left(\frac{\mathcal{E}_0\tau_k}{(k+K)^{\eta\alpha}}\right).
        \end{align*}
    \end{enumerate}
\end{theorem}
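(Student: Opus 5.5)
The plan is to use the decomposition announced in the introduction: compare $y_k$ with a discrete O-U process driven by the same noise, and separately show that this process is close to $\mathcal{N}(0,\Sigma_k^{(\alpha,\xi)})$. The triangle inequality for $d_{\mathcal{W}}$ then gives
\[
d_{\mathcal{W}}\bigl(y_k,(\Sigma_k^{(\alpha,\xi)})^{1/2}Z\bigr)\le \E\|y_k-\hat z_k\|+d_{\mathcal{W}}\bigl(\hat z_k,(\Sigma_k^{(\alpha,\xi)})^{1/2}Z\bigr).
\]
Here $\hat z_k$ is the DOUG iterate $\hat z_{k+1}=(I+\alpha_kJ^{(\alpha,\xi)}_k)\hat z_k+\sqrt{\alpha_k}M_k$ with $\hat z_0=y_0$, built on the same probability space as the SA iterates.

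\textbf{Preliminary: the rescaled recursion and MSE bounds.} Divide the SA update by $\sqrt{\alpha_{k+1}}$ and write $\sqrt{\alpha_k/\alpha_{k+1}}=1+\alpha_k\frac{\alpha^{-1}\xi}{2(k+K)^{1-\xi}}+O(\alpha_k^2)$. Using the global linear approximation of Proposition~\ref{prop:global_linear}, $\|F(x)-J_F(x-x^*)\|\lesssim\|x-x^*\|^{1+\delta}$, this gives
\[
y_{k+1}=(I+\alpha_kJ^{(\alpha,\xi)}_k)y_k+\sqrt{\alpha_k}M_k+\mathfrak R_k ,
\]
with $\E\|\mathfrak R_k\|\lesssim \alpha_k^{1/2}\E\|x_k-x^*\|^{1+\delta}+\alpha_k^{3/2}\E\|y_k\|+\dots$. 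From the Lyapunov function of Assumption~\ref{assump:iterate} I will derive the standard bound $\E\|x_k-x^*\|^2\lesssim \alpha_k+\mathcal E_0\alpha_0 e^{-2\gamma\sum\alpha_i}$, together with a $(2+\delta)$-th moment analogue (or use Jensen). This yields $\E\|\mathfrak R_k\|\lesssim\alpha_k^{1+\delta/2}$ plus a transient.

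\textbf{Step 1 (error dynamics).} Set $e_k=y_k-\hat z_k$, so that $e_{k+1}=(I+\alpha_kJ_k^{(\alpha,\xi)})e_k+\mathfrak R_k$. For large $k$ (respectively small $\alpha$, large $K$), the $V$-weighted norm from \eqref{eq:lyap_gen_<1_I} gives $\|I+\alpha_kJ^{(\alpha,\xi)}_k\|_V\le 1-\iota_V\alpha_k$; when $\xi=1$ the extra $\frac{1}{2(k+K)}$ term is absorbed using $\iota_V\alpha>2$. Unrolling, $\E\|e_k\|\lesssim\sum_i\prod_{j>i}(1-\iota_V\alpha_j)\,\E\|\mathfrak R_i\|$. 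A standard discrete-convolution lemma then gives $O(\alpha_k^{\delta/2})$, with transient parts of order $\mathcal E_0 e^{-\eta\sum\alpha_j}$. The factor $\tau_k$ appears exactly when the two decay rates $\iota_V/2$ and $3\gamma/4$ coincide, which produces a sum of equal exponentials. The condition $3\delta\gamma\alpha>2$ in the case $\xi=1$ ensures that the moment decay beats the polynomial weights.

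\textbf{Step 2 (DOUG with additive noise via Stein).} First replace $M_k$ by $b(w_k)$, giving a process $\tilde z_k$. The difference $\hat z_k-\tilde z_k=\sum_i\Pi_{i+1:k}\sqrt{\alpha_i}A(v_i,x_i)$ is a martingale transform, so its $L^2$ norm is at most $(\sum_i\|\Pi_{i+1:k}\|^2\alpha_iA_1\E\|x_i-x^*\|^2)^{1/2}\lesssim\sqrt{\alpha_k}$. The process $\tilde z_k=\Pi_{0:k}y_0+\sum_i\sqrt{\alpha_i}\Pi_{i+1:k}b(w_i)$ is a matrix-weighted sum of independent terms, and its covariance is exactly $\Sigma_k^{(\alpha,\xi)}$ when $y_0$ is handled consistently. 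Initial mismatch can be treated by coupling the $y_0$ part, since it is a transient. I then apply multivariate Stein's method with the solution $f$ of $\Delta f-x\cdot\nabla f=h(\Sigma^{1/2}\cdot)-\E h$ for $1$-Lipschitz $h$, using a leave-one-out Taylor expansion for each summand $\xi_i=\sqrt{\alpha_i}\Pi_{i+1:k}b(w_i)$. The third-order remainder is $\sum_i\E\|\xi_i\|^3\cdot\|\nabla^3 f\|$. Since $\nabla^2 f$ is only log-Lipschitz for Lipschitz test functions, I use the smoothing argument of Gallou\"et et al., which produces the $\log$ factor. Because $\sum_i\alpha_i^{3/2}\|\Pi_{i+1:k}\|^3\lesssim\sqrt{\alpha_k}$, this gives $O(\sqrt{\alpha_k}\log(1/\alpha_k))$, with $\Sigma_k$ bounded below uniformly.

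\textbf{Main obstacle.} The hardest part is Step 2. Each weighted summand carries its own matrix, so the Stein identity must be adapted to non-identically distributed, anisotropic summands. I must also control $\lambda_{\min}(\Sigma_k)$ uniformly, including at small $k$, where it can degenerate if $\Sigma_0=0$. There I would bound $d_{\mathcal W}$ trivially by $\E\|\tilde z_k\|+\E\|\Sigma_k^{1/2}Z\|$ over the initial window, which is absorbed into the constants or the transient. Combining Steps 1 and 2 and specializing $\sum\alpha_j$ to the three step-size regimes yields the three stated bounds.
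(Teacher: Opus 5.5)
Your proposal is correct and follows the paper's proof essentially step for step. It uses the same three-way triangle inequality with all processes driven by a common noise stream, and the same $V$-norm contraction of $y_k-\hat z_k$ driven by the $\|x_k-x^*\|^{1+\delta}$ remainder and the step-size correction terms, with $3\delta\gamma\alpha>2$ entering exactly where you say. The remaining ingredients also match: the martingale-orthogonality $L^2$ bound for the multiplicative part, and leave-one-out Stein with the Gallou\"et et al.\ H\"older bound on $\nabla^2 f$, optimized over $\beta$ to produce the logarithm.

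The only difference is cosmetic. You start DOUG at $y_0$ and couple away the initial part. The paper instead starts it at $0$, lets $y_0$ enter the error recursion as a transient, and compares $\mathcal{N}(0,\tilde\Sigma_k)$ with $\mathcal{N}(0,\Sigma_k)$ through a Wasserstein-2 estimate. It handles the small-$\lambda_{\min}$ issue you raise by taking the Stein constant as a maximum over $k\geq 1$ rather than by a separate initial-window argument.
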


For simplicity, we stated our theorem in $\mathcal{O}(\cdot)$ notation. A detailed version with explicit upper bounds including multiple higher order terms is presented in Appendix \ref{sec:proof_main_thm}, along with the proof.

The overall convergence rate is governed by the first two terms. In particular, the rate of convergence depends on the smoothness of the operator, i.e., the parameter $\delta$. If
 $\delta<1$, then the error dynamics between $y_k$ and the DOUG process (the first term) is dominant whereas for $\delta=1$, the convergence rate of the DOUG process itself (the second term) becomes the bottleneck. The third term is the transient term that depends on the initial error and decays rapidly. The decay rate is highly sensitive to the choice of step-size: exponential for constant step-size, stretched exponential (i.e., $\exp\left(-c(k+K)^{1-\xi}\right)$) for $\xi\in (0, 1)$, and polynomial for $\xi=1$. This decomposition of error into a dominant term and a transient term depending on the initial error is similar to mean square error bounds established in 
\cite{chen2021finitesampleanalysisstochasticapproximation, chen2023, haque2025stochasticapproximationunboundedmarkovian}. 

We highlight that these bounds further validate the dual ODE and SDE like behavior of SA.
When the iterates $x_k$ are far off from $x^*$, 
the stochastic fluctuations due to SDE are negligible and the ODE effect dominates driving the SA iteration towards $x^*$ (manifested by the negative drift rate $\gamma$). Once in the vicinity of $x^*$, the local linear or the SDE behavior becomes dominant manifested by $\iota_V$. As a result of the two competing transient effects, the ultimate rate of decay in higher order terms is governed by weaker of the two negative drift rates, $\eta=\min(\iota_V/2, 3\gamma/4)$.

Note that while the transient term provides an upper bound, the rate $\eta$ which is determined by $\gamma$ and $\iota_V$ is not sharp in general. The rate $\gamma$ comes from the choice of the Lyapunov function in Assumption \ref{assump:iterate}, which in general need not be tight. One can get the tightest possible ODE transient rate for a given problem by picking the optimal Lypaunov function. Similarly, the SDE transient rate $\iota_V$ is also not tight. While, we use the Lyapunov equation Eq. \eqref{eq:lyap_gen_<1_I} for the analysis here, one may use a more general Lyapunov equation
\begin{align*}
    J_FV+VJ_F^T+S=0.
\end{align*}
for any positive definite matrix $S$, and different choices of $S$ can give different rates. However, it is not clear if this family of equations provides the tightest possible drift rates. Nevertheless, for the purpose of finite-time analysis with a general Hurwitz matrix driving the SA recursion, using the solution of the Lyapunov equation is sufficient for obtaining non-asymptotic convergence rate. 

\subsection{Rate of convergence for asymptotic Normality}
By characterizing the convergence of $\Sigma^{(\alpha, \xi)}_k$ to $\Sigma^{(\alpha, \xi)}$ and using triangle inequality, we establish the overall convergence rate to the limiting Gaussian in the following corollary (proof in Appendix \ref{sec:pf_asymp_gaussian_rate}).
\begin{corollary}\label{cor:asymp_gaussian_rate}
   Let $\eta=\min(\iota_V/2, 3\gamma/4)$ and $\mathcal{E}_0=\E[\|y_0\|^2]$. Then, under Assumptions \ref{assump:iterate}-\ref{assump:step-size}, the sequence $\{y_k\}_{k\geq 0}$ given by rescaling the iterate $x_k$ in update equation \eqref{eq:SA_rec2} satisfies the following bounds for all $k\geq 1$.
    \begin{enumerate}
        \item When $\xi=0$, and $\alpha$ is small enough, we have{\normalfont :}
        \begin{align*}
            d_{\mathcal{W}}(y_k, \Sigma^{1/2}Z)&\leq \mathcal{O}\left(\alpha^{\frac{\delta}{2}}\right)+\mathcal{O}\left(\sqrt{\alpha}\log\left(\frac{1}{\alpha}\right)\right)+\mathcal{O}\left(\mathcal{E}_0\tau_ke^{-\eta\alpha k}\right)+\mathcal{O}(\alpha).
        \end{align*}
        \item When $\xi\in (0, 1)$ and $K$ is large enough, we have{\normalfont :}
        \begin{align*}
            d_{\mathcal{W}}(y_k, \Sigma^{1/2}Z)&\leq 
            \mathcal{O}\left(\frac{1}{(k+K)^{\frac{\xi\delta}{2}}}\right)+\mathcal{O}\left(\frac{\log(k+K)}{(k+K)^{\frac{\xi}{2}}}\right)+\mathcal{O}\left(\mathcal{E}_0\tau_ke^{-\frac{\eta\alpha}{1-\xi}\left((k+K)^{1-\xi}\right)}\right)\\
            &~+\mathcal{O}\left(\max\left(\frac{1}{(k+K)^\xi}, \frac{1}{(k+K)^{1-\xi}}\right)\right).
        \end{align*}
        \item When $\xi=1$, $\iota_V\alpha>2$, $3\delta\gamma\alpha>2$, and $K$ is large enough, we have {\normalfont :}
        \begin{align*}
            d_{\mathcal{W}}(y_{k}, (\Sigma^{(\alpha)})^{1/2}Z)\leq \mathcal{O}\left(\frac{1}{(k+K)^{\frac{\delta}{2}}}\right)+\mathcal{O}\left(\frac{\log(k+K)}{\sqrt{k+K}}\right)+\mathcal{O}\left(\frac{\mathcal{E}_0\tau_k}{(k+K)^{\eta\alpha}}\right)+\mathcal{O}\left(\frac{1}{k+K}\right).
        \end{align*}
    \end{enumerate}
\end{corollary}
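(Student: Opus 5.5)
By the triangle inequality for $d_{\mathcal{W}}$,
\begin{align*}
d_{\mathcal{W}}(y_k,(\Sigma^{(\alpha,\xi)})^{1/2}Z)\leq d_{\mathcal{W}}(y_k,(\Sigma_k^{(\alpha,\xi)})^{1/2}Z)+d_{\mathcal{W}}((\Sigma_k^{(\alpha,\xi)})^{1/2}Z,(\Sigma^{(\alpha,\xi)})^{1/2}Z).
\end{align*}
Theorem \ref{thm_main:main_thm} bounds the first term and gives the first three terms in each case. It remains to bound the distance between the two centered Gaussians. I will couple them through the same $Z$, which gives $d_{\mathcal{W}}\leq \E\|((\Sigma_k)^{1/2}-\Sigma^{1/2})Z\|\leq \sqrt{d}\,\|(\Sigma_k)^{1/2}-\Sigma^{1/2}\|$. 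Because $\Sigma^{(\alpha,\xi)}\succeq \lambda_{\min}(\Sigma)I\succ 0$ (it solves a Lyapunov equation with Hurwitz drift and $\Sigma_b\succeq0$; assume $\Sigma_b\succ0$ or work on its range), the square root is Lipschitz near $\Sigma$: $\|A^{1/2}-B^{1/2}\|\leq \|A-B\|/(\lambda_{\min}(A)^{1/2}+\lambda_{\min}(B)^{1/2})$. So it suffices to bound $D_k:=\Sigma_k^{(\alpha,\xi)}-\Sigma^{(\alpha,\xi)}$ in operator norm, or equivalently in $\|\cdot\|_V$, since all norms are equivalent.

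Next I derive a recursion for $D_k$. Write $J_k=J_k^{(\alpha,\xi)}$, $J=J^{(\alpha,\xi)}$, and set $\epsilon_k:=J_k-J$, so $\epsilon_k=\frac{\xi}{2\alpha}(k+K)^{\xi-1}I$ for $\xi<1$, and $\epsilon_k=0$ for $\xi=1$ and for $\xi=0$. The Lyapunov equation \eqref{eq:lyap_eq} gives $\Sigma=(I+\alpha_kJ_k)\Sigma(I+\alpha_kJ_k)^T+\alpha_k\Sigma_b-\alpha_k(\epsilon_k\Sigma+\Sigma\epsilon_k^T)-\alpha_k^2J_k\Sigma J_k^T$. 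Subtracting this from \eqref{eq:sigma_k_eq} yields
\begin{align*}
D_{k+1}=(I+\alpha_kJ_k)D_k(I+\alpha_kJ_k)^T+\alpha_k(\epsilon_k\Sigma+\Sigma\epsilon_k^T)+\alpha_k^2J_k\Sigma J_k^T.
\end{align*}
I need contraction of the map $D\mapsto(I+\alpha_kJ_k)D(I+\alpha_kJ_k)^T$. Using $V$ from \eqref{eq:lyap_gen_<1_I}, the standard computation gives $\|I+\alpha_kJ_F\|_{V^{-1}}^2\leq 1-2\iota_V\alpha_k+O(\alpha_k^2)$ in the appropriate weighted norm. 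For $\xi<1$, $K$ large (or $\alpha$ small) makes the shift $\epsilon_k$ and the $\alpha_k^2$ term negligible. Hence $\|D_{k+1}\|\leq(1-c\alpha_k)\|D_k\|+C\alpha_k\|\epsilon_k\|+C\alpha_k^2$, with $c$ of order $\iota_V$.

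Unrolling this recursion is routine. For $\xi=0$: $\epsilon_k=0$, so $\|D_k\|\leq e^{-c\alpha k}\|D_0\|+O(\alpha)$. The transient term $\|D_0\|\lesssim\mathcal{E}_0+\|\Sigma\|$ decays exponentially and is absorbed into the transient term of the theorem, after possibly adjusting constants or noting $c\geq\eta$. The remainder is $O(\alpha)$. For $\xi\in(0,1)$, the forcing is $\alpha_k(\|\epsilon_k\|+\alpha_k)\asymp\alpha_k\max((k+K)^{-(1-\xi)},(k+K)^{-\xi})$. A discrete Grönwall/summation lemma (as used in MSE analyses) shows that when $\sum\alpha_j$ diverges and the forcing ratio $g_k$ is slowly varying, $\|D_k\|=O(g_k)+$ stretched-exponential transient. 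This gives the $\max$ term. For $\xi=1$: $\epsilon_k=0$ and $J=J_F+I/(2\alpha)$. The contraction rate is now $\alpha_k(2\iota_V-1/\alpha)$ up to constants, which is positive under $\iota_V\alpha>2$. The forcing is $\alpha_k^2=\alpha^2/(k+K)^2$, and the recursion $u_{k+1}\leq(1-\frac{a}{k+K})u_k+\frac{C}{(k+K)^2}$ with $a>1$ gives $u_k=O(1/(k+K))$ plus an $(k+K)^{-a}$ transient. The condition $\iota_V\alpha>2$ is exactly what ensures $a>1$.

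The main obstacle is the $\xi=1$ contraction. The shifted drift $J_F+I/(2\alpha)$ must be shown to be contractive with rate exceeding $1/(k+K)$ in the $V$-norm, and the constants must be tracked so that the exponent satisfies $a>1$; this is where the hypothesis $\iota_V\alpha>2$ is used. I will also check that the transient from $\|D_0\|$ is dominated by the existing $\mathcal{E}_0\tau_k$ term. If not, I will absorb it by noting $c\alpha\geq\eta\alpha$ and that $\|\Sigma\|$ is a constant.
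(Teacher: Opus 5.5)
Your proposal is correct and follows essentially the same route as the paper. The paper likewise uses the triangle inequality with Theorem~\ref{thm_main:main_thm} and reduces the Gaussian-to-Gaussian term to $\sqrt{d}\,\|\Sigma_k^{(\alpha,\xi)}-\Sigma^{(\alpha,\xi)}\|/\sqrt{\lambda_{\min}(\Sigma^{(\alpha,\xi)})}$; it gets there via $d_{\mathcal{W}}\le d_{\mathcal{W}_2}$ and Lemma~\ref{lem:wass-2}, which amounts to your synchronous coupling plus the Sylvester-equation square-root bound. It then unrolls the same perturbed recursion $D_{k+1}=(I+\alpha_kJ_k)D_k(I+\alpha_kJ_k)^T+\alpha_k^2J_k\Sigma J_k^T+\frac{\xi}{k+K}\Sigma$ using the weighted-norm contraction of Lemmas~\ref{lem:contraction_prop} and~\ref{lem:matrix_theta_bound}.
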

Compared to Theorem \ref{thm_main:main_thm}, the upper bound now includes an additional error component given by the last term. For $\xi=0$ and $\xi=1$, this additional term is $\mathcal{O}(\alpha_k)$, making it strictly higher order. As a result, for these two choices of $\xi$, the overall rate of convergence remains fundamentally the same as in Theorem \ref{thm_main:main_thm}. Interestingly, we observe a phase transition for the intermediate regime $\xi\in(0,1)$. To elucidate more on this, let us fix $\delta=1$. For $\xi\in (0, 2/3)$, the additional error term does not affect the final rate of convergence. However, for $\xi\geq 2/3$, this term begins to dominate the convergence rate to asymptotic Gaussian limit. Notably, this transition at $\xi=2/3$ can be independently inferred by appropriately scaling the MSE bounds in \cite[Eq. (4.3)]{haque2025tightfinitetimebounds}. While this phenomenon is also noted in \cite{kong2026finitesamplewassersteinerrorbounds}, their sub-optimal convergence rate results in a different threshold point. 

Corollary \ref{cor:asymp_gaussian_rate} suggests a separation of timescales: the rescaled iterates $y_k$ converge to the time-varying Gaussian faster than the limiting distribution. Essentially, the ``Gaussianization'' of $y_k$ occurs rapidly, after which this intermediate Gaussian slowly approaches the asymptote. Simulations confirm this behavior, particularly for moderate $k$ and $\xi \geq 2/3$. As Figure \ref{fig:comparison} demonstrates, the empirical histogram of $y_k$ aligns almost perfectly with the time-varying Gaussian, whereas the asymptotic Gaussian provides a poorer finite time fit. Remarkably, while our theoretical bounds do not cleanly resolve the $\xi < 2/3$ regime (where convergence to the time-varying Gaussian is the bottleneck), empirical observations confirm that for sufficiently large $K$, the time-varying Gaussian remains a superior finite-time approximation. Further simulation results for $\xi<2/3$ and more details on the setup are provided in Appendix \ref{sec:simulations}.

\begin{figure*}[t!]
     \centering
     \begin{subfigure}[b]{0.25\textwidth}
         \centering
         \includegraphics[width=\linewidth]{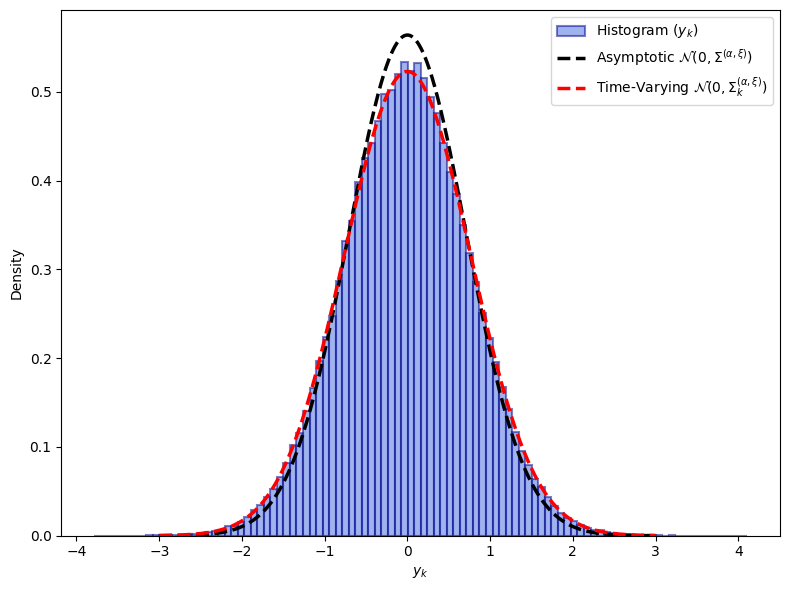}
         \caption{$\xi=0.8$, $k=100$}
         \label{fig:0.8-100}
     \end{subfigure}
     \hspace{5mm} 
     \begin{subfigure}[b]{0.25\textwidth}
         \centering
         \includegraphics[width=\linewidth]{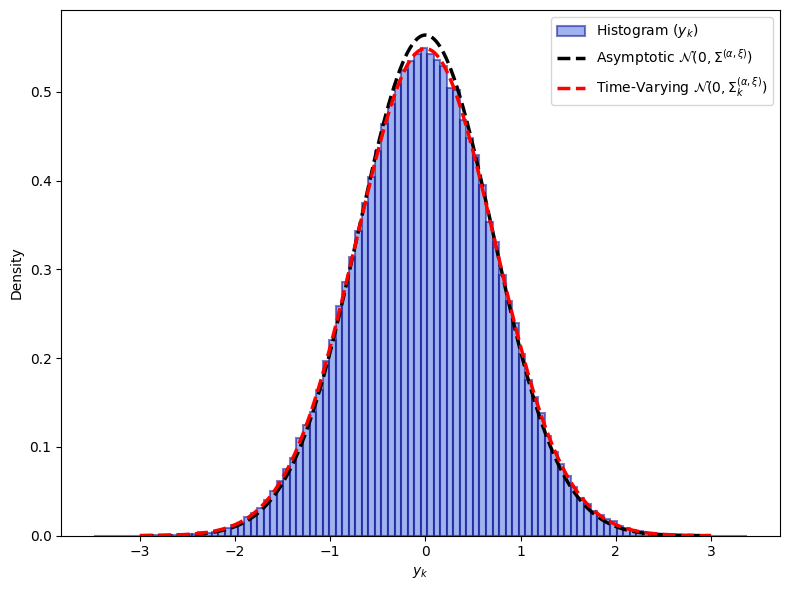}
         \caption{$\xi=0.8$, $k=1000$}
         \label{fig:0.8-1000}
     \end{subfigure}
     \hspace{5mm}
     \begin{subfigure}[b]{0.25\textwidth}
         \centering
         \includegraphics[width=\linewidth]{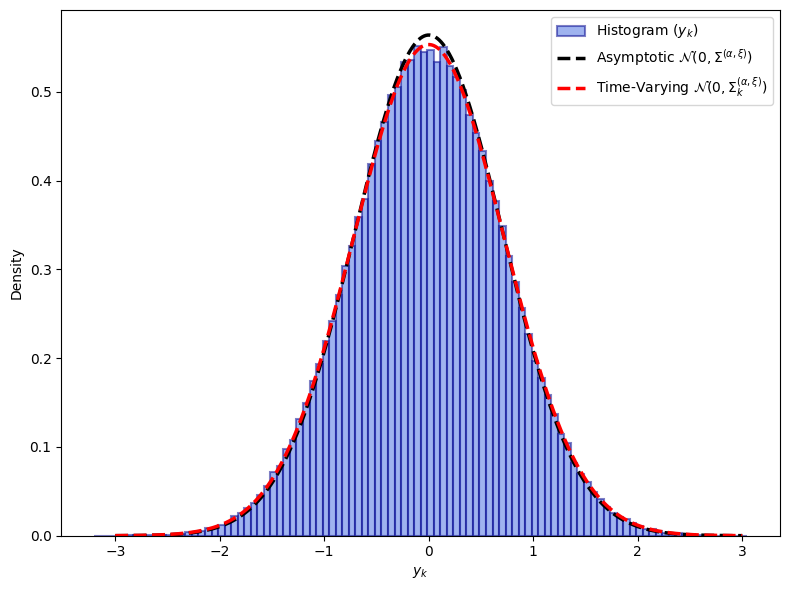}
         \caption{$\xi=0.8$, $k=5000$}
         \label{fig:0.8-5000}
     \end{subfigure}
     \\
     \vspace{2mm}
     \begin{subfigure}[b]{0.25\textwidth}
         \centering
         \includegraphics[width=\linewidth]{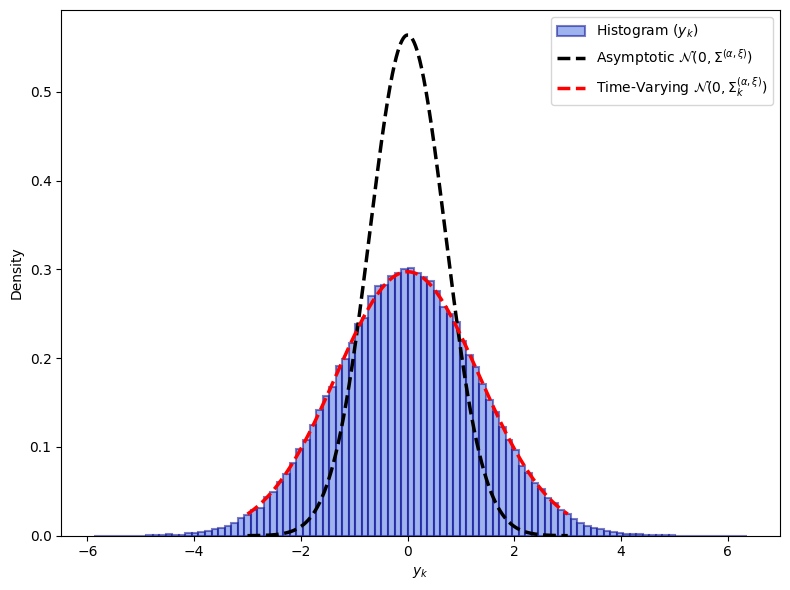}
         \caption{$\xi=0.9$, $k=100$}
         \label{fig:0.9-100}
     \end{subfigure}
     \hspace{5mm}
     \begin{subfigure}[b]{0.25\textwidth}
         \centering
         \includegraphics[width=\linewidth]{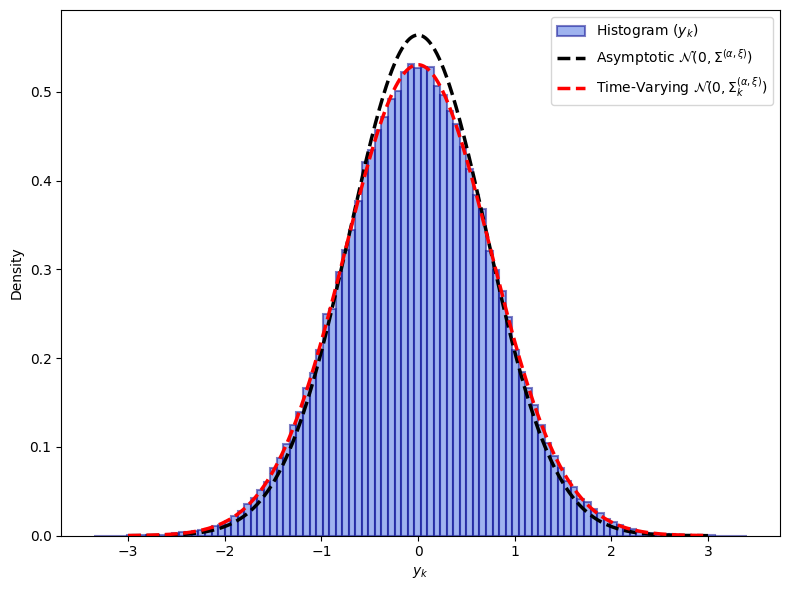}
         \caption{$\xi=0.9$, $k=1000$}
         \label{fig:0.9-1000}
     \end{subfigure}
     \hspace{5mm}
     \begin{subfigure}[b]{0.25\textwidth}
         \centering
         \includegraphics[width=\linewidth]{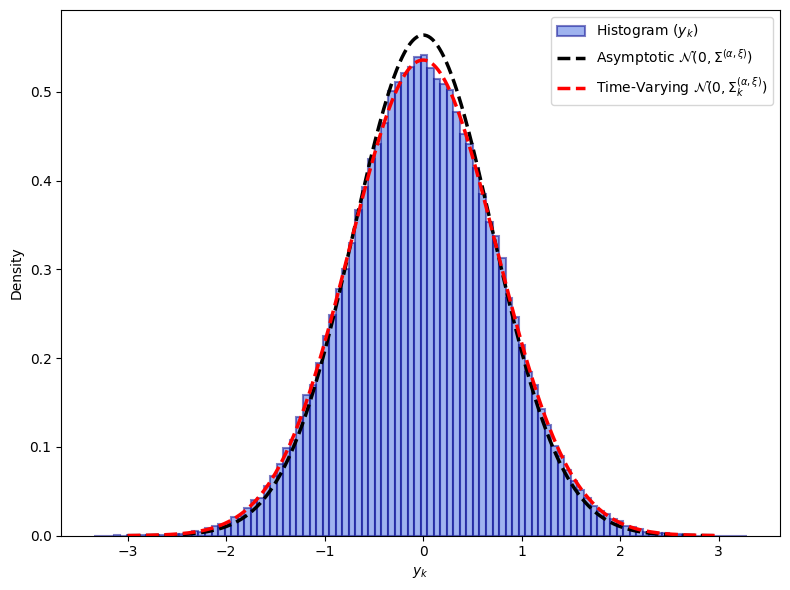}
         \caption{$\xi=0.9$, $k=5000$}
         \label{fig:0.9-5000}
     \end{subfigure}
        \caption{Comparing the Gaussian approximations for $y_k$ at various time instants.}
        \label{fig:comparison}
\end{figure*}

\subsection{Alternate Gaussian approximation}
In the preceding results, we presented two distributional approximations for the rescaled iterate $y_k$: Theorem \ref{thm_main:main_thm} for finite-time $k$ and Corollary \ref{cor:asymp_gaussian_rate} for asymptotic approximation. Beyond these, one can consider alternate Gaussian approximations by modifying the underlying covariance matrix. For instance, while the asymptotic covariance $\Sigma^{(\alpha, \xi)}$ satisfies the Lyapunov equation \eqref{eq:lyap_eq}, this property can be extended to finite-time regime. One can substitute the time-varying drift $J^{(\alpha, \xi)}_k$ and use $\hat{\Sigma}^{(\alpha, \xi)}_k$ that satisfies the following ``time-varying Lyapunov equation'':
\begin{align}\label{eq:time_varying_lyap}
    J^{(\alpha, \xi)}_k\hat{\Sigma}^{(\alpha, \xi)}_k +\hat{\Sigma}^{(\alpha, \xi)}_k(J^{(\alpha, \xi)}_k)^T+\Sigma_b&=0.
\end{align}
By definition, $\Sigma_k^{(\alpha, \xi)}$ is governed by a recursive update \eqref{eq:sigma_k_eq}. Thus, characterizing it for moderately large values of $k$ can be computationally expensive since each iteration requires $d^2$ updates. Instead, one can solve the time-varying Lyapunov equation \eqref{eq:time_varying_lyap} to efficiently approximate $\Sigma_k^{(\alpha, \xi)}$. Similar to Corollary \ref{cor:asymp_gaussian_rate}, we characterize the error between $\Sigma^{(\alpha, \xi)}_k$ and $\hat{\Sigma}^{(\alpha, \xi)}_k$, and then use triangle inequality to establish convergence to the aforementioned Gaussian. Proof of the corollary can be found in Appendix \ref{sec:sigmak_and_hatsigmak}.

\begin{corollary}\label{cor:sigmak_and_hatsigmak}
    Let $\eta=\min(\iota_V/2, 3\gamma/4)$ and $\mathcal{E}_0=\E[\|y_0\|^2]$. When $\xi\in(0,1)$ and $K$  is large enough, then under Assumptions \ref{assump:iterate}-\ref{assump:step-size}, the sequence $\{y_k\}_{k\geq 0}$ given by rescaling the iterate $x_k$ in update equation \eqref{eq:SA_rec2} satisfies the following bounds for all $k\geq 1$.
    \begin{align*}
        d_{\mathcal{W}}(y_k, (\hat{\Sigma}^{(\alpha, \xi)}_k)^{1/2}Z)&\leq 
        \mathcal{O}\left(\frac{1}{(k+K)^{\frac{\xi\delta}{2}}}\right)+\mathcal{O}\left(\frac{\log(k+K)}{(k+K)^{\frac{\xi}{2}}}\right)+\mathcal{O}\left(\mathcal{E}_0\tau_ke^{-\frac{\eta\alpha}{1-\xi}\left((k+K)^{1-\xi}\right)}\right)\\
        &~+\mathcal{O}\left(\max\left(\frac{1}{(k+K)^\xi}, \frac{1}{(k+K)^{2-2\xi}}\right)\right).
    \end{align*}
\end{corollary}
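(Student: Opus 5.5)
By the triangle inequality for $d_{\mathcal{W}}$,
\begin{align*}
d_{\mathcal{W}}(y_k, (\hat{\Sigma}^{(\alpha, \xi)}_k)^{1/2}Z)\leq d_{\mathcal{W}}(y_k, (\Sigma^{(\alpha, \xi)}_k)^{1/2}Z)+d_{\mathcal{W}}((\Sigma^{(\alpha, \xi)}_k)^{1/2}Z, (\hat{\Sigma}^{(\alpha, \xi)}_k)^{1/2}Z).
\end{align*}
The first term is handled directly by part 2 of Theorem \ref{thm_main:main_thm}. It yields the first three terms of the claimed bound. It remains to show that the second term is $\mathcal{O}(\max((k+K)^{-\xi},(k+K)^{-(2-2\xi)}))$, up to a transient that decays stretched-exponentially and can be absorbed into the third term.

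For the Gaussian comparison, couple the two Gaussians through the same $Z$. This gives
\begin{align*}
d_{\mathcal{W}}\leq \E\|((\Sigma_k)^{1/2}-(\hat\Sigma_k)^{1/2})Z\|\leq \sqrt d\,\|\Sigma_k^{1/2}-\hat\Sigma_k^{1/2}\|.
\end{align*}
For $K$ large, $\hat\Sigma_k$ is uniformly positive definite, since $J_k^{(\alpha,\xi)}\to J_F$ is Hurwitz and $\Sigma_b\succ0$; if $\Sigma_b$ is only semidefinite, one would instead use a Hölder-$1/2$ square-root bound. Under positive definiteness, the Lipschitz property of the matrix square root on $\{\Sigma\succeq cI\}$ reduces the task to bounding $\|\Sigma_k-\hat\Sigma_k\|$ at the target rate.

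Set $D_k=\Sigma_k-\hat\Sigma_k$ and write $J_k=J^{(\alpha,\xi)}_k$. Expanding \eqref{eq:sigma_k_eq} gives
\begin{align*}
\Sigma_{k+1}=\Sigma_k+\alpha_k(J_k\Sigma_k+\Sigma_kJ_k^T+\Sigma_b)+\alpha_k^2J_k\Sigma_kJ_k^T.
\end{align*}
Using \eqref{eq:time_varying_lyap} to replace $\Sigma_b$ by $-(J_k\hat\Sigma_k+\hat\Sigma_kJ_k^T)$ yields
\begin{align*}
D_{k+1}=D_k+\alpha_k(J_kD_k+D_kJ_k^T)+\alpha_k^2J_k\Sigma_kJ_k^T-(\hat\Sigma_{k+1}-\hat\Sigma_k).
\end{align*}
The forcing has two pieces.
\begin{itemize}
\item The quadratic term $\alpha_k^2J_k\Sigma_kJ_k^T$ is $\mathcal{O}(\alpha_k^2)$, since $\Sigma_k$ is bounded.
\item The drift of $\hat\Sigma_k$ follows by differentiating the Lyapunov equation in $k$. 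Because $J_{k+1}-J_k=\mathcal{O}((k+K)^{-(2-\xi)})$ and the solution map $J\mapsto\hat\Sigma$ is smooth near $J_F$, we get $\|\hat\Sigma_{k+1}-\hat\Sigma_k\|=\mathcal{O}((k+K)^{-(2-\xi)})$.
\end{itemize}

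Next, run a Lyapunov drift argument on $\mathrm{tr}(D_kVD_kV)$, or on the norm induced by $V$ from \eqref{eq:lyap_gen_<1_I}. For $K$ large this gives the contraction $\|I+\alpha_k\mathcal{L}_k\|_V\leq 1-c\alpha_k$, where $\mathcal{L}_k(D)=J_kD+DJ_k^T$. Hence
\begin{align*}
\|D_{k+1}\|\leq(1-c\alpha_k)\|D_k\|+C\alpha_k^2+C(k+K)^{-(2-\xi)}.
\end{align*}
Unrolling with the standard step-size lemma used for the MSE bounds, a forcing term $\alpha_k g_k$ produces a steady-state error of order $g_k$, provided $g_k$ varies slowly relative to $\alpha_k$. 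This holds here when $K$ is large and $\xi<1$. The first piece gives $g_k=\alpha_k$, i.e.\ $(k+K)^{-\xi}$. The second gives $g_k=(k+K)^{-(2-\xi)}/\alpha_k\sim(k+K)^{-(2-2\xi)}$. The initial mismatch $D_0$ decays like $e^{-c\alpha(k+K)^{1-\xi}/(1-\xi)}$ and is absorbed as stated above.

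The main obstacle is making the contraction uniform in $k$. This requires choosing $K$ so that $\alpha_k\|J_k\|$ is small and $J_k$ stays in a neighborhood of $J_F$ where a common $V$ works; the correction $\frac{\alpha^{-1}\xi}{2}(k+K)^{\xi-1}I$ tends to $0$, so the neighborhood is reached for $K$ large. It also requires verifying that the perturbation of the Lyapunov solution is Lipschitz with explicit constants. Both of these are routine once $K$ is large.
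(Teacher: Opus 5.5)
Your proposal is correct and follows essentially the same route as the paper. The paper also uses the triangle inequality through $\mathcal{N}(0,\Sigma_k^{(\alpha,\xi)})$ and a Gaussian-to-Gaussian bound linear in $\|\Sigma_k-\hat\Sigma_k\|$; its Lemma~\ref{lem:wass-2} is your same-$Z$ coupling combined with the square-root Lipschitz bound. It then runs a contracting recursion for $\Sigma_k-\hat\Sigma_k$, obtained by substituting \eqref{eq:time_varying_lyap}, driven by an $\mathcal{O}(\alpha_k^2)$ term and by $\|\hat\Sigma_{k+1}-\hat\Sigma_k\|=\mathcal{O}((k+K)^{-(2-\xi)})$ (Lemma~\ref{lem:lyap_sol_diff}), and solves it with Lemma~\ref{lem:rec_sol}.

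The differences are cosmetic. You write the linear part as $D+\alpha_k(J_kD+DJ_k^T)$ with forcing $\alpha_k^2J_k\Sigma_kJ_k^T$, while the paper uses $(I+\alpha_kJ_k)D(I+\alpha_kJ_k)^T$ with forcing $\alpha_k^2J_k\hat\Sigma_kJ_k^T$; these are algebraically identical. You also obtain the drift of $\hat\Sigma_k$ from smoothness of the Lyapunov solution map, where the paper uses the explicit integral representation of the differenced Lyapunov equation.
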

We note that the above corollary is obtained only for the case $\xi\in (0, 1)$. Recall that $J_k^{(\alpha, \xi)}=J^{(\alpha, \xi)}$ at the boundaries ($\xi\in \{0, 1\}$) and thus the corresponding bounds naturally reduce to the asymptotic results in statements 1 and 3 of Corollary \ref{cor:asymp_gaussian_rate}. Furthermore, we observe a phase transition for the convergence rate similar to Corollary \ref{cor:asymp_gaussian_rate}. Assuming $\delta=1$, we observe that for $\xi\geq 0.8$, the additional term becomes the bottleneck in the convergence rate.

The foregoing results offer various degrees of Gaussian approximation for $y_k$. Specifically, we define
\begin{align}\label{eq:slope_def}
    \mathfrak{m}(\Sigma)=-\lim_{k\to \infty} \frac{\log d_{\mathcal{W}}(y_k, (\Sigma)^{1/2}Z)}{\log (k+K)},
\end{align}
i.e., $\mathfrak{m}(\Sigma)$ is the asymptotic negative slope of the log-log plot. Consequently, the asymptotic rate of convergence is given by
\begin{align*}
    d_{\mathcal{W}}(y_k, \Sigma^{1/2}Z)\approx \mathcal{O}\left(\frac{1}{(k+K)^{\mathfrak{m}(\Sigma)}}\right).
\end{align*}
For different approximations, we observe the following relation
\begin{align*}
    \mathfrak{m}(\Sigma_k^{(\alpha, \xi)})=\xi/2;~~\mathfrak{m}(\hat{\Sigma}_k^{(\alpha, \xi)})=\min(\xi/2, 2-2\xi);~~\mathfrak{m}(\Sigma^{(\alpha, \xi)})=\min(\xi/2, 1-\xi).
\end{align*}
We plot these functions in Figure \ref{fig:function_plot}.  Clearly, we observe $\mathfrak{m}(\Sigma^{(\alpha, \xi)})\leq \mathfrak{m}(\hat{\Sigma}_k^{(\alpha, \xi)}) \leq \mathfrak{m}(\Sigma_k^{(\alpha, \xi)})$ for all $\xi$. In other words, $\mathcal{N}(0, \Sigma_k^{(\alpha, \xi)})$ provides the best approximation, then the solution to the time-varying Lyapunov equation $\mathcal{N}(0, \hat{\Sigma}_k^{(\alpha, \xi)})$, and finally the asymptotic solution $\mathcal{N}(0, \Sigma^{(\alpha, \xi)})$. Nevertheless, there exists a trade-off between the statistical accuracy and computational cost. The sequence $\Sigma_k^{(\alpha, \xi)}$ is the most compute intensive due to its recursive formulation in Eq. \eqref{eq:sigma_k_eq}. In contrast, $\hat{\Sigma}_k^{(\alpha, \xi)}$ is more tractable and provides a middle ground by solving the Lyapunov equation \eqref{eq:time_varying_lyap} directly, while the asymptotic covariance $\Sigma^{(\alpha, \xi)}$ remains the most efficient approximation of $\E[y_ky_k^T]$ for all $k$.

\begin{figure}[t!]
    \centering
    \begin{tikzpicture}
    \begin{axis}[
        axis lines = left,
        xlabel = {Step-size parameter $\xi$},
        ylabel = {$\mathfrak{m}$ (negative slope)},
        xmin = 0, xmax = 1.05, 
        ymin = 0, ymax = 0.7,
        domain = 0:1,          
        samples = 100,
        legend pos = north west,
        legend cell align = left,
        grid = major,
        width=0.6\textwidth,
        height=7cm,
        clip=false 
    ]

    \addplot [
        color=blue,
        thick,
        dashed
    ]
    {x/2};
    \addlegendentry{$\mathfrak{m}(\Sigma^{(\alpha, \xi)}_k) = \xi/2$}

    \addplot [
        color=red,
        thick,
        dotted
    ]
    {x/2 < 1-x ? x/2 : 1-x};
    \addlegendentry{$\mathfrak{m}(\Sigma^{(\alpha, \xi)}) = \min(\xi/2, 1-\xi)$}

    \addplot [
        color=black!60!green, 
        thick,
        dash dot
    ]
    {x/2 < 2-2*x ? x/2 : 2-2*x};
    \addlegendentry{$\mathfrak{m}(\hat{\Sigma}^{(\alpha, \xi)}_k) = \min(\xi/2, 2-2\xi)$}

    \addplot[
        only marks,
        mark=*, 
        mark options={fill=white, draw=black, thick},
        mark size=2.5pt
    ] coordinates {(1, 0)};

    \addplot[
        only marks,
        mark=*, 
        mark options={fill=black, draw=black, thick},
        mark size=2.5pt
    ] coordinates {(1, 0.5)};

    
    \draw[gray, thin] (2/3, 0) -- (2/3, 1/3) -- (0, 1/3);
    \addplot[only marks, mark=square*, mark size=1.5pt] coordinates {(2/3, 1/3)};
    \node[above left, font=\small, yshift=2pt] at (2/3, 1/3) {$(2/3, 1/3)$};

    \draw[gray, thin] (0.8, 0) -- (0.8, 0.4) -- (0, 0.4);
    \addplot[only marks, mark=square*, mark size=1.5pt] coordinates {(0.8, 0.4)};
    \node[below right, font=\small, xshift=4pt, yshift=-2pt] at (0.8, 0.4) {$(0.8, 0.4)$};

    \end{axis}
    \end{tikzpicture}
    \caption{$\mathfrak{m}$ (Negative slope of log-log plot) as a function of step-size parameter $\xi$.}
    \label{fig:function_plot}
\end{figure}

\subsection{Central Limit Theorem for step-size based averaging} \label{sec:clt}
Consider the empirical average estimates for a zero mean i.i.d. noise sequence $\{b_k\}_{k\geq 0}$:
\begin{align*}
    x_{k+1}&=\frac{1}{k+1}\sum_{i=0}^{k}b_i.
\end{align*}
Then, central limit theorem (CLT) states that the rescaled iterate $x_k\sqrt{k}$ converges in distribution to a Gaussian limit $\mathcal{N}(0, \Sigma_b)$. Taking a step forward, Berry-Esseen theorem and Stein's method for Gaussian approximation establishes that the finite-time rate of convergence to this Gaussian limit is $\mathcal{O}(1/\sqrt{k})$. Note that these empirical estimates $x_k$ can also be written iteratively as
\begin{align*}
    x_{k+1}&=x_k+\frac{1}{k+1}(-x_k+b_k).
\end{align*}
with $x_0=0$. Therefore, a natural generalization is to consider general step-size based averaging where the updates are given as
\begin{align}\label{eq:clt_weighted}
    \begin{split}
        x_{k+1}&=x_k+\alpha_k(-x_k+b_k)\\
        &=(1-\alpha_k)x_k+\alpha_kb_k.
    \end{split}
\end{align}
While classical asymptotic convergence results show that $x_k\to 0$ a.s. and $y_k:=x_k/\sqrt{\alpha_k}$ converges in distribution to a Gaussian limit \cite{benveniste1990adaptive, Borkar2023, nguyen2025almost}, we provide a Berry-Esseen style bound for Wasserstein-1 distance for non-asymptotic convergence for this step-size based CLT.

\begin{proposition}\label{prop:clt}
    The sequence $\{y_k\}_{k\geq 0}$ where $x_k$ is given by the update equation \eqref{eq:clt_weighted} satisfies the following bounds for all $k\geq 0$.
    \begin{enumerate}
        \item When $\xi=0$, and $\alpha$ is small enough, we have{\normalfont :}
        \begin{align*}
            d_{\mathcal{W}}\left(y_k, \frac{\Sigma_b^{1/2}Z}{2}\right)&\leq \mathcal{O}\left(\sqrt{\alpha}\log\left(\frac{1}{\alpha}\right)\right).
        \end{align*}
        \item When $\xi\in (0, 1)$ and $K$ is large enough, we have{\normalfont :}
        \begin{align*}
            d_{\mathcal{W}}\left(y_k, \frac{\Sigma_b^{1/2}Z}{2}\right)&\leq 
            \mathcal{O}\left(\frac{\log(k+K)}{(k+K)^{\frac{\xi}{2}}}\right)+\mathcal{O}\left(\frac{1}{(k+K)^{1-\xi}}\right).
        \end{align*}
        \item When $\xi=1$, $\alpha\geq 1$ and $K\geq 2\alpha-1$, we have {\normalfont :}
        \begin{align*}
            d_{\mathcal{W}}\left(y_k, \Sigma_b^{1/2}Z\right)\leq \mathcal{O}\left(\frac{\log(k+K)}{\sqrt{k+K}}\right).
        \end{align*}
    \end{enumerate}
\end{proposition}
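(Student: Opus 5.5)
Proposition \ref{prop:clt} is the special case of the SA recursion \eqref{eq:SA_rec2} with $F(x)=-x$ and purely additive noise: $x^*=0$, $J_F=-I$, $R\equiv 0$, $A\equiv 0$, $M_k=b(w_k)=b_k$. Assumptions \ref{assump:iterate}--\ref{assump:operator} then hold trivially: take $\Phi(x)=\|x\|^2$, so $\gamma=1$, and take any $\delta$, with $C_r=0$. Consequently $y_k$ is exactly the DOUG process \eqref{eq:z_k} with additive i.i.d. noise and no remainder $\mathfrak{R}_k$. Plugging into Theorem \ref{thm_main:main_thm}, the $\alpha_k^{\delta/2}$ terms come only from the remainder and multiplicative noise, so they vanish. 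The transient term also vanishes because $x_0=0$ gives $\mathcal{E}_0=0$. This leaves $d_{\mathcal{W}}(y_k,(\Sigma_k^{(\alpha,\xi)})^{1/2}Z)\le \mathcal{O}(\sqrt{\alpha_k}\log(\cdot))$. More precisely, I would invoke the DOUG convergence result proved via Stein's method for the additive-noise case, rather than the full theorem, so that the conditions on $K$ and $\alpha$ can be made explicit.

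Next, compare $\Sigma_k$ with the stated limit covariance. The Lyapunov equation \eqref{eq:lyap_eq} with $J^{(\alpha,\xi)}=-I+\tfrac{\alpha^{-1}\mathbbm{1}_{\xi=1}}{2}I$ gives $\Sigma=\Sigma_b/2$ for $\xi<1$. For $\xi=1$ it gives $\Sigma^{(\alpha)}=\Sigma_b/(2-\alpha^{-1})$. This equals $\Sigma_b$ only when $\alpha=1$, so for $\alpha>1$ I expect the statement's limit is to be interpreted through the exact finite-time covariance: in case 3 the recursion for $\Sigma_k$ is explicit. Indeed $\Sigma_k=c_k\Sigma_b$, where $c_k$ solves a scalar recursion $c_{k+1}=(1-\alpha_k+\tfrac{\xi}{2\alpha}(k+K)^{\xi-1}\alpha_k)^2c_k+\alpha_k$ with $c_0=0$. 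I would check whether $\alpha\ge1$ and $K\ge 2\alpha-1$ are exactly the conditions that keep the multiplier nonnegative and make $c_k$ match the target. Because the covariance here is a scalar multiple of $\Sigma_b$, I would bound $d_{\mathcal{W}}(c_k^{1/2}\Sigma_b^{1/2}Z,c^{1/2}\Sigma_b^{1/2}Z)\le |c_k^{1/2}-c^{1/2}|\,\E\|\Sigma_b^{1/2}Z\|\lesssim |c_k-c|$, using the synchronous coupling.

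The main work, and the expected obstacle, is the scalar estimate on $|c_k-c|$, which is the one-dimensional analogue of Corollary \ref{cor:asymp_gaussian_rate}. Write $e_k=c_k-c$ and subtract the fixed-point relation. This gives $e_{k+1}=(1-2\alpha_k+\mathcal{O}(\alpha_k^2)+\ldots)e_k+\mathcal{O}(\alpha_k^2+\alpha_k(k+K)^{\xi-1})$, where the forcing term comes from the correction $\tfrac{\xi}{2\alpha}(k+K)^{\xi-1}$ and the $\alpha_k^2$ term. Unrolling with the product bound $\prod(1-\alpha_j)\le e^{-\sum\alpha_j}$ gives three outcomes:
\begin{itemize}
\item for $\xi=0$, $e_k=\mathcal{O}(\alpha)$, absorbed into $\sqrt{\alpha}\log(1/\alpha)$, plus an exponentially decaying transient $e^{-\alpha k}$ from $c_0=0$, which is dominated after adjusting constants;
\item for $\xi\in(0,1)$, $e_k=\mathcal{O}((k+K)^{-\xi}+(k+K)^{\xi-1})$, with the transient decaying as a stretched exponential; the $(k+K)^{-\xi}$ part is absorbed into the first term;
\item for $\xi=1$, $e_k=\mathcal{O}(1/(k+K))$, with a polynomial transient whose exponent exceeds $1/2$ when $\alpha\ge1$.
\end{itemize}
The delicate points are handling the transient from $c_0=0$ and verifying that, for $\xi=1$, the stated constraints ($\alpha\ge1$, $K\ge2\alpha-1$) suffice without the extra conditions $\iota_V\alpha>2$ of Theorem \ref{thm_main:main_thm}. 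Here $\iota_V=1/(4\cdot\tfrac12)=1/2$, so I would rederive the DOUG rate directly in this scalar-drift case, exploiting that the drift $(1-\alpha_k)$ commutes with everything, so the matrix-weighted Stein argument reduces to a scalar-weighted sum.
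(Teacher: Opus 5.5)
The step that fails is your claim that, with $R\equiv0$ and $A\equiv0$, $y_k$ \emph{is} the DOUG process. That holds for $\xi=0$ (with $x_0=0$). For $\xi\in(0,1]$, however, the rescaling by $1/\sqrt{\alpha_{k+1}}$ gives
\[
y_{k+1}=\sqrt{\alpha_k/\alpha_{k+1}}\,(1-\alpha_k)\,y_k+\frac{\alpha_k}{\sqrt{\alpha_{k+1}}}\,b_k .
\]
These coefficients differ from $1+\alpha_kJ_k^{(\alpha,\xi)}=1-\alpha_k+\frac{\xi}{2(k+K)}$ and from $\sqrt{\alpha_k}$ by $\mathcal{O}(\alpha_k/(k+K))$ and $\mathcal{O}(\sqrt{\alpha_k}/(k+K))$ respectively. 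The differences are the paper's $T_1,T_2$. They come from the time-varying normalization, not from $R$ or $A$, so they do not vanish. (For $\xi\in(0,1)$ you can still cite Theorem~\ref{thm_main:main_thm} with $\delta=1$, because its $\alpha_k^{\delta/2}$ term is exactly where $T_1,T_2$ are absorbed.)

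In case 3, which is the point of the proposition, you drop Theorem~\ref{thm_main:main_thm} and re-derive only the Stein bound for $z_k$, so nothing connects $z_k$ back to $y_k$. The paper supplies this link by coupling $y_k$ and $z_k$ through the same $b_k$ and solving
\[
\E\|y_{k+1}-z_{k+1}\|\le\Bigl(1-\frac{2\alpha-1}{2(k+K)}\Bigr)\E\|y_k-z_k\|+\mathcal{O}\bigl((k+K)^{-3/2}\bigr).
\]
At $\alpha=1$ this is the critical case of Lemma~\ref{lem:rec_sol}, and it yields a term of the same order $\log(k+K)/\sqrt{k+K}$ as the Stein bound. You can either add this step, or run your scalar Stein argument directly on $y_k$. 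For deterministic $x_0$, $y_k$ is itself a deterministically weighted sum of the $b_i$, and you can use reference covariance $\mathrm{Cov}(x_k)/\alpha_k$; this bypasses $T_1,T_2$ entirely. The rest of your plan matches the paper: the $\ell_2$ contraction for $J_F=-I$ is its Lemma~\ref{lem:theta_bound_symm}, and your bound $|c_k^{1/2}-c^{1/2}|\,\E\|\Sigma_b^{1/2}Z\|$ is a simpler substitute for Lemma~\ref{lem:wass-2}.

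On the target covariance, do not leave it as something to ``check''. Your computation is right: for $\alpha>1$ one has $c_k\to\alpha/(2\alpha-1)\neq1$, so $d_{\mathcal{W}}(y_k,\Sigma_b^{1/2}Z)$ stays bounded away from zero whatever $K$ is. The paper's own identity
\[
\Sigma_{k+1}^{(\alpha,1)}-\Sigma_b=\Bigl(1-\frac{2\alpha-1}{2(k+K)}\Bigr)^2\bigl(\Sigma_k^{(\alpha,1)}-\Sigma_b\bigr)+\frac{\alpha^2}{(k+K)^2}\Sigma_b
\]
silently drops a forcing term $\frac{1-\alpha}{k+K}\Sigma_b$. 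The resulting bound, and hence case 3 as stated, is therefore correct only at $\alpha=1$, the classical CLT case; otherwise the target must be $\frac{\alpha}{2\alpha-1}\Sigma_b$. Similarly, for $\xi<1$ the limit is $(\Sigma_b/2)^{1/2}Z$, not $\Sigma_b^{1/2}Z/2$.

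Finally, $x_0=0$ is your assumption, not the statement's. Also, for $\xi=0$ the transient cannot be absorbed ``after adjusting constants''. With $x_0=0$, $y_1=\sqrt{\alpha}\,b_0$ is at distance of order one from the limit as $\alpha\to0$. Case 1 therefore needs an explicit $e^{-c\alpha k}$ term, as in Corollary~\ref{cor:asymp_gaussian_rate}.
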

    
Note that for the specific case when $x_0=0$ and $\alpha_k=1/(k+1)$, we obtain the finite-time distributional convergence rate for the classical CLT-setting by using statement 3 of the proposition. It is worth noting that the step-size choice in the CLT-setting, i.e., $\alpha_k=1/(k+1)$ (or $\alpha=1$) is sitting at the edge of our analysis. More specifically, we cannot directly apply Theorem \ref{thm_main:main_thm} to the CLT-setting because of the condition $\iota_V\alpha>2$ which in this instance translates to $\alpha>4$. Thus, we carry out a more delicate analysis which helps us to relax this condition and obtain rate of convergence for CLT. We leverage a key technical property that $J_F$ is $-I$ in this case and, in particular, it is a symmetric negative definite matrix. As mentioned earlier, we can avoid the $V$-weighted norm from the solution of the Lyapunov equation and directly work with $\ell_2$-norm to get sharper constants. We present these modifications in the specific context of step-size based averaging in the above proposition. Extending to any symmetric negative definite matrix is straightforward. For more details, please refer to the proof given in Appendix \ref{sec:clt_proof}.

\subsection{Lower bound on convergence rate for SA}\label{sec:lower bound}
In this section, we present a matching lower bound for the rate of convergence SA to normality, thereby confirming the sharpness of Theorem \ref{thm_main:main_thm} (up to log factors).
The proof of the proposition can be found in Appendix \ref{sec:lower_boundSA}.

\begin{figure}[t!]
    \centering
    \begin{tikzpicture}
        \begin{axis}[
            width=\linewidth, 
            height=4.5cm,
            xlabel={Step-size parameter $\xi$},
            ylabel={$\mathfrak{m}$ (negative slope)},
            ytick={0.1, 0.2, 0.3, 0.4, 0.5, 0.6},
            grid=major, 
            grid style={dashed, gray!30},
            xmin = 0, xmax = 1.05, 
            ymin = 0, ymax = 0.7,
            domain = 0:1,          
            samples = 100,
            legend pos=north west, 
            legend style={draw=black!50, fill=white, rounded corners=2pt, nodes={scale=1, transform shape}}, 
             width=0.6\textwidth,
            height=7cm,
            clip=false
        ]
        
        \addplot[color=blue, mark=*, mark size=2pt, thick, dashed] coordinates {
            (0.1, 0.079) (0.2, 0.115) (0.3, 0.154) (0.4, 0.197) (0.5, 0.243) 
            (0.6, 0.292) (0.7, 0.338) (0.8, 0.387) (0.9, 0.404) (1, 0.493)
        };

        \addlegendentry{$\hat{\mathfrak{m}}(\Sigma_k^{(\alpha, \xi)})$}

        \addplot [
            color=blue,
            thick,
            dash dot
        ]
        {x/2};
        \addlegendentry{$\mathfrak{m}(\Sigma^{(\alpha, \xi)}_k)$}
        
        \addlegendentry{$\hat{\mathfrak{m}}(\Sigma^{(\alpha, \xi)})$}
        
        \addplot[color=red, mark=square*, mark size=2pt, thick, dotted] coordinates {
            (0.1, 0.074) (0.2, 0.103) (0.3, 0.14)  (0.4, 0.18)  (0.5, 0.222) 
            (0.6, 0.263) (0.7, 0.303) (0.8, 0.244) (0.9, 0.102) (1, 0.52)
        };
    
        \addplot [
            color=red,
            thick,
            dotted
        ]
        {x/2 < 1-x ? x/2 : 1-x};
        \addlegendentry{$\mathfrak{m}(\Sigma^{(\alpha, \xi)})$}

        \addplot[
            only marks,
            mark=*, 
            mark options={fill=white, draw=black, thick},
            mark size=2.5pt
        ] coordinates {(1, 0)};

        \node [
            draw=black!50,
            fill=white,
            rounded corners=2pt,
            anchor=north east,
            align=left,
            font=\scriptsize 
        ] at (rel axis cs: 0.97, 0.97) {
            \tikz[baseline=-0.5ex]\draw[blue, dashed, thick] (0,0) -- (0.1,0); Time-varying Gaussian \\
            \tikz[baseline=-0.5ex]\draw[red, dotted, thick] (0,0) -- (0.1,0); Asymptotic Gaussian
        };
        
        \end{axis}
    \end{tikzpicture}
    \captionof{figure}{$\hat{\mathfrak{m}}$ (estimated negative slope of log-log plot) as a function of step-size parameter $\xi$}
    \label{fig:slope}
\end{figure}

\begin{proposition}\label{lem:lower_boundSA}
    Let $\xi\in [0, 1)$ and $\{b_k\}_{k\geq 0}$ be an i.i.d. noise sequence with zero-mean. Furthermore, assume that $\E[b_1b_1^T]=I$, $\E[\|b_i\|^4]<\infty$, and there exists a unit vector $\mathfrak{u}\in \mathbb{R}^d$ such that $\E[(\mathfrak{u}^Tb_1)^3]\neq 0$. Consider the following SA update:
    \begin{align*}
        x_{k+1} = (1-\alpha_k)x_k+\alpha_kb_k.
    \end{align*}
    where $\{\alpha_k\}_{k\geq 0}$ satisfies Assumption \ref{assump:step-size}. Then, the following relations hold.
    \begin{enumerate}
        \item The Wasserstein-1 distance from the time-varying Gaussian satisfies:
        \begin{align*}
            d_{\mathcal{W}}(y_{k}, (\Sigma_k^{(\alpha, \xi)})^{1/2}Z)  \geq   \Omega\left(\sqrt{\alpha_k}\right)=\Omega((k+K)^{-\xi/2}).
        \end{align*}
        \item The Wasserstein-1 distance from the asymptotic Gaussian satisfies:
        \begin{align*}
            d_{\mathcal{W}}(y_{k}, \Sigma^{1/2}Z)  \geq   \Omega\left(\max\left(\sqrt{\alpha_k}, \frac{1}{(k+K)^{1-\xi}}\right)\right)=\Omega((k+K)^{-\min(\xi/2, 1-\xi)}).
        \end{align*}
    \end{enumerate}
\end{proposition}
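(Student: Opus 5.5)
We argue in one dimension along $\mathfrak{u}$, exploiting that $y_k$ is an explicit weighted sum of i.i.d.\ vectors. Since $x\mapsto \mathfrak{u}^Tx$ is $1$-Lipschitz, $d_{\mathcal{W}}(X,Y)\geq |\E[g(\mathfrak{u}^TX)]-\E[g(\mathfrak{u}^TY)]|$ for any $1$-Lipschitz $g:\mathbb{R}\to\mathbb{R}$. We take $x_0=0$ for concreteness (a general $x_0$ adds a deterministic term, which we handle by the same argument or absorb by choice of $x_0$). Unrolling the recursion gives $x_k=\sum_{i<k}\alpha_i\prod_{j=i+1}^{k-1}(1-\alpha_j)\,b_i$. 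Hence
\[
y_k=\sum_{i<k}c_{k,i}b_i,\qquad c_{k,i}=\frac{\alpha_i}{\sqrt{\alpha_k}}\prod_{j=i+1}^{k-1}(1-\alpha_j),
\]
and $\E[y_ky_k^T]=s_k^2 I$ with $s_k^2=\sum_i c_{k,i}^2$. A direct check shows that $s_k^2 I$ satisfies recursion \eqref{eq:sigma_k_eq} with $J_F=-I$, so $\Sigma_k^{(\alpha,\xi)}=s_k^2I$.

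\textbf{Part 1 (third-moment test).} Each moment of a Lipschitz test function can be obtained cleanly, so we exploit the nonzero skewness. A Gaussian has vanishing third moment, and we would like to use $g(t)=t^3$. This $g$ is not Lipschitz, so we truncate: take $g(t)=h(t)$ with $h(t)=t^3$ on $[-R,R]$ and extended linearly outside, which is Lipschitz with constant $3R^2$. The fourth-moment assumption controls the tails uniformly in $k$, so $\E[h(\mathfrak{u}^Ty_k)]=\E[(\mathfrak{u}^Ty_k)^3]+o(1)\cdot\text{(small)}$ uniformly, with an error controllable by $R$.

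A cleaner alternative is a smooth bounded odd test function, for example $g(t)=\sin(t)$ or $g(t)=t e^{-t^2}$, combined with a cumulant or Edgeworth-type expansion: $\E[g(S)]-\E[g(sZ)]=\tfrac{\kappa_3}{6}\E[g'''(sZ)]+O(\sum_i c_{k,i}^4)$. Here $\kappa_3=\E[(\mathfrak{u}^Tb)^3]\sum_i c_{k,i}^3$ and the remainder is controlled by the fourth moment. We would carry this out via a Lindeberg replacement, swapping each $c_{k,i}\mathfrak{u}^Tb_i$ for a Gaussian with matching variance and Taylor expanding to fourth order.

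It remains to estimate the weights. For $i$ within $O(1/\alpha_k)$ of $k$, we have $c_{k,i}\asymp\sqrt{\alpha_k}\,e^{-\sum_{j>i}\alpha_j}$, which gives $\sum_i c_{k,i}^3\asymp \alpha_k^{3/2}\cdot\alpha_k^{-1}=\sqrt{\alpha_k}$ and $\sum_ic_{k,i}^4\asymp\alpha_k$. Choosing $g$ with $\E[g'''(sZ)]\neq0$ (possible since $s_k^2\to1/2$, bounded away from $0$ and $\infty$), the leading term is $\Omega(\sqrt{\alpha_k})$. Since the remainder is $O(\alpha_k)$, part 1 follows for large $k$.

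\textbf{Part 2 (variance mismatch).} Here $\Sigma=\tfrac12 I$, since $J=-I$ gives $-2\Sigma+I=0$. The $\sqrt{\alpha_k}$ part of the lower bound follows from the same skewness argument, now with $sZ$ replaced by $Z/\sqrt2$: the variance difference enters the expansion only through an even test-function contribution, and this vanishes for odd $g$.

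For the $(k+K)^{-(1-\xi)}$ part we use an even test function, for example $g(t)=\sqrt{1+t^2}$ or $|t|$, which detects the variance. We first derive $s_k^2-\tfrac12\asymp \alpha_k^{-1}\cdot(\alpha_{k}-\alpha_{k+1})\asymp (k+K)^{-(1-\xi)}$ from the scalar recursion
\[
s_{k+1}^2=\frac{\alpha_k}{\alpha_{k+1}}\big((1-\alpha_k)^2s_k^2+\alpha_k\big),
\]
using a fixed-point perturbation: the fixed point shifts by order $(\alpha_k/\alpha_{k+1}-1)/\alpha_k\asymp \xi/(k+K)^{1-\xi}$, and we must confirm its sign is nonzero. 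We then lower bound $|\E g(y_k)-\E g(Z/\sqrt2)|\geq|\E g(s_kZ)-\E g(Z/\sqrt2)|-|\E g(y_k)-\E g(s_kZ)|$. The first term is $\asymp|s_k^2-\tfrac12|$. The second is the even-order Lindeberg error, which is $O(\alpha_k)$ because the third-order term vanishes against an even $g'''$ at a symmetric Gaussian. Combining the two parts gives the maximum.

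\textbf{Main obstacle.} The main obstacle is the Lindeberg/Edgeworth remainder: making the $O(\sum c^4)$ bound rigorous with only fourth moments while tracking constants uniformly in $k$. We must also establish the precise two-sided asymptotics of $\sum_i c_{k,i}^3$ and of $s_k^2-\tfrac12$, including nonvanishing leading constants, for all $\xi\in[0,1)$. For $\xi=0$ the second term degenerates (there is no variance drift), and only the $\sqrt{\alpha}$ term matters.
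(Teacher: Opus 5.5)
Your argument is correct in substance but takes a different route from the paper. The paper's steps are as follows:
\begin{itemize}
\item It proves the skewness bound for the auxiliary DOUG process $z_k$ (Proposition~\ref{lem:lower_bound}: a Lindeberg replacement with $\sin(\mathfrak{u}^Tx)$, plus the uniform bound $\E[\cos(\mathfrak{u}^TU_i)]\ge 1/4$).
\item It couples $y_k$ and $z_k$ through the same $b_k$, shows $\E\|y_k-z_k\|$ is of higher order by controlling $T_1,T_2$, and concludes part 1 by the reverse triangle inequality.
\item For part 2 it uses the reverse triangle inequality again, against $d_{\mathcal{W}}((\Sigma_k^{(\alpha,\xi)})^{1/2}Z,\Sigma^{1/2}Z)$. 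It lower-bounds this Gaussian-to-Gaussian distance by a one-dimensional projection, the closed-form Gaussian $W_1$ formula and Lemma~\ref{lem:sigma_diff_lower_bound}.
\end{itemize}

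You differ in two places. First, for this linear, purely additive instance you use the exact representation $y_k=\sum_i c_{k,i}b_i$, so Lindeberg applies to $y_k$ itself and the DOUG/coupling layer disappears. Second, in part 2 you take the larger of two lower bounds. One comes from an odd test function, whose Gaussian expectation vanishes for every variance. The other comes from an even test function, which detects $s_k^2-\tfrac12$.

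This second change matters. The inequality $d_{\mathcal{W}}\ge|A-B|$ yields $\Omega(\max(A,B))$ only with an extra upper bound on the smaller term. In the paper that upper bound has to come from Theorem~\ref{thm_main:main_thm} and Corollary~\ref{cor:asymp_gaussian_rate}. Even then it gives nothing at $\xi=2/3$, where both terms are of order $(k+K)^{-1/3}$. Your max of two lower bounds needs no upper bounds and covers $\xi=2/3$. What the paper's modular route buys is portability: it extends to nonlinear $F$, where $y_k$ is not an exact weighted sum and one must show $y_k-z_k$ is negligible.

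Several details need repair.
\begin{itemize}
\item Your claim that $s_k^2I$ satisfies \eqref{eq:sigma_k_eq} is false for $\xi\in(0,1)$. The exact recursion is $s_{k+1}^2=\frac{\alpha_k}{\alpha_{k+1}}((1-\alpha_k)^2s_k^2+\alpha_k)$. By contrast, \eqref{eq:sigma_k_eq} uses the factor $1-\alpha_k+\frac{\xi}{2(k+K)}$ and noise $\alpha_k$, so the two agree only to first order and $\Sigma_k^{(\alpha,\xi)}\neq s_k^2I$. The error is harmless only because your part-1 test function is odd, so $\E[g(\mathfrak{u}^T(\Sigma_k^{(\alpha,\xi)})^{1/2}Z)]=0$ for any covariance. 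That oddness, not an equality of covariances, is the correct justification.
\item Replacing $\E[g'''(U_i)]$ by $\E[g'''(sZ)]$ needs one more Lindeberg step costing $O(\sum_j c_{k,j}^3)$. Alternatively, use the paper's uniform lower bound on $\E[\cos(\mathfrak{u}^TU_i)]$ directly.
\item In the even part, $|t|$ is not smooth enough for Lindeberg; use $\cos t$ or $\sqrt{1+t^2}$.
\item You do not need the $O(\alpha_k)$ refinement in the even part. The even bound is only needed for $\xi>2/3$, where $(k+K)^{-(1-\xi)}\gg\sqrt{\alpha_k}$. So a third-order Lindeberg error $O(\sum_i c_{k,i}^3)=O(\sqrt{\alpha_k})$ already suffices.
\item The drift scale is $(\alpha_k-\alpha_{k+1})/\alpha_k^2$, not $(\alpha_k-\alpha_{k+1})/\alpha_k$.
\item The sign check works out: up to lower-order terms, $s_k^2-\tfrac12=\frac{\xi}{4\alpha}(k+K)^{-(1-\xi)}+\frac{\alpha_k}{4}$, which is positive.
\item You are right to drop the truncated-cubic idea; it cannot give $\sqrt{\alpha_k}$ with only fourth moments. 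Its truncation error $O(1/R)$ forces $R\gg\alpha_k^{-1/2}$, hence a Lipschitz constant $\gg\alpha_k^{-1}$.
\end{itemize}
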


It is well known that the Wasserstein-1 convergence for Central Limit Theorem is lower-bounded by $\Omega(1/\sqrt{k})$ \cite{rio2009upper, bobkov2018berry}. However, this corresponds only to the special case $\xi=1$ in our framework (as discussed in Section \ref{sec:clt}). Proposition \ref{lem:lower_boundSA} complements the CLT lower bound by covering other choices of step-size schedules, thus showing the sharpness of our results for the entire regime of $\xi$. Moreover, we validate the tightness of our result by conducting numerical simulations which demonstrates that our convergence rate is achieved by well-known distributions. We ran multiple sample paths of linear SA perturbed by exponentially distributed noise (we subtracted the mean from the noise to make it zero-mean) and used linear regression on the log-log plot between Wasserstein-1 distance and the number of iterations for various choices of $\xi$ to calculate $\hat{\mathfrak{m}}$ as an estimate for $\mathfrak{m}$ defined in Eq. \eqref{eq:slope_def}. As shown in Figure \ref{fig:slope}, empirical convergence rates align perfectly with the theoretical rates obtained in Theorem \ref{thm_main:main_thm} and Corollary \ref{cor:asymp_gaussian_rate}. Please refer to Appendix \ref{sec:simulations} for more details on the simulation setup.

The proof of the proposition follows a similar methodology as used in the upper bound. Specifically, for our chosen instance, we construct the corresponding DOUG process $\{z_k\}_{k\geq 0}$ and establish that the convergence rate for Wasserstein-1 distance is lower bound by $\Omega(\sqrt{\alpha_k})$. Equipped with this result, we apply the reverse triangle inequality to obtain:
\begin{align*}
    d_{\mathcal{W}}(y_k, (\Sigma_k^{(\alpha, \xi)})^{1/2}Z)\geq d_{\mathcal{W}}(z_k, (\Sigma_k^{(\alpha, \xi)})^{1/2}Z)-d_{\mathcal{W}}(y_k, z_k).
\end{align*}
Thereafter, we show that the residual error $d_{\mathcal{W}}(y_k, z_k)$ is a higher-order term. Consequently, this term becomes asymptotically negligible, which implies that the $\Omega(\sqrt{\alpha_k})$ lower bound of the DOUG process strictly dominates and yields the final rate. To establish the convergence rate to the asymptotic Gaussian, we additionally establish a tight lower bound for $d_{\mathcal{W}}((\Sigma_k^{(\alpha, \xi)})^{1/2}Z, \Sigma^{1/2}Z)$ and combine it with the former result.

\begin{figure*}[t!]
    \centering
    \begin{tikzpicture}
        \begin{axis}[
            width=\linewidth, 
            height=4.5cm,
            xlabel={Step-size parameter $\xi$},
            ylabel={$\mathfrak{m}$ (negative slope)},
            ytick={0.1, 0.2, 0.3, 0.4, 0.5, 0.6},
            grid=major, 
            grid style={dashed, gray!30},
            xmin = 0, xmax = 0.52, 
            ymin = 0, ymax = 0.7,
            domain = 0:0.5,          
            samples = 100,
            legend pos=north west, 
            legend style={draw=black!50, fill=white, rounded corners=2pt, nodes={scale=1, transform shape}}, 
             width=0.6\textwidth,
            height=7cm,
            clip=false
        ]
        
        \addplot[color=blue, mark=*, mark size=2pt, thick, dashed] coordinates {
            (0.1, 0.098) (0.2, 0.195) (0.3, 0.283) (0.4, 0.374) (0.5, 0.462) 
        };

        \addlegendentry{$\hat{\mathfrak{m}}(\Sigma^{(\alpha, \xi)}_k) ~(\text{Laplace Distribution})$}

        \addplot[color=red, mark=square*, mark size=2pt, thick, dashed] coordinates {
            (0.1, 0.135) (0.2, 0.27) (0.3, 0.32)  (0.4, 0.386)  (0.5, 0.431) 
        };
        \addlegendentry{$\hat{\mathfrak{m}}(\Sigma_k^{(\alpha, \xi)}) ~(\text{Uniform Distribution})$}

        \addplot [
            color=black,
            thick,
            dotted
        ]
        {x};
        \addlegendentry{$\mathfrak{m}(\Sigma_k^{(\alpha, \xi)})=\xi ~(\text{Conjecture})$}

        \end{axis}
    \end{tikzpicture}
    \captionof{figure}{$\hat{\mathfrak{m}}$ (estimated negative slope of log-log plot) as a function of step-size parameter $\xi$}
    \label{fig:slope_dist}
\end{figure*}

Prior literature indicates that convergence to the asymptotic Gaussian can be accelerated if higher moments of the random variable match with the Gaussian. Notably, the author in \cite{fathi2018higherordersteinkernelsgaussian} shows that if the $n+1$ moments of the noise sequence match with those of a Gaussian, then the Zolotarev distance to the limiting Gaussian in CLT setting converges as $\mathcal{O}(k^{-n/2})$ under some additional technical conditions. We remark that the lower bound established in the preceding proposition is obtained for a noise sequence where the first two moments match but the third one does not ($n=1$). However, it is also noted in \cite{bobkov2018berry} that for symmetric distributions (where odd moments are zero, i.e., three moments match) the Wasserstein-$p$ convergence rate for CLT can be improved to $\mathcal{O}(1/k)$. Translating this insight into our framework, an interesting future direction is to investigate that if the $n+1$ moments of the noise in SA match with the asymptotic Gaussian, then whether we can improve the rate of convergence 
\begin{align*}
    d_{\mathcal{W}}(y_{k}, (\Sigma_k^{(\alpha, \xi)})^{1/2}Z)\stackrel{?}{=}\mathcal{O}(\alpha_k^{n/2}).
\end{align*} 

To empirically examine the above claim for symmetric distributions ($n=2$), we conducted numerical simulations for various choices of $\xi$ similar to Figure \ref{fig:slope} but with different noise models. Specifically, we chose two symmetric density functions: Laplace and uniform distribution. The Laplace distribution was chosen with zero mean and scale parameter as 1 while the uniform distribution supported on $[-1, 1]$. As illustrated in Figure \ref{fig:slope_dist}, results from the experiments suggest that the convergence rate is sensitive to the underlying noise distribution. In particular, it improves to $\mathcal{O}(\alpha_k)$ for Laplace and uniform distributions. We leave the theoretical analysis of this claim for future work.

\section{Applications}
Now we will present some immediate applications of our result to derive tight tail bounds and first moment bounds. We then discuss some implications of Theorem \ref{thm_main:main_thm} and Corollary \ref{cor:asymp_gaussian_rate} in the special case of constant step-size SA and SGD.

\subsection{Tight first moment bound}
Since Wasserstien-1 convergence imply convergence of first moment, Theorem \ref{thm_main:main_thm} immediately yields the following corollary. We only present the case $\xi=1$ which has the optimal rate of convergence. It is straightforward to obtain analogous bounds for other choices of step-sizes.
\begin{corollary}\label{cor:mean_err}
    Suppose $\xi=1$ and all the conditions of Theorem \ref{thm_main:main_thm} hold. Then, the sequence $\{x_k\}_{k\geq 0}$ given by update equation \eqref{eq:SA_rec2} satisfies the following first moment error bound for all $k\geq 0$
    \begin{align*}
        \E[\|x_k-x^*\|]&\leq \frac{\sqrt{\alpha}}{\sqrt{k+K}}\E[\|(\Sigma^{(\alpha)})^{1/2}Z\|]+\tilde{\mathcal{O}}\left((k+K)^{-(1+\delta)/2}\right).
    \end{align*}
\end{corollary}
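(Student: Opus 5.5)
The plan is to pass from the Wasserstein bound of Theorem \ref{thm_main:main_thm} (case $\xi=1$) to a first-moment bound, using the fact that the Euclidean norm is $1$-Lipschitz. We then replace the time-varying covariance $\Sigma_k^{(\alpha,1)}$ by its limit $\Sigma^{(\alpha)}$, with the price controlled by the covariance convergence behind Corollary \ref{cor:asymp_gaussian_rate}. Since $x_k-x^*=\sqrt{\alpha_k}\,y_k$ and $\alpha_k=\alpha/(k+K)$, it suffices to show
\begin{align*}
\E[\|y_k\|]\leq \E[\|(\Sigma^{(\alpha)})^{1/2}Z\|]+\tilde{\mathcal{O}}\left((k+K)^{-\delta/2}\right),
\end{align*}
and then multiply through by $\sqrt{\alpha}/\sqrt{k+K}$.

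First, take any coupling $\gamma$ of $y_k$ and $(\Sigma_k^{(\alpha,1)})^{1/2}Z$. The triangle inequality gives $\E[\|y_k\|]\leq \E[\|(\Sigma_k^{(\alpha,1)})^{1/2}Z\|]+\E_\gamma[\|y_k-(\Sigma_k^{(\alpha,1)})^{1/2}Z\|]$. Taking the infimum over couplings, as in \eqref{eq:wass_dist}, bounds the last term by $d_{\mathcal{W}}(y_k,(\Sigma_k^{(\alpha,1)})^{1/2}Z)$. By statement 3 of Theorem \ref{thm_main:main_thm} this is $\mathcal{O}((k+K)^{-\delta/2})+\mathcal{O}(\log(k+K)/\sqrt{k+K})+\mathcal{O}(\mathcal{E}_0\tau_k(k+K)^{-\eta\alpha})$.

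Second, we control the Gaussian part. Coupling both Gaussians through the same $Z$ gives
\begin{align*}
\E[\|(\Sigma_k^{(\alpha,1)})^{1/2}Z\|]\leq \E[\|(\Sigma^{(\alpha)})^{1/2}Z\|]+\|(\Sigma_k^{(\alpha,1)})^{1/2}-(\Sigma^{(\alpha)})^{1/2}\|\,\E\|Z\|.
\end{align*}
We then invoke the bound $\mathcal{O}(1/(k+K))$ on the discrepancy between $\Sigma_k^{(\alpha,1)}$ and $\Sigma^{(\alpha)}$, which is established in the proof of Corollary \ref{cor:asymp_gaussian_rate}. Equivalently, we apply statement 3 of that corollary directly with the identity coupling argument above, skipping the intermediate Gaussian altogether. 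To convert the covariance discrepancy into a square-root discrepancy, we use that $\Sigma^{(\alpha)}$ is positive definite: the Hurwitz property gives this when $\Sigma_b\succ0$, and otherwise we bound the $W_1$ distance between the two Gaussians directly, as in that corollary. The resulting term is $\mathcal{O}(1/(k+K))$.

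Finally, we collect the orders. With $\delta\in(0,1]$, the terms $(k+K)^{-\delta/2}$ and $\log(k+K)/\sqrt{k+K}$ are both $\tilde{\mathcal{O}}((k+K)^{-\delta/2})$, and $1/(k+K)$ is of even higher order. The transient term requires $\eta\alpha\geq \delta/2$. This follows from the hypotheses $\iota_V\alpha>2$ and $3\delta\gamma\alpha>2$, which give $\eta\alpha=\min(\iota_V\alpha/2,3\gamma\alpha/4)>\min(1,1/(2\delta))\geq 1/2\geq\delta/2$; the factor $\tau_k$ is at most logarithmic, so it is absorbed into $\tilde{\mathcal{O}}$. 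Multiplying by $\sqrt{\alpha_k}$ yields the stated bound with remainder $\tilde{\mathcal{O}}((k+K)^{-(1+\delta)/2})$.

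The main obstacle is the transient-term bookkeeping just described, together with justifying the square-root perturbation step. Citing the Wasserstein bound in Corollary \ref{cor:asymp_gaussian_rate} directly circumvents the latter.
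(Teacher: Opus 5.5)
Your proposal is correct and follows the paper's one-line argument. You test the Wasserstein bound against the $1$-Lipschitz function $\|\cdot\|$, replace $\Sigma_k^{(\alpha,1)}$ by $\Sigma^{(\alpha)}$ at $\mathcal{O}(1/(k+K))$ cost (statement 3 of Corollary \ref{cor:asymp_gaussian_rate}), and rescale by $\sqrt{\alpha_k}$; your check that $\eta\alpha>1/2\geq\delta/2$ under $\iota_V\alpha>2$ and $3\delta\gamma\alpha>2$ supplies the transient-term bookkeeping the paper leaves implicit.
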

The above bound is conceptually analogous to the finite-time MSE bound established in \cite{haque2025tightfinitetimebounds} as both bounds exhibit a tight dominant term that matches precisely with the asymptotic normal distribution in their specific regimes (first and second moment, respectively). Although, one can obtain error bounds in first moment via Jensen's inequality and the MSE bound in \cite{haque2025tightfinitetimebounds}, however, this approach will lead to a sub-optimal constant since $\E[\|(\Sigma^{(\alpha)})^{1/2}Z\|]\leq \sqrt{\mathrm{Tr}(\Sigma^{(\alpha)})}$. A rigorous comparison on the tightness of the higher term seems non-trivial and is left as future work. 

\subsection{Tail Bounds for the rescaled iterate}

An upper bound on the tail deviations from Gaussian distribution can be obtained via Wasserstein-1 distance \cite{austern2022efficient, fang2022wasserstein, Wang2026manuscript}. Consequently, Theorem \ref{thm_main:main_thm} can be applied to derive non-asymptotic bounds on tail deviations. Such bounds are important for quantifying the probability of rare, large excursions of the SA iterates from their equilibrium.
\begin{proposition}\label{cor:tail_bounds}
     Suppose $\xi=1$ and all the conditions of Theorem \ref{thm_main:main_thm} hold. Furthermore, let $k$ be large enough such that $d_{\mathcal{W}}(y_{k}, (\Sigma^{(\alpha, 1)}_k)^{1/2}Z)\leq 0.5$. Then, for any $a>0$ and unit vector $\mathfrak{u}\in\mathbb{R}^d$, we have
    \begin{align*} 
        \left|\mathbb{P}(\langle y_k, \mathfrak{u}\rangle > a)-\bar{\Phi}_G\left(\frac{a}{\|\mathfrak{u}^T(\Sigma_k^{(\alpha, 1)})^{\frac{1}{2}}\|}\right)\right| &\leq \tilde{\mathcal{O}}\left(\frac{1}{(k+K)^{\delta/4}}\right) \frac{1}{a}.
    \end{align*}
    where $\bar{\Phi}_G(\cdot)$ is the CCDF of the standard Gaussian.
\end{proposition}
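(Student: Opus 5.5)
We derive the tail bound from the Wasserstein-1 bound of Theorem \ref{thm_main:main_thm} by the standard smoothing argument, which converts a $d_{\mathcal{W}}$ bound of size $\epsilon$ into a Kolmogorov-type error of order $\sqrt{\epsilon}$. This is why the exponent $\delta/2$ (up to logs) in Theorem \ref{thm_main:main_thm}, statement 3, becomes $\delta/4$ here.

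\textbf{Reduction to one dimension.} Fix a unit vector $\mathfrak{u}$ and set $G_k=(\Sigma_k^{(\alpha,1)})^{1/2}Z$ and $\epsilon_k=d_{\mathcal{W}}(y_k,G_k)$. The map $x\mapsto\langle x,\mathfrak{u}\rangle$ is $1$-Lipschitz, so any coupling of $(y_k,G_k)$ induces a coupling of the projections. Hence $d_{\mathcal{W}}(\langle y_k,\mathfrak{u}\rangle,\langle G_k,\mathfrak{u}\rangle)\le\epsilon_k$. Moreover $\langle G_k,\mathfrak{u}\rangle\sim\mathcal{N}(0,\sigma_k^2)$ with $\sigma_k=\|\mathfrak{u}^T(\Sigma_k^{(\alpha,1)})^{1/2}\|$, so $\mathbb{P}(\langle G_k,\mathfrak{u}\rangle>a)=\bar{\Phi}_G(a/\sigma_k)$.

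\textbf{Smoothing.} Write $X=\langle y_k,\mathfrak{u}\rangle$ and $Y=\langle G_k,\mathfrak{u}\rangle$. For $h>0$ define the piecewise-linear test function $f_h(x)$ that equals $0$ for $x\le a-h$, equals $1$ for $x\ge a$, and is linear in between. It is $(1/h)$-Lipschitz and satisfies $\mathbb{1}_{x>a}\le f_h(x)\le\mathbb{1}_{x>a-h}$. By Kantorovich--Rubinstein duality,
\begin{align*}
\mathbb{P}(X>a)\le\E[f_h(X)]\le\E[f_h(Y)]+\frac{\epsilon_k}{h}\le\mathbb{P}(Y>a-h)+\frac{\epsilon_k}{h}.
\end{align*}
The reverse inequality follows symmetrically with the ramp placed on $[a,a+h]$. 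The Gaussian density of $Y$ is at most $1/(\sqrt{2\pi}\sigma_k)$, which gives $|\mathbb{P}(X>a)-\bar{\Phi}_G(a/\sigma_k)|\le h/(\sqrt{2\pi}\sigma_k)+\epsilon_k/h$.

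\textbf{Obtaining the $1/a$ factor.} We need to refine the density step. For $h\le a/2$, the density of $Y$ on $[a-h,a+h]$ is bounded by $\phi(a/(2\sigma_k))/\sigma_k$, and this is at most $C/a$ since $\sup_t t\phi(t)<\infty$. This gives an error $\lesssim h/a+\epsilon_k/h$.

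The choice of $h$ then splits into two cases. If $a\ge 2\sqrt{\epsilon_k}$, take $h=\sqrt{\epsilon_k}$, which satisfies $h\le a/2$; the error is $\lesssim\sqrt{\epsilon_k}/a+\sqrt{\epsilon_k}$. To convert the additive $\sqrt{\epsilon_k}$ into a $1/a$ form, take instead $h=\sqrt{\epsilon_k}\,a$ when $a\lesssim 1$. For $a$ large, use the trivial bound that both tails are at most $\E|X|/a$ and $\E|Y|/a$ by Markov's inequality, which is weaker than needed. So for large $a$ I would instead exploit the Gaussian decay of $\phi$ near $a$ and the density bound $\lesssim 1/a$ directly. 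If $a<2\sqrt{\epsilon_k}$, the left side is at most $1\le 2\sqrt{\epsilon_k}/a$.

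The hypothesis $\epsilon_k\le 0.5$ is what ensures $\sqrt{\epsilon_k}\le 1$, so that these regimes combine to a bound of the form $C\sqrt{\epsilon_k}/a$. It also keeps $\sigma_k$ uniformly bounded above and below, since $\Sigma_k^{(\alpha,1)}\to\Sigma^{(\alpha)}\succ0$ by Corollary \ref{cor:asymp_gaussian_rate}.

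\textbf{Conclusion and main obstacle.} Plugging in Theorem \ref{thm_main:main_thm}(3) gives $\epsilon_k=\tilde{\mathcal{O}}((k+K)^{-\delta/2})$, since the transient term is of higher order once $\eta\alpha>\delta/2$ or $k$ is large. Then $\sqrt{\epsilon_k}=\tilde{\mathcal{O}}((k+K)^{-\delta/4})$, which is the claim. The main obstacle I expect is the bookkeeping of the $1/a$ dependence uniformly over all $a>0$, especially the large-$a$ regime. There I would rely on the bound $\sup_{t}t\,\phi(t/2)<\infty$ to absorb the density term into $C/a$, and take $h=\sqrt{\epsilon_k}$ throughout.
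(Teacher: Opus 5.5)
Your smoothing route is essentially the paper's: the paper couples $y_k$ optimally with the Gaussian and applies Markov's inequality at threshold $(1-\rho)a$, which is the hard-threshold version of your ramp of width $h=(1-\rho)a$. However, your final choice of width leaves a genuine gap in the large-$a$ regime. With $h=\sqrt{\epsilon_k}$ fixed, the Kantorovich--Rubinstein term is $\epsilon_k/h=\sqrt{\epsilon_k}$, which does not decay in $a$. For $a>1$ it exceeds $C\sqrt{\epsilon_k}/a$, so your estimate $\lesssim\sqrt{\epsilon_k}/a+\sqrt{\epsilon_k}$ does not give the claim there. Gaussian decay of $\phi$ cannot rescue this, because that term comes from the Lipschitz constant $1/h$ of the test function, not from the density of $Y$. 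Your case analysis therefore only covers $a\lesssim 1$, where $h=\sqrt{\epsilon_k}$ already works. The alternative $h=\sqrt{\epsilon_k}\,a$ is needed exactly for large $a$, not for small $a$ as you wrote.

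The fix is to let the width scale with $a$ for every $a>0$: take $h=\sqrt{\epsilon_k}\,a$, so that $\epsilon_k/h=\sqrt{\epsilon_k}/a$. For $\sqrt{\epsilon_k}\le 1/2$ the density term becomes $h\cdot\sup_{x\in[a-h,a]}p_Y(x)\le\sqrt{\epsilon_k}\,\frac{a}{\sigma_k}\phi\bigl(\frac{a}{2\sigma_k}\bigr)$; the other side, with the ramp on $[a,a+h]$, is only easier. At this point the bound $\sup_t t\phi(t)<\infty$ you invoke is too weak, since it only yields $C\sqrt{\epsilon_k}$. You need $\sup_t t^2\phi(t)<\infty$, i.e. $\frac{a}{\sigma_k}e^{-a^2/(8\sigma_k^2)}\le 8\sigma_k/a$. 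This gives $(C\sigma_k+1)\sqrt{\epsilon_k}/a$ uniformly in $a$, with no case split. It is exactly the paper's choice $\rho=1-\sqrt{d_{\mathcal{W}}(y_k,(\Sigma_k^{(\alpha,1)})^{1/2}Z)}$ combined with $be^{-b^2/8}\le 8/b$. If you only assume $\epsilon_k\le 1/2$, use $a-h\ge(1-1/\sqrt{2})a$ in place of $a/2$; only the constant changes.

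The rest of your plan is fine: the projection to one dimension, the upper bound on $\sigma_k$, and plugging in statement 3 of Theorem~\ref{thm_main:main_thm} to get $\sqrt{\epsilon_k}=\tilde{\mathcal{O}}((k+K)^{-\delta/4})$.
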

Using Mill's ratio, the above relation implies that the tail distribution of $y_k$ contains two components:
\begin{align}\label{eq:mill's}
    \mathbb{P}(\langle y_k, \mathfrak{u}\rangle > a)  &\leq \frac{1}{\sqrt{2\pi\|\mathfrak{u}^T(\Sigma_k^{(\alpha, 1)})^{\frac{1}{2}}\|^2}}\frac{e^{-\frac{a^2}{2\|\mathfrak{u}^T(\Sigma_k^{(\alpha, 1)})^{1/2}\|^2}}}{a}+\tilde{\mathcal{O}}\left(\frac{1}{(k+K)^{\delta/4}}\right) \frac{1}{a}.
\end{align}
Specifically, we have a time-varying Gaussian tail that quantifies the normality of rescaled SA iterates at time $k$ and a decaying $\mathcal{O}(1/a)$ residual tail. While the classical result on asymptotic normality characterizes the limiting tail behavior $y_k$, it offers no insight into its pre-limit regime. Our results bridge this gap by establishing a non-asymptotic tail behavior that recovers the Gaussian limit as $k\to \infty$. We further remark that while we present the result only for $\xi=1$, analogous bounds can also be obtained for other choices of $\xi$. We defer these extensions to Appendix \ref{sec:tail_bounds} to avoid repetition. A similar tail bound can also be established by directly using Corollary \ref{cor:asymp_gaussian_rate} instead of Theorem \ref{thm_main:main_thm}. We provide a detailed discussion on this in Appendix \ref{sec:tightness_tail}.

\subsection{Constant step-size SA}
For constant step-size SA, our Wasserstein-1 bounds from Theorem \ref{thm_main:main_thm} and Corollary \ref{cor:asymp_gaussian_rate} exhibit a similar exponentially fast transient decay as established for MSE bounds. Notably, they closely resemble the Berry-Esseen-style bound obtained for SGD in \cite{WeiLiLouWu2025GaussianApprox}. More precisely, both expressions have an exponentially decaying transient term and a steady-state remainder term of the order $\tilde{\mathcal{O}}(\sqrt{\alpha})$ (ignoring log factors). Although, the rate of decay for the transient term in our bound might not be as tight as in \cite{WeiLiLouWu2025GaussianApprox}, it can be easily improved if $J_F$ is symmetric (for eg., in SGD where $J_F=-\nabla^2 g(x^*)$ is negative definite) by directly working with $\ell_2$-norm instead of the Lyapunov Eq. $\eqref{eq:lyap_gen_<1_I}$. 

The literature is quite rich on the asymptotic distributional behavior for constant step-size SA \cite{dieuleveut2020bridging, yu2020analysisconstantstepsize, Zaiwei2021, huo2024collusion, wei2025online,  zedong2026icml}. These prior works first establish the existence of the stationary distribution of the process $y_k$ as $k\to \infty$ and then work directly under the stationary regime. The authors in \cite{zedong2026icml} further show 
\begin{align}\label{eq:zedong_res}
    d_{\mathcal{W}}(y_\infty, \Sigma^{1/2}Z)\leq \mathcal{\tilde{O}}(\sqrt{\alpha}).
\end{align}
We remark that taking the limit $k\to \infty$ from Corollary \ref{cor:asymp_gaussian_rate} results in $\lim_{k\to \infty}d_{\mathcal{W}}(y_k, \Sigma^{1/2}Z)\leq \mathcal{\tilde{O}}(\sqrt{\alpha})$.
Now, to obtain a similar result as Eq. \eqref{eq:zedong_res} for the stationary process from this limit, a key step is to establish the existence of the stationary distribution and prove appropriate regularity conditions to interchange the limit with the Wasserstein distance which we leave as future work.

\subsection{Stochastic Gradient Descent}
Consider the optimization problem $\min_{x\in\mathbb{R}^d} g(x)$ where $g(x)$ is $\mu$-strongly convex and $L$-smooth with the unique minimizer $x^*$. When the exact gradients are inaccessible, a popular approach is stochastic gradient descent (SGD), which updates the estimate using noisy gradients $\nabla g_k(x_k)$:
\begin{align}\label{eq:sgd}
    x_{k+1}=x_k-\alpha_k\nabla g_k(x_k).
\end{align}
To cast SGD into the framework of Eq. \eqref{eq:SA_rec2}, we define $F(x_k)=-\nabla g(x_k)$ and $M_k:=\nabla g(x_k)-\nabla g_k(x_k)$. Here, we suppose that the noise $M_k$ admits a decomposition into a multiplicative martingale difference and an additive i.i.d. term as in Assumption \ref{assump:noise}. It is clear that now SGD transforms into Eq. \eqref{eq:SA_rec2}. Furthermore, smoothness and strong convexity ensure that the quadratic Lyapunov function $\Phi(x)=\|x\|^2/2$ naturally satisfies Assumption \ref{assump:iterate}. Finally, assuming $g(x)$ has a Lipschitz Hessian guarantees that $F(x)$ satisfies the local linear approximation (Assumption \ref{assump:operator}) with $J_F=-\nabla^2 g(x^*)$ and $\delta=1$. Thus, Theorem \ref{thm_main:main_thm} immediately characterizes the finite-time distribution of SGD.

\section{DOUG and its connection to SA}
As discussed previously, to accurately characterize the non-asymptotic distribution of the rescaled iterates $\{y_k\}_{k\geq 0}$, we analyze an auxiliary process $z_k$. We start by formally introducing DOUG and establishing its normality as follows.
\subsection{Normality of the DOUG process}
\begin{definition}
    Given a matrix $H$, a noise sequence $\{\chi_k\}$, and a step-size schedule $\{\alpha_k=\alpha/(k+K)^\xi\}_{k\geq 0}$, we define DOUG as a discrete time process given by the following update:
    \begin{align}\label{eq:discrete_ou}
        z_{k+1} &= (I + \alpha_k H_k) z_k + \sqrt{\alpha_k} \chi_k,
    \end{align}
    where $H_k=H+\xi I/(2\alpha(k+K)^{1-\xi})$.
\end{definition}

To analyze the distributional behavior of $\{z_k\}_{k\geq 0}$, we will impose standard assumptions on the drift matrix $H$ and the noise sequence $\{\chi_k\}_{k\geq 0}$. In particular, we suppose that the matrix $H$ is Hurwitz and the noise sequence is i.i.d. and has bounded third moment. Let $V_H$ denote the unique solution to the Lyapunov equation:
\begin{align}\label{eq:lyap_H}
    HV_H+V_HH^T+I=0
\end{align}
Additionally, we define $\iota_{V_H}=1/(4\lambda^{max}_{V_H})$. With a slight abuse of notation, we redefine $\Sigma_k^{(\alpha, \xi)}$ as the covariance of $z_k$. Let $\E[\chi_k\chi_k^T]=\Sigma_{\chi}$. Then, the covariance sequence satisfies the following recursive relation:
\begin{align*}
    \Sigma_{k+1}^{(\alpha, \xi)}&=(I + \alpha_k H_k)\Sigma_k^{(\alpha, \xi)}(I + \alpha_k H_k)^T+\alpha_k\Sigma_{\chi}.
\end{align*}
We now present our second main result as the following theorem. A more detailed version with all the constants characterized explicitly and the proof can be found in Appendix \ref{sec:proof_weighted_stein}.

\begin{theorem}\label{thm_main:weighted_stein}
    Suppose that the noise sequence $\{\chi_k\}_{k\geq 0}$ is zero-mean i.i.d. and has bounded third moment. Let $\mathcal{E}_0=\E[\|z_0\|^2]$. Then, the iteration $\{z_k\}_{k\geq 0}$ given by the update equation \eqref{eq:discrete_ou} satisfies the following bounds for all $k\geq 1$.
    \begin{enumerate}
       \item When $\xi=0$ and $\alpha$ is small enough, we have {\normalfont :}
       \begin{align*}
            &d_{\mathcal{W}}(z_{k}, (\Sigma_k^{(\alpha, 0)})^{1/2}Z)  \leq   \mathcal{O}\left(\sqrt{\alpha}\log\left(\frac{1}{\alpha}\right)\right)+\mathcal{O}\left(\mathcal{E}_0e^{-\iota_{V_H}\alpha k}\right).
        \end{align*}
        \item When $\xi\in (0, 1)$ and $K$ is large enough, we have {\normalfont :}
        \begin{align*}
            &d_{\mathcal{W}}(z_{k}, (\Sigma_k^{(\alpha, \xi)})^{1/2}Z)  \leq   \mathcal{O}\left(\frac{\log(k+K)}{(k+K)^{\xi/2}}\right)+\mathcal{O}\left(\mathcal{E}_0e^{-\frac{\iota_{V_H}\alpha}{1-\xi}\left((k+K)^{1-\xi}\right)}\right).
        \end{align*}
        \item When $\xi=1$, $\iota_{V_H}\alpha>1/2$, and $K$ is large enough, we have {\normalfont :}
        \begin{align*}
            d_{\mathcal{W}}(z_{k}, (\Sigma_k^{(\alpha, 1)})^{1/2}Z)  &\leq \mathcal{O}\left(\frac{\log(k+K)}{\sqrt{k+K}}\right)+\mathcal{O}\left(\frac{\mathcal{E}_0}{(k+K)^{\iota_{V_H}\alpha}}\right).
        \end{align*}
   \end{enumerate}
\end{theorem}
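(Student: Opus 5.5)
We unroll the DOUG recursion so that $z_k$ becomes an explicit matrix-weighted sum of independent terms. Write $P_{k,j}=\prod_{i=j}^{k-1}(I+\alpha_iH_i)$, ordered so that later factors act on the left, with $P_{k,k}=I$. Then
\begin{align*}
z_k=P_{k,0}z_0+\sum_{j=0}^{k-1}\sqrt{\alpha_j}\,P_{k,j+1}\chi_j=:P_{k,0}z_0+S_k .
\end{align*}
Here $S_k$ is a sum of independent, zero-mean vectors $X_j:=\sqrt{\alpha_j}P_{k,j+1}\chi_j$.

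The first step is to split off the initial condition. Let $\tilde\Sigma_k$ be the covariance of $S_k$. Then $\Sigma_k^{(\alpha,\xi)}=P_{k,0}\Sigma_0P_{k,0}^T+\tilde\Sigma_k$. Using the triangle inequality, we bound $d_{\mathcal W}(z_k,S_k)\le\E\|P_{k,0}z_0\|$ and $d_{\mathcal W}(\tilde\Sigma_k^{1/2}Z,(\Sigma_k^{(\alpha,\xi)})^{1/2}Z)\lesssim \|P_{k,0}\Sigma_0P_{k,0}^T\|^{1/2}$. For the latter we use a Gaussian coupling $Z_1+Z_2$, which costs only a square-root term controlled by $\mathcal E_0$; to obtain linear dependence on $\mathcal E_0$ as stated, we use the crude bound $\sqrt{x}\le 1+x$ or keep it as it is. Both bounds reduce to a contraction estimate for $\|P_{k,j}\|$ in the $V_H$-weighted norm.

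The contraction lemma comes from the Lyapunov equation \eqref{eq:lyap_H}. It gives $\|I+\alpha_iH\|_{V_H^{-1}}^2\le 1-2\iota_{V_H}\alpha_i+O(\alpha_i^2)$. Together with the correction $\xi/(2\alpha(i+K)^{1-\xi})$, we get
\begin{align*}
\|P_{k,j}\|\le c\prod_{i=j}^{k-1}\left(1-\iota_{V_H}\alpha_i+\frac{\xi}{2(i+K)}\right).
\end{align*}
This decays like $e^{-\iota\alpha(k-j)}$ for $\xi=0$, like $\exp(-\frac{\iota\alpha}{1-\xi}((k+K)^{1-\xi}-(j+K)^{1-\xi}))\,((k+K)/(j+K))^{\xi/2}$ for $\xi\in(0,1)$, and like $((j+K)/(k+K))^{\iota\alpha-1/2}$ for $\xi=1$. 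The requirements "$\alpha$ small" and "$K$ large" absorb the $O(\alpha_i^2)$ terms. These estimates produce the transient terms. They also show that $\tilde\Sigma_k$ is uniformly well conditioned: it is bounded above, and since $\tilde\Sigma_k\succeq\alpha_{k-1}\Sigma_\chi$ plus earlier terms, its smallest eigenvalue is bounded below for $k$ beyond a constant. If $\Sigma_\chi$ is singular, we first project onto its range.

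The core step is a multivariate Stein argument for the weighted sum $S_k$. For $h$ $1$-Lipschitz, let $f=f_h$ solve the Stein equation $\mathrm{Tr}(\tilde\Sigma_k\nabla^2f(w))-w\cdot\nabla f(w)=h(w)-\E h(\tilde\Sigma_k^{1/2}Z)$. Using the leave-one-out sums $S_k^{(j)}=S_k-X_j$ and Taylor-expanding $\nabla f$, the error becomes
\begin{align*}
\sum_j\E\big[\langle\nabla^2f(S^{(j)}_k+\theta X_j)-\nabla^2f(S_k^{(j)}),X_jX_j^T\rangle\big]+\text{(similar term)}.
\end{align*}
Only Lipschitz $h$ is available, so $\nabla^3 f$ is not uniformly bounded. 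Instead we use the Gallouët--Mijoule--Swan type bound on the modulus of continuity of $\nabla^2 f$: $\|\nabla^2f(x)-\nabla^2f(y)\|\lesssim\|x-y\|(1+|\log\|x-y\||)$, with constants depending on $\lambda_{\min}(\tilde\Sigma_k)^{-1}$. The total error is then bounded by $\sum_j\E\|X_j\|^3(1+\log(1/\|X_j\|))$, which is at most $B_3\sum_j\alpha_j^{3/2}\|P_{k,j+1}\|^3\cdot\log(\cdot)$.

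The main obstacle is evaluating this sum sharply: we need $\sum_j\alpha_j^{3/2}\|P_{k,j+1}\|^3\lesssim\sqrt{\alpha_k}$. We split the sum at $j$ where $(k-j)\alpha_k\approx\log(1/\alpha_k)$. On the recent window, $\alpha_j\approx\alpha_k$ and the geometric decay gives $\alpha_k^{3/2}\cdot\frac{1}{\alpha_k}=\sqrt{\alpha_k}$. On the older part, the product bound makes the contribution polynomially small; for $\xi=1$ we need $3(\iota\alpha-1/2)>1/2$, which holds under $\iota_{V_H}\alpha>1/2$. The logarithmic factor from the Hölder-log modulus then yields $\log(1/\alpha)$, resp. $\log(k+K)$. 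Handling this logarithm cleanly is the delicate part; if it fails, we would instead smooth $h$ at scale $\epsilon=\sqrt{\alpha_k}$ and optimize $\epsilon$, which yields the same log factor.
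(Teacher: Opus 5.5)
Your route is essentially the paper's: you unroll DOUG into a matrix-weighted sum, peel off the initial condition and covariance mismatch with the triangle inequality, run the leave-one-out Stein argument with the Gallou\"et--Mijoule--Swan regularity of $\nabla^2 f$, and reduce everything to bounding $\sum_j\|\Theta_j\|^{2+\beta}$ through a Lyapunov-weighted contraction. The differences are minor:
\begin{itemize}
\item Your log-Lipschitz modulus is Lemma~\ref{lem:derivative_bound} with $\beta$ optimized per increment. The paper fixes $\beta=1+2/\log\alpha_k$ once, which keeps the log deterministic and needs only $\E\|\chi\|^{2+\beta}$. In your version, switch to the bounded oscillation of $\nabla^2 f$ when $\|X_j\|\ge 1$; otherwise $\E\|\chi\|^3\log\|\chi\|$ appears.
\item Your window split replaces the paper's scalar recursion plus Lemma~\ref{lem:rec_sol}.
\item Your Gaussian-convolution coupling gives a transient $\|P_{k,0}\|\sqrt{\mathcal E_0}$. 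The paper, with $z_0=0$, gets linear dependence on $\|\Sigma_0\|$ from Lemma~\ref{lem:wass-2}, at the price of a factor $\lambda_{\min}(\tilde\Sigma_k)^{-1/2}$. Keep the square root: $\sqrt{x}\le 1+x$ leaves an additive $e^{-c\alpha k}$ that the $\xi=0$ bound cannot absorb.
\end{itemize}

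The step that does not go through as written is $\xi=1$. Your estimate $\|I+\alpha_iH\|^2_{V_H^{-1}}\le 1-2\iota_{V_H}\alpha_i+O(\alpha_i^2)$ gives a per-step norm factor $1-(\iota_{V_H}\alpha-\tfrac12)/(i+K)$. Then $\sum_j\alpha_j^{3/2}\|P_{k,j+1}\|^3=O((k+K)^{-1/2})$ requires $3(\iota_{V_H}\alpha-\tfrac12)>\tfrac12$, i.e.\ $\iota_{V_H}\alpha>2/3$. This is not implied by $\iota_{V_H}\alpha>1/2$: for $\iota_{V_H}\alpha\in(1/2,2/3]$ your bound only yields $(k+K)^{-3(\iota_{V_H}\alpha-1/2)}$ up to logs. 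The same loss has two further effects. Your transient decays like $(K/(k+K))^{\iota_{V_H}\alpha-1/2}$ instead of the claimed $(k+K)^{-\iota_{V_H}\alpha}$. For $\xi\in(0,1)$ it leaves an unabsorbed factor $((k+K)/K)^{\xi/2}$.

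The fix is not to discard a factor of two. From $HV_H+V_HH^T=-I$ you get $H^TV_H^{-1}+V_H^{-1}H=-V_H^{-2}\preceq-\lambda_{\max}(V_H)^{-1}V_H^{-1}$. Hence $\|I+\epsilon H\|^2_{V_H^{-1}}\le 1-4\iota_{V_H}\epsilon+\epsilon^2\|H\|^2_{V_H^{-1}}$, and the norm contracts at rate $2\iota_{V_H}$. The per-step factor becomes $1-(2\iota_{V_H}\alpha-\tfrac12)/(i+K)$. The cube condition $3(2\iota_{V_H}\alpha-\tfrac12)>\tfrac12$ then holds with room. Also $2\iota_{V_H}\alpha-\tfrac12>\iota_{V_H}\alpha$ is exactly $\iota_{V_H}\alpha>\tfrac12$, giving the stated transient. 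For $\xi\in(0,1)$, the extra $\iota_{V_H}\alpha_i$ absorbs $\xi/(2(i+K))$ once $K$ is large. This mirrors the paper: Lemma~\ref{lem:contraction_prop}(2) keeps the full rate $1/\lambda_{\max}(V)=4\iota_V$ in the squared norm and spends at most $\alpha^{-1}\le 2\iota_V$ on the drift correction. Lemma~\ref{lem:theta_bound} then bounds $\|\Theta_i\|^{2+\beta}$ by products of squared norms, which needs only $\iota_V\alpha>\beta/2$.

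One subtlety you share with the paper concerns $\xi=0$. There $\lambda_{\min}(\tilde\Sigma_k)\asymp\min(1,k\alpha)$, so it is bounded below uniformly in $\alpha$ only once $k\gtrsim 1/\alpha$, not beyond an absolute constant. To keep the constant in $\mathcal{O}(\sqrt{\alpha}\log(1/\alpha))$ free of $\alpha$ for $k\lesssim 1/\alpha$, track the Stein constant $\|\tilde\Sigma_k^{1/2}\|\|\tilde\Sigma_k^{-1/2}\|^{2+\beta}\asymp(k\alpha)^{-(1+\beta)/2}$ against $\sum_j\|\Theta_j\|^{2+\beta}\asymp k\alpha^{1+\beta/2}$. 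Their product is $k^{(1-\beta)/2}\sqrt{\alpha}\le e\sqrt{\alpha}$ for the paper's choice of $\beta$. The paper hides this in $\kappa^{(\alpha,\xi)}$, a maximum over all $k\ge 1$ that grows as $\alpha\to 0$.
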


These bounds imply that $d_{\mathcal{W}}(z_{k}, (\Sigma_k^{(\alpha, \xi)})^{1/2}Z)\leq \tilde{\mathcal{O}}(\sqrt{\alpha_k})$. By mirroring the proof of Corollary \ref{cor:asymp_gaussian_rate}, we obtain the convergence rate for DOUG to the asymptotic Gaussian as
\begin{align}\label{eq:doug-asymp}
    d_{\mathcal{W}}(z_{k}, (\Sigma^{(\alpha, \xi)})^{1/2}Z)\leq \tilde{\mathcal{O}}\left((k+K)^{-\xi/2}\right)+\mathcal{O}\left(\max\left((k+K)^{-\xi}, \frac{\mathbbm{1}_{\xi\in (0, 1)}}{(k+K)^{1-\xi}}\right)\right)
\end{align}
Since $\Sigma_k^{(\alpha, \xi)}$ is time-varying, it inherently captures the transient behavior of DOUG. Its convergence to the asymptotic covariance is explicitly governed by the second term in Eq. \eqref{eq:doug-asymp}. Furthermore, our results suggest that irrespective of the underlying noise distribution, the statistical impact of the noise $\chi_k$ on DOUG iterates $z_k$ can be decomposed into a Gaussian component $\Sigma_{\chi}^{1/2}Z$ and non-Gaussian residual component. The influence of the non-Gaussian residual component on the distribution of $z_k$ ultimately vanishes and is precisely characterized as $\tilde{\mathcal{O}}(\sqrt{\alpha_k})$ in Theorem \ref{thm_main:weighted_stein}. Consequently, if the noise $\chi_k$ is strictly Gaussian, then the residual component is zero which gives $d_{\mathcal{W}}(z_{k}, (\Sigma_k^{(\alpha, \xi)})^{1/2}Z)=0$.

It is worth noting that for Unadjusted Langevin Algorithm (ULA), an $\mathcal{O}(\alpha_k)$ convergence rate to the asymptotic distribution has been established in prior sampling literature (for eg., \cite{pages2023unadjusted}[Theorem 2.3] and \cite{durmus2019high}[Theorem 8]). We highlight that one can easily modify our analysis to achieve these improved rates for a simplified DOUG. Specifically, suppose we modify DOUG as 
\begin{align*}
    z_{k+1} &= (I + \alpha_k H) z_k + \sqrt{\alpha_k} \chi_k,
\end{align*}
where $\chi_k$ is Gaussian and the time-varying drift $H_k$ is replaced by the time-independent constant $H$. Then, the rate of convergence to asymptotic Gaussian distribution is improved to $\mathcal{O}(\alpha_k)$. Nevertheless, compared to the ULA analysis, our method is distribution free, requiring only the bounded third moment assumption. However, our approach fundamentally leverages the linear update for $z_k$ which generally does not hold for ULA. Thus, an interesting future research direction is to investigate the application of DOUG in the analysis of ULA to improve upon the state-of-the-art sampling bounds. 

\subsection{DOUG applied to SA}
Recall that DOUG \eqref{eq:discrete_ou} is the discrete analog of the following SDE:
\begin{align*}
    dX_t = H X_t dt + \Sigma^{1/2}_{\chi}dW_t.
\end{align*}
Furthermore, it is well known from the prior literature that the joint-process asymptotic behavior for rescaled iterate $\{y_k\}_{k\geq 0}$ is governed by the SDE
\begin{align*}
    dX_t = J^{(\alpha, \xi)} X_t dt + \Sigma^{1/2}_bdW_t,
\end{align*}
where $\Sigma_b$ is the asymptotic covariance of $M_k$. Therefore, to examine the finite-time distribution of the rescaled SA iterate $\{y_k\}_{k\geq 0}$, we construct the DOUG process by setting the drift matrix $H$ as $J^{(\alpha, \xi)}$ and the noise sequence $\{\chi_k\}_{k\geq 0}$ as $\{b_k\}_{k\geq 0}$. Specifically, the SA inspired DOUG is given by
\begin{align}\label{eq:doug_SA}
    z_{k+1} &= (I + \alpha_k J_k^{(\alpha, \xi)})z_k + \sqrt{\alpha_k} b_k.
\end{align}
Recall that the full noise in SA \eqref{eq:SA_rec} is given by $M_k$ whereas the DOUG process defined above only has the additive component $b_k$. Thus, it is natural to define DOUG for the corresponding SA as
\begin{align}\label{eq:doug_SAful}
    \hat{z}_{k+1} &= (I + \alpha_k J_k^{(\alpha, \xi)})\hat{z}_k + \sqrt{\alpha_k} M_k.
\end{align}
However, there are two problems with such a process. First, the noise $M_k$ is implicitly dependent on the $\sigma$-field generated by the iterate $x_k$. Thus, analyzing the distribution of $\hat{z}_k$ directly appears to be hard. Second, we have the property that $\E[\|A(v_k, x_k)\|^2]\leq A_1\E[\|x_k-x^*\|^2]\to 0$, i.e., the multiplicative term vanishes in the limit. Consequently, the distributional behavior of $y_k$ is dictated by DOUG with purely additive noise. This motivates us to define DOUG with purely additive noise \eqref{eq:doug_SA} for the SA. To handle this discrepancy, we first obtain Wasserstein-1 convergence bound for $\{z_k\}_{k\geq 0}$ \eqref{eq:doug_SA} using Theorem \ref{thm_main:weighted_stein}. Then, we use the triangle inequality and a coupling argument to get a bound on the Wasserstein-1 distance for $\{\hat{z}_k\}_{k\geq 0}$. We defer the detailed discussion about this to the proof sketch in Section \ref{sec:proof_sketch}.

\subsection{Lower bound on convergence rate for DOUG}
Now we present a matching lower bound for the rate of convergence of DOUG, thereby confirming the sharpness of Theorem \ref{thm_main:weighted_stein} (up to log factors).
The proof of the proposition can be found in Appendix \ref{sec:lower_bound}.

\begin{proposition}\label{lem:lower_bound}
    Let $\xi\in [0, 1)$. Then, there exists an instance for DOUG \eqref{eq:discrete_ou} such that the following holds for all $k\geq 0$
    \begin{align*}
        d_{\mathcal{W}}(z_{k}, (\Sigma_k^{(\alpha, \xi)})^{1/2}Z)  \geq   \Omega\left(\sqrt{\alpha_k}\right).
    \end{align*}
\end{proposition}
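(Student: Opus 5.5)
We will construct an explicit one-dimensional instance and lower bound the Wasserstein-1 distance by testing against a single smooth test function. Take $d=1$, $H=-1$, $z_0=0$, and let $\{\chi_k\}$ be i.i.d. zero-mean with unit variance and skewness $\kappa_3:=\E[\chi_1^3]\neq 0$; for instance, a centered exponential. Unrolling the recursion \eqref{eq:discrete_ou} gives the weighted sum
\begin{align*}
z_k=\sum_{i=0}^{k-1}c_{k,i}\chi_i,\qquad c_{k,i}=\sqrt{\alpha_i}\prod_{j=i+1}^{k-1}(1+\alpha_jH_j),
\end{align*}
with $\Sigma_k^{(\alpha,\xi)}=\sum_i c_{k,i}^2$, since $z_0=0$. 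Because $d_{\mathcal W}(X,Y)=\sup_{\mathrm{Lip}(h)\le 1}|\E h(X)-\E h(Y)|$, it suffices to exhibit one $1$-Lipschitz $h$ with $|\E h(z_k)-\E h(\sigma_kZ)|\ge c\sqrt{\alpha_k}$, where $\sigma_k^2=\Sigma_k^{(\alpha,\xi)}$.

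The natural choice is a bounded-derivative function that detects the third cumulant, for example $h(x)=\sin(x)$ or $h(x)=\tanh(x)$ (suitably scaled). The cleanest route is via characteristic functions: with $h=\sin(t\,\cdot)/t$ for a fixed $t$, we have $\E h(z_k)-\E h(\sigma_kZ)=t^{-1}\,\mathrm{Im}(\phi_{z_k}(t)-e^{-\sigma_k^2t^2/2})$. Since $\phi_{z_k}(t)=\prod_i\phi_\chi(c_{k,i}t)$, each factor expands as $\phi_\chi(s)=1-s^2/2-i\kappa_3s^3/6+O(s^4)$, using the finite fourth moment if needed, or $O(|s|^3)$ with a third-moment remainder. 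Summing logarithms gives
\begin{align*}
\log\phi_{z_k}(t)=-\tfrac{\sigma_k^2t^2}{2}-\tfrac{i\kappa_3t^3}{6}S_3+O\Big(t^4\sum_i c_{k,i}^4\Big),\qquad S_3:=\sum_i c_{k,i}^3.
\end{align*}
So the imaginary part of the difference is $\approx -e^{-\sigma_k^2t^2/2}\kappa_3t^3S_3/6$, and the remainder terms contribute $O(\sum c_{k,i}^4)$. The key quantitative step is then the weight asymptotics. For $\xi\in[0,1)$ and $K$ large, $1+\alpha_jH_j\approx 1-\alpha_j$, so $c_{k,i}\approx\sqrt{\alpha_i}\exp(-\sum_{j=i+1}^{k-1}\alpha_j)$ and the effective window of contributing indices has length about $1/\alpha_k$. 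Within that window $\alpha_i\asymp\alpha_k$; for $\xi<1$ this holds because $\alpha$ varies slowly over a window of size $(k+K)^{\xi}\ll k+K$. Hence
\begin{align*}
\sigma_k^2\asymp 1,\qquad S_3\asymp\alpha_k^{3/2}\cdot\alpha_k^{-1}=\sqrt{\alpha_k},\qquad \sum_i c_{k,i}^4\asymp\alpha_k.
\end{align*}
More precisely, we will show $S_3\ge c\sqrt{\alpha_k}$ by lower bounding the sum over $i\in[k-1/\alpha_k,k-1]$, where all factors are positive and bounded below. Since all $c_{k,i}>0$ (because $\alpha_jH_j>-1$ for small $\alpha$ or large $K$), there is no cancellation in $S_3$; this is where the choice of a scalar, sign-definite instance matters.

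Combining these estimates with fixed $t=1$, we get $|\E h(z_k)-\E h(\sigma_kZ)|\ge c|\kappa_3|\sqrt{\alpha_k}-C\alpha_k$, which is $\Omega(\sqrt{\alpha_k})$ for $k$ large. For the finitely many small $k$ (or small $\alpha_k$ bounds at the start), we adjust constants using that $z_k$ is non-Gaussian for every $k\ge 1$ when $\chi$ is skewed, so the distance there is strictly positive. The case $k=0$ holds trivially at the level of the $\Omega$ statement, or one may instead take $z_0$ as a skewed variable. The main obstacle is controlling the remainder in the product expansion uniformly: we need $\max_i c_{k,i}\to0$, which holds since $c_{k,i}\le\sqrt{\alpha_i}\lesssim\sqrt{\alpha_0}$ and $\alpha$ is small. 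We also need the cumulative error $\sum_i|c_{k,i}|^4$, or alternatively $\sum_i|c_{k,i}|^{3}\cdot o(1)$ under only third moments, to be genuinely of lower order than $S_3$. To guarantee this cleanly, we will use the finite fourth moment of the chosen centered exponential noise.
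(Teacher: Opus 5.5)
\textbf{Verdict.} Your argument is correct. It reaches the bound through a genuinely different mechanism than the paper, although both use a sine test function and both rest on the same weight estimates.

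\textbf{Comparison with the paper.} The paper uses a Lindeberg replacement.
\begin{itemize}
\item It telescopes $\E[h(\mathfrak{u}^Tz_k)-h(\mathfrak{u}^TY_k)]$ by swapping $b_i$ for a Gaussian $Z_i$ one index at a time.
\item It Taylor-expands $h(x)=\sin(\mathfrak{u}^Tx)$ to fourth order around the hybrid sums $U_i$ and cancels the first two orders by moment matching.
\item This leaves $\frac{1}{6}\E[(\mathfrak{u}^Tb_1)^3]\sum_i\Theta_i^3\,\E[h'''(\mathfrak{u}^TU_i)]$.
\item To rule out cancellation across $i$, it needs the extra step $\E[\cos(\mathfrak{u}^TU_i)]\ge 1/4$, obtained from $\sum_j\Theta_j^2\le 3/2$.
\end{itemize}
You instead use the exact factorization $\phi_{z_k}(t)=\prod_i\phi_\chi(c_{k,i}t)$ together with a cumulant expansion.
\begin{itemize}
\item The Gaussian side contributes exactly zero to $\E\sin(t\,\cdot)$.
\item The positive factor $e^{-\sigma_k^2t^2/2}$ plays the role of the paper's uniform lower bound on $\E[\cos(\mathfrak{u}^TU_i)]$, so no hybrid variables are needed.
\end{itemize}
When you write out the imaginary part, use $\mathrm{Im}\,e^{w}=\mathrm{Im}\,w+O(|w|^2)$ with $w=-i\kappa_3t^3S_3/6+R$. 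Note also that $S_3^2\le\sigma_k^2\sum_ic_{k,i}^4$ by Cauchy--Schwarz, so the quadratic term is $O(\alpha_k)$ as well.

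\textbf{Trade-offs.} Your route is tied to independent summands and to trigonometric test functions. The Lindeberg expansion works for any test function with four bounded derivatives and mirrors the Stein-type computations used for the upper bounds. Both arguments rest on the same weight estimates: $\sum_ic_{k,i}^3\gtrsim\sqrt{\alpha_k}$, $\sum_ic_{k,i}^4\lesssim\alpha_k$, and $\sigma_k^2=O(1)$.

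\textbf{Small $k$.} Your separate treatment of small $k$ is warranted, and here you are more careful than the paper.
\begin{itemize}
\item The lower bound in Lemma~\ref{lem:bounds} cannot hold for all $k\ge1$. At $k=1$ the sum equals $\alpha_0^{3/2}$, far below $\frac13\sqrt{\alpha_1}$ for small $\alpha_0$. Its induction also has no valid base case, since $v_0=0$.
\item At $k=0$ the distance is exactly zero when $z_0=0$, so the statement does not hold ``trivially'' there. Your alternative of a skewed $z_0$ fixes this.
\item Your patch on the burn-in indices uses $\E[z_k^3]=\kappa_3S_3\neq0$, hence $d_{\mathcal{W}}>0$. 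This is fine once $(\alpha,K)$ are fixed as part of the instance.
\item For $\xi=0$, however, the patch is not uniform in $\alpha$ on the range $k\lesssim1/\alpha$. There a fixed $t$ only detects $S_3\asymp k\alpha^{3/2}\ll\sqrt{\alpha}$. A uniform-in-$\alpha$ version would need a scale-adapted argument, for example frequencies $t\asymp1/\sigma_k$ together with a separate treatment of bounded $k$.
\end{itemize}
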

The proof leverages Kantorovich-Rubinstein duality to establish a lower bound on the Wasserstein-1 distance by choosing an appropriate test function with nice regularity properties. We then apply a Lindeberg-type decomposition to express the difference as a telescoping sum and lower bound each term via a Taylor series expansion.

\section{Proof Sketch}\label{sec:proof_sketch}
In this section, we will discuss the key ideas applied in the proof of Theorem \ref{thm_main:main_thm} and Theorem \ref{thm_main:weighted_stein}. We organize the proof sketch into the following three main subsections. For ease of exposition, we will denote $A(v_k, x_k)=A_k$ and $b(w_k)=b_k$ in the following.

\subsection{Overview of Stein's Method}\label{sec:stein_method}

We begin by presenting the following well-known result that summarizes the key properties for the high dimensional Ornstein-Uhlenbeck (O-U) processes.
\begin{lemma}\label{lem:ou-process}[\cite{arnold1974stochastic}]\label{Steinop}
    Consider the O-U process:
    \begin{align}\label{eq:ou_process}
        dX_t = H X_t\,dt + \Sigma^{1/2}\,dW_t,
    \end{align}
    where $X_0 \in \mathbb{R}^d$. Let $H \in \mathbb{R}^{d \times d}$, $\Sigma^{1/2} \in \mathbb{R}^{d \times m}$, and $W_t \in \mathbb{R}^m$ is a m-dimensional standard Brownian motion. Then, the generator for the process is:
        \begin{align*}
            \mathcal{L}f(x)&:=\lim_{\epsilon\to 0}\E\left[\frac{f(X_{t+\epsilon})-f(X_t)}{\epsilon}|X_t=x\right] \\
            &= \langle H x, \nabla f(x) \rangle + \frac{1}{2} \operatorname{Tr}(\Sigma\nabla^2f(x)),
        \end{align*}
        where $f:\mathbb{R}^d\to \mathbb{R}$ is any twice differentiable function and $\operatorname{Tr}$ is the trace of the matrix. Further, suppose that $H$ is a Hurwitz matrix. Then, the stationary distribution of the O-U process is a zero mean Gaussian distribution with covariance $\Sigma_X$ given by the solution of the following Lyapunov equation:
        \begin{align*}
            H\Sigma_X + \Sigma_X H^T+ \Sigma=0.
        \end{align*}
\end{lemma}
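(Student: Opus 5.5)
The lemma has two parts: the generator formula and the stationary law. I would prove them separately, both by direct computation from the explicit solution of the linear SDE \eqref{eq:ou_process}.

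\textbf{Generator.} Since the drift is linear and the diffusion coefficient is constant, the process has the variation-of-constants representation
\begin{align*}
X_t = e^{Ht}X_0 + \int_0^t e^{H(t-s)}\Sigma^{1/2}\,dW_s .
\end{align*}
Fix $f\in C^2$; to keep integrability trivial, first take $f$ bounded with bounded first and second derivatives. Apply It\^o's formula to $f(X_t)$ on $[t,t+\epsilon]$. Because $d\langle X\rangle_s=\Sigma\,ds$, this gives
\begin{align*}
f(X_{t+\epsilon})-f(X_t)=\int_t^{t+\epsilon}\Big(\langle HX_s,\nabla f(X_s)\rangle+\tfrac12\operatorname{Tr}(\Sigma\nabla^2 f(X_s))\Big)ds+\int_t^{t+\epsilon}\nabla f(X_s)^T\Sigma^{1/2}dW_s .
\end{align*}
Condition on $X_t=x$ and take expectations. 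The stochastic integral is a true martingale, since its integrand has bounded gradient and therefore bounded quadratic variation on compacts, so its expectation vanishes. Dividing by $\epsilon$ and letting $\epsilon\to0$, continuity of the integrand and of the paths, together with dominated convergence, gives $\mathcal{L}f(x)=\langle Hx,\nabla f(x)\rangle+\tfrac12\operatorname{Tr}(\Sigma\nabla^2 f(x))$. For general twice-differentiable $f$, I would localize with the stopping times $\tau_R=\inf\{s:\|X_s\|\ge R\}$. The Gaussian moments of $X_s$ from the representation above make the localization harmless; this integrability step is the only technical point in this part.

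\textbf{Stationary law.} Conditionally on $X_0$, the representation shows $X_t$ is Gaussian with mean $e^{Ht}X_0$. By the It\^o isometry its covariance is
\begin{align*}
\Sigma_t=\int_0^t e^{Hs}\Sigma e^{H^Ts}ds .
\end{align*}
Since $H$ is Hurwitz, $\|e^{Hs}\|\le Ce^{-cs}$ for some $c>0$. Hence $e^{Ht}X_0\to0$, and $\Sigma_t$ converges to $\Sigma_X:=\int_0^\infty e^{Hs}\Sigma e^{H^Ts}ds$. Computing $H\Sigma_X+\Sigma_XH^T=\int_0^\infty \tfrac{d}{ds}\big(e^{Hs}\Sigma e^{H^Ts}\big)ds=-\Sigma$ shows that $\Sigma_X$ solves the Lyapunov equation. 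Uniqueness follows because the operator $S\mapsto HS+SH^T$ has eigenvalues $\lambda_i+\lambda_j$, all with negative real part, so it is invertible. Convergence of Gaussian characteristic functions then gives $X_t\Rightarrow\mathcal{N}(0,\Sigma_X)$ for every initial condition.

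To show this law is actually invariant, take $X_0\sim\mathcal{N}(0,\Sigma_X)$ independent of $W$. Then $X_t$ is Gaussian with covariance $e^{Ht}\Sigma_Xe^{H^Tt}+\Sigma_t$. Differentiating in $t$ and using the Lyapunov equation shows this expression is constant, equal to $\Sigma_X$. Uniqueness of the stationary distribution follows from the convergence from arbitrary initial conditions. No step here is a real obstacle; the only care needed is the martingale and integrability justification in the generator part.
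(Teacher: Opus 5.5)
The paper gives no proof of this lemma; it is quoted from \cite{arnold1974stochastic}. Your route is the standard argument behind that citation: the variation-of-constants solution, It\^o's formula, and $\Sigma_X=\int_0^\infty e^{Hs}\Sigma e^{H^Ts}\,ds$. The stationary-law half is complete as written. The generator computation is also correct for $f\in C^2$ with bounded first and second derivatives.

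The step that fails is the extension ``for general twice-differentiable $f$, localize with $\tau_R$; Gaussian moments make this harmless.'' Gaussian moments only dominate functions of moderate growth. Beyond that the identity is false, so no localization can produce it. If $\Sigma\succ 0$, then given $X_t=x$ the law of $X_{t+\epsilon}$ is a nondegenerate Gaussian for every $\epsilon>0$. For the smooth function $f(x)=e^{\|x\|^4}$ this gives $\E[f(X_{t+\epsilon})\mid X_t=x]=+\infty$ for all $\epsilon>0$. Meanwhile $\langle Hx,\nabla f(x)\rangle+\frac12\operatorname{Tr}(\Sigma\nabla^2 f(x))$ is finite.

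In general, stopping at $\tau_R$ only shows that $f(X_t)-\int_0^t\mathcal{L}f(X_s)\,ds$ is a local martingale. To remove the localization you need a growth hypothesis. For example, take $f\in C^2$ with $|f(y)|+\|y\|\,\|\nabla f(y)\|+\|\nabla^2 f(y)\|\le C(1+\|y\|^p)$. Then $\sup_{s\le 1}\|X_s\|$ has moments of all orders, by Burkholder--Davis--Gundy applied to the stochastic convolution. That justifies letting $R\to\infty$ and then $\epsilon\to 0$ by dominated convergence.

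Continuity of $\nabla^2 f$ is also needed, both for It\^o's formula and for the $\epsilon\to 0$ limit, so ``twice differentiable'' should be read as $C^2$. With the statement restricted to such a growth class, your proof is complete. That class is all the paper actually uses: the Stein solutions for Lipschitz test functions have bounded gradient and, by Lemma \ref{lem:derivative_bound}, a H\"older continuous Hessian, hence polynomial growth.
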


The central idea behind Stein's method is identifying the characterizing property for Gaussian distribution. In particular, given the O-U process \eqref{eq:ou_process}, its generator $\mathcal{L}$ satisfies $\E[\mathcal{L}f(X)]=0$ if and only if $X\sim \mathcal{N}(0, \Sigma_X)$ for every twice differentiable function $f$. Thus, one expects that for any random variable $Y$ which is close to $X$ in distribution, $|\E[\mathcal{L}f(Y)]|$ should be small. To translate this notion of ``closeness'' into an error metric, we use the Kantorovich-Rubinstein duality for Wasserstein-1 distance which reformulates $d_{\mathcal{W}}(Y, X)$ as
\begin{align}\label{eq:wass_func}
    d_{\mathcal{W}}(X, Y)=\sup_{h\in Lip_1}\E[h(X)-h(Y)].
\end{align}

where $Lip_1:=\{h:|h(x)-h(y)|\leq \|x-y\|\}$. Now, for any fixed test function $h$, let $f$ be the solution to the Gaussian-Stein equation
\begin{equation}\label{eq:stein}
     h(y)-\mathbb{E}[h(X)]=
  y^TH^T\nabla f(y)+\frac{1}{2}\operatorname{Tr}(\Sigma \nabla^2f(y)).
\end{equation} 
Note that for a given $\Sigma_X$, we can construct the O-U process in Eq. \eqref{eq:ou_process} in various ways by appropriately adjusting $H$ and $\Sigma$. Notably, for the purpose of our analysis, we can work with the most straightforward choice of parameters by setting $H=-I$ and $\Sigma=2\Sigma_X$ which ensures that the stationary distribution of $X_t$ is $\mathcal{N}(0, \Sigma_X)$ due to the Lyapunov equation in Lemma \ref{lem:ou-process}.
The following lemma adapted from \cite{Gallouet2018} provides the necessary properties of the solution $f(\cdot)$ which will be essential for our analysis.

\begin{lemma}\label{lem:derivative_bound}
\label{f bound}    
Let $f$ be the solution to the Gaussian-Stein equation \eqref{eq:stein} with $H=-I$. Define 
\begin{align*}
    \sigma_{max}=\max_{\beta\in [0,1]}\|\Sigma_X^{\frac{1}{2}}\|\|\Sigma_X^{-\frac{1}{2}}\|^{2+\beta}.
\end{align*}
Then, for any $\beta\in(0,1)$ the following Hessian H\"older bound holds:

\begin{align}\label{c1}
    \bigl\|\nabla^{2}f(x)-\nabla^{2}f(y)\bigr\|
    \;&\le\; \underbrace{\left(\Tilde{C}_{1}(d)+\frac{2}{1-\beta}\right)}_{C_1(d, \beta)}L\sigma_{max}  \|x-y\|^\beta
\end{align}

with
\begin{align*}
    \Tilde{C}_{1}(d)
  =2^{3/2}\!\left(\frac{1+2d\,\Gamma\!\bigl((1+d)/2\bigr)}{d\Gamma\!\bigl(d/2\bigr)}\right).
\end{align*}
\end{lemma}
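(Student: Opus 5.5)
The plan is to reduce Lemma \ref{lem:derivative_bound} to the isotropic case and then use the explicit Mehler (O-U semigroup) representation of the Stein solution. With $H=-I$ and $\Sigma=2\Sigma_X$, the Stein equation \eqref{eq:stein} reads $h(y)-\E[h(X)]=-\langle y,\nabla f(y)\rangle+\operatorname{Tr}(\Sigma_X\nabla^2 f(y))$. Its solution is
\begin{align*}
    f(y)=-\int_0^\infty \left(\E\left[h\left(e^{-t}y+\sqrt{1-e^{-2t}}\,\Sigma_X^{1/2}Z\right)\right]-\E[h(X)]\right)dt .
\end{align*}
Set $y=\Sigma_X^{1/2}u$ and $g(u)=f(\Sigma_X^{1/2}u)$, $\tilde h(u)=h(\Sigma_X^{1/2}u)$. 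Then $g$ solves the standard-Gaussian Stein equation with test function $\tilde h$, and $\tilde h$ is Lipschitz with constant $L\|\Sigma_X^{1/2}\|$. Since $\nabla^2 f(y)=\Sigma_X^{-1/2}\nabla^2 g(u)\Sigma_X^{-1/2}$ and $\|u-v\|\le\|\Sigma_X^{-1/2}\|\|y-y'\|$, a $\beta$-H\"older bound for $\nabla^2 g$ with constant $c\,L\|\Sigma_X^{1/2}\|$ transfers to one for $\nabla^2 f$ with constant $c\,L\|\Sigma_X^{1/2}\|\|\Sigma_X^{-1/2}\|^{2+\beta}$. This is at most $c\,L\sigma_{max}$, which explains the form of $\sigma_{max}$.

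For the isotropic case, I differentiate under the integral using Gaussian integration by parts. This moves one derivative onto $\tilde h$ and one onto the Gaussian density:
\begin{align*}
    \nabla^2 g(u)=-\int_0^\infty \frac{e^{-2t}}{\sqrt{1-e^{-2t}}}\,\E\left[\nabla\tilde h\left(e^{-t}u+\sqrt{1-e^{-2t}}Z\right)Z^T\right]dt .
\end{align*}
Here $\nabla\tilde h$ exists a.e. and is bounded by $L\|\Sigma_X^{1/2}\|$; if needed, first smooth $h$ by convolution and pass to the limit. The integrand is not integrable as it stands, so I split the time integral at a threshold $s$ and compare the integrand at $u$ and at $v$.

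On $[s,\infty)$, I use the single-derivative form and bound the difference by the Lipschitz property of $\tilde h$ in the shifted argument, with $\E\|Z\|$ supplying a dimensional factor. Alternatively, I move a second derivative onto the density, giving a factor $\E\|ZZ^T-I\|$ and a bound of order $\|u-v\|\int_s^\infty e^{-3t}(1-e^{-2t})^{-1}dt$. On $[0,s)$, I bound each term separately by $L\|\Sigma_X^{1/2}\|\E\|Z\|\int_0^s e^{-2t}(1-e^{-2t})^{-1/2}dt\lesssim \sqrt{s}$. Choosing $1-e^{-2s}\approx\|u-v\|^2$ then interpolates the two bounds into $\|u-v\|^\beta$.

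The main obstacle is this interpolation. The logarithmic divergence near $t=s$ must be absorbed without losing the $\beta$ power, and this is exactly where the factor $2/(1-\beta)$ arises: the term $\int_s^1 t^{-1}\,dt$ is replaced by $\int t^{-(1+\beta)/2}\cdots$, which integrates to $2/(1-\beta)$. The Gamma-function ratio in $\tilde C_1(d)$ comes from $\E\|Z\|=\sqrt2\,\Gamma((d+1)/2)/\Gamma(d/2)$ appearing in these moment bounds. I would follow the computation in \cite{Gallouet2018} for the constants, adding only the affine change of variables above.
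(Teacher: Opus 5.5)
Your argument is essentially the paper's: the same affine change of variables ($g(u)=f(\Sigma_X^{1/2}u)$, $\tilde h(u)=h(\Sigma_X^{1/2}u)\in Lip_{L\|\Sigma_X^{1/2}\|}$), the isotropic Hessian H\"older bound taken from \cite{Gallouet2018}, and the transfer through $\nabla^2 f=\Sigma_X^{-1/2}\nabla^2 g\,\Sigma_X^{-1/2}$, which gives $\|\Sigma_X^{1/2}\|\|\Sigma_X^{-1/2}\|^{2+\beta}\le\sigma_{max}$. Your sketch of the isotropic case is optional, since both you and the paper cite it in the end, but it has two slips: the Hessian integrand is in fact integrable (its weight $e^{-2t}(1-e^{-2t})^{-1/2}$ has total mass $1$; what fails near $t=0$ is the would-be third derivative, i.e.\ the difference in $u$), and on $[s,\infty)$ only your second option (both derivatives on the Gaussian density) works, because $\nabla\tilde h$ is merely bounded, not Lipschitz.
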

Note that the above relations were originally proved specifically for the case of $\Sigma=I$. However, they can be easily adapted to the general covariance case via change of variables. For completeness, we present this modification in Appendix \ref{appendix:derivative_bound}. 

Now we can bound the distance of the distribution of $Y$ with respect to Gaussian using the derivatives bounds for the solution to the Gaussian-Stein equation as follows
\begin{align*}
    d_{\mathcal{W}}(Y, X)&=\sup_{h\in Lip_1}\E[h(Y)-h(X)]\\
    &\leq |\E[\mathcal{L}f(Y)]|\\
    &=|\E[\operatorname{Tr}(\Sigma_X \nabla^2f(Y))-Y^T\nabla f(Y)]|,
\end{align*}
where $f$ belongs to the class of twice differentiable functions which satisfy Lemma \ref{lem:derivative_bound}.

\subsection{Wasserstein-1 convergence for DOUG}
Consider a scalar DOUG process \eqref{eq:discrete_ou} where $H<0$ and $\chi_k$ is a sequence of mean-zero additive i.i.d. noise with bounded third moment:
\begin{align}\label{eq:weighed_rec1}
    z_{k+1} = (1 + \alpha_kH_k) z_k + \sqrt{\alpha_k} \chi_k.
\end{align}
where $H_k=H+\xi/(2\alpha(k+K)^{1-\xi})$ and $z_0=0$. By unrolling the recursion from time instant $0$ to $k$, we get the following closed-form expression:
\begin{align}\label{eq:weighted_sum}
    z_{k+1} = \sum_{i=0}^k\chi_i\sqrt{\alpha_i}\prod_{l=i+1}^k(1+\alpha_lH_l) = \sum_{i=0}^k\Theta_i\chi_k. 
\end{align}
where $\Theta_i=\sqrt{\alpha_i}\prod_{l=i+1}^k(1+\alpha_lH_l)$ for $i<k$ and $\Theta_k=\sqrt{\alpha_k}$. Note that compared to the usual CLT setting $z_k=(\sum_{i=0}^{k-1}b_i)/\sqrt{k}$, the only difference is the weights with which the noise terms are summed. Specifically, in the case of CLT, $\Theta_i=1/\sqrt{k}$ for all $i$ and therefore, $\sum_{i=0}^k\Theta_i^2=1$ for all $k$. Thus, $\text{var}(z_k)=\Sigma_b$ for all $k$. However, these conditions do not hold for Eq. \eqref{eq:weighted_sum} and instead the sum of squares only converge asymptotically, $\sum_{i=0}^k\Theta_i^2\stackrel{k\uparrow \infty}{\to} H^{-1}/2$. To incorporate this discrepancy, we use Stein's method to bound the Wasserstein distance between distribution of $z_k$ and a time-varying Gaussian whose variance evolves identical to the variance of $z_k$, i.e., $d_{\mathcal{W}}(z_k, (\Sigma_k^{(\alpha, \xi)})^{1/2}Z)$, where $\Sigma_k^{(\alpha, \xi)}=\text{var}(z_k)$.
We adapt the proof for Stein's method to handle the weighted sum and obtain these rates of convergence.

For the general vector-valued case the weights $\Theta_i$ transform into matrices and one needs a handle on matrix summation. Consequently, we leverage the Hurwitz condition on $H$ and the Lyapunov equation \eqref{eq:lyap_H} to analyze this sum with respect to the $V_H$-weighted norm. In particular, under this norm, we establish $\|I+\alpha_lH_k\|_{V_H}\leq (1-\iota_{V_H}\alpha_l)$ for some $\iota_{V_H}>0$. Therefore, 
\begin{align*}
    \|\Theta_i\|_{V_H}\leq \sqrt{\alpha_i}\prod_{l=i+1}^k(1-\iota_{V_H}\alpha_l).
\end{align*}
Equipped with this property, we construct a scalar recursion that strictly upper bounds the matrix summation under this weighted norm. Consequently, controlling this scalar recursion directly bounds the original matrix terms. For more details, please refer to Lemma \ref{lemma:vec-im-results} in Appendix \ref{sec:proof_weighted_stein}.

\subsection{Wasserstein-1 convergence for SA}
Motivated by the asymptotic SDE-like behavior of $\{y_k\}_{k\geq 0}$, we consider two DOUG processes with the drift matrix $H_k$ replaced by the local linear behavior of the operator $F(\cdot)$ around $x^*$:
\begin{align*}
    \hat{z}_{k+1} &= (I + \alpha_k J_k^{(\alpha, \xi)}) \hat{z}_k + \sqrt{\alpha_k} M_k,\\
    z_{k+1} &= (I + \alpha_k J_k^{(\alpha, \xi)})z_k + \sqrt{\alpha_k} b_k.
\end{align*}
Specifically, the sequences $\{z_k\}_{k\geq 0}$ and $\{\hat{z}_k\}_{k\geq 0}$ are driven by purely additive component and the full noise sequence corrupting SA, respectively. Therefore, to establish the rate of convergence of $d_{\mathcal{W}}(y_k, (\Sigma_k^{(\alpha, \xi)})^{1/2}Z)$, we use triangle-inequality to decompose the distance as sum of three terms:
\begin{align*}
    d_{\mathcal{W}}(y_k, (\Sigma_k^{(\alpha, \xi)})^{1/2}Z)&\leq d_{\mathcal{W}}(y_k, \hat{z}_k)+d_{\mathcal{W}}(\hat{z}_k, z_k)+d_{\mathcal{W}}(z_k, (\Sigma_k^{(\alpha, \xi)})^{1/2}Z).
\end{align*}

Such a decomposition approach that separates pure noise from SA
is simple yet powerful and was only recently introduced in \cite{bravo2024stochastic} to study SA. Its potential was more fully realized in \cite{chandak20251} and other follow-up work by the same author. We adopt a variant of this decomposition approach in our paper. For clarity of presentation, we will only consider the scalar case in the following to avoid messy matrix notations. The generalization of these arguments to the multi-dimensional case is straightforward.

\subsubsection{Handling multiplicative noise}
The crucial insight in this setting lies in the observation that the multiplicative component of the noise vanishes in mean square as $k\to \infty$. Thus, the asymptotic distribution is solely characterized by the additive i.i.d. term. To illustrate this, let us again consider the following scalar recursion:
\begin{align}\label{eq:weighed_rec1_mart}
    \hat{z}_{k+1} = (1 + \alpha_kJ_k^{(\alpha, \xi)}) \hat{z}_k + \sqrt{2\alpha_k} M_k.
\end{align}
where $M_k$ satisfies Assumption \ref{assump:noise}.
Our primary goal here is to show that the distribution of $\hat{z}_k$ converges to the distribution of $z_k$ defined in Eq. \eqref{eq:weighed_rec1} as $k\to \infty$. More specifically, we aim to bound the convergence rate of $d_{\mathcal{W}}(z_k, \hat{z}_k)\to 0$. Recall that $d_{\mathcal{W}}(X, Y)=\inf_{\gamma\in \Gamma(\nu_X, \nu_Y)}\E_{(X, Y)\sim \gamma}[\|X-Y\|]$. Thus, for any arbitrary coupling $\gamma$, $d_{\mathcal{W}}(X, Y)\leq \E_{(X, Y)\sim \gamma}[\|X-Y\|]$. In particular, we choose the coupling such that the underlying additive i.i.d. component for $M_k$ in Eq. \eqref{eq:weighed_rec1_mart} and $b(w_k)$ in Eq. \eqref{eq:weighed_rec1} are identical. This leads to the following error dynamics:
\begin{align*}
    \hat{z}_{k+1}-z_{k+1} = (1 + \alpha_kJ_k^{(\alpha, \xi)}) (\hat{z}_k-z_k) + \sqrt{2\alpha_k} A_k.
\end{align*}
Now if we simply apply the triangle-inequality on the absolute value of this iteration, the resulting recursion yields a sub-optimal rate. Instead, we obtain a tight rate by squaring both sides and using the martingale difference property of $A_k$ for the cross term. More precisely,
\begin{align*}
    (\hat{z}_{k+1}-z_{k+1})^2 &= (1 + \alpha_kJ_k^{(\alpha, \xi)})^2 (\hat{z}_k-z_k)^2 + 2\alpha_kA_k^2+2\sqrt{2\alpha_k}(1+\alpha_kJ_k^{(\alpha, \xi)})(\hat{z}_k-z_k)A_k\\
    \implies \E[(\hat{z}_{k+1}-z_{k+1})^2|\mathcal{F}_{k-1}]&=(1 + \alpha_kJ_k^{(\alpha, \xi)})^2 (\hat{z}_k-z_k)^2 + 2\alpha_k \E[A_k^2|\mathcal{F}_{k-1}]\\
    \implies \E[(\hat{z}_{k+1}-z_{k+1})^2]&=(1 + \alpha_kJ_k^{(\alpha, \xi)})^2\E[(\hat{z}_k-z_k)^2]+2\alpha_k \E[A_k^2].
\end{align*}
The resulting recursion can be easily solved to obtain the convergence rate for $\E[(\hat{z}_k-z_k)^2]$. Finally, we can apply Jensen's inequality to get a bound on $\E[|\hat{z}_k-z_k|]$, which provides the desired upper bound on $d_{\mathcal{W}}(z_k, \hat{z}_k)$.

\subsubsection{Putting everything together}
For bounding the first term $d_{\mathcal{W}}(y_k, \hat{z}_k)$, we construct a similar coupling as in the previous section, i.e., the underlying noise $M_k$ in Eq. \eqref{eq:weighed_rec1_mart} and the SA are identical. To elucidate more about the analysis of $d_{\mathcal{W}}(y_k, \hat{z}_k)$, let us consider the following scalar SA:
\begin{align*}
    x_{k+1}=x_k+\alpha_k(F(x_k)+M_k).
\end{align*}
where $F(x^*)=0$. Thus, the rescale iterate $y_k$ is given by
\begin{align*}
    y_{k+1}&=y_k+\sqrt{\alpha_k}(F(x_k)+M_k)+(x_{k+1}-x^*)\left(\frac{1}{\sqrt{\alpha_{k+1}}}-\frac{1}{\sqrt{\alpha_k}}\right)\\
    &=(1+\alpha_kJ^{(\alpha, \xi)}_k)y_k+\sqrt{\alpha_k}M_k+\sqrt{\alpha_k}R(x_k)+\mathrm{T}_k.
\end{align*}
where $J^{(\alpha, \xi)}_k$ is defined in Eq. \eqref{eq:J_k}. Hence, the error dynamics $y_k-\hat{z}_k$ is given as
\begin{align*}
    y_{k+1}-\hat{z}_{k+1} &= (1 + \alpha_k J^{(\alpha, \xi)}_k) (y_k-\hat{z}_k) + \sqrt{\alpha_k} R(x_k)+\mathrm{T}_k.
\end{align*}
Note that for the polynomial decaying step-size, we have the property that $\E[|\mathrm{T}_k|]\leq \mathcal{O}(1/k^2)$ (using Lemma \ref{lem:iterate_bounds} and Lemma \ref{lem:step-size_prop}). This leads us to
\begin{align*}
    \E[|y_{k+1}-\hat{z}_{k+1}|] 
    &\leq (1 - \iota_V\alpha_k) \E[|y_k-\hat{z}_k|]+ \alpha_k\mathcal{O}(\alpha_k^{\delta/2}).
\end{align*}
Solving the above recursion yields $\E[|y_k-\hat{z}_k|]\leq \tilde{\mathcal{O}}(1/k^{\delta\xi/2})$. The analysis for constant step-size, $\alpha_k=\alpha$, follows along the same lines with the additional simplification that the last term in the update for $y_k$ is zero.
\begin{remark}
    We remark that an alternative method to establish the normality of $\{y_k\}_{k \geq 0}$ is to directly work with the iterate $x_k$ as follows:
    \begin{align*}
        x_{k+1}-x^*&=(I+\alpha_kJ_F)(x_k-x^*)+\alpha_kb_k+\alpha_k(A_k+R(x_k))\\
        \implies \frac{x_{k+1}-x^*}{\sqrt{\alpha_{k+1}}}&=\sum_{i=0}^k\tilde{\Theta}_ib_i+\frac{1}{\sqrt{\alpha_{k+1}}}\left(\prod_{i=0}^k(I+\alpha_iJ_F)\right)(x_0-x^*)+\sum_{i=0}^k\tilde{\Theta}_i(A_i+R(x_i)),
    \end{align*}
    for appropriately defined weight matrices $\tilde{\Theta}_i$.  One can alternately work with first term $\tilde{z}_k=\sum_{i=0}^k\tilde{\Theta}_ib_i$ instead of DOUG and then adapt Stein's method to establish convergence rate to normality for this term while show that the remaining terms are higher-order residuals. We introduced DOUG process as in Eq. \eqref{eq:discrete_ou} due to its connection to sampling algorithms.
\end{remark}

\section{Conclusion and Future Work}
In this work, we quantify the normal approximation for SA by establishing sharp non-asymptotic bounds for Wasserstein-1 distance between the distribution of the the rescaled SA at time $k$ and the asymptotic Gaussian limit for various choices of step-size. We obtain this result by constructing an auxiliary discrete time O-U process which we christened as DOUG, and studying the distributional rate of convergence for DOUG. We derive the convergence rate for DOUG by adapting the machinery of Stein's method to handle weighted sums of random variables. Thereafter, we use these rates as the stepping stone to obtain the final bound by employing a careful coupling argument and analyzing the error dynamics between the rescaled SA and DOUG. Nevertheless, this work opens many interesting questions for future work:
\begin{enumerate}
    \item \textbf{Handling Markov noise:} One of the most popular applications of SA is in RL and SGD with auto-regressive data, which involves sampling data points from a Markov chain. Recent work \cite{kong2026finitesamplewassersteinerrorbounds} obtained convergence bounds for SA driven by Markovian noise, although their rate is sub-optimal. Nevertheless, in the context of CLT much stronger results exists in the literature. In \cite{srikant2024rates}, the author achieved the first optimal convergence bounds (upto log factors) for CLT with Markovian random variables by using Poisson's equation. Recently, \cite{zhang2026wassersteinpcentrallimittheorem} used a coupling argument and established a tight rate of convergence by shaving off the log factors in \cite{srikant2024rates}. Building upon these approaches, a natural future work is to improve upon \cite{kong2026finitesamplewassersteinerrorbounds} and derive sharp Wasserstein-1 distance bounds for SA under Markovian noise. 
    \item \textbf{Improving the bounds for tail deviations:} 
    We show in Corollary \ref{cor:tail_bounds} that the tail of the error deviates from that of a Gaussian by a  time decaying term whose tail is $O(1/a)$. 
    We believe that this decay is not optimal and finding the optimal decay is an open problem. In the special case of CLT, it was recently shown \cite{austern2022efficient} that the deviation of the error from its limiting Gaussian also has  a sub-Gaussian decay. This result, known as efficient concentration was obtained by first proving Wasserstein-$p$ bounds on the error for all $p \geq 1$. Moreover, the dependence of the Wasserstein-$p$ error bounds on the parameters  $p$ and $k$ should be tight. Recent work \cite{kong2026finitesamplewassersteinerrorbounds} obtained Wasserstein-$p$ bounds for SA when the noise is i.i.d. and additive.
    \item \textbf{Improving the convergence rates under higher moments matching:} As discussed in Section \ref{sec:lower bound}, the asymptotic rate of convergence for central limit theorem improves to $\mathcal{O}(k^{-n/2})$ for distributions whose $n+1$ moments match with a Gaussian \cite{fathi2018higherordersteinkernelsgaussian, bobkov2018berry}. Therefore, a promising direction for future research is to extend this improvement under moment matching to SA updates for various choices of step-sizes. Preliminary empirical evidence suggests a strong possibility and indicates that the convergence rates in Theorem \ref{thm_main:main_thm} can be sharpened to $\mathcal{O}(\alpha_k)$ in the special case of symmetric distributions (three moments match). 
    \item \textbf{Extending convergence results for DOUG to non-linear update:} Recall that we defined the DOUG process \eqref{eq:discrete_ou} with a linear update. A non-linear analog of DOUG can be obtained by substituting the linear term $H_kz_k$ with the gradient of a function $f(z_k)$. Then, an interesting future work is generalizing our proof methodology to allow for non-linear updates and study the distributional convergence of DOUG to corresponding Gibbs distribution. Note that in the special case of i.i.d. Gaussian noise, DOUG reduces to the popular ULA algorithm whose convergence was studied in \cite{pages2023unadjusted, durmus2017nonasymptotic, durmus2019high}.
    \item \textbf{Phase Transition at $\xi=1$:} We observe that Corollary \ref{cor:asymp_gaussian_rate} exhibits a phase transition in the rate of convergence to asymptotic distribution, i.e., the rate abruptly improves to $\mathcal{O}(1/\sqrt{k})$ at $\xi=1$ (Figure \ref{fig:slope}). However, note that achieving this optimal convergence requires choosing the scaling parameter $\alpha$ large enough. In contrast, if $\alpha$ is not tuned properly, then MSE bounds in \cite{chen2021finitesampleanalysisstochasticapproximation} shows a degradation in the rate of convergence. This sensitivity suggests that one should investigate the phase transition at $\xi=1$ by studying the convergence rate as a function of $\alpha$.
\end{enumerate}

\bibliographystyle{alpha}
\bibliography{references}

\newpage
\appendix

\section{Proof of the Main Results}

\subsection{Proof of Theorem \ref{thm_main:weighted_stein}}\label{sec:proof_weighted_stein}
To avoid repetition, we prove Theorem \ref{thm_main:weighted_stein} by directly substituting $H=J_F$ from the beginning. However, all the steps hold for any general Hurwitz matrix independent of the SA update \eqref{eq:SA_rec}. We first present the proof of Theorem \ref{thm_main:weighted_stein} since it forms the stepping stone for the proof of Theorem \ref{thm_main:main_thm}. Define the following weight matrices:
\begin{align}\label{eq:theta_def}
    \Theta_i = \sqrt{\alpha_i} \prod_{l = i+1}^k (I+\alpha_l J_l^{(\alpha, \xi)}) \text{ and } \Theta_k=\sqrt{\alpha_k}
\end{align}
Recall, we defined $\{\Sigma_k^{(\alpha, \xi)}\}_{k\geq 0}$ as a sequence of covariance matrices for $\{y_k\}_{k\geq 0}$. Similarly, we define $\{\tilde{\Sigma}_k^{(\alpha, \xi)}\}_{k\geq 0}$ as sequence of covariance matrices which satisfy the following:
\begin{align}\label{eq:sigma_k_eq_pf}
    \tilde{\Sigma}_{k+1}^{(\alpha, \xi)}&=(I + \alpha_k J^{(\alpha, \xi)}_k)\tilde{\Sigma}_k^{(\alpha, \xi)}(I + \alpha_k J^{(\alpha, \xi)}_k)^T+\alpha_k\Sigma_b,
\end{align}
where $\tilde{\Sigma}_0^{(\alpha, \xi)}=0$.
Also, define 
\begin{align*}
    \kappa^{(\alpha, \xi)}=\max_{k\geq 1}\max_{\beta\in [0,1]}\|(\tilde{\Sigma}_k^{(\alpha, \xi)})^{\frac{1}{2}}\|\|(\tilde{\Sigma}_k^{(\alpha, \xi)})^{-\frac{1}{2}}\|^{2+\beta};~~\tilde{\sigma}_{min}^{(\alpha, \xi)}=\min_{k\geq 1}\|\tilde{\Sigma}_k^{(\alpha, \xi)}\|.
\end{align*}
where the maximum over $k$ is well-defined due to Lemma \ref{lem:bound_sigma_k}, Lemma \ref{lem:inv_bound}, and Lemma \ref{lem:matrix_theta_bound}. In the following proofs, we will assume that for $\xi=0$, $\alpha$ is chosen such that $\alpha\leq \min(1, 2\iota_V/\|J\|^2_V)$. For $\xi\in (0, 1]$, we will assume $K_0$ is large enough such that the step-size conditions in Lemma \ref{lem:theta_bound}  are satisfied for any $K\geq K_0$. We will also drop the subscript from $J_F$ wherever it is clear from context. Before beginning the proof, we present the following lemma which will be instrumental in bounding certain terms appearing in the expression for Stein's operator.

\begin{lemma}\label{lemma:vec-im-results}
Define $\tilde{\varphi}_1(\alpha, \xi)=(\lambda^{max}_V/\lambda^{min}_V)^{3/2}\kappa^{(\alpha, \xi)}\|\Sigma_b\|B_{max}^{(1)}$. Then, the following relation holds for all $k\geq 0$:
        \begin{align*}
            &\Bigg| \E \Bigg[ \mathrm{Tr}\left((  \sum_{i=0}^{k-1} \nabla^2 f(\zeta_i) \Theta_i \Sigma_b \Theta_i^T \right) - \mathrm{Tr}(\nabla^2f(z_k)\tilde{\Sigma}_k^{(\alpha, \xi)})\Bigg]\Bigg|\\
            &\leq
            \begin{cases}
                dC_1(d, \beta)\tilde{\varphi}_1(\alpha, 0)\alpha^{\beta/2}/\iota_V\text{, when }\xi=0\\
                2dC_1(d, \beta)\tilde{\varphi}_1(\alpha, \xi)\alpha_k^{\beta/2}/\iota_V\text{, when }\xi\in (0,1).\\
                2dC_1(d, \beta)\tilde{\varphi}_1(\alpha, 1)\alpha\alpha_k^{\beta/2}/(2\iota_V\alpha-1)\text{when $\xi=1$ and $\iota_V\alpha>1/2$.}
            \end{cases}
        \end{align*}
\end{lemma}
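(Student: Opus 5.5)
The plan is to write the difference as a single telescoped sum of Hessian increments and then control it with the Hessian H\"older bound of Lemma \ref{lem:derivative_bound}. Unrolling \eqref{eq:sigma_k_eq_pf} from $\tilde{\Sigma}_0^{(\alpha,\xi)}=0$, with the weights \eqref{eq:theta_def} taken at horizon $k-1$, gives
\begin{align*}
\tilde{\Sigma}_k^{(\alpha,\xi)}=\sum_{i=0}^{k-1}\Theta_i\Sigma_b\Theta_i^T .
\end{align*}
Hence the quantity inside the absolute value equals
\begin{align*}
\E\Big[\sum_{i=0}^{k-1}\mathrm{Tr}\big((\nabla^2 f(\zeta_i)-\nabla^2 f(z_k))\Theta_i\Sigma_b\Theta_i^T\big)\Big].
\end{align*}
I take $\zeta_i$ to be the Lindeberg/leave-one-out points of the Stein expansion, namely points on the segment between $z_k-\Theta_i b_i$ and $z_k$. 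The only property I use is $\|\zeta_i-z_k\|\le\|\Theta_i b_i\|$.

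For each $i$, I would bound the trace by $d\,\|\nabla^2 f(\zeta_i)-\nabla^2 f(z_k)\|\,\|\Theta_i\|^2\|\Sigma_b\|$. Applying Lemma \ref{lem:derivative_bound} with $\Sigma_X=\tilde{\Sigma}_k^{(\alpha,\xi)}$ and $L=1$ for Lipschitz test functions gives $\sigma_{max}\le\kappa^{(\alpha,\xi)}$, and $\kappa^{(\alpha,\xi)}$ is uniform in $k$. Together with $\|\zeta_i-z_k\|^\beta\le\|\Theta_i\|^\beta\|b_i\|^\beta$ this yields
\begin{align*}
\text{term}_i\le dC_1(d,\beta)\kappa^{(\alpha,\xi)}\|\Sigma_b\|\,\|\Theta_i\|^{2+\beta}\,\E\|b_i\|^\beta .
\end{align*}
Since $\beta\in(0,1)$, we have $\E\|b_i\|^\beta\le B^{(1)}_{max}$. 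The weights $\Theta_i$ are deterministic, so no independence argument is needed beyond pulling them out of the expectation.

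Next I switch to the $V$-norm. First, $\|\Theta_i\|\le(\lambda^{max}_V/\lambda^{min}_V)^{1/2}\|\Theta_i\|_V$, and raising to the power $2+\beta\le3$ gives at most the factor $(\lambda^{max}_V/\lambda^{min}_V)^{3/2}$ in $\tilde{\varphi}_1$. Second, the contraction $\|I+\alpha_lJ_l^{(\alpha,\xi)}\|_V\le1-\iota_V\alpha_l$, valid under the stated step-size conditions, gives $\|\Theta_i\|_V\le\sqrt{\alpha_i}\prod_{l=i+1}^{k-1}(1-\iota_V\alpha_l)$. Since each factor lies in $[0,1]$, I can drop the power $2+\beta$ on the product. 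Everything then reduces to the scalar sum
\begin{align*}
S_k=\sum_{i=0}^{k-1}\alpha_i^{1+\beta/2}\prod_{l=i+1}^{k-1}(1-\iota_V\alpha_l).
\end{align*}

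It remains to bound $S_k$ in each regime.
\begin{itemize}
\item For $\xi=0$, $S_k$ is a geometric sum, so $S_k\le\alpha^{1+\beta/2}/(\iota_V\alpha)=\alpha^{\beta/2}/\iota_V$.
\item For $\xi\in(0,1)$, I would prove $S_k\le 2\alpha_k^{\beta/2}/\iota_V$ by induction on the recursion $S_{k+1}=(1-\iota_V\alpha_k)S_k+\alpha_k^{1+\beta/2}$. This uses that $\alpha_{k}^{\beta/2}/\alpha_{k+1}^{\beta/2}=1+O(1/(k+K))$ is dominated by $\iota_V\alpha_k/2$ once $K$ is large.
\item For $\xi=1$, the ratio $\alpha_k/\alpha_{k+1}$ is comparable to $\iota_V\alpha_k$. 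I would instead use $\prod_l(1-\iota_V\alpha/(l+K))\le\big((i+1+K)/(k+K)\big)^{\iota_V\alpha}$ and compare the resulting sum with an integral, which gives $\alpha\,\alpha_k^{\beta/2}/(2\iota_V\alpha-1)$-type constants when $\iota_V\alpha>1/2$.
\end{itemize}
I expect the $\xi=1$ case to be the main obstacle. The delicate part is getting exactly the denominator $2\iota_V\alpha-1$ and not a worse one depending on $\beta$. I would try first to keep the product exponent as $2\iota_V\alpha$, retaining the squared product from $\|\Theta_i\|^2$ rather than discarding it, and to bound $\alpha_i^{\beta/2}$ by $\alpha_k^{\beta/2}$ times a power absorbed into the integral. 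I would also check that the off-by-one index shift from the horizon-$(k-1)$ weights only costs a constant absorbed by choosing $K$ large.
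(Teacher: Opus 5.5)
Your proposal is correct and follows the paper's proof essentially step for step. The shared steps are: the closed form $\tilde{\Sigma}_k^{(\alpha,\xi)}=\sum_{i=0}^{k-1}\Theta_i\Sigma_b\Theta_i^T$, the bound $|\mathrm{Tr}(Q)|\le d\|Q\|$, the H\"older Hessian estimate of Lemma~\ref{lem:derivative_bound} with $\|\zeta_i-z_k\|\le\|\Theta_i b(w_i)\|$, and the $V$-norm/contraction reduction to a scalar recursion, which the paper packages as Lemma~\ref{lem:theta_bound} via Lemma~\ref{lem:rec_sol}. Two small corrections. First, Lemma~\ref{lem:contraction_prop} gives the factor $1-\iota_V\alpha_l$ only for the \emph{squared} $V$-norm; this suffices since $\|\cdot\|_V^{2+\beta}\le\|\cdot\|_V^{2}$, but it yields product exponent $\iota_V\alpha$, not $2\iota_V\alpha$. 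Second, $\xi=1$ is not an obstacle: the induction you use for $\xi\in(0,1)$ (statement 3(a) of Lemma~\ref{lem:rec_sol}) gives $S_k\le\alpha\,\alpha_k^{\beta/2}/(\iota_V\alpha-\beta/2)\le 2\alpha\,\alpha_k^{\beta/2}/(2\iota_V\alpha-1)$, using only $\beta<1$.
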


Define the following constants:
\begin{align}\label{eq:varphi}
\begin{split}
    \tilde{\varphi}_2(\alpha, \xi)&=\left(\frac{\lambda^{max}_V}{\lambda^{min}_V}\right)^{3/2}B_{max}^{(3)}\kappa^{(\alpha, \xi)};\\
    \varphi_1(\alpha, \xi)&=\tilde{C}_1(d)e\left(d\tilde{\varphi}_1(\alpha, \xi)+\tilde{\varphi}_2(\alpha, \xi)\right)\kappa^{(\alpha, \xi)}.
\end{split}
\end{align}

\begin{theorem}\label{thm:weighted_stein}
   Consider the sequence
   \begin{align*}
        z_{k+1} = (I + \alpha_k J^{(\alpha, \xi)}_k) z_k + \sqrt{\alpha_k} b(w_k).
   \end{align*}
   where $z_0=0$. Then, the iteration $\{z_k\}_{k\geq 0}$ satisfies the following
   \begin{enumerate}
       \item When $\xi=0$ and $\alpha\leq \min(1, 2\iota_V/\|J\|^2_V)${\normalfont :}
       \begin{align*}
            d_{\mathcal{W}}(z_{k}, (\Sigma_k^{(\alpha, 0)})^{1/2}Z)  \leq   \frac{\varphi_1(\alpha, 0)\sqrt{\alpha}}{\iota_V}\log\left(\frac{1}{\alpha}\right)+\sqrt{\frac{\lambda_V^{max}}{\lambda_V^{min}}}\|\E[y_0y_0^T]\|_Ve^{-\iota_V\alpha k}.
        \end{align*}
        \item When $\xi\in (0, 1)$ and $K\geq K_0${\normalfont :}
        \begin{align*}
            d_{\mathcal{W}}(z_{k}, (\Sigma_k^{(\alpha, \xi)})^{1/2}Z)  &\leq \frac{2\varphi_1(\alpha, \xi)}{\iota_V}\frac{\sqrt{\alpha}}{(k+K)^{\xi/2}}\log\left(\frac{(k+K)^{\xi}}{\alpha}\right)+\sqrt{\frac{\lambda_V^{max}}{\lambda_V^{min}}}\|\E[y_0y_0^T]\|_Ve^{-\frac{\iota_V\alpha}{1-\xi}\left((k+K)^{1-\xi}-K^{1-\xi}\right)}.
        \end{align*}
        \item When $\xi=1$, $\iota_V\alpha>1/2$, and $K\geq K_0${\normalfont :}
        \begin{align*}
            d_{\mathcal{W}}(z_{k}, (\Sigma_k^{(\alpha, 1)})^{1/2}Z)  &\leq \frac{2\varphi_1(\alpha, 1)\alpha^{3/2}}{2\iota_V\alpha-1}\frac{1}{\sqrt{k+K}}\log\left(\frac{k+K}{\alpha}\right)+\sqrt{\frac{\lambda_V^{max}}{\lambda_V^{min}}}\|\E[y_0y_0^T]\|_V\left(\frac{K}{k+K}\right)^{\iota_V\alpha}.
        \end{align*}
   \end{enumerate}
\end{theorem}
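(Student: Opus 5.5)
The proof splits off the initial condition first. Note that $\Sigma_k^{(\alpha,\xi)}$ and $\tilde{\Sigma}_k^{(\alpha,\xi)}$ obey the same recursion \eqref{eq:sigma_k_eq_pf} and differ only in their initial value. By the triangle inequality,
\begin{align*}
d_{\mathcal{W}}(z_k,(\Sigma_k^{(\alpha,\xi)})^{1/2}Z)\le d_{\mathcal{W}}(z_k,(\tilde{\Sigma}_k^{(\alpha,\xi)})^{1/2}Z)+d_{\mathcal{W}}((\tilde{\Sigma}_k^{(\alpha,\xi)})^{1/2}Z,(\Sigma_k^{(\alpha,\xi)})^{1/2}Z).
\end{align*}
Write $P_k=\prod_{l=0}^{k-1}(I+\alpha_lJ_l^{(\alpha,\xi)})$. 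Then $\Sigma_k^{(\alpha,\xi)}=\tilde{\Sigma}_k^{(\alpha,\xi)}+P_k\E[y_0y_0^T]P_k^T$, so the second Gaussian is the first plus an independent Gaussian $G$ with covariance $P_k\E[y_0y_0^T]P_k^T$. Coupling by adding this independent Gaussian bounds the second distance by $\E\|G\|$, which we control using the $V$-norm contraction $\|I+\alpha_lJ_l^{(\alpha,\xi)}\|_V\le 1-\iota_V\alpha_l$ from the paper. Converting between $\|\cdot\|_V$ and $\ell_2$ produces the factor $\sqrt{\lambda_V^{max}/\lambda_V^{min}}$. Summing $\alpha_l$ gives the three transient terms: $e^{-\iota_V\alpha k}$, the stretched exponential, and $(K/(k+K))^{\iota_V\alpha}$.

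For the main term, since $z_0=0$, unrolling gives $z_k=\sum_{i=0}^{k-1}\Theta_ib_i$ with $\Theta_i$ from \eqref{eq:theta_def}, and $\tilde{\Sigma}_k^{(\alpha,\xi)}=\sum_i\Theta_i\Sigma_b\Theta_i^T$. For a $1$-Lipschitz $h$, take $f$ solving the Stein equation \eqref{eq:stein} with $H=-I$ and $\Sigma=2\tilde{\Sigma}_k^{(\alpha,\xi)}$. Then
\[
d_{\mathcal{W}}\le\sup_h\bigl|\E[\mathrm{Tr}(\tilde{\Sigma}_k^{(\alpha,\xi)}\nabla^2f(z_k))-z_k^T\nabla f(z_k)]\bigr|.
\]
We use a leave-one-out decomposition. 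Set $z_k^{(i)}=z_k-\Theta_ib_i$, which is independent of $b_i$ with $\E b_i=0$. This gives
\[
\E[(\Theta_ib_i)^T\nabla f(z_k)]=\E[(\Theta_ib_i)^T(\nabla f(z_k)-\nabla f(z_k^{(i)}))].
\]
A Taylor expansion then writes this as $\E[\mathrm{Tr}(\nabla^2f(\zeta_i)\Theta_ib_ib_i^T\Theta_i^T)]$ at an intermediate point $\zeta_i$. Replacing $\nabla^2 f(\zeta_i)$ by $\nabla^2 f(z_k^{(i)})$ lets independence turn $b_ib_i^T$ into $\Sigma_b$. The Hölder bound of Lemma \ref{lem:derivative_bound} (with $\sigma_{max}\le\kappa^{(\alpha,\xi)}$) controls this replacement at a cost of order $C_1(d,\beta)\kappa^{(\alpha,\xi)}B^{(3)}_{max}\|\Theta_i\|^{2+\beta}$. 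What remains is the mismatch between $\sum_i\nabla^2f(\zeta_i)\Theta_i\Sigma_b\Theta_i^T$ and $\nabla^2f(z_k)\tilde{\Sigma}_k^{(\alpha,\xi)}$, which is exactly Lemma \ref{lemma:vec-im-results}. That lemma yields $O(C_1(d,\beta)\alpha_k^{\beta/2})$ with constants $1/\iota_V$, $2/\iota_V$, or $2\alpha/(2\iota_V\alpha-1)$ according to the regime.

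The key quantitative step is a bound of the form $\sum_i\|\Theta_i\|^{2+\beta}\lesssim(\lambda_V^{max}/\lambda_V^{min})^{3/2}\alpha_k^{\beta/2}/\iota_V$. To get it, we bound $\|\Theta_i\|_V\le\sqrt{\alpha_i}\prod_{l>i}(1-\iota_V\alpha_l)$ and dominate the sum by a scalar recursion $s_{k+1}\le(1-\iota_V\alpha_k)^{2+\beta}s_k+\alpha_k^{1+\beta/2}$. Solving this recursion uses the step-size conditions: $K\ge K_0$, and for $\xi=1$ the requirement $\iota_V\alpha>1/2$, needed so that the $\alpha_k^{\beta/2}$ decay is not swamped by the product decay. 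Combining the two error sources gives roughly $C_1(d,\beta)\alpha_k^{\beta/2}\le\bigl(\tilde C_1(d)+\tfrac{2}{1-\beta}\bigr)\alpha_k^{\beta/2}$.

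Finally, optimize over $\beta$. Choose $1-\beta=1/\log(1/\alpha_k)$ (for $\xi=1$, $\log((k+K)/\alpha)$), so that $\alpha_k^{\beta/2}=e^{1/2}\sqrt{\alpha_k}$ up to the constant $e$ in $\varphi_1$ and $\tfrac{2}{1-\beta}$ becomes the logarithm. This gives $\varphi_1\sqrt{\alpha_k}\log(1/\alpha_k)$ in each case. I expect the main obstacle to be the scalar-recursion bound on the matrix-weighted sums in the $\xi=1$ regime, where the exponents must be tracked carefully to obtain $\alpha/(2\iota_V\alpha-1)$. A second delicate point is ensuring uniformity in $k$ of $\kappa^{(\alpha,\xi)}$, i.e.\ a lower bound on $\tilde{\Sigma}_k^{(\alpha,\xi)}$ for $k\ge1$, which I would take from the cited lemmas.
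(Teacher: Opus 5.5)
Your treatment of the main term coincides with the paper's. It uses the same Stein operator, the same leave-one-out split into the H\"older remainder (i) and the covariance mismatch (ii) handled by Lemma~\ref{lemma:vec-im-results}, the same scalar-recursion bound on $\sum_i\|\Theta_i\|^{2+\beta}$ (Lemma~\ref{lem:theta_bound}), and a logarithmic choice of $\beta$ that differs from the paper's only in constants.

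The genuine departure is the initial-covariance term, and it is correct but changes what you actually prove. The paper uses $d_{\mathcal{W}}\le d_{\mathcal{W}_2}$, Lemma~\ref{lem:wass-2}, and a $V$-norm recursion for $\tilde{\Sigma}_k^{(\alpha,\xi)}-\Sigma_k^{(\alpha,\xi)}$. That route is linear in the covariance gap $P_k\E[y_0y_0^T]P_k^T$, hence quadratic in $\|P_k\|$. Its cost is a factor $\sqrt{d/\lambda_{\min}(\tilde{\Sigma}_k^{(\alpha,\xi)})}$, which the proof incurs but the displayed bound drops.

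Your independent-Gaussian coupling avoids the $W_2$ formula and any eigenvalue lower bound for this term, and its prefactor scales correctly with $y_0$. It is, however, only first order in $\|P_k\|$: it gives $\sqrt{\lambda_V^{max}/\lambda_V^{min}}\,\sqrt{\E\|y_0\|^2}\,\prod_{l<k}\|I+\alpha_lJ_l^{(\alpha,\xi)}\|_V$. This has two consequences.

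First, your transient term has prefactor $\sqrt{\E\|y_0\|^2}$ instead of $\|\E[y_0y_0^T]\|_V$. The two are incomparable in general, so you prove a variant of the display rather than the display verbatim.

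Second, to obtain the stated exponents you need the unsquared contraction $\|I+\alpha_lJ_l^{(\alpha,\xi)}\|_V\le 1-\iota_V\alpha_l$. Lemma~\ref{lem:contraction_prop} as stated bounds only the squared norm by $1-\iota_V\alpha_l$, which would leave you with $e^{-\iota_V\alpha k/2}$ and halved exponents for $\xi>0$. The unsquared bound does follow from that lemma's proof. For $\xi=0$ the proof gives $\|I+\alpha J\|_V^2\le 1-2\iota_V\alpha$. For $\xi\in(0,1]$ it holds after shrinking the step-size thresholds, using the margin $2\iota_V\alpha>1$ when $\xi=1$. You should derive this explicitly rather than cite the lemma as is.
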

\begin{proof}
Note that the sequence $\{z_k\}_{k\geq 0}$ satisfies
\begin{align*}
    z_{k+1} = \sum_{i=0}^k  \Theta_i b(w_i),
\end{align*}
where $\Theta_i$ are defined in Eq. \eqref{eq:theta_def}. Note that the final bound in Theorem \ref{thm_main:weighted_stein} is with respect to $(\Sigma_k^{(\alpha, \xi)})^{1/2}Z$. Thus, first we will use triangle inequality to get
\begin{align*}
    d_{\mathcal{W}}(z_k, (\Sigma_k^{(\alpha, \xi)})^{1/2}Z)\leq d_{\mathcal{W}}(z_k, (\tilde{\Sigma}_k^{(\alpha, \xi)})^{1/2}Z)+d_{\mathcal{W}}((\Sigma_k^{(\alpha, \xi)})^{1/2}Z, (\tilde{\Sigma}_k^{(\alpha, \xi)})^{1/2}Z)
\end{align*}
We will first analyze the second term. To this end, we use standard inequalities for Gaussian distribution to show that it rapidly converges to zero. 

Unfortunately, there does not exist a closed-form solution to the Wasserstien-1 distance between two Gaussians. Thus, we use the relation $d_{\mathcal{W}}((\Sigma_k^{(\alpha, \xi)})^{1/2}Z, (\tilde{\Sigma}_k^{(\alpha, \xi)})^{1/2}Z)\leq d_{\mathcal{W}_2}((\Sigma_k^{(\alpha, \xi)})^{1/2}Z, (\tilde{\Sigma}_k^{(\alpha, \xi)})^{1/2}Z)$ and work with Wasserstein-2 distance to obtain the upper bound. Using Lemma \ref{lem:wass-2}, we get
\begin{align*}
    d_{\mathcal{W}_2}((\Sigma_k^{(\alpha, \xi)})^{1/2}Z, (\tilde{\Sigma}_k^{(\alpha, \xi)})^{1/2}Z)&\leq \frac{\sqrt{d}}{\sqrt{\lambda_{min}(\Sigma_k^{(\alpha, \xi)})}+\sqrt{\lambda_{min}(\tilde{\Sigma}_k^{(\alpha, \xi)})}}\|\Sigma_k^{(\alpha, \xi)}-\tilde{\Sigma}_k^{(\alpha, \xi)}\|\\
    &\leq \frac{\sqrt{d}}{\sqrt{\lambda_{min}(\tilde{\Sigma}_k^{(\alpha, \xi)})}}\|\Sigma_k^{(\alpha, \xi)}-\tilde{\Sigma}_k^{(\alpha, \xi)}\|\tag{$\Sigma_k^{(\alpha, \xi)}$ is a positive definite matrix}\\
    &\leq \sqrt{\frac{d}{\tilde{\sigma}_{min}^{(\alpha, \xi)}}}\|\Sigma_k^{(\alpha, \xi)}-\tilde{\Sigma}_k^{(\alpha, \xi)}\|.
\end{align*}
Since $\{\Sigma_k^{(\alpha, \xi)}\}_{k\geq 0}$ and $\{\tilde{\Sigma}_k^{(\alpha, \xi)}\}_{k\geq 0}$ only differ in their starting point, we have
\begin{align*}
    \tilde{\Sigma}_{k+1}^{(\alpha, \xi)}-\Sigma_{k+1}^{(\alpha, \xi)}&=(I + \alpha_k J^{(\alpha, \xi)}_k)(\tilde{\Sigma}_k^{(\alpha, \xi)}-\Sigma_k^{(\alpha, \xi)})(I + \alpha_k J^{(\alpha, \xi)}_k)^T.
\end{align*}
Now, we obtain the upper bound for each of the three cases separately,
\begin{enumerate}
    \item $\xi=0:$ Taking matrix norm both sides and applying statement 1 of Lemma \ref{lem:contraction_prop}, we obtain
    \begin{align*}
        \|\tilde{\Sigma}_{k+1}^{(\alpha, 0)}-\Sigma_{k+1}^{(\alpha, 0)}\|_V&\leq (1-\iota_V\alpha)\|\tilde{\Sigma}_k^{(\alpha, 0)}-\Sigma_k^{(\alpha, 0)}\|_V.
    \end{align*}
    Applying Lemma \ref{lem:rec_sol} with obvious choices of constants, we get
    \begin{align*}
        \|\tilde{\Sigma}_k^{(\alpha, 0)}-\Sigma_k^{(\alpha, 0)}\|_V&\leq \|\Sigma_0^{(\alpha, 0)}\|_Ve^{-\iota_V\alpha k}\\
        \implies \|\tilde{\Sigma}_k^{(\alpha, 0)}-\Sigma_k^{(\alpha, 0)}\|&\leq \sqrt{\frac{\lambda_V^{max}}{\lambda_V^{min}}}\|\Sigma_0^{(\alpha, 0)}\|_Ve^{-\iota_V\alpha k}.
    \end{align*}
    \item $\xi\in (0,1):$ Again, applying Lemma \ref{lem:rec_sol} with obvious choices of constants, we get
    \begin{align*}
        \|\tilde{\Sigma}_k^{(\alpha, \xi)}-\Sigma_k^{(\alpha, \xi)}\|_V&\leq \|\Sigma_0^{(\alpha, \xi)}\|_Ve^{-\frac{\iota_V\alpha}{1-\xi}\left((k+K)^{1-\xi}-K^{1-\xi}\right)}\\
        \implies \|\tilde{\Sigma}_k^{(\alpha, \xi)}-\Sigma_k^{(\alpha, \xi)}\|&\leq \sqrt{
        \frac{\lambda_V^{max}}{\lambda_V^{min}}}\|\Sigma_0^{(\alpha, \xi)}\|_Ve^{-\frac{\iota_V\alpha}{1-\xi}\left((k+K)^{1-\xi}-K^{1-\xi}\right)}.
    \end{align*}
    \item $\xi=1:$ Now, we first apply statement 2 of Lemma \ref{lem:contraction_prop}, to get
    \begin{align*}
        \|\tilde{\Sigma}_{k+1}^{(\alpha, 1)}-\Sigma_{k+1}^{(\alpha, 1)}\|_V&\leq (1-\iota_V\alpha)\|\tilde{\Sigma}_k^{(\alpha, 1)}-\Sigma_k^{(\alpha, 1)}\|_V.
    \end{align*}
    Then, applying Lemma \ref{lem:rec_sol}, we obtain
    \begin{align*}
        \|\tilde{\Sigma}_k^{(\alpha, 1)}-\Sigma_k^{(\alpha, 1)}\|_V&\leq \|\Sigma_0^{(\alpha, 1)}\|_V\left(\frac{K}{k+K}\right)^{\iota_V\alpha}\\
        \implies \|\tilde{\Sigma}_k^{(\alpha, 1)}-\Sigma_k^{(\alpha, 1)}\|&\leq \sqrt{\frac{\lambda_V^{max}}{\lambda_V^{min}}}\|\Sigma_0^{(\alpha, 1)}\|_V\left(\frac{K}{k+K}\right)^{\iota_V\alpha}.
    \end{align*}

\end{enumerate}

In the following, our goal is to establish the convergence bound for $d_{\mathcal{W}}(z_k, (\tilde{\Sigma}_k^{(\alpha, \xi)})^{1/2}Z)$. Let us fix $k$ in the following and define an O-U process as:
\begin{align*}
    dX_t = - X_t\,dt +(2\tilde{\Sigma}_k^{(\alpha, \xi)})^{1/2}\,dW_t,
\end{align*}
By Lemma \ref{Steinop}, the asymptotic distribution for $X_t$ is $\mathcal{N}(0, \tilde{\Sigma}_k^{(\alpha, \xi)})$. Therefore, we define the Stein's operator for the above process is $\mathcal{L}_k f(z) : =   -\langle z, \nabla f(z) \rangle + \mathrm{Tr}(\tilde{\Sigma}_k^{(\alpha, \xi)} \nabla^2f(z))$. Then, using Stein's method as discussed in Section \ref{sec:stein_method}, we get
\begin{align*}
    d_{\mathcal{W}}(z_k, (\tilde{\Sigma}_k^{(\alpha, \xi)})^{1/2}Z) \leq  \sup_{f \in \mathcal{F}_{GS}} \left| \E[\mathcal{L}_k f(z_k)]\right|, 
\end{align*}
where the function class $\mathcal{F}_{GS}$ satisfies the regularity conditions given in Lemma \ref{f bound}.
Therefore, it suffices to bound
\begin{align*}
    \E[\mathcal{L}_k f(z_k)]=\E\left[ -\langle z_k, \nabla f(z_k) \rangle + \mathrm{Tr}(\tilde{\Sigma}_k^{(\alpha, \xi)}\nabla^2f(z_k)) \right]
\end{align*}
for any $f \in \mathcal{F}_{GS}$. Define $\zeta_j := z_k - \Theta_j b(w_j) = \sum_{i \neq j } \Theta_i b(w_i)$. Then we have $\E[b(w_j)^T \nabla f(\zeta_j)] = 0$ since $w_j$ and $\zeta_j$ are independent and $\E[b(w_j)] = 0$. Thus, we get
\begin{align*}
    \E\left[ \langle z_k, \nabla f(z_k) \rangle\right] & = \E \left[ \sum_{i=0}^{k-1} \langle \Theta_i b(w_i), \nabla f(z_k) \rangle \right] \\
    &= \E \left[ \sum_{i=0}^{k-1} \langle \Theta_i b(w_i), \nabla f(z_k) -   \nabla f(\zeta_i)\rangle \right].
\end{align*}
Next, let $U\sim \mathrm{Uniform}[0,1]$ be an independent random variable. Applying the fundamental theorem of calculus gives
\begin{align*}
    \E\left[ \langle z_k, \nabla f(z_k) \rangle\right]
    &= \E \left[ \sum_{i=0}^{k-1} \langle \Theta_i b(w_i), \nabla^2 f(Uz_k+(1-U)\zeta_i) \Theta_i b(w_i)\rangle \right]\\
    &=\E \left[ \sum_{i=0}^{k-1} \langle \Theta_i b(w_i), \nabla^2 f(\zeta_i) \Theta_i b(w_i)\rangle \right]\\
    &~~+\E \left[ \sum_{i=0}^{k-1} \langle \Theta_i b(w_i), \left(\nabla^2 f(Uz_k+(1-U)\zeta_i)-\nabla^2 f(\zeta_i)\right) \Theta_i b(w_i)\rangle \right].
\end{align*}
Plugging in the expression for $\E[\mathcal{L}_kf(z_k)]$, we can derive
\begin{align*}
     |\E[\mathcal{L}_kf(z_k)]|&\leq \underbrace{\left|
 \E \left[ \sum_{i=0}^{k-1} \langle \Theta_i b(w_i),  (\nabla^2 f(Uz_k+(1-U)\zeta_i)-\nabla^2 f(\zeta_i)) \Theta_i b(w_i)\rangle  \right] \right|}_{\rm (i)} \\
&\quad + \underbrace{\left| \E \left[ \sum_{i=0}^{k-1} \langle \Theta_i b(w_i), \nabla^2 f(\zeta_i) \Theta_i b(w_i)\rangle  -\mathrm{Tr}(\tilde{\Sigma}_k^{(\alpha, \xi)}\nabla^2f(z_k))\right] \right|}_{\rm (ii)}.
\end{align*}
Now our goal is to establish the upper bounds for (i) and (ii) respectively. We first bound the second term. By the symmetry of $\nabla^2 f(z_k)$, we can rewrite (ii) as
\begin{align*}
    \mathrm{(ii)} &= \Bigg| \E \Bigg[ \sum_{i=0}^{k-1}  b(w_i)^T \Theta_i^T \nabla^2 f(\zeta_i) \Theta_i b(w_i) - \mathrm{Tr}(\tilde{\Sigma}_k^{(\alpha, \xi)}\nabla^2f(z_k))\Bigg] \Bigg|\\
    & = \Bigg| \E \Bigg[ \sum_{i=0}^{k-1} \mathrm{Tr}\left(( \nabla^2 f(\zeta_i) \Theta_i b(w_i) b(w_i)^T \Theta_i^T \right) - \mathrm{Tr}(\tilde{\Sigma}_k^{(\alpha, \xi)}\nabla^2f(z_k))\Bigg] \Bigg|.
\end{align*}
Note that $\zeta_i$ and $w_i$ are independent. Thus, we get
\begin{align*}
     \mathrm{(ii)}& = \left| \E \left[ \mathrm{Tr}\left( \sum_{i=0}^{k-1} \nabla^2 f(\zeta_i) \Theta_i \Sigma_b \Theta_i^T \right) - \mathrm{Tr}(\nabla^2f(z_k)\tilde{\Sigma}_k^{(\alpha, \xi)}) \right] \right|.
\end{align*}
By Lemma \ref{lemma:vec-im-results} and triangle-inequality, we obtain
\begin{align*}
    \mathrm{(ii)}&\leq dC_1(d, \beta)\tilde{\varphi}_1(\alpha, 0)\frac{\alpha^{\beta/2}}{\iota_V},~~\text{when } \xi=0,\\
    \mathrm{(ii)}&\leq 2dC_1(d, \beta)\tilde{\varphi}_1(\alpha, \xi)\frac{\alpha_k^{\beta/2}}{\iota_V},~~\text{when } \xi\in (0,1),\\
    \mathrm{(ii)}&\leq 2dC_1(d, \beta)\tilde{\varphi}_1(\alpha, 1)\frac{\alpha\alpha_k^{\beta/2}}{2\iota_V\alpha-1},~~\text{when $\xi=1$ and $\iota_V\alpha>1$.} 
\end{align*}

Note that Lemma \ref{lemma:vec-im-results} is specifically written for the first term in (i). However, it is easy to see that the same bound holds for the second term also. 

Now, it remains to bound (i). Using  Cauchy-Schwartz inequality and Lemma \ref{lem:derivative_bound}, we get
\begin{align*}
    \mathrm{(i)}&\leq  \E\left[\sum_{i=0}^{k-1} \| \Theta_i b(w_i)\|^{2}\|\nabla^2 f(Uz_k+(1-U)\zeta_i)-\nabla^2 f(\zeta_i)\|\right]\\
    &\leq C_1(d, \beta)\kappa^{(\alpha, \xi)} \sum_{i=0}^{k-1} \E[\| \Theta_i b(w_i)\|^{2}\|z_k-\zeta_i\|^{\beta}]  \\
    &\leq C_1(d, \beta)\kappa^{(\alpha, \xi)} \sum_{i=0}^{k-1} \E[\| \Theta_i b(w_i)\|^{2+\beta}] \tag{$z_k-\zeta_i=\Theta_ib(w_i)$}\\
    &\leq C_1(d, \beta)\kappa^{(\alpha, \xi)}B_{2+\beta} \sum_{i=0}^{k-1} \| \Theta_i\|^{2+\beta}\tag{Assumption \ref{assump:noise}}.
\end{align*}
Then, using Lemma \ref{lem:theta_bound} , we get
\begin{align*}
    \mathrm{(i)}\leq 
    \begin{cases}
        C_1(d, \beta)\tilde{\varphi}_2(\alpha, 0)\alpha^{\beta/2}/\iota_V\text{, when } \xi=0\\  
        2C_1(d, \beta)\tilde{\varphi}_2(\alpha, \xi)\alpha_k^{\beta/2}/\iota_V\text{, when } \xi\in (0,1)\\  
        2C_1(d, \beta)\tilde{\varphi}_2(\alpha, 1)\alpha\alpha_k^{\beta/2}/(2\iota_V\alpha-1)\text{, when $\xi=1$ and $\iota_V\alpha>1/2$.}   
    \end{cases}
\end{align*}
Setting $\beta$ as $1 + 2/\log(\alpha_k)$ leads to
    \begin{align*}
        \alpha_k^{\beta/2}=\sqrt{\alpha_k}\alpha_k^{1/\log(\alpha_k)}=e\sqrt{\alpha_k}.
    \end{align*}
Furthermore, $C_1(d, \beta)=(\tilde{C}_1(d)-\log(\alpha_k))$. 
The claim follows by combining all the bounds.
\end{proof}

\subsubsection{Proof of Lemma \ref{lemma:vec-im-results}}
\begin{proof}
Note that using Eq. \eqref{eq:sigma_k_eq_pf}, we have the following closed form expression for the covariance matrix
\begin{align*}
    \tilde{\Sigma}_k^{(\alpha, \xi)} = \sum_{i=0}^{k-1} \Theta_i \Sigma_b \Theta_i^T.
\end{align*}
Thus, we have the following identity
\begin{align*}
     \E \left[\mathrm{Tr}\left(\sum_{i=0}^{k-1} \nabla^2 f(z_k) \Theta_i \Sigma_b \Theta_i^T \right)\right]=  \E\left[\mathrm{Tr}(\nabla^2f(z_k)\tilde{\Sigma}_k^{(\alpha, \xi)})  \right].
\end{align*}

Using the above relation and the fact$|\mathrm{Tr}(Q)|\leq d\|Q\|$ for any matrix $Q$, we get
\begin{align*}
    &\Bigg| \E \Bigg[ \mathrm{Tr}\left(\sum_{i=0}^{k-1} \nabla^2 f(\zeta_i) \Theta_i \Sigma_b \Theta_i^T \right) - \mathrm{Tr}\left(\sum_{i=0}^{k-1} \nabla^2 f(z_k) \Theta_i \Sigma_b \Theta_i^T \right)\Bigg]\Bigg|\\
    &\quad\leq d\E \left[ \left(\sum_{i=0}^{k-1}\|\nabla^2 f(\zeta_i)-\nabla^2 f(z_k)\| \|\Theta_i \Sigma_b \Theta_i^T\| \right)\right]\\
    &\quad\leq dC_1(d, \beta)\kappa^{(\alpha, \xi)}\E \left[ \left(  \sum_{i=0}^{k-1} \|\zeta_i-z_k\|^{\beta} \|\Theta_i \Sigma_b \Theta_i^T\| \right)\right]\tag{Lemma \ref{lem:derivative_bound}}\\
    &\quad\leq dC_1(d, \beta)\kappa^{(\alpha, \xi)}\|\E \left[ \left(  \sum_{i=0}^{k-1} \|\Theta_i\|^\beta\|b(w_i)\|^\beta \|\Theta_i \Sigma_b \Theta_i^T\| \right)\right]\tag{$\zeta_i-z_k=\Theta_ib(w_i)$}\\
    &\quad\leq dC_1(d, \beta)\kappa^{(\alpha, \xi)}\|\Sigma_b\|\E[\|b(w_1)\|^\beta] \left(  \sum_{i=0}^{k-1} \|\Theta_i\|^{2+\beta}\right).
\end{align*}
By applying Lemma \ref{lem:theta_bound}  for the last term, we obtain the bounds.
\end{proof}

\subsection{Lower Bound for DOUG}\label{sec:lower_bound}
Let $\{b_k\}_{k\geq 0}$ be a sequence of zero mean i.i.d. random variables in $\mathbb{R}^d$ such that $\E[\|b_i\|^4]<\infty$. For simplicity, we will assume that $\E[b_ib_i^T]=I$. Furthermore, we suppose that there exists a vector $\mathfrak{u}\in \mathbb{R}^d$ such that $\|\mathfrak{u}\|=1$ and $\E[(\mathfrak{u}^Tb_1)^3]\neq 0$. We set $J_k^{(\alpha, \xi)}$ as
\begin{align*}
    J_k^{(\alpha, \xi)} = -1+\frac{\alpha^{-1}\xi}{2(k+K)^{1-\xi}}.
\end{align*}
Consider the following recursion:
\begin{align*}
    z_{k+1} = \left(1 + \alpha_kJ_k^{(\alpha, \xi)}\right) z_k + \sqrt{\alpha_k} b_k,
\end{align*}
where $z_0=0$, $\alpha_k=\alpha/(k+K)^{\xi}\leq 1$, and $\xi\in [0, 1)$. Let $\Sigma_k$ denote the covariance of $z_k$ and let $Y_k\sim \mathcal{N}(0, \Sigma_k)$. Recall that the Wasserstein-1 distance between two distributions $\nu_X$ and $\nu_Y$ is given by Eq. \eqref{eq:wass_func}
\begin{align*}
    d_{\mathcal{W}}(X, Y)=\sup_{h\in Lip_1}\E[h(X)-h(Y)].
\end{align*}
We will choose a test function $h(\cdot)$ with some desired regularity properties. We will list them at the key steps of the proof for lower bound.

\textbf{Lipschitz Continuous:} Clearly, the test function $h(\cdot)$ should be from the class of 1-Lipschitz functions, i.e., 
\begin{align}\label{eq:lipschitz}
    |h(x)-h(y)|\leq |x-y|.
\end{align}
This gives us
\begin{align*}
    d_{\mathcal{W}}(z_k, Y_k)\geq |\E[h(z_k)-h(Y_k)]|.
\end{align*}

Recall that we can re-write $z_k$ as
\begin{align*}
    z_k &= \sum_{i=0}^{k-1}\sqrt{\alpha_i}b_i\prod_{l=i+1}^{k-1}(1 + \alpha_lJ_l^{(\alpha, \xi)})\\
    &= \sum_{i=0}^{k-1}\Theta_ib_i,
\end{align*}
where $\Theta_i=\sqrt{\alpha_i}\prod_{l=i+1}^{k-1}(1 + J_l^{(\alpha, \xi)}\alpha_l)$ and $\Theta_k=\sqrt{\alpha_k}$. Similarly, we can write $Y_k$ as
\begin{align*}
    Y_k = \sum_{i=0}^{k-1}\Theta_iZ_i,
\end{align*}
where $Z_i\sim \mathcal{N}(0, I)$ independent of all the other randomness. Next, inspired from Lindeberg's decomposition, we will re-write $\E[h(\mathfrak{u}^Tz_k)]$ as a telescoping sum. Define $U_i$ as follows:
\begin{align*}
    U_i = \sum_{j=0}^{i-1} \Theta_j b_j + \sum_{j=i+1}^{k-1} \Theta_jZ_j.
\end{align*}
Then, we have the following decomposition
\begin{align*}
    \E[h(\mathfrak{u}^Tz_k)-h(\mathfrak{u}^TY_k)] = \sum_{i=0}^{k-1}\E[h(\mathfrak{u}^T(U_i+\Theta_ib_i))-h( \mathfrak{u}^T(U_i+\Theta_iZ_i))].
\end{align*}

\textbf{Bounded derivatives up to fourth order:} We assume that the test function $h(\cdot)$ has bounded derivatives up to fourth derivative
\begin{align}\label{eq:derivatives}
    \max_{x\in \mathbb{R}^d}|h^{(i)}(x)|<\infty,~~\forall i\in \{1,2,3,4\}
\end{align}
Using Taylor series expansion for $h(\cdot)$ for any $X\in \mathbb{R}$, we get
\begin{align*}
    h(\mathfrak{u}^T(U_i+X)) = h(\mathfrak{u}^TU_i)+\mathfrak{u}^TXh^{(1)}(\mathfrak{u}^TU_i)+\frac{(\mathfrak{u}^TX)^2}{2}h^{(2)}(\mathfrak{u}^TU_i)+\frac{(\mathfrak{u}^TX)^3}{6}h^{(3)}(\mathfrak{u}^TU_i)+\mathfrak{R}_4(X, U_i),
\end{align*}
where $|\mathfrak{R}_4(X, U_k)|\leq |\mathfrak{u}^TX|^4h^{(4)}_{max}/24$ and $h^{(4)}_{max}=\max_{x\in \mathbb{R}} h^{(4)}(x)$. 

Substituting $X=\Theta_iv^Tb_i$ for the first term and $X=\Theta_iv^TZ_i$ for the second term gives us
\begin{align*}
    h(\mathfrak{u}^Tz_k)-h(\mathfrak{u}^TY_k) &= \sum_{i=0}^{k-1}(\Theta_iv^Tb_i-\Theta_iv^TZ_i)h^{(1)}(\mathfrak{u}^TU_i)+\frac{\Theta_i^2(\mathfrak{u}^Tb_i)^2-\Theta_i^2(\mathfrak{u}^TZ_i)^2}{2}h^{(2)}(\mathfrak{u}^TU_i)\\
    &~+\frac{\Theta_i^3(\mathfrak{u}^Tb_i)^3-\Theta_i^3(\mathfrak{u}^TZ_i)^3}{6}h^{(3)}(\mathfrak{u}^TU_i)
    +\mathfrak{R}_4(\Theta_ib_i, U_i)-\mathfrak{R}_4(\Theta_iZ_i, U_i).
\end{align*}
Recall that $b_i$ and $Z_i$ are independent of $U_i$ by construction. Furthermore, $\E[b_i]=\E[Z_i]=0$ and $\E[b_ib_i^T]=\E[Z_iZ_i^T]=I$. Thus, taking expectation both sides gives us
\begin{align*}
    \E[h(\mathfrak{u}^Tz_k)-h(\mathfrak{u}^TY_k)] &= \sum_{i=0}^{k-1}\frac{(\Theta_i^3\E[(\mathfrak{u}^Tb_i)^3]-\Theta_i^3\E[(\mathfrak{u}^TZ_i)^3])}{6}\E[h^{(3)}(\mathfrak{u}^TU_i)]\\
    &~+\E[\mathfrak{R}_4(\Theta_ib_i, U_i)]-\E[\mathfrak{R}_4(\Theta_iZ_i, U_i)].
\end{align*}
Using reverse triangle inequality, we have
\begin{align*}
    |\E[h(\mathfrak{u}^Tz_k)-h(\mathfrak{u}^TY_k)]| &\geq  \frac{|\E[(\mathfrak{u}^Tb_1)^3]-\E[(\mathfrak{u}^TZ_1)^3]|}{6}\left|\sum_{i=0}^{k-1}\Theta_i^3\E[h^{(3)}(\mathfrak{u}^TU_i)]\right|\\
    &~-\left|\sum_{i=0}^{k-1}\E[\mathfrak{R}_4(\Theta_ib_i, U_i)]-\E[\mathfrak{R}_4(\Theta_iZ_i, U_i)]\right|.
\end{align*}

\textbf{$\E[h^{(3)}(\mathfrak{u}^TU_i)]$ are sign-consistent across $i$ and are uniformly bounded away from zero:} Denote the lower bound on the third derivative as 
\begin{align}\label{eq:third_derivative}
    |\E[h^{(3)}(\mathfrak{u}^TU_i)]|\geq h^{(3)}_{min}    
\end{align}
Since $\Theta_i^3>0$, we have
\begin{align*}
    |\E[h(\mathfrak{u}^Tz_k)-h(\mathfrak{u}^TY_k)]| &\geq  \frac{|\E[(\mathfrak{u}^Tb_1)^3]-\E[(\mathfrak{u}^TZ_1)^3]|}{6}h^{(3)}_{min}\left|\sum_{i=0}^{k-1}\Theta_i^3\right|\\
    &~-\left|\sum_{i=0}^{k-1}\E[\mathfrak{R}_4(\Theta_ib_i, U_i)]-\E[\mathfrak{R}_4(\Theta_iZ_i, U_i)]\right|.
\end{align*}

Furthermore, using the multinomial theorem, we have
\begin{align*}
    (\mathfrak{u}^TZ_1)^3&=\left(\sum_{i=1}^d\mathfrak{u}(i)Z_1(i)\right)^3\\
    &=\sum_{\sum_{i=1}^da_i=3}\frac{3!}{\prod_{i=1}^d(a_i)!}\prod_{i=1}^d\mathfrak{u}(i)^{a_i}Z_1(i)^{a_i}.
\end{align*}
Note that the sum $\sum_{i=1}^da_i=3$ will always have odd terms and since $Z_1$ is a sample from standard Gaussian, the odd moments are zero. Therefore, $\E[(\mathfrak{u}^TZ_1)^3]=0$. Since $\Theta_i^3>0$, we obtain the following
\begin{align*}
     |\E[h(\mathfrak{u}^Tz_k)-h(\mathfrak{u}^TY_k)]| &\geq  \frac{|\E[(\mathfrak{u}^Tb_1)^3]|h^{(3)}_{min}}{6}\sum_{i=0}^{k-1}\Theta_i^3-\left|\sum_{i=0}^{k-1}\E[\mathfrak{R}_4(\Theta_ib_i, U_i)]-\E[\mathfrak{R}_4(\Theta_iZ_i, U_i)]\right|\\
     &\geq  \frac{|\E[(\mathfrak{u}^Tb_1)^3]|h^{(3)}_{min}}{6}\sum_{i=0}^{k-1}\Theta_i^3-\sum_{i=0}^{k-1}\E[|\mathfrak{R}_4(\Theta_ib_i, U_i)|]-\E[|\mathfrak{R}_4(\Theta_iZ_i, U_i)|]\\
     &\geq \frac{|\E[(\mathfrak{u}^Tb_1)^3]|h^{(3)}_{min}}{6}\sum_{i=0}^{k-1}\Theta_i^3-\frac{h^{(4)}_{max}}{24}\sum_{i=0}^{k-1}(\Theta_i^4\E[(\mathfrak{u}^Tb_i)^4]+\Theta_i^4\E[(\mathfrak{u}^TZ_i)^4])\\
     &=\frac{|\E[(\mathfrak{u}^Tb_1)^3]|h^{(3)}_{min}}{6}\sum_{i=0}^{k-1}\Theta_i^3-\frac{|\E[(\mathfrak{u}^Tb_1)^4]|+|\E[(\mathfrak{u}^TZ_1)^4]|}{24}h^{(4)}_{max}\sum_{i=0}^{k-1}\Theta_i^4
\end{align*}
Now, we can apply Lemma \ref{lem:bounds} with $p=3$ and $p=4$ for each term respectively.
\begin{align*}
    |\E[h(\mathfrak{u}^Tz_k)-h(\mathfrak{u}^TY_k)]| \geq \frac{|\E[(\mathfrak{u}^Tb_1)^3]|h^{(3)}_{min}}{18}\sqrt{\alpha_k}-\frac{|\E[(\mathfrak{u}^Tb_1)^4]|+|\E[(\mathfrak{u}^TZ_1)^4]|}{32}h^{(4)}_{max}\alpha_k.
\end{align*}

Some functions that satisfy the properties listed in Eq. \eqref{eq:lipschitz}, \eqref{eq:derivatives}, and \eqref{eq:third_derivative} are $\sin(\mathfrak{u}^Tx)$ and the complex exponential $e^{ju^Tx}$ (here $j=\sqrt{-1}$). Both functions are 1-Lipschitz and infinitely differentiable with all the derivatives upper bounded by 1. For the third derivative condition, we note that we have the universal bound on $\cos$
\begin{align*}
    \cos \mathfrak{u}^TU_i \geq 1-\frac{1}{2}(\mathfrak{u}^TU_i)^2.
\end{align*}
Taking expectation on both sides, we obtain
\begin{align*}
    \E[\cos \mathfrak{u}^TU_i] \geq 1-\frac{1}{2}\E[(\mathfrak{u}^TU_i)^2] = 1-\frac{\|\mathfrak{u}\|^2}{2}\left(\sum_{j=0}^{k-1}\Theta_j^2-\Theta_i^2\right).
\end{align*}
Using Lemma \ref{lem:bounds} for $p=2$ and $\|\mathfrak{u}\|=1$, we get
\begin{align*}
    \E[\cos \mathfrak{u}^TU_i] \geq 1-\frac{1}{2}\left(\frac{3}{2}-\Theta_i^2\right)=\frac{1}{4}+\frac{1}{2}\Theta_i^2>\frac{1}{4}.
\end{align*}
Thus, both functions have a sign-consistent third derivative for all $i$ and are uniformly bounded away from zero.

\begin{lemma}\label{lem:bounds}
    Let $\Theta_i=\sqrt{\alpha_i}\prod_{l=i+1}^{k-1}(1 + \alpha_lJ_l^{(\alpha, \xi)})$ and $\Theta_k=\sqrt{\alpha_k}$. Then, for any $p\geq 2$ and small enough $\alpha_0$, we have the following for all $k\geq 1$
    \begin{align*}
        \frac{1}{p}\alpha_k^{p/2-1}\leq \sum_{j=0}^{k-1}\Theta_j^p\leq \frac{3}{p}\alpha_k^{p/2-1}.
    \end{align*}
\end{lemma}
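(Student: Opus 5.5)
We reduce the statement to a scalar recursion in $k$ and prove both inequalities by comparing it with the explicit sequence $c\,\alpha_k^{p/2-1}$, for $c=1/p$ and $c=3/p$. In the lower-bound instance, $J_l^{(\alpha,\xi)}=-1+\xi/(2\alpha(l+K)^{1-\xi})$, so the multiplier is
\begin{align*}
\rho_l := 1+\alpha_l J_l^{(\alpha,\xi)} = 1-\alpha_l+\frac{\xi}{2(l+K)} .
\end{align*}
We make the dependence on $k$ explicit by writing $\Theta_j=\Theta_j^{(k)}=\sqrt{\alpha_j}\prod_{l=j+1}^{k-1}\rho_l$ and $S_k:=\sum_{j=0}^{k-1}(\Theta_j^{(k)})^p$. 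Passing from $k$ to $k+1$ multiplies every existing weight by $\rho_k$ and appends the new weight $\sqrt{\alpha_k}$. This gives the exact recursion
\begin{align*}
S_{k+1}=\rho_k^{\,p}\,S_k+\alpha_k^{p/2}.
\end{align*}
It is the same kind of scalar recursion that Lemma \ref{lem:theta_bound} controls from above, except that here we need two-sided control.

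\textbf{Expansions.} For small $\alpha_0$ (and, when $\xi\in(0,1)$, large $K$) we have $\rho_k\in(0,1)$. A Taylor expansion gives
\begin{align*}
\rho_k^{\,p}=1-p\alpha_k+\frac{p\xi}{2(k+K)}+\mathcal{O}\Big(\alpha_k^2+\tfrac{1}{(k+K)^2}\Big).
\end{align*}
For the comparison sequence,
\begin{align*}
\Big(\frac{\alpha_{k+1}}{\alpha_k}\Big)^{p/2-1}=1-\Big(\frac{p}{2}-1\Big)\frac{\xi}{k+K}+\mathcal{O}\Big(\tfrac{1}{(k+K)^2}\Big).
\end{align*}
The two drift corrections of size $\xi/(k+K)$ nearly cancel, because the term $\xi/(2(k+K))$ inside $\rho_k$ is exactly the one produced by the rescaling $\sqrt{\alpha_k/\alpha_{k+1}}$. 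Substituting $S_k=c\,\alpha_k^{p/2-1}$ into the recursion and dividing by $\alpha_k^{p/2-1}$, the inductive step $S_{k+1}\lessgtr c\,\alpha_{k+1}^{p/2-1}$ reduces to the sign of
\begin{align*}
(1-cp)\,\alpha_k+c(p-1)\frac{\xi}{k+K}+\mathcal{O}\big(\alpha_k^2+(k+K)^{-2}\big).
\end{align*}

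\textbf{Upper bound, $c=3/p$.} The leading term is $-2\alpha_k$. The $\xi/(k+K)$ term is dominated by it once $\alpha(k+K)^{1-\xi}$ is large, which holds trivially for $\xi=0$ and for $\xi\in(0,1)$ once $K$ is large. The remainders are absorbed because $\alpha_k$ is small. For the base case, $S_1=\alpha_0^{p/2}\le\frac{3}{p}\alpha_1^{p/2-1}$ holds for small $\alpha_0$, since $\alpha_0/\alpha_1$ is bounded. Induction then gives the upper bound for all $k\ge1$.

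\textbf{Lower bound, $c=1/p$.} The $\alpha_k$ term vanishes and the leftover $\frac{p-1}{p}\,\frac{\xi}{k+K}$ is nonnegative. For $\xi=0$ the sign is carried entirely by the $\mathcal{O}(\alpha_k^2)$ remainder, so that remainder has to be computed exactly rather than only bounded. For instance, the $p=2$ step amounts to $(1-\alpha)^2\ge 1-2\alpha$, which holds; for general $p$, Bernoulli's inequality $(1-\alpha_k)^p\ge 1-p\alpha_k$ gives the right direction.

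\textbf{Main obstacle.} The hard step is the base case of the lower bound: $S_1=\alpha_0^{p/2}\ge\frac1p\alpha_1^{p/2-1}$ fails when $\alpha_0$ is small. We would first try to repair this with a burn-in argument. The gap $D_k:=S_k-\frac1p\alpha_k^{p/2-1}$ satisfies $D_{k+1}\ge\rho_k^{\,p}D_k$ up to the sign-controlled error above, so any initial deficit shrinks like $\prod_l\rho_l^{\,p}$, i.e. geometrically in $\sum_l\alpha_l$. The lower bound therefore holds for all $k$ beyond a burn-in $k_0$ depending only on $(\alpha,\xi,K,p)$. That is what the lower-bound argument actually uses, since it only concerns the rate as $k\to\infty$.

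If the statement must hold literally for every $k\ge1$, we would instead weaken the constant $1/p$ for small $k$ and absorb the difference into the $\Omega(\cdot)$ of Proposition \ref{lem:lower_bound}. The same applies to the $p=2$ use in bounding $\E[\cos\mathfrak{u}^TU_i]$, which needs only the upper bound and so is unaffected.
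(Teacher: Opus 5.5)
Your reduction and comparison argument are the paper's own. The paper also writes $v_{k+1}=(1+\alpha_kJ_k^{(\alpha,\xi)})^pv_k+\alpha_k^{p/2}$ with $v_0=0$, and proves the lower bound by the same Bernoulli step. That step gives, exactly and for every $\xi$ (not only $\xi=0$), $v_{k+1}-\frac1p\alpha_{k+1}^{p/2-1}\ge\frac1p\bigl(\alpha_k^{p/2-1}-\alpha_{k+1}^{p/2-1}\bigr)+\frac{\xi}{2(k+K)}\alpha_k^{p/2-1}\ge 0$, with no remainder to track. For the upper bound the paper does not induct directly. It uses $(1+\alpha_kJ_k^{(\alpha,\xi)})^p\le 1-\frac{2p}{3}\alpha_k$ and then Lemma~\ref{lem:rec_sol} with $\mu_1=2p/3$ and $\rho_2=p/2-1$. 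Your direct induction is equivalent, and it shows that any $c>1/p$ works. One slip: the two $\xi/(k+K)$ contributions do not ``nearly cancel''; they add to $c(p-1)\xi/(k+K)$, as your own display shows. This is harmless, because $\alpha(k+K)^{1-\xi}$ is large.

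Your obstacle is real, and it is a defect of the lemma as stated rather than of your approach. The paper's proof opens with ``Assume that $v_k\geq \alpha_k^{p/2-1}/p$'' and never checks a base case, and none exists. At $k=1$ the sum equals $\alpha_0^{p/2}$, which is less than $\frac1p\alpha_1^{p/2-1}$ for small $\alpha_0$. For $\xi=0$ and $p=2$ the sum is exactly $(1-(1-\alpha)^{2k})/(2-\alpha)$. This is below $1/2$ precisely when $(1-\alpha)^{2k}>\alpha/2$, i.e.\ for $k$ up to order $\log(2/\alpha)/\alpha$, so a burn-in is unavoidable.

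In your repair, the fact that the deficit shrinks like $\prod_l\rho_l^p$ is not enough on its own. In $D_{k+1}=\rho_k^pD_k+F_k$, the forcing $F_k$ must be strictly positive with a quantitative margin.
\begin{itemize}
\item For $\xi=0$, use the strict Bernoulli gap $(1-\alpha)^p-(1-p\alpha)\ge\binom{p}{2}\alpha^2(1-\alpha)^{p-2}$. Then $\lim_k S_k$ exceeds $\frac1p\alpha^{p/2-1}$ by order $\alpha^{p/2}$, which gives a burn-in $k_0$ of order $\log(1/\alpha)/(p\alpha)$.
\item For $\xi\in(0,1)$, the forcing $\frac{\xi}{2(k+K)}\alpha_k^{p/2-1}$ accumulates to order $\alpha_k^{p/2-1}/(\alpha(k+K)^{1-\xi})$. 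This eventually beats the deficit, which decays stretched-exponentially.
\end{itemize}

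Your fallback of weakening $1/p$ for small $k$ is fine when constants may depend on $(\alpha,K)$ and only $k\to\infty$ matters. It fails for $\xi=0$, where $\Omega(\sqrt{\alpha})$ is a statement about small $\alpha$ uniformly in $k$. There the best constant near $k=1$ is of order $\alpha$. The Lindeberg bound in Proposition~\ref{lem:lower_bound} then only yields order $k\alpha^{3/2}$ for $k\ll 1/\alpha$. So in that case the proposition must be restricted to $k\gtrsim 1/\alpha$. This suffices, because any fixed constant below $1/p$ holds after a burn-in of order $1/\alpha$. Note also that the proposition's ``for all $k\ge 0$'' already fails at $k=0$, where both distributions are $\delta_0$.
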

\begin{proof}
    Note that the given sum can be written in a recursive form as follows
    \begin{align*}
        v_{k+1}=(1+\alpha_kJ_k^{(\alpha, \xi)})^pv_k+\alpha_k^{p/2},
    \end{align*}
    where $v_0=0$. 
    
    To show the lower bound, we will use the inequality $(1+\alpha_kJ_k^{(\alpha, \xi)})^p\geq 1+p\alpha_kJ_k^{(\alpha, \xi)}$ for small values of $\alpha_kJ_k^{(\alpha, \xi)}$. Assume that $v_k\geq \alpha_k^{p/2-1}/p$. Then, for $k+1$, we have
    \begin{align*}
        v_{k+1}-\frac{\alpha_{k+1}^{p/2-1}}{p}&\geq (1+p\alpha_kJ_k^{(\alpha, \xi)})v_k+\alpha_k^{p/2}\\
        &\geq \left(1-p\alpha_k+\frac{p\xi}{2(k+K)}\right)\frac{\alpha_{k}^{p/2-1}}{p}+\alpha_k^{p/2}-\frac{\alpha_{k+1}^{p/2-1}}{p}\\
        &=\frac{\alpha_{k}^{p/2-1}}{p}-\frac{\alpha_{k+1}^{p/2-1}}{p}+\frac{\xi\alpha_{k}^{p/2-1}}{2(k+K)}\geq 0.
    \end{align*}

    For upper bound, we use $(1+\alpha_kJ_k^{(\alpha, \xi)})^p =(1-\alpha_k+p\xi/(2(k+K)))^p \leq 1-2p\alpha_k/3$ for small enough $\alpha_k$, to get
    \begin{align*}
        v_{k+1}\leq (1-2p\alpha_k/3)v_k+\alpha_k^{p/2}.
    \end{align*}
    Now, we can apply Lemma \ref{lem:rec_sol} with $\mu_1=2p/3$, $u_0=0$, $\mu_2=0$, $\mu_3=1$, and $\rho_2=p/2-1$, to get
    \begin{align*}
        v_k\leq \frac{3}{p}\alpha_k^{p/2-1}.
    \end{align*}
\end{proof}

\subsection{Proof of Theorem \ref{thm_main:main_thm}}\label{sec:proof_main_thm}
We first state the following lemma which upper bounds the MSE error for the iterates given by SA \eqref{eq:SA_rec2}. We define the following constants to characterize the MSE bounds.
\begin{align}\label{eq:var_sig}
    \alpha^{(0)}=\frac{(2-\varsigma_1)\gamma}{uu_{2s}L_s(L_F+A_1)};~~\varsigma_0=\frac{u}{l};~~\varsigma_2=uu_{2s}L_sB_2.
\end{align}
where $u_{2s}$ is the norm equivalence constant such that $\|x\|_s\leq u_{2s}\|x\|$ and the other constants are defined in Assumptions \ref{assump:iterate}-\ref{assump:noise}.

\begin{lemma}\label{lem:iterate_bounds}
Let $1<\varsigma_1<2$ be any arbitrary constant. Then, the following bounds hold for all $k\geq 0$.
    \begin{enumerate}
        \item When $\xi=0$ and $\alpha\leq \alpha^{(0)}$, we have{\normalfont :}  
        \begin{align*}
            \E[\|x_k-x^*\|^2]\leq \varsigma_0\E[\|x_0-x^*\|^2]\exp(-\varsigma_1\gamma\alpha k)+\frac{\varsigma_2\alpha}{\varsigma_1\gamma}.
        \end{align*}
        \item When $\xi\in (0, 1)$ and $K$ is large enough such that $K\geq (1/(\gamma\varsigma_1\alpha))^{1/(1-\xi)}$ and $\alpha_k\leq \alpha^{(0)}$, we have{\normalfont :}  
        \begin{align*}
            \E[\|x_k-x^*\|^2]\leq \varsigma_0\E[\|x_0-x^*\|^2]\exp\left(-\frac{\varsigma_1\gamma\alpha}{1-\xi}\left((k+K)^{1-\xi}-K^{1-\xi}\right)\right)+\frac{2\varsigma_2\alpha_k}{\varsigma_1\gamma}.
        \end{align*}
        \item When $\xi=1$, $\varsigma_1\gamma\alpha> 1$, and $K$ is large enough such that $\alpha_k\leq \alpha^{(0)}$, we have{\normalfont :}
        \begin{align*}
            \E[\|x_k-x^*\|^2]\leq \varsigma_0\E[\|x_0-x^*\|^2]\left(\frac{K}{k+K}\right)^{\varsigma_1\gamma\alpha}+\frac{\alpha\varsigma_2\alpha_k}{\varsigma_1\gamma\alpha-1}.
        \end{align*}
    \end{enumerate}
    
\end{lemma}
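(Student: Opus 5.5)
The plan is to run a one-step drift argument for the Lyapunov function $\Phi$ from Assumption \ref{assump:iterate}, derive a scalar recursion for $\E[\Phi(x_k-x^*)]$, solve it in each step-size regime with Lemma \ref{lem:rec_sol}, and then pass to $\|x_k-x^*\|^2$ through norm equivalence.

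\textbf{One-step drift.} Write $e_k=x_k-x^*$. Applying the smoothness inequality with $x=e_k$ and $y=e_{k+1}=e_k+\alpha_k(F(x_k)+M_k)$ gives
\begin{align*}
\Phi(e_{k+1})\leq \Phi(e_k)+\alpha_k\langle\nabla\Phi(e_k),F(x_k)+M_k\rangle+\frac{L_s\alpha_k^2}{2}\|F(x_k)+M_k\|_s^2 .
\end{align*}
Condition on $\mathcal{F}_k$. The noise cross term vanishes because $\E[M_k|\mathcal{F}_k]=0$; for this we need $b(w_k)$ to be independent of $\mathcal{F}_k$ and $x_k$ to be $\mathcal{F}_k$-measurable. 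The negative drift condition bounds the linear term by $-2\gamma\alpha_k\Phi(e_k)$.

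\textbf{Quadratic term.} First pass to the $\ell_2$ norm with $\|\cdot\|_s\leq u_{2s}\|\cdot\|$. Then use $F(x^*)=0$ and Lipschitzness, so $\|F(x_k)\|\leq L_F\|e_k\|$. Expand the square and use conditional orthogonality of the zero-mean noise. This bounds $\E[\|F(x_k)+M_k\|^2|\mathcal{F}_k]$ by a multiple of $(L_F+A_1)\|e_k\|^2$ plus $B_2$ (up to constants), and then $\|e_k\|^2\leq u\Phi(e_k)$.

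\textbf{Scalar recursion and choice of step size.} Combining the two pieces yields
\begin{align*}
\E[\Phi(e_{k+1})]\leq \bigl(1-2\gamma\alpha_k+c\,uu_{2s}L_s(L_F+A_1)\alpha_k^2\bigr)\E[\Phi(e_k)]+c'\,u_{2s}^2L_sB_2\alpha_k^2 .
\end{align*}
The condition $\alpha_k\leq\alpha^{(0)}$ is designed so that the quadratic correction absorbs at most $(2-\varsigma_1)\gamma\alpha_k$. The contraction factor then becomes $(1-\varsigma_1\gamma\alpha_k)$. The constants $\varsigma_2=uu_{2s}L_sB_2$ indicate that the authors track $\Phi$ scaled by $u$, or equivalently work directly with $u\Phi$.

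\textbf{Solving in each regime.} Feed the recursion into Lemma \ref{lem:rec_sol}:
\begin{itemize}
\item \emph{Constant step size} ($\xi=0$): a geometric sum gives the transient $e^{-\varsigma_1\gamma\alpha k}$ and the stationary term $\varsigma_2\alpha/(\varsigma_1\gamma)$.
\item \emph{$\xi\in(0,1)$:} use $\prod(1-\varsigma_1\gamma\alpha_l)\leq \exp\bigl(-\varsigma_1\gamma\sum\alpha_l\bigr)$ with $\sum\alpha_l\geq \alpha\bigl((k+K)^{1-\xi}-K^{1-\xi}\bigr)/(1-\xi)$. The bias term $\sum_i \alpha_i^2\prod_{l>i}(1-\varsigma_1\gamma\alpha_l)\leq 2\alpha_k/(\varsigma_1\gamma)$ follows by induction. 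The induction step needs $\alpha_{k}/\alpha_{k+1}$ close to $1$ relative to $\varsigma_1\gamma\alpha_k$, which is exactly what the condition on $K$ provides.
\item \emph{$\xi=1$:} the product behaves like $\bigl(K/(k+K)\bigr)^{\varsigma_1\gamma\alpha}$, and the induction closes only when $\varsigma_1\gamma\alpha>1$, producing the factor $\alpha/(\varsigma_1\gamma\alpha-1)$.
\end{itemize}

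\textbf{Back to the squared error.} Finally apply $l\Phi\leq\|\cdot\|^2\leq u\Phi$. On the initial condition this produces the factor $\varsigma_0=u/l$.

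\textbf{Main obstacle.} The delicate part is bookkeeping the constants in the quadratic term, in particular the cross term between $F$ and $M$ and the mixed noise moments, so that they match exactly the stated $\alpha^{(0)}$ and $\varsigma_2$. The recursion solutions themselves are standard.
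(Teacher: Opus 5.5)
Your proposal is correct and is essentially the paper's proof. The paper likewise uses the smoothness inequality for $\Phi$ conditioned on $\mathcal{F}_k$, then the negative drift together with the Lipschitz and noise-growth bounds. It uses the condition $\alpha_k\le\alpha^{(0)}$ to turn $1-2\gamma\alpha_k+O(\alpha_k^2)$ into $1-\varsigma_1\gamma\alpha_k$, and applies Lemma \ref{lem:rec_sol} with $\mu_1=\varsigma_1\gamma$, $\mu_2=0$, $\rho_2=1$. It finishes with $l\Phi\le\|\cdot\|^2\le u\Phi$, which is exactly where $\varsigma_0=u/l$ and the factor $u$ in $\varsigma_2$ come from.

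The only real difference is that you remove the $F$--$M$ cross term by conditional orthogonality, whereas the paper uses $(a+b)^2\le 2a^2+2b^2$. You also need not match $\alpha^{(0)}$ and $\varsigma_2$ literally, since the paper's own bookkeeping at that step is loose: it writes $L_F$ where $L_F^2$ arises and $u_{2s}$ where $u_{2s}^2$ arises.
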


In the following, we will fix the value of $\varsigma_1=3/2$ for ease of exposition. Furthermore, for $\xi\in (0, 1]$, we will assume $K_0$ is large enough such that the step-size conditions in the Lemma \ref{lem:iterate_bounds} and Lemma \ref{lem:theta_bound}  are satisfied for any $K\geq K_0$. We now state the following lemma that handles multiplicative noise in the algorithm and makes the intuition presented in Section \ref{sec:proof_sketch} precise. Define the following constants:
\begin{align}\label{eq:var_theta}
    \vartheta_1=\lambda^{max}_V\sqrt{2A_1\varsigma_0};~~~\vartheta_2=\lambda^{max}_V\sqrt{2A_1\varsigma_2}/\sqrt{3}.
\end{align}

\begin{lemma}\label{lem:multiplicative_noise}
    Consider the sequence:
    \begin{align*}
        \hat{z}_{k+1}=(I+\alpha_kJ^{(\alpha, \xi)})\hat{z}_k+\sqrt{\alpha_k}M_k
    \end{align*}
    where $\hat{z}_0=z_0$ and $M_k$ satisfies Assumption \ref{assump:noise}. Then, for $\{z_k\}_{k\geq 0}$ defined in Theorem \ref{thm:weighted_stein} and sharing the same data sequence $\{w_k\}_{k\geq 0}$, we have for all $k\geq 0$
    \begin{enumerate}
        \item When $\xi=0$ and $\alpha\leq \min(1, 2\iota_V/\|J\|^2_V, \alpha^{(0)})${\normalfont :}  
        \begin{align*}
            \E[\|z_k-\hat{z}_k\|^2]\leq \frac{\vartheta_1^2\E[\|x_0-x^*\|^2]e^{\iota_V\alpha}}{2\iota_V-3\gamma}\left(e^{-3\gamma k/2}-e^{-\iota_V\alpha k}\right)+\frac{\vartheta_2^2\alpha}{\iota_V\gamma}.
        \end{align*}
        \item When $\xi\in (0, 1)$ and $K\geq K_0${\normalfont :}
        \begin{align*}
            \E[\|z_k-\hat{z}_k\|^2] &\leq \frac{\vartheta_1^2\E[\|x_0-x^*\|^2]e^{\iota_V\alpha}}{2\iota_V-3\gamma} \left(e^{-\frac{3\gamma\alpha}{2(1-\xi)}(k+K)^{1-\xi}}-e^{-\frac{\iota_V\alpha}{1-\xi}(k+K)^{1-\xi}}\right)+\frac{4\vartheta_2^2\alpha_k}{\iota_V\gamma}.
        \end{align*}
        \item When $\xi=1$, $3\alpha\gamma>2$, $\iota_V\alpha>1$, and $K\geq K_0$, we have the following three cases:
        \begin{enumerate}
            \item[(a)] When $2\iota_V>3\gamma${\normalfont :} 
            \begin{align*}
                \E[\|z_k-\hat{z}_k\|^2]\leq \frac{ \vartheta_1^2\E[\|x_0-x^*\|^2]K^{3\gamma\alpha/2}\alpha_k^{3\gamma\alpha/2}}{\alpha^{3\gamma/2}(2\iota_V-3\gamma)}+\frac{3\vartheta_2^2 \alpha^2\alpha_k}{(3\gamma\alpha-2)(\iota_V\alpha-1)}.
            \end{align*}
            \item[(b)] When $2\iota_V<3\gamma${\normalfont :}
            \begin{align*}
                \E[\|z_k-\hat{z}_k\|^2]&\leq \frac{2\vartheta_1^2\E[\|x_0-x^*\|^2](2K)^{3\gamma\alpha/2}\alpha_k^{\iota_V\alpha}}{\alpha^{\iota_V\alpha}(3\gamma-2\iota_V)}+\frac{ 3\vartheta_2^2\alpha^2\alpha_k}{(3\gamma\alpha-2)(\iota_V\alpha-1)}.
            \end{align*}
            \item[(c)] When $2\iota_V=3\gamma${\normalfont :}
            \begin{align*}
                \E[\|z_k-\hat{z}_k\|^2]&\leq \vartheta_1^2\E[\|x_0-x^*\|^2](2K)^{\iota_V\alpha}\alpha^{1-\iota_V\alpha}\log(k+K)\alpha_k^{\iota_V\alpha}+\frac{ 3\vartheta_2^2\alpha^2\alpha_k}{(3\gamma\alpha-2)(\iota_V\alpha-1)}.
            \end{align*}
        \end{enumerate}
    \end{enumerate}
\end{lemma}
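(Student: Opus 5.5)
We couple the two recursions so that they share the same additive sequence $\{w_k\}$ and the same $\{v_k\}$. Then the additive parts cancel in the difference. Writing $e_k:=\hat z_k-z_k$, this gives
\begin{align*}
e_{k+1}=(I+\alpha_kJ_k^{(\alpha,\xi)})e_k+\sqrt{\alpha_k}A(v_k,x_k),\qquad e_0=0 .
\end{align*}
We work in the $V$-weighted norm from the Lyapunov equation \eqref{eq:lyap_gen_<1_I}. Expanding $\|e_{k+1}\|_V^2$, the cross term $2\sqrt{\alpha_k}\,e_k^T(I+\alpha_kJ_k^{(\alpha,\xi)})^TVA(v_k,x_k)$ has zero conditional mean given $\mathcal{F}_k$. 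This holds because $e_k$ is $\mathcal{F}_k$-measurable and $\E[A(v_k,x_k)|\mathcal{F}_k]=0$ by Assumption \ref{assump:noise}. Taking expectations therefore yields
\begin{align*}
\E\|e_{k+1}\|_V^2\le \|I+\alpha_kJ_k^{(\alpha,\xi)}\|_V^2\,\E\|e_k\|_V^2+\alpha_k\lambda_V^{max}A_1\E\|x_k-x^*\|^2 .
\end{align*}

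Next we control the contraction factor. The contraction property of $I+\alpha_kJ_k^{(\alpha,\xi)}$ in $\|\cdot\|_V$ is the one already used in the proof of Theorem \ref{thm:weighted_stein} (Lemma \ref{lem:contraction_prop}). It holds for $\alpha\le 2\iota_V/\|J\|_V^2$ when $\xi=0$, for $K\ge K_0$ when $\xi\in(0,1)$, and via statement 2 of that lemma when $\xi=1$. In all cases it gives $\|I+\alpha_kJ_k^{(\alpha,\xi)}\|_V^2\le 1-2\iota_V\alpha_k$, or at least $1-\iota_V\alpha_k$. For the forcing term we plug in Lemma \ref{lem:iterate_bounds} with $\varsigma_1=3/2$. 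This gives $\E\|x_k-x^*\|^2\le \varsigma_0\E\|x_0-x^*\|^2\,\pi_k+c\,\alpha_k$, where $\pi_k$ is the ODE transient: $e^{-3\gamma\alpha k/2}$, or the stretched exponential, or $(K/(k+K))^{3\gamma\alpha/2}$. The result is a scalar linear recursion
\begin{align*}
u_{k+1}\le(1-2\iota_V\alpha_k)u_k+\alpha_k\big(a\,\pi_k+b\,\alpha_k\big),\qquad u_0=0 .
\end{align*}
We convert back to $\ell_2$ with $\lambda^{max}_V$ factors, which is where $\vartheta_1,\vartheta_2$ arise.

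Finally we solve the recursion by unrolling. The $b\alpha_k^2$ forcing is handled exactly as in Lemma \ref{lem:rec_sol}. It gives $O(\alpha/\iota_V\gamma)$ for constant step size, $O(\alpha_k)$ for $\xi\in(0,1)$ (using $K$ large so that $\alpha_{i}/\alpha_{k}$ ratios stay bounded by $2$), and $\alpha^2\alpha_k/(\iota_V\alpha-1)$ when $\xi=1$, which needs $\iota_V\alpha>1$. The transient term is a convolution of two geometric-type sequences,
\begin{align*}
\sum_{i<k}\alpha_i\pi_i\prod_{l=i+1}^{k-1}(1-2\iota_V\alpha_l).
\end{align*}
We bound the product by $\exp(-2\iota_V\sum\alpha_l)$ and $\pi_i$ by $\exp(-\tfrac32\gamma\sum\alpha_l)$, and evaluate the resulting integral-like sum. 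This produces the difference-of-exponentials form with denominator $2\iota_V-3\gamma$ in cases 1 and 2, where the $e^{\iota_V\alpha}$ factor absorbs the off-by-one shift. For $\xi=1$ the sum is $\sum_i (i+K)^{-1}(i+K)^{-3\gamma\alpha/2}(i+K)^{2\iota_V\alpha}(k+K)^{-2\iota_V\alpha}$. We then split according to the sign of the exponent: $2\iota_V>3\gamma$ gives rate $\alpha_k^{3\gamma\alpha/2}$, $2\iota_V<3\gamma$ gives rate $\alpha_k^{\iota_V\alpha}$ with constant $(2K)^{3\gamma\alpha/2}$, and equality gives the $\log(k+K)$ factor.

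The main obstacle is the bookkeeping in this last step. We must get the exponents consistent, since the contraction rate enters as $\iota_V$ versus $2\iota_V$ depending on how the squared norm bound is taken. We must also obtain clean constants for the $\xi=1$ subcases, including the boundary case $2\iota_V=3\gamma$, without losing the $\alpha_k$ order of the steady-state term.
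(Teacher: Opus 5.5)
Your proposal is essentially the paper's proof. You couple through the shared $\{w_k\}$ so that $z_k-\hat z_k$ is driven only by $\sqrt{\alpha_k}A(v_k,x_k)$, square in the $V$-norm, and kill the cross term by the martingale-difference property. You then insert Lemma \ref{lem:iterate_bounds} with $\varsigma_1=3/2$ and solve the scalar recursion with Lemma \ref{lem:rec_sol}, splitting the transient and steady-state forcing into two sequences when $\xi=1$, exactly as the paper does.

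The bookkeeping issue you flag is resolved by committing to $\|I+\alpha_kJ_k^{(\alpha,\xi)}\|_V^2\le 1-\iota_V\alpha_k$. That is all Lemma \ref{lem:contraction_prop} provides for $\xi\in(0,1]$; the factor $1-2\iota_V\alpha_k$ only falls out of its proof when $\xi=0$. With $\mu_1=\iota_V$ you recover the stated $2\iota_V-3\gamma$ denominators, the $2\iota_V$ versus $3\gamma$ split, the $\alpha_k^{\iota_V\alpha}$ rate, and the $\iota_V\alpha>1$ requirement. Your displayed recursion and your $\xi=1$ sum use $2\iota_V$ instead, and would give $4\iota_V-3\gamma$ and $\alpha_k^{2\iota_V\alpha}$, which do not match the statement.
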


\begin{proposition}\label{prop:global_linear}
    Suppose that the operator $F(\cdot)$ satisfies Assumption \ref{assump:operator}. Then, it also admits a global linear approximation given as:
    \begin{align*}
        F(x)&=J_F(x-x^*)+R(x)~~ \forall x\in \mathbb{R}^d,
    \end{align*}
    where $\|R(x)\|\leq R_1\|x-x^*\|^{1+\delta}$ and $R_1=\max(C_r, (L_F+\|J_F\|)/r^{\delta})$.
\end{proposition}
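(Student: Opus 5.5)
We prove Proposition~\ref{prop:global_linear} by defining $R(x):=F(x)-J_F(x-x^*)$ on all of $\mathbb{R}^d$ and bounding it separately inside and outside the ball $\mathbb{B}_r(x^*)$. This definition agrees with the remainder of Assumption~\ref{assump:operator} on the ball, so for $\|x-x^*\|\le r$ we directly get
\begin{align*}
\|R(x)\|\le C_r\|x-x^*\|^{1+\delta}\le R_1\|x-x^*\|^{1+\delta}.
\end{align*}

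For $\|x-x^*\|>r$ we use only the Lipschitz property and the fact that $F(x^*)=0$. The latter holds because $x^*$ is the root; it also follows from the local expansion, since $R(x^*)=0$. The triangle inequality then gives
\begin{align*}
\|R(x)\|\le \|F(x)-F(x^*)\|+\|J_F\|\|x-x^*\|\le (L_F+\|J_F\|)\|x-x^*\|.
\end{align*}
To convert this linear bound into the superlinear form, note that $\|x-x^*\|/r>1$ and $\delta>0$. Hence $1\le (\|x-x^*\|/r)^{\delta}$, and so
\begin{align*}
(L_F+\|J_F\|)\|x-x^*\|\le \frac{L_F+\|J_F\|}{r^{\delta}}\|x-x^*\|^{1+\delta}\le R_1\|x-x^*\|^{1+\delta}.
\end{align*}

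Combining the two regions yields the claim with $R_1=\max(C_r,(L_F+\|J_F\|)/r^{\delta})$. There is no genuine obstacle. The only point needing care is the outside-ball step, where the inequality $1\le(\|x-x^*\|/r)^\delta$ is used to trade the linear growth for the $(1+\delta)$ power.
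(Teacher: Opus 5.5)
Your proposal is correct and matches the paper's proof: on $\mathbb{B}_r(x^*)$ you use the assumed bound $C_r\|x-x^*\|^{1+\delta}$, and outside the ball you use the Lipschitz estimate $(L_F+\|J_F\|)\|x-x^*\|$ together with $(\|x-x^*\|/r)^{\delta}\geq 1$. The outside-ball step also holds for $\|x-x^*\|\geq r$, so it does not matter whether the ball is taken open or closed.
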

Note that for ease of exposition, we reuse the same notation for the remainder term $R(x)$ in Assumption \ref{assump:operator} and the above proposition. In the following, $R(\cdot)$ will denote the remainder term for the global linear approximation. We remark that the above proposition is a direct consequence of Assumption \ref{assump:operator}. The Lipschitz continuity of $F(\cdot)$ immediately gives us $\|F(x)-J_F(x-x^*)\|\leq (L_F+\|J_F\|)\|x-x^*\|$. Thus, error due to the linear approximation is at most linear globally. And therefore, for $\|x-x^*\|>\max(1, r)$, $\|F(x)-J_F(x-x^*)\|\leq \mathcal{O}(\|x-x^*\|^{1+\delta})$ is in fact a ``worse'' upper bound on the approximation error. On the other hand, in the local neighborhood of $x^*$, $\|x-x^*\|^{1+\delta}$ provides a tighter approximation than purely from the Lipschitz condition. The motive behind the above proposition is to consolidate these varying order of approximations into a single upper bound which helps to simplify some of the calculations in the following analysis.

Define the following constants:
\begin{align}\label{eq:varrho}
\begin{split}
    \varrho_1&=4\lambda^{max}_V\varsigma_0^{(1+\delta)/2}R_1;~~\varrho_2=\frac{4}{3}\lambda^{max}_V\varsigma_2^{(1+\delta)/2}R_1;\\
    \varrho_3( \alpha)&=\sqrt{\varsigma_0\lambda^{max}_V}\left(\frac{1}{4\alpha^2}+\frac{\sqrt{2}(L_F+A_1)}{\alpha}\right);\\
    \varrho_4(\alpha, \gamma)&=\frac{\sqrt{2\lambda^{max}_V}B_1}{\alpha}+\left(\frac{1}{4\alpha^2}+\frac{\sqrt{2}(L_F+A_1)}{\alpha}\right)\sqrt{\frac{4\varsigma_2\alpha_0\lambda^{max}_V}{2\gamma}};\\
    \varrho_5(\alpha, \gamma)&=\frac{\sqrt{2\lambda^{max}_V}B_1}{\alpha}+\left(\frac{1}{4\alpha^2}+\frac{\sqrt{2}(L_F+A_1)}{\alpha}\right)\sqrt{\frac{2\alpha\varsigma_2\alpha_0}{3\gamma\alpha-2}}.
\end{split}
\end{align}

\begin{theorem}\label{thm:main_thm}
The sequence $\{y_k\}_{k\geq 0}$ given by rescaling the iterate $x_k$ in update equation \eqref{eq:SA_rec2} satisfies the following bounds for all $k\geq 1$.
    \begin{enumerate}
        \item When $\xi=0$ and $\alpha\leq \min(1, 2\iota_V/\|J\|^2_V, \alpha^{(0)})$, we have{\normalfont :}
        \begin{align*}
            d_{\mathcal{W}}(y_k, (\Sigma_k^{(\alpha, 0)})^{1/2}Z)&\leq \frac{\varrho_2}{\iota_V\gamma}\alpha^{\delta/2}+\frac{\varphi_1(\alpha, 0)\sqrt{\alpha}}{\iota_V}\log\left(\frac{1}{\alpha}\right)+\frac{\vartheta_2\sqrt{\alpha}}{\sqrt{\iota_V\gamma}}\\
            &~+\frac{2\sqrt{\lambda^{max}_V}}{\alpha}\E[\|x_0-x^*\|_{V}]e^{-\iota_V\alpha k/2}\\
            &~+\vartheta_1\sqrt{\E[\|x_0-x^*\|^2]}e^{\iota_V\alpha/2}\sqrt{\frac{e^{-3\gamma \alpha k/2}-e^{-\iota_V\alpha k}}{2\iota_V-3\gamma}}\\
            &~+\frac{\varrho_1\E[\|x_0-x^*\|^2]^{\frac{1+\delta}{2}}e^{\iota_V\alpha/2}}{\sqrt{\alpha}(2\iota_V-3\gamma)}\left(e^{-3\gamma\alpha k/4}-e^{-\iota_V\alpha k/2}\right).
        \end{align*}
        \item When $\xi\in (0, 1)$ and $K$ is large enough such that $K\geq K_0$ and $\exp\left(-\frac{3\delta\gamma\alpha}{4(1-\xi)} (k+K)^{1-\xi} \right)\leq \sqrt{\alpha_k}$, we have{\normalfont :}
        \begin{align*}
            &d_{\mathcal{W}}(y_k, (\Sigma_k^{(\alpha, \xi)})^{1/2}Z)\leq 
            \left(\frac{8\varrho_2}{\iota_V}+\frac{4\varrho_3(\alpha)}{\iota_V}\right)\frac{\alpha^{\frac{\delta}{2}}}{(k+K)^{\frac{\delta\xi}{2}}}\\
            &~+\frac{\varphi_1(\alpha, \xi)}{\iota_V}\frac{\sqrt{\alpha}}{(k+K)^{\frac{\xi}{2}}}\log\left(\frac{(k+K)^{\xi}}{\alpha}\right)+\frac{2\vartheta_2}{\sqrt{\iota_V\gamma}}\frac{\sqrt{\alpha}}{(k+K)^{\frac{\xi}{2}}}\\
            &~+\frac{2K^\xi\sqrt{\lambda^{max}_V}}{\alpha}\E[\|x_0-x^*\|_{V}]e^{-\frac{\iota_V\alpha}{2(1-\xi)}\left((k+K)^{1-\xi}-K^{1-\xi}\right)}\\
            &~+\vartheta_1\sqrt{\E[\|x_0-x^*\|^2]}e^{\iota_V\alpha/2}\sqrt{\frac{e^{-\frac{3\gamma\alpha}{2(1-\xi)}(k+K)^{1-\xi}}-e^{-\frac{\iota_V\alpha}{1-\xi}(k+K)^{1-\xi}}}{2\iota_V-3\gamma}}\\
            &~+\frac{4e^{\iota_V\alpha/2}}{2\iota_V-3\gamma}\Bigg(\frac{\varrho_1}{2}\E[\|x_0-x^*\|^2]^{\frac{1+\delta}{2}}e^{\frac{3\delta\gamma\alpha K^{1-\xi}}{4(1-\xi)}}+2\varrho_4(\alpha, \gamma)\sqrt{\E[\|x_0-x^*\|^2]}\Bigg)\times\\
            &~\left(e^{-\frac{3\gamma\alpha}{4(1-\xi)}\left((k+K)^{1-\xi}-K^{1-\xi}\right)}-e^{-\frac{\iota_V\alpha}{2(1-\xi)}\left((k+K)^{1-\xi}-K^{1-\xi}\right)}\right).
        \end{align*}
        \item When $\xi=1$, $\iota_V\alpha>2$, $3\gamma\delta\alpha>2$ and $K$ is large enough such that $K\geq K_0$ {\normalfont :}
        \begin{enumerate}
            \item[(a)] When $2\iota_V>3\gamma$, we have{\normalfont :}
            \begin{align*}
                &d_{\mathcal{W}}(y_k, (\Sigma_k^{(\alpha, 1)})^{1/2}Z)\leq \left(3\varrho_2\left(\frac{\alpha}{3\gamma\alpha-2}\right)^{\frac{1+\delta}{2}}+\varrho_5(\alpha, \gamma)\right)\frac{2\alpha^{1+\frac{\delta}{2}}}{\iota_V\alpha-2}\frac{1}{(k+K)^{\frac{\delta}{2}}}\\
                &~+\frac{2\varphi_1(\alpha, 1)\alpha^{3/2}}{2\iota_V\alpha-1}\frac{1}{\sqrt{k+K}}\log\left(\frac{k+K}{\alpha}\right)+\sqrt{\frac{3 \vartheta_2^2\alpha^3}{(3\gamma\alpha-2)(\iota_V\alpha-1)}}\frac{1}{\sqrt{k+K}}\\
                &~+\frac{2K\sqrt{\lambda^{max}_V}}{\alpha}\E[\|x_0-x^*\|_{V}]\left(\frac{K}{k+K}\right)^{\frac{\iota_V\alpha}{2}}\\
                &~+\left(\varrho_1\E[\|x_0-x^*\|^2]^{\frac{1+\delta}{2}}\left(\frac{K}{\alpha}\right)^{\frac{3\delta\gamma\alpha}{4}}+4\varrho_3(\alpha)\sqrt{\E[\|x_0-x^*\|^2]}\right)\times\\
                &\frac{1}{2\iota_V-3\gamma}\left(\frac{K}{k+K}\right)^{\frac{3\gamma\alpha}{4}}+\sqrt{\frac{\vartheta_1^2 \E[\|x_0-x^*\|^2]}{(2\iota_V-3\gamma)}}\left(\frac{K}{k+K}\right)^{\frac{3\gamma\alpha}{4}}.
            \end{align*}
            \item[(b)] When $2\iota_V<3\gamma$, we have{\normalfont :}
            \begin{align*}
                &d_{\mathcal{W}}(y_k, (\Sigma_k^{(\alpha, 1)})^{1/2}Z)\leq
                \left(3\varrho_2\left(\frac{\alpha}{3\gamma\alpha-2}\right)^{\frac{1+\delta}{2}}+\varrho_5(\alpha, \gamma)\right)\frac{2\alpha^{1+\frac{\delta}{2}}}{\iota_V\alpha-2}\frac{1}{(k+K)^{\frac{\delta}{2}}}\\
                &~+\frac{2\varphi_1(\alpha, 1)\alpha^{3/2}}{2\iota_V\alpha-1}\frac{1}{\sqrt{k+K}}\log\left(\frac{k+K}{\alpha}\right)+\sqrt{\frac{3 \vartheta_2^2\alpha^3}{(3\gamma\alpha-2)(\iota_V\alpha-1)}}\frac{1}{\sqrt{k+K}}\\
                &~+\frac{2K\sqrt{\lambda^{max}_V}}{\alpha}\E[\|x_0-x^*\|_{V}]\left(\frac{K}{k+K}\right)^{\frac{\iota_V\alpha}{2}}\\
                &\left(2\varrho_1(J[\|x_0-x^*\|^2]^{\frac{1+\delta}{2}}\left(\frac{K}{\alpha}\right)^{\frac{3\delta\gamma\alpha}{4}}+8\varrho_3(\alpha)\sqrt{\E[\|x_0-x^*\|^2]}\right)\times\\
                &\frac{(2K)^{\frac{(3\gamma-2\iota_V)\alpha}{4}}}{(3\gamma-2\iota_V)}\left(\frac{K}{k+K}\right)^{\frac{\iota_V\alpha}{2}}+\sqrt{\frac{2\vartheta_1^2 \E[\|x_0-x^*\|^2]}{(3\gamma-2\iota_V)}}\frac{(2K)^{\frac{3\gamma\alpha}{4}}}{(k+K)^{\frac{\iota_V\alpha}{2}}}.
            \end{align*}
            \item[(c)] When $2\iota_V=3\gamma$, we have{\normalfont :}
            \begin{align*}
                &d_{\mathcal{W}}(y_k, (\Sigma_k^{(\alpha, 1)})^{1/2}Z)\leq 
                \left(3\varrho_2\left(\frac{\alpha}{3\gamma\alpha-2}\right)^{\frac{1+\delta}{2}}+\varrho_5(\alpha, \gamma)\right)\frac{2\alpha^{1+\frac{\delta}{2}}}{\iota_V\alpha-2}\frac{1}{(k+K)^{\frac{\delta}{2}}}\\
                &~+\frac{2\varphi_1(\alpha, 1)\alpha^{3/2}}{2\iota_V\alpha-1}\frac{1}{\sqrt{k+K}}\log\left(\frac{k+K}{\alpha}\right)+\sqrt{\frac{3 \vartheta_2^2\alpha^3}{(3\gamma\alpha-2)(\iota_V\alpha-1)}}\frac{1}{\sqrt{k+K}}\\
                &~+\frac{2K\sqrt{\lambda^{max}_V}}{\alpha}\E[\|x_0-x^*\|_{V}]\left(\frac{K}{k+K}\right)^{\frac{\iota_V\alpha}{2}}\\
                &~+\left(2\varrho_1\E[\|x_0-x^*\|^2]^{\frac{1+\delta}{2}}\left(\frac{K}{\alpha}\right)^{\frac{3\delta\gamma\alpha}{4}}+8\varrho_3(\alpha)\sqrt{\E[\|x_0-x^*\|^2]}\right)\times\\
                &~\alpha\log(k+K)\left(\frac{2K}{k+K}\right)^{\frac{\iota_V\alpha}{2}}+\vartheta_1\sqrt{\E[\|x_0-x^*\|^2]\alpha\log(k+K)}\left(\frac{2K}{k+K}\right)^{\frac{\iota_V\alpha}{2}}.
            \end{align*}
        \end{enumerate}
    \end{enumerate}
\end{theorem}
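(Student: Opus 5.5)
We prove Theorem \ref{thm:main_thm} through the three-term triangle decomposition of the proof sketch:
\begin{align*}
d_{\mathcal{W}}(y_k, (\Sigma_k^{(\alpha, \xi)})^{1/2}Z)\leq d_{\mathcal{W}}(y_k, \hat{z}_k)+d_{\mathcal{W}}(\hat{z}_k, z_k)+d_{\mathcal{W}}(z_k, (\Sigma_k^{(\alpha, \xi)})^{1/2}Z).
\end{align*}
Here $\hat{z}_k$ is the DOUG driven by the full noise $M_k$, and $z_k$ is the DOUG driven only by the additive part $b(w_k)$. All three processes are coupled on one probability space: they share $x_0$, $\{v_k\}$ and $\{w_k\}$.

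The third term comes from Theorem \ref{thm:weighted_stein}. This yields the $\varphi_1\sqrt{\alpha_k}\log(\cdot)$ terms. The initial-condition mismatch $\Sigma_0\neq 0$ is absorbed by the $\tilde{\Sigma}_k$ versus $\Sigma_k$ comparison already done there, or equivalently by the $\|x_0-x^*\|_V$ transient term.

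The second term is bounded by Jensen, $d_{\mathcal{W}}(\hat z_k,z_k)\le \sqrt{\E\|\hat z_k-z_k\|^2}$, combined with Lemma \ref{lem:multiplicative_noise}. Taking square roots gives the $\vartheta_2\sqrt{\alpha_k}/\sqrt{\iota_V\gamma}$ terms and the $\vartheta_1$ transient terms. The case split $2\iota_V\gtrless 3\gamma$ for $\xi=1$ is inherited directly from that lemma.

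The main work is the first term, which we bound by $\E\|y_k-\hat z_k\|_V$ (up to $\sqrt{\lambda_V^{max}}$ factors). Using Proposition \ref{prop:global_linear}, we write
\begin{align*}
y_{k+1}=(I+\alpha_kJ_k^{(\alpha,\xi)})y_k+\sqrt{\alpha_k}M_k+\sqrt{\alpha_k}R(x_k)+\mathrm{T}_k,
\end{align*}
where $\mathrm{T}_k$ collects the error from expanding $\sqrt{\alpha_k/\alpha_{k+1}}$ to first order. That first-order part is exactly what produces the $\xi/(2\alpha(k+K)^{1-\xi})$ correction in $J_k$. The second-order remainder is $O((k+K)^{-2})$ times $\|x_{k+1}-x^*\|/\sqrt{\alpha_k}$, which is expanded using $\|F(x_k)\|\le L_F\|x_k-x^*\|$ and the noise moments; this is the origin of $\varrho_3,\varrho_4,\varrho_5$. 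For $\xi=0$ we have $\mathrm{T}_k=0$.

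Subtracting the $\hat z$ recursion cancels $M_k$ exactly. Applying the $V$-norm contraction of Lemma \ref{lem:contraction_prop} and taking expectations gives
\begin{align*}
\E\|y_{k+1}-\hat z_{k+1}\|_V\le(1-\iota_V\alpha_k/2)\E\|y_k-\hat z_k\|_V+\sqrt{\alpha_k}R_1\sqrt{\lambda_V^{max}}\,\E\|x_k-x^*\|^{1+\delta}+\E\|\mathrm{T}_k\|_V .
\end{align*}
We bound $\E\|x_k-x^*\|^{1+\delta}\le(\E\|x_k-x^*\|^2)^{(1+\delta)/2}$ and plug in Lemma \ref{lem:iterate_bounds} with $\varsigma_1=3/2$. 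This splits the forcing into a steady part of order $\alpha_k^{1+\delta/2}$ and a transient part of order $\sqrt{\alpha_k}e^{-3\gamma(1+\delta)\alpha\sum\alpha_j/4}$.

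We then solve the scalar recursion with Lemma \ref{lem:rec_sol}. The steady part gives $\alpha_k^{\delta/2}/\iota_V$. The transient part, convolved against the contraction rate $\iota_V/2$, produces the differences of exponentials $(e^{-3\gamma\cdot/4}-e^{-\iota_V\cdot/2})/(2\iota_V-3\gamma)$, degenerating to a $\log$ factor when $2\iota_V=3\gamma$.

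The initial difference $y_0-\hat z_0=y_0$ gives the $\|x_0-x^*\|_V e^{-\iota_V\alpha k/2}/\alpha$ term. This holds because $z_0=0$ in Theorem \ref{thm:weighted_stein}, so $y_0=(x_0-x^*)/\sqrt{\alpha_0}$ is carried entirely in this error recursion. For $\xi=1$, the conditions $\iota_V\alpha>2$ and $3\delta\gamma\alpha>2$ are exactly what make the polynomial recursion solvable with rate $\alpha_k^{\delta/2}$.

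We expect the main obstacle to be bookkeeping $\mathrm{T}_k$ and the transient convolution in the $\xi\in(0,1)$ and $\xi=1$ cases. Specifically, one must verify $\E\|\mathrm{T}_k\|=O(\alpha_k^{1+\delta/2})$ or smaller; it is $O((k+K)^{-2+\xi/2})$, which is fine because $2-\xi/2\ge\xi(1+\delta/2)$. One must also choose where to absorb the initial-condition factor. The extra hypothesis $e^{-3\delta\gamma\alpha(k+K)^{1-\xi}/(4(1-\xi))}\le\sqrt{\alpha_k}$ in case 2 is what lets the transient forcing $\E\|x_k-x^*\|^{1+\delta}$ be dominated cleanly. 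The final step is to sum the three contributions.
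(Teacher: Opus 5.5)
Your proposal follows the paper's proof essentially step for step. It uses the same three-term triangle inequality under a shared-noise coupling, Theorem \ref{thm:weighted_stein} for the DOUG term, Jensen with Lemma \ref{lem:multiplicative_noise} for the multiplicative-noise term, and a $V$-norm recursion for $y_k-\hat z_k$ closed with Lemma \ref{lem:iterate_bounds} and Lemma \ref{lem:rec_sol}.

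One bookkeeping correction is needed. For your displayed identity to hold, $\mathrm{T}_k$ must also contain the first-order cross term $\bigl(\sqrt{\alpha_k/\alpha_{k+1}}-1\bigr)\sqrt{\alpha_k}\,(F(x_k)+M_k)$, which is the paper's $T_2$. Because of the $B_1$ contribution, its expectation is of order $\sqrt{\alpha_k}/(k+K)$, not $(k+K)^{-2+\xi/2}$. Since $\xi\le 1$, this is still $O(\alpha_k^{3/2})\le O(\alpha_k^{1+\delta/2})$, so your argument goes through unchanged.
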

\begin{proof}
For ease of exposition, we drop the subscript $F$ from $J_F$ in the following wherever it is clear from context.
    \begin{enumerate}
        \item Recall that the SA update is given by Eq. \eqref{eq:SA_rec} as follows:
        \begin{align*}
            x_{k+1}=x_k+\alpha_k(F(x_k)+M_k).
        \end{align*}
         Define $y_k=(x_k-x^*)/\sqrt{\alpha_k}$ as the centered rescaled version of $x_k$. In this case, $\alpha_k=\alpha$. Then, the update equation for $y_k$ is given by:
        \begin{align*}
            y_{k+1}&=y_k+\sqrt{\alpha}(F(x_k)+M_k)\\
            &=(I+\alpha J)y_k+\sqrt{\alpha}M_k\tag{Assumption \ref{assump:operator}}
        \end{align*}
        Next, we define $z_k$ and $\hat{z}_k$ as in Theorem \ref{thm:weighted_stein} and Lemma \ref{lem:multiplicative_noise}, respectively. Then, using triangle inequality, we get
        \begin{align*}
            d_{\mathcal{W}}(y_k, \mathcal{N}(0, \Sigma_k^{(\alpha, \xi)}))&\leq d_{\mathcal{W}}(y_k, z_k)+d_{\mathcal{W}}(z_k, \mathcal{N}(0, \Sigma_k^{(\alpha, \xi)}))\\
            &\leq d_{\mathcal{W}}(y_k, \hat{z}_k)+d_{\mathcal{W}}(\hat{z}_k, z_k)+d_{\mathcal{W}}(z_k, \mathcal{N}(0, \Sigma_k^{(\alpha, \xi)})).
        \end{align*}
        Since we have a bound on $d_{\mathcal{W}}(z_k, \mathcal{N}(0, \Sigma_k^{(\alpha, \xi)}))$ from Theorem \ref{thm:weighted_stein}, our goal now is to bound $d_{\mathcal{W}}(y_k, \hat{z}_k)+d_{\mathcal{W}}(\hat{z}_k, z_k)$.
        Recall that $d_{\mathcal{W}}(Y, Z)=\inf_{\gamma\in \Gamma(\nu_X, \nu_Y)}\E_{(X, Y)\sim \gamma}[\|X-Y\|]$. Thus, for any arbitrary coupling $\gamma$, $d_{\mathcal{W}}(Y, Z)\leq \E_{(X, Y)\sim \gamma}[\|X-Y\|]$. Using this relation and the coupling such that $\{y_k\}_{k\geq 0}$, $\{z_k\}_{k\geq 0}$, and $\{\hat{z}_k\}_{k\geq 0}$ share the same underlying data stream $\{w_k\}_{k\geq 0}$, we get
        \begin{align}\label{eq:wass_bound}
            d_{\mathcal{W}}(y_k, \hat{z}_k)+d_{\mathcal{W}}(\hat{z}_k, z_k)\leq \E[\|y_k-\hat{z}_k\|]+\E[\|\hat{z}_k-z_k\|].
        \end{align}
        We handle the first term by studying the convergence of $y_k-\hat{z}_k$ as follows. This leads to
        \begin{align*}
            y_{k+1}-\hat{z}_{k+1}&=(I+\alpha J)(y_k-\hat{z}_k)+\sqrt{\alpha}R(x_k).
        \end{align*}
        Taking norm $\|\cdot\|_{V}$ on both sides and using triangle-inequality, we obtain
        \begin{align*}
            \|y_{k+1}-\hat{z}_{k+1}\|_V&\leq \|(I+\alpha J)\|_{V}\|y_k-\hat{z}_k\|_{V}+\|\sqrt{\alpha}R(x_k)\|_{V}.
        \end{align*}
        Using statement 1 of Lemma \ref{lem:contraction_prop}  for the first term and Proposition \ref{prop:global_linear} for the second term, we get
        \begin{align*}
            \|y_{k+1}-\hat{z}_{k+1}\|_V&\leq \left(1-\frac{\iota_V\alpha}{2}\right)\|y_k-\hat{z}_k\|_{V}+\sqrt{\alpha}R_1\sqrt{\lambda^{max}_V}\|x_k-x^*\|^{1+\delta}.
        \end{align*}
        Taking expectation on both sides and using Lemma \ref{lem:iterate_bounds} and Jensen's inequality for the second term, we get
        \begin{align*}
            \E[\|y_{k+1}-\hat{z}_{k+1}\|_{V}]&\leq \left(1-\frac{\iota_V\alpha}{2}\right)\E[\|y_k-\hat{z}_k\|_V]+\sqrt{\alpha}R_1\sqrt{\lambda^{max}_V}\times\\
            &~\left(\varsigma_0^{\frac{1+\delta}{2}}\E[\|x_0-x^*\|^2]^{\frac{1+\delta}{2}}\exp(-3(1+\delta)\gamma\alpha k/4)+\left(\frac{2\varsigma_2\alpha}{3\gamma}\right)^{\frac{1+\delta}{2}}\right)\\
            &\leq \left(1-\frac{\iota_V\alpha}{2}\right)\E[\|y_k-\hat{z}_k\|_V]+\sqrt{\alpha}R_1\sqrt{\lambda^{max}_V}\times\\
            &~\left(\varsigma_0^{\frac{1+\delta}{2}}\E[\|x_0-x^*\|^2]^{\frac{1+\delta}{2}}\exp(-3\gamma\alpha k/4)+\left(\frac{2\varsigma_2\alpha}{3\gamma}\right)^{\frac{1+\delta}{2}}\right).
        \end{align*}
    where for the last inequality, we used the fact that $\delta>0$.
        
        Now, we can apply Lemma \ref{lem:rec_sol} with $u_0=\E[\|y_0\|_{V}]$, $\mu_1=\iota_V/2$, $\mu_2=R_1\sqrt{\lambda^{max}_V}\varsigma_0^{\frac{1+\delta}{2}}\E[\|x_0-x^*\|^2]^{\frac{1+\delta}{2}}/\sqrt{\alpha}$, $\mu_3=2\sqrt{\lambda^{max}_V}\varsigma_2^{(1+\delta)}R_1/3$, $\rho_1=3\gamma/4$ and $\rho_2=\delta/2$ to obtain 
        \begin{align*}
            \E[\|y_k-\hat{z}_k\|_{V}]&\leq 
                \E[\|y_0\|_{V}]e^{-\iota_V\alpha k/2}+\frac{4R_1\sqrt{\lambda^{max}_V}\varsigma_0^{\frac{1+\delta}{2}}\E[\|x_0-x^*\|^2]^{\frac{1+\delta}{2}}e^{\iota_V\alpha/2}}{\sqrt{\alpha}(2\iota_V-3(1+\delta)\gamma)}\times\\
                &~\left(e^{-3\gamma\alpha k/4}-e^{-\iota_V\alpha k/2}\right)+\frac{4\sqrt{\lambda^{max}_V}\varsigma_2^{(1+\delta)/2}R_1}{3\iota_V\gamma}\alpha^{\delta/2}\\
            \implies \E[\|y_k-\hat{z}_k\|]&\leq 
                \sqrt{\lambda^{max}_V}\E[\|y_0\|_{V}]e^{-\iota_V\alpha k/2}+\frac{\varrho_1\E[\|x_0-x^*\|^2]^{\frac{1+\delta}{2}}e^{\iota_V\alpha/2}}{\sqrt{\alpha}(2\iota_V-3\gamma)}\times\\
                &~\left(e^{-3\gamma\alpha k/4}-e^{-\iota_V\alpha k/2}\right)+\frac{\varrho_2}{\iota_V\gamma}\alpha^{\delta/2}.
        \end{align*}
        For the second term in Eq. \eqref{eq:wass_bound}, we apply Jensen's inequality. Combining all the bounds, we have the upper bound for the first case.
        \item In this case $\alpha_k\neq \alpha_{k+1}$ and thus the update equation for $y_k$ is given by:
        \begin{align*}
            y_{k+1}&=y_k+\sqrt{\alpha_k}(F(x_k)+M_k)+\sqrt{\alpha_k}y_k\left(\frac{1}{\sqrt{\alpha_{k+1}}}-\frac{1}{\sqrt{\alpha_k}}\right)\\
            &~+\alpha_k(F(x_k)+M_k)\left(\frac{1}{\sqrt{\alpha_{k+1}}}-\frac{1}{\sqrt{\alpha_k}}\right)\\
            &=\left(I+\alpha_kJ_k^{(\alpha, \xi)}\right)y_k+\sqrt{\alpha_k}M_k+\sqrt{\alpha_k}R(x_k)+\underbrace{y_k\left(\frac{\sqrt{\alpha_k}}{\sqrt{\alpha_{k+1}}}-1-\frac{\xi}{2(k+K)}\right)}_{T_1}\\
            &~+\underbrace{\alpha_k(F(x_k)+M_k)\left(\frac{1}{\sqrt{\alpha_{k+1}}}-\frac{1}{\sqrt{\alpha_k}}\right)}_{T_2}\tag{Assumption \ref{assump:operator}}
        \end{align*}
        Before moving forward, we will provide a mean error bound for the third and last term. First of all, using Lemma \ref{lem:step-size_prop}, we have
        \begin{align*}
            \left|\frac{\sqrt{\alpha_k}}{\sqrt{\alpha_{k+1}}}-1-\frac{\xi}{2(k+K)}\right|&\leq \frac{\xi}{4}\left(1-\frac{\xi}{2}\right)\frac{1}{(k+K)^2}\leq \frac{\alpha_k^2}{4\alpha^2}.
        \end{align*}
         Thus, $\E[\|T_1\|]\leq \E[\|x_k-x^*\|]\alpha_k^{3/2}/(4\alpha^2)$. For $T_2$, we proceed as follows:
        \begin{align*}
            \E[\|T_2\|]&\leq \alpha_k(\E[\|F(x_k)\|+\|M_k\|])\left|\frac{1}{\sqrt{\alpha_{k+1}}}-\frac{1}{\sqrt{\alpha_k}}\right|.
        \end{align*}
        We use Assumption \ref{assump:operator}, Assumption \ref{assump:noise} and Lemma \ref{lem:step-size_prop}, to get
        \begin{align*}
            \E[\|T_2\|]&\leq \sqrt{2\alpha}(L_F\E[\|x_k-x^*\|]+A_1\E[\|x_k-x^*\|]+B_1)\frac{1}{(k+K)^{1+\xi/2}}\\
            &\leq \frac{\sqrt{2}}{\alpha}((L_F+A_1)\E[\|x_k-x^*\|]+B_1)\alpha_k^{3/2}.
        \end{align*}
        
        We define $z_k$ and $\hat{z}_k$ in a similar way as in the previous case but with $J_k^{(\alpha, \xi)}$ as the linear operator. For $d_{\mathcal{W}}(z_k, \mathcal{N}(0, \Sigma_k^{(\alpha, \xi)}))$, we use the second case in Theorem \ref{thm:weighted_stein}. Thus, our main goal now is to bound $\E[\|y_k-\hat{z}_k\|]$. To this end, we proceed as follows.
        \begin{align*}
            y_{k+1}-\hat{z}_{k+1}&=(I+\alpha_kJ_k^{(\alpha, \xi)})(y_k-\hat{z}_k)+\sqrt{\alpha_k}R(x_k)+T_1+T_2.
        \end{align*}
        Taking norm $\|\cdot\|_{V}$ on both sides and using triangle-inequality, we obtain
        \begin{align*}
            \|y_{k+1}-\hat{z}_{k+1}\|_{V}&\leq \|(I+\alpha_kJ_k^{(\alpha, \xi)})\|_V\|y_k-\hat{z}_k\|_{V}+\|\sqrt{\alpha_k}R(x_k)\|_{V}+\|T_1\|_V+\|T_2\|_V.
        \end{align*}
        Using statement 1 of Lemma \ref{lem:contraction_prop}  for the first term and Proposition \ref{prop:global_linear} for the second term , we get
        \begin{align*}
            \|y_{k+1}-\hat{z}_{k+1}\|_V&\leq \left(1-\frac{\iota_V\alpha_k}{2}\right)\|y_k-\hat{z}_k\|_{V}+\sqrt{\alpha_k}R_1\sqrt{\lambda^{max}_V}\|x_k-x^*\|^{1+\delta}+\|T_1\|_V+\|T_2\|_V.
        \end{align*}
        Taking expectation on both sides and using the upper bounds on $T_1$ and $T_2$, we get
        \begin{align*}
            \E[\|y_{k+1}-\hat{z}_{k+1}\|_{V}]&\leq \left(1-\frac{\iota_V\alpha_k}{2}\right)\E[\|y_k-\hat{z}_k\|_{V}]+\sqrt{\alpha_k}R_1\sqrt{\lambda^{max}_V}\E[\|x_k-x^*\|^{1+\delta}]\\
            &~+\frac{\sqrt{2}B_1}{\alpha}\alpha_k^{3/2}+\left(\frac{1}{4\alpha^2}+\frac{\sqrt{2}(L_F+A_1)}{\alpha}\right)\alpha_k^{3/2}\times\\
            &~\left(\sqrt{\varsigma_0\E[\|x_0-x^*\|^2]}e^{-\frac{3\gamma\alpha}{4(1-\xi)}\left((k+K)^{1-\xi}-K^{1-\xi}\right)}+\sqrt{\frac{4\varsigma_2\alpha_k}{3\gamma}}\right)\\
            &\leq \left(1-\frac{\iota_V\alpha}{2}\right)\E[\|y_k-\hat{z}_k\|_{V}]+\sqrt{\alpha_k}R_1\sqrt{\lambda^{max}_V}\E[\|x_k-x^*\|^{1+\delta}]\\
            &~+\frac{\varrho_3(\alpha)}{\sqrt{\lambda^{max}_V}}\alpha_k\sqrt{\E[\|x_0-x^*\|^2]}e^{-\frac{3\gamma\alpha}{4(1-\xi)}\left((k+K)^{1-\xi}-K^{1-\xi}\right)}\\
            &~+\frac{\varrho_4(\alpha, \gamma)}{\sqrt{\lambda^{max}_V}}\alpha_k^{3/2}.
        \end{align*}
        For the second term, we proceed in a similar fashion as in the previous case, to get
        \begin{align*}
            \E[\|x_k-x^*\|^{1+\delta}]\leq\varsigma_0^{\frac{1+\delta}{2}}\E[\|x_0-x^*\|^2]^{\frac{1+\delta}{2}}e^{-\frac{3(1+\delta)\gamma\alpha}{4(1-\xi)}\left((k+K)^{1-\xi}-K^{1-\xi}\right)}+\left(\frac{4\varsigma_2\alpha_k}{3\gamma}\right)^{\frac{1+\delta}{2}}
        \end{align*}
        Recall that $K$ is chosen large enough such that $e^{-\frac{3\delta\gamma\alpha}{4(1-\xi)} (k+K)^{1-\xi}}\leq \sqrt{\alpha_k}$ for all $k\geq 0$. Thus, we get
        \begin{align*}
            \sqrt{\alpha_k}\E[\|x_k-x^*\|^{1+\delta}]&\leq \varsigma_0^{\frac{1+\delta}{2}}\alpha_k\E[\|x_0-x^*\|^2]^{\frac{1+\delta}{2}}e^{\frac{3\delta\gamma\alpha K^{1-\xi}}{4(1-\xi)}}e^{-\frac{3\gamma\alpha}{4(1-\xi)}\left((k+K)^{1-\xi}-K^{1-\xi}\right)}\\
            &~+\left(\frac{4\varsigma_2}{3\gamma}\right)^{\frac{1+\delta}{2}}\alpha_k^{1+\delta/2}.
        \end{align*}
        Using these relations and the constants defined in Eq. \eqref{eq:varrho}, we get
        \begin{align*}
            \E[\|y_{k+1}-\hat{z}_{k+1}\|_{V}]
            &\leq \left(1-\frac{\iota_V\alpha_k}{2}\right)\E[\|y_k-\hat{z}_k\|_{V}]+\alpha_k\Bigg(\frac{\varrho_1}{4\sqrt{\lambda^{max}_V}}\E[\|x_0-x^*\|^2]^{\frac{1+\delta}{2}}e^{\frac{3\delta\gamma\alpha K^{1-\xi}}{4(1-\xi)}}\\
            &~+\frac{\varrho_3(\alpha)}{\sqrt{\lambda^{max}_V}}\sqrt{\E[\|x_0-x^*\|^2]}\Bigg)e^{-\frac{3\gamma\alpha}{4(1-\xi)}\left((k+K)^{1-\xi}-K^{1-\xi}\right)}\\
            &~+\left(\frac{2\varrho_2}{\sqrt{\lambda^{max}_V}}+\frac{\varrho_4(\alpha, \gamma)}{\sqrt{\lambda^{max}_V}}\right)\alpha_k^{1+\delta/2}.
        \end{align*}
        Now we apply Lemma \ref{lem:rec_sol} with $u_0=\E[\|y_0\|_V]$, $\mu_1=\iota_V/2$, $\rho_1=3\gamma/4$, $\rho_2=\delta/2$ and obvious choices for $\mu_2$ and $\mu_3$, to get
        \begin{align*}
            &\E[\|y_{k+1}-\hat{z}_{k+1}\|_{V}]
            \leq \E[\|y_0\|_V]e^{-\frac{\iota_V\alpha}{2(1-\xi)}\left((k+K)^{1-\xi}-K^{1-\xi}\right)}\\
            &~+\frac{4e^{\iota_V\alpha/2}}{2\iota_V-3\gamma}\Bigg(\frac{\varrho_1}{2\sqrt{\lambda^{max}_V}}\E[\|x_0-x^*\|^2]^{\frac{1+\delta}{2}}e^{\frac{3\delta\gamma\alpha K^{1-\xi}}{4(1-\xi)}}+\frac{2\varrho_3( \alpha)}{\sqrt{\lambda^{max}_V}}\sqrt{\E[\|x_0-x^*\|^2]}\Bigg)\times\\
            &~\left(e^{-\frac{3\gamma\alpha}{4(1-\xi)}\left((k+K)^{1-\xi}-K^{1-\xi}\right)}-e^{-\frac{\iota_V\alpha}{2(1-\xi)}\left((k+K)^{1-\xi}-K^{1-\xi}\right)}\right)\\
            &~+\left(\frac{8\varrho_2}{\iota_V\sqrt{\lambda^{max}_V}}+\frac{4\varrho_4(\alpha, \gamma)}{\iota_V\sqrt{\lambda^{max}_V}}\right)\alpha_k^{\delta/2}\\
            \implies &\E[\|y_{k+1}-\hat{z}_{k+1}\|]
            \leq \sqrt{\lambda^{max}_V}\E[\|y_0\|_V]e^{-\frac{\iota_V\alpha}{1-\xi}\left((k+K)^{1-\xi}-K^{1-\xi}\right)}\\
            &~+\frac{4e^{\iota_V\alpha}}{4\iota_V-3\gamma}\Bigg(\frac{\varrho_1}{4}\E[\|x_0-x^*\|^2]^{\frac{1+\delta}{2}}e^{\frac{3\delta\gamma\alpha K^{1-\xi}}{4(1-\xi)}}+\varrho_3( \alpha)\sqrt{\E[\|x_0-x^*\|^2]}\Bigg)\times\\
            &~\left(e^{-\frac{3\gamma\alpha}{4(1-\xi)}\left((k+K)^{1-\xi}-K^{1-\xi}\right)}-e^{-\frac{\iota_V\alpha}{(1-\xi)}\left((k+K)^{1-\xi}-K^{1-\xi}\right)}\right)\\
            &~+\left(\frac{8\varrho_2}{\iota_V}+\frac{4\varrho_4(\alpha, \gamma)}{\iota_V}\right)\alpha_k^{\delta/2}.
        \end{align*}
        
        For $\E[\|\hat{z}_k-z_k\|]$, we apply Jensen's inequality and Lemma \ref{lem:multiplicative_noise}. Combining all the bounds, we have the upper bound for the second case.
    \item For this case, using similar steps as the previous case, we get
    \begin{align*}
        y_{k+1}&=\left(I+\alpha_kJ^{(\alpha, 1)}\right)y_k+\sqrt{\alpha_k}M_k+\sqrt{\alpha_k}R(x_k)+T_1+T_2.
    \end{align*}
    where $\E[\|T_1\|]\leq \E[\|x_k-x^*\|]\alpha_k^{3/2}/(8\alpha^2)$ and $\E[\|T_2\|]\leq \sqrt{2}((L_F+A_1)\E[\|x_k-x^*\|]+B_1)\alpha_k^{3/2}/\alpha$. 
   
    We define $z_k$ and $\hat{z}_k$ in a similar way as in the first case but with $J_F^{(\alpha)}$ as the linear operator. For $d_{\mathcal{W}}(z_k, \mathcal{N}(0, \Sigma_2))$, we use the third case in Theorem \ref{thm:weighted_stein}. As a result, our main goal again is to bound $\E[\|y_k-\hat{z}_k\|]$. We proceed in a similar fashion as in the previous case to obtain
    \begin{align*}
        \E[\|y_{k+1}-\hat{z}_{k+1}\|_{V}]&\leq \left(1-\frac{\iota_V\alpha_k}{2}\right)\E[\|y_k-\hat{z}_k\|_{V}]+\sqrt{\alpha_k}R_1\sqrt{\lambda^{max}_V}\E[\|x_k-x^*\|^{1+\delta}]\\
        &~+\frac{\sqrt{2}B_1}{\alpha}\alpha_k^{3/2}+\left(\frac{1}{8\alpha^2}+\frac{\sqrt{2}(L_F+A_1)}{\alpha}\right)\alpha_k^{3/2}\times\\
        &~\left(\sqrt{\varsigma_0\E[\|x_0-x^*\|^2]}\left(\frac{K}{k+K}\right)^{\frac{3\gamma\alpha}{4}}+\sqrt{\frac{2\alpha\varsigma_2\alpha_k}{3\gamma\alpha-2}}\right)\\
        &\leq \left(1-\frac{\iota_V\alpha_k}{2}\right)\E[\|y_k-\hat{z}_k\|_{V}]+\sqrt{\alpha_k}R_1\sqrt{\lambda^{max}_V}\E[\|x_k-x^*\|^{1+\delta}]\\
        &~+\frac{\varrho_3(\alpha)}{\sqrt{\lambda^{max}_V}}\alpha_k\sqrt{\E[\|x_0-x^*\|^2]}\left(\frac{K}{k+K}\right)^{\frac{3\gamma\alpha}{4}}+\frac{\varrho_5(\alpha, \gamma)}{\sqrt{\lambda^{max}_V}}\alpha_k^{3/2}.
    \end{align*}
    For the second term, we proceed in a similar fashion as in the previous case, to get
    \begin{align*}
        \E[\|x_k-x^*\|^{1+\delta}]\leq\varsigma_0^{\frac{1+\delta}{2}}\E[\|x_0-x^*\|^2]^{\frac{1+\delta}{2}}\left(\frac{K}{k+K}\right)^{\frac{3(1+\delta)\gamma\alpha}{4}}+\left(\frac{2\alpha\varsigma_2\alpha_k}{3\gamma\alpha-2}\right)^{\frac{1+\delta}{2}}
    \end{align*}
    Thus, we get
    \begin{align*}
        \sqrt{\alpha_k}\E[\|x_k-x^*\|^{1+\delta}]&\leq \left(\frac{\varsigma_0K^{\frac{3\gamma\alpha}{2}}}{\alpha^{\frac{3\gamma\alpha}{2}}}\right)^{\frac{1+\delta}{2}}\E[\|x_0-x^*\|^2]^{\frac{1+\delta}{2}}\alpha_k^{1+\frac{3(1+\delta)\gamma\alpha-2}{4}}\\
        &~+\left(\frac{2\alpha\varsigma_2}{3\gamma\alpha-2}\right)^{\frac{1+\delta}{2}}\alpha_k^{1+\delta/2}.
    \end{align*}
    Using these relations and the constants defined in Eq. \eqref{eq:varrho}, we get
        \begin{align*}
            \E[\|y_{k+1}-\hat{z}_{k+1}\|_{V}]
            &\leq \left(1-\frac{\iota_V\alpha_k}{2}\right)\E[\|y_k-\hat{z}_k\|_{V}]+\Bigg(\frac{\varrho_1}{4\sqrt{\lambda^{max}_V}}\E[\|x_0-x^*\|^2]^{\frac{1+\delta}{2}}\left(\frac{K}{\alpha}\right)^{\frac{3\delta\gamma\alpha}{4}}\\
            &~+\frac{\varrho_3(\alpha)}{\sqrt{\lambda^{max}_V}}\sqrt{\E[\|x_0-x^*\|^2]}\Bigg)\left(\frac{K}{\alpha}\right)^{\frac{3\gamma\alpha}{4}}\alpha_k^{1+\frac{3\gamma\alpha}{4}}\\
            &~+\left(\frac{3\varrho_2}{\sqrt{\lambda^{max}_V}}\left(\frac{\alpha}{3\gamma\alpha-2}\right)^{\frac{1+\delta}{2}}+\frac{\varrho_5(\alpha, \gamma)}{\sqrt{\lambda^{max}_V}}\right)\alpha_k^{1+\delta/2}.
        \end{align*}    
    where we used the fact that $3\delta\gamma\alpha>2$ and $\delta<1$.
    
    Define $\{v^{(1)}_k\}_{k\geq 0}$ and $\{v^{(2)}_k\}_{k\geq 0}$ as follows:
        \begin{align*}
            v^{(1)}_0=\frac{\E[\|y_0\|_V]}{2};&~~v^{(1)}_{k+1}=\left(1-\frac{\iota_V\alpha_k}{2}\right)v^{(1)}_k+a_1\alpha_k^{1+3\gamma\alpha/4},\\
            v^{(2)}_0=\frac{\E[\|y_0\|_V]}{2};&~~v^{(2)}_{k+1}=\left(1-\frac{\iota_V\alpha_k}{2}\right)v^{(2)}_k+a_2\alpha_k^{1+\delta/2}.
        \end{align*}
        for obvious choices of $a_1$ and $a_2$. Then, it is easy to verify that $\E[\|y_k-\hat{z}_k\|]\leq v^{(1)}_k+v^{(2)}_k$. Now it remains to bound these two terms.

        For $v_k^{(1)}$, we use Lemma \ref{lem:rec_sol} with $u_0=\E[\|y_0\|_V]$, $\mu_1=\iota_V/2$, $\mu_2=0$, $\mu_3=a_2$, and $\rho_2=3\gamma\alpha/4$, to get
         \begin{enumerate}
            \item[(a)] When $2\iota_V>3\gamma$, we have
            \begin{align*}
                v_k^{(1)}\leq v_0^{(1)}\left(\frac{K}{k+K}\right)^{\iota_V\alpha/2}+\frac{ 4a_1\alpha_k^{3\gamma\alpha/4}}{2\iota_V-3\gamma}.
            \end{align*}
            \item[(b)] When $2\iota_V<3\gamma$, we have
            \begin{align*}
                v_k^{(1)}\leq v_0^{(1)}\left(\frac{K}{k+K}\right)^{\iota_V\alpha/2}+\frac{2^{3+3\gamma\alpha/4}\alpha^{(3\gamma-2\iota_V)\alpha/4}a_1\alpha_k^{\iota_V\alpha/2}}{(3\gamma-2\iota_V)}.
            \end{align*}
            \item[(c)] When $2\iota_V=3\gamma$, we have
            \begin{align*}
                v_k^{(1)}\leq v_0^{(1)}\left(\frac{K}{k+K}\right)^{\iota_V\alpha/2}+2^{1+\iota_V\alpha/2}a_1\alpha\log(k+K)\alpha_k^{\iota_V\alpha/2}.
            \end{align*}
        \end{enumerate}

        For $v_k^{(2)}$, we use Lemma \ref{lem:rec_sol} with $u_0=\E[\|y_0\|_V]$, $\mu_1=\iota_V/2$, $\mu_2=0$, $\mu_3=a_2$, and $\rho_2=\delta/2$, to get
        \begin{align*}
            v_k^{(2)}\leq v_0^{(2)}\left(\frac{K}{k+K}\right)^{\iota_V\alpha/2}+\frac{2\alpha a_2\alpha_k^{\delta/2}}{\iota_V\alpha-2\delta}.
        \end{align*}

    For $\E[\|z_k-\hat{z}_k\|^2]$, we apply Jensen's inequality and Lemma \ref{lem:multiplicative_noise}. Combining all the bounds, we have the upper bound for the second case.
    \end{enumerate}
\end{proof}

\subsubsection{Discussion on the decay of the initial error}
Now, we will briefly discuss the competing transient effects of $\gamma$ and $\iota_V$ on the decay rate of the initial error in the upper bound. We will only discuss the case $\xi=0$ since all the other cases follows in an analogous manner. We will divide the discussion into three cases:
\begin{enumerate}
    \item $\iota_V>3\gamma/2:$ In this case, $\exp(-3\gamma\alpha k/2)>\exp(-\iota_V\alpha k)$. Thus, we obtain
    \begin{align*}
        \max\left(\frac{e^{-3\gamma \alpha k/4}-e^{-\iota_V\alpha k/2}}{2\iota_V-3\gamma}, \sqrt{\frac{e^{-3\gamma \alpha k/2}-e^{-\iota_V\alpha k}}{2\iota_V-3\gamma}}\right)=\mathcal{O}(e^{-3\gamma\alpha k/4}).
    \end{align*}
    \item $\iota_V<3\gamma/2:$ In this case, $\exp(-3\gamma\alpha k/2)<\exp(-\iota_V\alpha k)$. Thus, we obtain
    \begin{align*}
         \max\left(\frac{e^{-3\gamma \alpha k/4}-e^{-\iota_V\alpha k/2}}{2\iota_V-3\gamma}, \sqrt{\frac{e^{-3\gamma \alpha k/2}-e^{-\iota_V\alpha k}}{2\iota_V-3\gamma}}\right)=\mathcal{O}(e^{-\iota_V\alpha k/2}).
    \end{align*}
    \item $\iota_V=3\gamma/2:$ Taking the limit $\iota_V\to 3\gamma/2$ and using L'Hopital's rule, we get
    \begin{align*}
         \lim_{\iota_V\to 3\gamma/2}\max\left(\frac{e^{-3\gamma \alpha k/4}-e^{-\iota_V\alpha k/2}}{2\iota_V-3\gamma}, \sqrt{\frac{e^{-3\gamma \alpha k/2}-e^{-\iota_V\alpha k}}{2\iota_V-3\gamma}}\right)=\mathcal{O}(ke^{-\iota_V\alpha k/2}).
    \end{align*}
\end{enumerate}

\subsubsection{Proof of the intermediate results}\label{sec:proof_intermediate_results}
\begin{proof}[Proof of Lemma \ref{lem:iterate_bounds}] 
    Using smoothness of the Lyapunov function $\Phi(x)$ at $x_{k+1}$ and $x_k$, we get
    \begin{align*}
        \Phi(x_{k+1}-x^*)&\leq \Phi(x_k-x^*)+\langle x_{k+1}-x_k, \nabla \Phi(x_k-x^*) \rangle+\frac{L_s}{2}\|x_{k+1}-x_k\|_s^2\\
        &=\Phi(x_k-x^*)+\alpha_k\langle F(x_k)+M_k, \nabla \Phi(x_k-x^*) \rangle+\frac{L_s\alpha_k^2}{2}\|F(x_k)+M_k\|_s^2.
    \end{align*}
    Taking conditional expectation with respect to $\mathcal{F}_k$ and using the martingale difference property for $M_k$ in the cross term leads us to
    \begin{align*}
        \E[\Phi(x_{k+1}-x^*)|\mathcal{F}_k]&\leq \Phi(x_k-x^*)+\alpha_k\langle F(x_k), \nabla \Phi(x_k-x^*) \rangle\\
        &~+u_{2s}L_s\alpha_k^2(\|F(x_k)\|^2+\E[\|M_k\|^2|\mathcal{F}_k]),
    \end{align*}
    where for the last term we used norm equivalence and the fact that $(a+b)^2\leq 2a^2+2b^2$. Using the negative drift of the Lyapunov function, the Lipschitz property of the operator and the linear growth of the noise, we get
    \begin{align*}
        \E[\Phi(x_{k+1}-x^*)|\mathcal{F}_k]&\leq (1-2\gamma\alpha_k)\Phi(x_k-x^*)+u_{2s}L_s\alpha_k^2(L_F\|x_k-x^*\|^2+A_1\|x_k-x^*\|^2+B_2)\\
        &\leq (1-2\gamma\alpha_k)\Phi(x_k-x^*)+uu_{2s}L_s(L_F+A_1)\alpha_k^2\Phi(x_k-x^*)+u_{2s}L_s\alpha_k^2B_2.
    \end{align*}
    Recall that $\alpha_k$ is chosen such that $(2-\varsigma_1)\gamma\geq uu_{2s}L_s(L_F+A_1)\alpha_k$. Thus, after taking expectation, we obtain
    \begin{align*}
        \E[\Phi(x_{k+1}-x^*)]
        &\leq (1-\varsigma_1\gamma\alpha_k)\E[\Phi(x_k-x^*)]+u_{2s}L_sB_2\alpha_k^2.
    \end{align*}
    The claim follows by applying Lemma \ref{lem:rec_sol} for various choices of $\xi$ by setting $u_0=\E[\Phi(x_0-x^*)]$, $\mu_1=\varsigma_1\gamma$, $\mu_2=0$, $\mu_3=u_{2s}L_sB_2$, and $\rho_2=1$. The final bound follows by using the equivalence relation in Assumption \ref{assump:iterate}.
\end{proof}

\begin{proof}[Proof of Lemma \ref{lem:multiplicative_noise}]
    \begin{align*}
    z_{k+1}-\hat{z}_{k+1}=(I+\alpha_kJ^{(\alpha, \xi)})(z_k-\hat{z}_k)+\sqrt{\alpha_k}A(v_k, x_k).
    \end{align*}
    Squaring both sides w.r.t. $\|\cdot\|_V$ norm, we get
    \begin{align*}
        \|z_{k+1}-\hat{z}_{k+1}\|_V^2&=\|(I+\alpha_kJ^{(\alpha, \xi)})(z_k-\hat{z}_k)\|_V^2+\alpha_k\|A(v_k, x_k)\|_V^2\\
        &~~+2\sqrt{\alpha_k}\langle (I+\alpha_kJ^{(\alpha, \xi)})(z_k-\hat{z}_k), A(v_k, x_k)\rangle_V.
    \end{align*}
    Taking conditional expectation w.r.t. to $\mathcal{F}_{k-1}$, we note that the cross term goes to zero because of the martingale difference property. 
    \begin{enumerate}
        \item $\xi=0:$ We use statement 1 of Lemma \ref{lem:contraction_prop}  for the first term and get
        \begin{align*}
            \E[\|z_{k+1}-\hat{z}_{k+1}\|_V^2|\mathcal{F}_{k-1}]&\leq (1-\iota_V\alpha)\|z_k-\hat{z}_k\|_V^2+\alpha\E[\|A(v_k, x_k)\|_V^2|\mathcal{F}_k]\\
            &\leq (1-\iota_V\alpha)\|z_k-\hat{z}_k\|_V^2+\alpha A_1\lambda^{max}_V\|x_k-x^*\|^2.
        \end{align*}
        Taking expectation, and using statement 1 of Lemma \ref{lem:iterate_bounds}, we get
        \begin{align*}
            \E[\|z_{k+1}-\hat{z}_{k+1}\|_V^2]&\leq (1-\iota_V\alpha)\E[\|z_k-\hat{z}_k\|_V^2]\\
            &~+\alpha A_1\lambda^{max}_V\left(\varsigma_0\E[\|x_0-x^*\|^2]e^{-3\gamma\alpha k/2}+\frac{2\varsigma_2\alpha}{3\gamma}\right).
        \end{align*}
        Applying Lemma \ref{lem:rec_sol} with $u_0=0$, $\mu_1=\iota_V$, $\mu_2=A_1\varsigma_0\lambda^{max}_V\E[\|x_0-x^*\|^2]$, $\mu_3=2A_1\varsigma_2\lambda^{max}_V/(3\gamma)$, $\rho_1=3\gamma/2$, and $\rho_2=1$, we obtain the final bound. 
        \item $\xi\in (0,1):$ We follow the same steps as in the previous case but now use statement 2 of Lemma \ref{lem:iterate_bounds} to get
        \begin{align*}
            &\E[\|z_{k+1}-\hat{z}_{k+1}\|_V^2]\leq (1-\iota_V\alpha_k)\E[\|z_k-\hat{z}_k\|_V^2]\\
            &~~+\alpha_k A_1\lambda^{max}_V\left(\varsigma_0\E[\|x_0-x^*\|^2]\exp\left(-\frac{3\gamma\alpha}{2(1-\xi)}\left((k+K)^{1-\xi}-K^{1-\xi}\right)\right)+\frac{4\varsigma_2\alpha_k}{3\gamma}\right).
        \end{align*}
        Applying Lemma \ref{lem:rec_sol} with $u_0=0$, $\mu_1=\iota_V$, $\mu_2=A_1\varsigma_0\lambda^{max}_V\E[\|x_0-x^*\|^2]$, $\mu_3=4A_1\varsigma_2\lambda^{max}_V/(3\gamma)$, $\rho_1=3\gamma/2$, and $\rho_2=1$, we obtain the final bound. 
        \item $\xi=1:$ For this case, we use statement 2 of Lemma \ref{lem:contraction_prop}  and statement 3 of Lemma \ref{lem:iterate_bounds} to get
        \begin{align*}
            &\E[\|z_{k+1}-\hat{z}_{k+1}\|_V^2]\leq (1-\iota_V\alpha_k)\E[\|z_k-\hat{z}_k\|_V^2]\\
            &~~+\alpha_k A_1\lambda^{max}_V\left(\varsigma_0\E[\|x_0-x^*\|^2]\left(\frac{K}{k+K}\right)^{3\gamma\alpha/2}+\frac{2\alpha\varsigma_2\alpha_k}{3\gamma\alpha-2}\right).
        \end{align*}
        Define $\{v^{(1)}_k\}_{k\geq 0}$ and $\{v^{(2)}_k\}_{k\geq 0}$ as follows:
        \begin{align*}
            v^{(1)}_0=0;&~~v^{(1)}_{k+1}=(1-\iota_V\alpha_k)v^{(1)}_k+\frac{A_1\lambda^{max}_V\varsigma_0}{\alpha^{3\gamma/2}}\E[\|x_0-x^*\|^2]K^{3\gamma\alpha/2}\alpha_k^{1+3\gamma\alpha/2},\\
            v^{(2)}_0=0;&~~v^{(2)}_{k+1}=(1-\iota_V\alpha_k)v^{(2)}_k+A_1\lambda^{max}_V\frac{2\alpha\varsigma_2\alpha_k^2}{3\gamma\alpha-2}.
        \end{align*}
        Then, it is easy to verify that $\E[\|z_k-\hat{z}_k\|^2]\leq v^{(1)}_k+v^{(2)}_k$. Now it remains to bound these two terms.
        
        For $v^{(1)}_k$, we use Lemma \ref{lem:rec_sol} with $u_0=0$, $\mu_1=\iota_V$, $\mu_2=0$, $\mu_3=A_1\varsigma_0\lambda^{max}_V\E[\|x_0-x^*\|^2]K^{3\gamma\alpha/2}/\alpha^{3\gamma/2}$, and $\rho_2=3\gamma/2$ to have the following three cases:
        \begin{enumerate}
            \item[(a)] $2\iota_V>3\gamma$: 
            \begin{align*}
                v_k^{(1)}\leq \frac{ 2A_1\varsigma_0\lambda^{max}_V\E[\|x_0-x^*\|^2]K^{3\gamma\alpha/2}\alpha_k^{3\gamma\alpha/2}}{\alpha^{3\gamma/2}(2\iota_V-3\gamma)}
            \end{align*}
            \item[(b)] $2\iota_V<3\gamma$:
            \begin{align*}
                v_k^{(1)}\leq \frac{4A_1\varsigma_0\lambda^{max}_V\E[\|x_0-x^*\|^2](2K)^{3\gamma\alpha/2}\alpha_k^{\iota_V\alpha}}{\alpha^{\iota_V\alpha}(3\gamma-2\iota_V)}.
            \end{align*}
            \item[(c)] $2\iota_V=3\gamma$:
            \begin{align*}
                v_k^{(1)}\leq 2A_1\varsigma_0\lambda^{max}_V\E[\|x_0-x^*\|^2](2K)^{\iota_V\alpha}\alpha^{1-\iota_V\alpha}\log(k+K)\alpha_k^{\iota_V\alpha}.
            \end{align*}
        \end{enumerate}
        For $v_k^{(2)}$, we again use Lemma \ref{lem:rec_sol} with $u_0=0$, $\mu_1=\iota_V$, $\mu_2=0$, $\mu_3=A_1\lambda^{max}_V2\alpha\varsigma_2/(3\gamma\alpha-2)$, and $\rho_2=1$ to have
        \begin{align*}
            v_k^{(2)}\leq \frac{ 2A_1\lambda^{max}_V\alpha^2\varsigma_2\alpha_k}{(3\gamma\alpha-2)(\iota_V\alpha-1)}.
        \end{align*}
        The lemma follows by combining the bounds for the different cases.
    \end{enumerate}
\end{proof}

\begin{proof}[Proof of Proposition \ref{prop:global_linear}]
    Let $x\in \mathbb{R}^d$ be any arbitrary vector. For $x\in \mathbb{B}_r(x^*)$, linear approximation already holds due the assumption. For $\|x-x^*\|>r$, we have
    \begin{align*}
        R(x)=F(x)-J_F(x-x^*).
    \end{align*}
    Using the Lipschitz property of $F$ and the fact that $F(x^*)=0$, we get
    \begin{align*}
        \|R(x)\|&=\|F(x)-F(x^*)-J_F(x-x^*)\|\\
        &\leq L_F\|x-x^*\|+\|J_F\|\|x-x^*\|\\
        &= (L_F+\|J_F\|)\|x-x^*\|\\
        &\leq \frac{L_F+\|J_F\|}{r^\delta}\|x-x^*\|^{1+\delta}.\tag{$\|x-x^*\|/r>1$}\\
    \end{align*}
    Thus, combining the upper bounds for $R(x)$ from the above analysis and Assumption \ref{assump:operator}, we get the claim.
\end{proof}

\subsection{Proof of Corollary \ref{cor:asymp_gaussian_rate}}\label{sec:pf_asymp_gaussian_rate}

\begin{corollary}
   The sequence $\{y_k\}_{k\geq 0}$ given by rescaling the iterate $x_k$ in update equation \eqref{eq:SA_rec2} satisfies the following bounds for all $k\geq 0$.
    \begin{enumerate}
        \item When $\xi=0$, and $\alpha$ is small enough, we have{\normalfont :}
        \begin{align*}
            d_{\mathcal{W}}(y_k, \Sigma^{1/2}Z)&\leq d_{\mathcal{W}}(y_k, (\Sigma_k^{(\alpha, 0)})^{1/2}Z)+ \sqrt{\frac{d\lambda^{max}_V}{\lambda^{min}_V\lambda_{min}(\Sigma^{(\alpha, \xi)})}}\left(\|\Sigma\|_Ve^{-\iota_V\alpha k}+\frac{\|J_F\|_V^2\|\Sigma\|_V\alpha}{\iota_V}\right).
        \end{align*}
        \item When $\xi\in (0, 1)$ and $K$ is large enough, we have{\normalfont :}
        \begin{align*}
            d_{\mathcal{W}}(y_k, \Sigma^{1/2}Z)&\leq 
            d_{\mathcal{W}}(y_k, (\Sigma_k^{(\alpha, \xi)})^{1/2}Z)+\sqrt{\frac{d\lambda^{max}_V}{\lambda^{min}_V\lambda_{min}(\Sigma^{(\alpha, \xi)})}}\Bigg(\|\Sigma\|_Ve^{-\frac{\iota_V\alpha}{1-\xi}\left((k+K)^{1-\xi}-K^{1-\xi}\right)}\\
            &~+\frac{2\left(\|J_F\|_V+\frac{\alpha^{-1}}{2}\right)^2\|\Sigma\|_V\alpha_k}{\iota_V}+\frac{2\alpha^{-\xi}\|\Sigma\|_V}{(k+K)^{1-\xi}\iota_V}\Bigg).
        \end{align*}
        \item When $\xi=1$, $\iota_V\alpha>2$, $3\delta\gamma\alpha>2$, and $K$ is large enough, we have {\normalfont :}
        \begin{align*}
            d_{\mathcal{W}}(y_{k}, (\Sigma^{(\alpha)})^{1/2}Z)&\leq d_{\mathcal{W}}(y_k, (\Sigma_k^{(\alpha, 1)})^{1/2}Z)+\sqrt{\frac{d\lambda^{max}_V}{\lambda^{min}_V\lambda_{min}(\Sigma^{(\alpha, \xi)})}}\Bigg(\|\Sigma^{(\alpha)}\|_V\left(\frac{K}{k+K}\right)^{\iota_V\alpha}\\
            &~+\frac{\alpha\|\left(\|J_F\|_V+\frac{\alpha^{-1}}{2}\right)^2\|\Sigma^{(\alpha)}\|_V\alpha_k}{\iota_V\alpha-1}\Bigg).
        \end{align*}
    \end{enumerate}
\end{corollary}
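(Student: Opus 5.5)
The plan is to split with the triangle inequality,
\[
d_{\mathcal{W}}(y_k,(\Sigma^{(\alpha,\xi)})^{1/2}Z)\le d_{\mathcal{W}}(y_k,(\Sigma_k^{(\alpha,\xi)})^{1/2}Z)+d_{\mathcal{W}}((\Sigma_k^{(\alpha,\xi)})^{1/2}Z,(\Sigma^{(\alpha,\xi)})^{1/2}Z).
\]
The first term is left as is; Theorem \ref{thm:main_thm} controls it. For the second term, we use $d_{\mathcal{W}}\le d_{\mathcal{W}_2}$ and Lemma \ref{lem:wass-2}, exactly as in the proof of Theorem \ref{thm:weighted_stein}. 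Since $\Sigma_k^{(\alpha,\xi)}$ is positive semidefinite, dropping its eigenvalue from the denominator gives
\[
d_{\mathcal{W}_2}\le \sqrt{d/\lambda_{min}(\Sigma^{(\alpha,\xi)})}\,\|\Sigma_k^{(\alpha,\xi)}-\Sigma^{(\alpha,\xi)}\|.
\]
Converting from $\|\cdot\|_V$ to $\|\cdot\|$ costs the factor $\sqrt{\lambda^{max}_V/\lambda^{min}_V}$. So everything reduces to a recursion for $D_k:=\Sigma_k^{(\alpha,\xi)}-\Sigma^{(\alpha,\xi)}$ in the $V$-weighted operator norm.

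To derive that recursion, write $P_k=I+\alpha_kJ_k^{(\alpha,\xi)}$ and subtract $\Sigma$ from \eqref{eq:sigma_k_eq}. Using the Lyapunov equation \eqref{eq:lyap_eq}, $\alpha_k\Sigma_b=-\alpha_k(J\Sigma+\Sigma J^T)$ with $J=J^{(\alpha,\xi)}$. A direct expansion then gives
\[
D_{k+1}=P_kD_kP_k^T+\alpha_k\big[(J_k-J)\Sigma+\Sigma(J_k-J)^T\big]+\alpha_k^2J_k\Sigma J_k^T.
\]
Lemma \ref{lem:contraction_prop} (statement 1 for $\xi<1$, statement 2 for $\xi=1$) gives $\|P_kD_kP_k^T\|_V\le(1-\iota_V\alpha_k)\|D_k\|_V$. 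The last term is bounded by $\alpha_k^2\|J_k\|_V^2\|\Sigma\|_V$. For $\xi<1$, $\|J_k\|_V\le\|J_F\|_V+\alpha^{-1}/2$; for $\xi=0$ it is simply $\|J_F\|_V$.

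The middle term is where the cases differ. For $\xi\in\{0,1\}$ we have $J_k=J$ exactly, so it vanishes. For $\xi\in(0,1)$, $J_k-J=\frac{\alpha^{-1}\xi}{2(k+K)^{1-\xi}}I$, so the term has $V$-norm at most $\alpha_k\,\alpha^{-1}(k+K)^{-(1-\xi)}\|\Sigma\|_V$. Since $\alpha_k/\alpha=(k+K)^{-\xi}$, this can be written as $\alpha_k\cdot\alpha^{-\xi}\alpha_k^{(1-\xi)/\xi}\|\Sigma\|_V$, or more simply left as $\alpha_k\cdot c(k+K)^{-(1-\xi)}$.

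This yields the scalar recursion
\[
\|D_{k+1}\|_V\le(1-\iota_V\alpha_k)\|D_k\|_V+\alpha_k\big(c_1\alpha_k+c_2(k+K)^{-(1-\xi)}\big).
\]
The initial condition is $\|D_0\|_V\le\|\Sigma\|_V$ plus the initial covariance; the displayed bound uses $\|\Sigma\|_V$, presumably with $\Sigma_0$ absorbed or taken as $0$ as for $\tilde\Sigma$. We then solve this recursion with Lemma \ref{lem:rec_sol}. For $\xi=0$ we get $\|\Sigma\|_Ve^{-\iota_V\alpha k}+\|J_F\|_V^2\|\Sigma\|_V\alpha/\iota_V$. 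For $\xi=1$ the forcing is $\alpha_k^2$, giving $(K/(k+K))^{\iota_V\alpha}$ decay plus $\alpha\alpha_k/(\iota_V\alpha-1)$ times the constant.

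The main obstacle is the $\xi\in(0,1)$ case with forcing $\alpha_k(k+K)^{-(1-\xi)}$. This is not of the pure form $\alpha_k^{1+\rho}$, so Lemma \ref{lem:rec_sol} may not apply verbatim. The first thing to try is rewriting $(k+K)^{-(1-\xi)}=\alpha^{-(1-\xi)/\xi}\alpha_k^{(1-\xi)/\xi}$ and applying the lemma with $\rho_2=(1-\xi)/\xi$. If that is awkward, the fallback is a direct induction showing $u_k\le\frac{2c}{\iota_V}(k+K)^{-(1-\xi)}$. The induction closes because $(k+K)^{-(1-\xi)}$ decays more slowly than the contraction $\iota_V\alpha_k$ once $K$ is large, with the ratio of consecutive terms equal to $1+O(1/(k+K))=1+o(\alpha_k)$. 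This is where the condition that $K$ be large enough enters, and it produces the factor $2/\iota_V$ in the stated constants.
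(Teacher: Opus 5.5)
Your proposal is correct and follows the paper's proof essentially step for step: the triangle inequality, then $d_{\mathcal{W}}\le d_{\mathcal{W}_2}$ with Lemma~\ref{lem:wass-2}, then the Lyapunov-based recursion for $\Sigma_k^{(\alpha,\xi)}-\Sigma^{(\alpha,\xi)}$ (which the paper packages as Lemma~\ref{lem:matrix_theta_bound} and you re-derive inline), solved with Lemma~\ref{lem:rec_sol} using $\rho_2=1$ and, for $\xi\in(0,1)$, $\rho_2=1/\xi-1$. One arithmetic slip: $\alpha^{-1}(k+K)^{-(1-\xi)}=\alpha^{-1/\xi}\alpha_k^{(1-\xi)/\xi}$, not $\alpha^{-\xi}\alpha_k^{(1-\xi)/\xi}$, so both of your routes give $2\alpha^{-1}\|\Sigma\|_V/(\iota_V(k+K)^{1-\xi})$, as in Lemma~\ref{lem:matrix_theta_bound}; the $\alpha^{-\xi}$ in the displayed statement looks like a typo and is not something you should try to reproduce.
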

\begin{proof}
    Using triangle inequality, we obtain
    \begin{align*}
        d_{\mathcal{W}}(y_k, (\Sigma^{(\alpha, \xi)})^{1/2}Z)\leq d_{\mathcal{W}}(y_k, (\Sigma_k^{(\alpha, \xi)})^{1/2}Z)+d_{\mathcal{W}}((\Sigma_k^{(\alpha, \xi)})^{1/2}Z, (\Sigma^{(\alpha, \xi)})^{1/2}Z).
    \end{align*}
    Using Lemma \ref{lem:wass-2} with $\Sigma_k^{(\alpha, \xi)}$ and $\Sigma^{(\alpha, \xi)}$ to obtain the following bound on the Wasserstein-2 distance
    \begin{align*}
        d_{\mathcal{W}_2}((\Sigma_k^{(\alpha, \xi)})^{1/2}Z, (\Sigma^{(\alpha, \xi)})^{1/2}Z)&\leq \frac{\sqrt{d}}{\sqrt{\lambda_{min}(\Sigma_k^{(\alpha, \xi)})}+\sqrt{\lambda_{min}(\Sigma^{(\alpha, \xi)})}}\|\Sigma_k^{(\alpha, \xi)}-\Sigma^{(\alpha, \xi)}\|\\
        &\leq \frac{\sqrt{d}}{\sqrt{\lambda_{min}(\Sigma^{(\alpha, \xi)})}}\|\Sigma_k^{(\alpha, \xi)}-\Sigma^{(\alpha, \xi)}\|.\tag{$\Sigma_k^{(\alpha, \xi)}$ is a positive definite matrix}
    \end{align*}
    Now, we apply Lemma \ref{lem:matrix_theta_bound} to obtain the claim for all cases.
\end{proof}

\subsection{Proof of Corollary \ref{cor:sigmak_and_hatsigmak}}\label{sec:sigmak_and_hatsigmak}
Before beginning the proof of the corollary, we define some constants which will be used later. Denote $\Psi_k^{(\alpha, \xi)}$ as the solution to the following Lyapunov equation:
\begin{align*}
    J_{k+1}^{(\alpha, \xi)}\Psi_k^{(\alpha, \xi)}+\Psi_k^{(\alpha, \xi)}(J_{k+1}^{(\alpha, \xi)})^T+\hat{\Sigma}^{(\alpha, \xi)}_k=0.
\end{align*}
Then, we define the following
\begin{align}\label{eq:const_time_varying_lyap}
    \sigma_{min}^{(\alpha, \xi)} = \min_k\|\Sigma_k^{(\alpha, \xi)}\|;~~\hat{\sigma}_{max}^{(\alpha, \xi)} = \max_k\|\hat{\Sigma}_k^{(\alpha, \xi)}\|;~~\psi_{max}^{(\alpha, \xi)}=\max_{k}\|\Psi_k^{(\alpha, \xi)}\|
\end{align}

\begin{corollary}
    Let $\xi\in (0, 1)$. The sequence $\{y_k\}_{k\geq 0}$ given by rescaling the iterate $x_k$ in update equation \eqref{eq:SA_rec2} satisfies the following bounds for all $k\geq 0$.
    \begin{align*}
        d_{\mathcal{W}}(y_k, (\hat{\Sigma}^{(\alpha, \xi)}_k)^{1/2}Z)&\leq 
        d_{\mathcal{W}}(y_k, (\Sigma_k^{(\alpha, \xi)})^{1/2}Z)+\sqrt{\frac{d\lambda^{max}_V}{\lambda^{min}_V\sigma_{min}^{(\alpha, \xi)}}}\Bigg(\|\tilde{\Sigma}_0^{(\alpha, \xi)}-\hat{\Sigma}_0^{(\alpha, \xi)})\|_Ve^{-\frac{\iota_V\alpha}{1-\xi}\left((k+K)^{1-\xi}-K^{1-\xi}\right)}\\
        &~+\sqrt{\frac{\lambda^{max}_V}{\lambda^{min}_V}}\frac{2\left(\|J_F\|_V+\frac{\alpha^{-1}}{2}\right)^2\hat{\sigma}_{max}^{(\alpha, \xi)}\alpha_k}{\iota_V}+\sqrt{\frac{\lambda^{max}_V}{\lambda^{min}_V}}\frac{2\alpha^{-2}\xi(1-\xi)\psi_{max}^{(\alpha, \xi)}}{(k+K)^{2-2\xi}\iota_V}\Bigg).
    \end{align*}
\end{corollary}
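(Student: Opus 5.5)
The plan is to mirror the proof of Corollary \ref{cor:asymp_gaussian_rate}. By the triangle inequality,
\[
d_{\mathcal{W}}(y_k,(\hat{\Sigma}_k^{(\alpha,\xi)})^{1/2}Z)\le d_{\mathcal{W}}(y_k,(\Sigma_k^{(\alpha,\xi)})^{1/2}Z)+d_{\mathcal{W}_2}((\Sigma_k^{(\alpha,\xi)})^{1/2}Z,(\hat{\Sigma}_k^{(\alpha,\xi)})^{1/2}Z).
\]
We then apply Lemma \ref{lem:wass-2} and drop the $\lambda_{min}(\hat{\Sigma}_k)$ term in the denominator. This bounds the second term by $\sqrt{d/\sigma_{min}^{(\alpha,\xi)}}\,\|\Sigma_k^{(\alpha,\xi)}-\hat{\Sigma}_k^{(\alpha,\xi)}\|$. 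Converting to the $V$-norm costs the factor $\sqrt{\lambda_V^{max}/\lambda_V^{min}}$. Everything therefore reduces to a recursion for $D_k:=\Sigma_k^{(\alpha,\xi)}-\hat{\Sigma}_k^{(\alpha,\xi)}$ in $\|\cdot\|_V$.

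\textbf{Deriving the recursion for $D_k$.} Write $P_k=I+\alpha_kJ_k^{(\alpha,\xi)}$. Expanding $P_k\hat{\Sigma}_kP_k^T+\alpha_k\Sigma_b$ and using the time-varying Lyapunov equation \eqref{eq:time_varying_lyap} to cancel the first-order terms gives $P_k\hat{\Sigma}_kP_k^T+\alpha_k\Sigma_b=\hat{\Sigma}_k+\alpha_k^2J_k\hat{\Sigma}_kJ_k^T$. Subtracting this from \eqref{eq:sigma_k_eq} yields
\[
D_{k+1}=P_kD_kP_k^T-\alpha_k^2J_k\hat{\Sigma}_kJ_k^T-(\hat{\Sigma}_{k+1}-\hat{\Sigma}_k).
\]
The middle term has $V$-norm at most $\alpha_k^2(\|J_F\|_V+\alpha^{-1}/2)^2\hat{\sigma}_{max}^{(\alpha,\xi)}$, using $\|J_k\|_V\le\|J_F\|_V+\alpha^{-1}/2$ since $\xi/(k+K)^{1-\xi}\le 1$.

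\textbf{The drift term $\hat{\Sigma}_{k+1}-\hat{\Sigma}_k$.} This is the main obstacle, because the $\hat{\Sigma}_k$ move with $k$. Note $J_{k+1}=J_k-c_kI$ with $c_k=\frac{\alpha^{-1}\xi}{2}\big((k+K)^{\xi-1}-(k+1+K)^{\xi-1}\big)\approx\frac{\alpha^{-1}\xi(1-\xi)}{2}(k+K)^{\xi-2}$. Subtracting the Lyapunov equations at $k$ and $k+1$ gives
\[
J_{k+1}(\hat{\Sigma}_{k+1}-\hat{\Sigma}_k)+(\hat{\Sigma}_{k+1}-\hat{\Sigma}_k)J_{k+1}^T=-2c_k\hat{\Sigma}_k.
\]
Hence $\hat{\Sigma}_{k+1}-\hat{\Sigma}_k=2c_k\Psi_k^{(\alpha,\xi)}$, with $\Psi_k$ exactly the matrix defined before the corollary. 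Its norm is at most $2c_k\psi_{max}^{(\alpha,\xi)}=\mathcal{O}\big(\alpha^{-1}\xi(1-\xi)\psi_{max}(k+K)^{\xi-2}\big)$. To see this as a step-size-weighted input, write it as $\alpha_k\cdot\alpha^{-2}\xi(1-\xi)\psi_{max}(k+K)^{2\xi-2}$. Finiteness of $\psi_{max}$ and $\hat{\sigma}_{max}$ follows because $J_k$ is uniformly Hurwitz for $K$ large (as is $\sigma_{min}>0$, via Lemma \ref{lem:inv_bound}).

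\textbf{Closing the recursion.} Using statement 1 of Lemma \ref{lem:contraction_prop}, $\|P_kD_kP_k^T\|_V\le(1-\iota_V\alpha_k)\|D_k\|_V$ for $K$ large. This gives
\[
\|D_{k+1}\|_V\le(1-\iota_V\alpha_k)\|D_k\|_V+\alpha_k^2a_1+\alpha_k\,a_2(k+K)^{2\xi-2}.
\]
We solve it with Lemma \ref{lem:rec_sol}, splitting into two auxiliary sequences as in the proof of Theorem \ref{thm:main_thm}. The homogeneous part gives the stretched-exponential term with $\|D_0\|_V$. The $\alpha_k^2$ input gives $2a_1\alpha_k/\iota_V$. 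The polynomial input gives $2a_2(k+K)^{2\xi-2}/\iota_V$, where the factor 2 absorbs the ratio between consecutive terms for large $K$. Since $(k+K)^{2\xi-2}$ decays slower than $\alpha_k$ times itself, the standard comparison argument in Lemma \ref{lem:rec_sol} applies. Plugging these in together with Theorem \ref{thm:main_thm} for $d_{\mathcal{W}}(y_k,(\Sigma_k^{(\alpha,\xi)})^{1/2}Z)$ yields the stated bound, giving the rate $\max((k+K)^{-\xi},(k+K)^{-(2-2\xi)})$.
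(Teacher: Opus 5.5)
Your proposal is correct and follows the paper's proof essentially step for step: the triangle inequality plus Lemma~\ref{lem:wass-2}, the difference recursion obtained from the time-varying Lyapunov equation, the Lyapunov representation of $\hat{\Sigma}_{k+1}-\hat{\Sigma}_k$ through $\Psi_k^{(\alpha,\xi)}$, contraction in $\|\cdot\|_V$, and Lemma~\ref{lem:rec_sol}. You have two sign slips: $\alpha_k^2J_k\hat{\Sigma}_kJ_k^T$ actually enters the recursion with a plus sign, and $\hat{\Sigma}_{k+1}-\hat{\Sigma}_k=-2c_k\Psi_k^{(\alpha,\xi)}$; both are immaterial once norms are taken.

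The only real difference is that you recurse on $\Sigma_k-\hat{\Sigma}_k$, whereas the paper recurses on $\tilde{\Sigma}_k-\hat{\Sigma}_k$ with $\tilde{\Sigma}_0=0$. As a result, your transient constant is $\|\Sigma_0-\hat{\Sigma}_0\|_V$ rather than the displayed $\|\tilde{\Sigma}_0-\hat{\Sigma}_0\|_V$. Yours is the version that genuinely controls $\|\Sigma_k-\hat{\Sigma}_k\|$ when $\E[y_0y_0^T]\neq 0$.
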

\begin{proof}
    Using triangle inequality, we obtain
    \begin{align*}
        d_{\mathcal{W}}(y_k, (\hat{\Sigma}^{(\alpha, \xi)}_k)^{1/2}Z)\leq d_{\mathcal{W}}(y_k, (\Sigma_k^{(\alpha, \xi)})^{1/2}Z)+d_{\mathcal{W}}((\Sigma_k^{(\alpha, \xi)})^{1/2}Z, (\hat{\Sigma}^{(\alpha, \xi)}_k)^{1/2}Z).
    \end{align*}
    Using Lemma \ref{lem:wass-2} with $\Sigma_k^{(\alpha, \xi)}$ and $\hat{\Sigma}^{(\alpha, \xi)}_k$ to obtain the following bound on the Wasserstein-2 distance
    \begin{align*}
        d_{\mathcal{W}_2}((\Sigma_k^{(\alpha, \xi)})^{1/2}Z, (\Sigma^{(\alpha, \xi)})^{1/2}Z)&\leq \frac{\sqrt{d}}{\sqrt{\lambda_{min}(\Sigma_k^{(\alpha, \xi)})}+\sqrt{\lambda_{min}(\hat{\Sigma}^{(\alpha, \xi)}_k})}\|\Sigma_k^{(\alpha, \xi)}-\hat{\Sigma}^{(\alpha, \xi)}_k\|\\
        &\leq \frac{\sqrt{d}}{\sqrt{\lambda_{min}(\Sigma_k^{(\alpha, \xi)})}}\|\Sigma_k^{(\alpha, \xi)}-\hat{\Sigma}^{(\alpha, \xi)}_k\|\tag{$\hat{\Sigma}_k^{(\alpha, \xi)}$ is a positive definite matrix}\\
        &\leq \sqrt{\frac{d}{\sigma_{min}^{(\alpha, \xi)}}}\|\Sigma_k^{(\alpha, \xi)}-\hat{\Sigma}^{(\alpha, \xi)}_k\|.
    \end{align*}
    Now, we apply Lemma \ref{lem:sigmak_and_hatsigmak} to obtain the claim.
\end{proof}

\subsection{Proof of Proposition \ref{prop:clt}}\label{sec:clt_proof}
\begin{proposition}
    The sequence $\{y_k\}_{k\geq 0}$ given by rescaling the iterate $x_k$ in update equation \eqref{eq:clt_weighted} satisfies the following bounds for all $k\geq 0$.
    \begin{enumerate}
        \item When $\xi=0$, and $\alpha$ is small enough, we have{\normalfont :}
        \begin{align*}
            d_{\mathcal{W}}\left(y_k, \frac{\Sigma_b^{1/2}}{2}Z\right)&\leq \mathcal{O}\left(\sqrt{\alpha}\log\left(\frac{1}{\alpha}\right)\right).
        \end{align*}
        \item When $\xi\in (0, 1)$ and $K$ is large enough, we have{\normalfont :}
        \begin{align*}
            d_{\mathcal{W}}\left(y_k, \frac{\Sigma_b^{1/2}}{2}Z\right)&\leq 
            \mathcal{O}\left(\frac{\log(k+K)}{(k+K)^{\frac{\xi}{2}}}\right)
        \end{align*}
        \item When $\xi=1$, $\alpha\geq 1$ and $K\geq 2\alpha-1$, we have {\normalfont :}
        \begin{align*}
            d_{\mathcal{W}}(y_{k}, \Sigma_b^{1/2}Z)\leq \mathcal{O}\left(\frac{\log(k+K)}{\sqrt{k+K}}\right).
        \end{align*}
    \end{enumerate}
\end{proposition}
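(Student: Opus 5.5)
The plan is to treat the step-size averaging recursion \eqref{eq:clt_weighted} as a special case of DOUG with $J_F=-I$ and purely additive noise, and then redo the Stein argument of Theorem \ref{thm:weighted_stein} in the $\ell_2$ norm rather than the $V$-weighted norm. Setting $y_k=x_k/\sqrt{\alpha_k}$ gives
\begin{align*}
y_{k+1}=(I+\alpha_kJ_k^{(\alpha,\xi)})y_k+\sqrt{\alpha_k}\,b_k+T_1, \qquad J_k^{(\alpha,\xi)}=\Big(-1+\tfrac{\alpha^{-1}\xi}{2(k+K)^{1-\xi}}\Big)I.
\end{align*}
Here $T_1$ is the step-size Taylor remainder handled via Lemma \ref{lem:step-size_prop}. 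The operator is exactly linear, so there is no $R(x_k)$ term. There is also no multiplicative noise, so the $\hat z_k$ step and Lemma \ref{lem:multiplicative_noise} are not needed.

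\textbf{Step 1: coupling and the remainder.} I would couple $y_k$ with the DOUG iterate $z_k$ driven by the same $b_k$ and starting from $z_0=y_0=0$. The difference then obeys
\begin{align*}
\E\|y_{k+1}-z_{k+1}\|\le |1+\alpha_kJ_k|\,\E\|y_k-z_k\|+\E\|T_1\|, \qquad \E\|T_1\|=\mathcal{O}\big(\alpha_k^{3/2}\sqrt{\E\|x_k\|^2}\big)=\mathcal{O}(\alpha_k^2).
\end{align*}
Since $J_k$ is a scalar multiple of $I$, $\|I+\alpha_kJ_k\|=1-\alpha_k+\xi/(2(k+K))$ exactly. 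No factor $1/2$ is lost, unlike when passing through $\iota_V=1/4$. Lemma \ref{lem:rec_sol} then gives $\E\|y_k-z_k\|=\mathcal{O}(\alpha_k)$, which is higher order. For $\xi=0$ we have $T_1=0$ and this step is trivial.

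\textbf{Step 2: Stein bound for $z_k$.} I would repeat the proof of Theorem \ref{thm:weighted_stein} verbatim: the leave-one-out decomposition $\zeta_i=z_k-\Theta_ib_i$, the Hessian Hölder bound of Lemma \ref{lem:derivative_bound}, and the choice $\beta=1+2/\log\alpha_k$. This reduces everything to the scalar sum $\sum_i\Theta_i^{2+\beta}$. Because the $\Theta_i$ are now scalars, the recursion $v_{k+1}=(1-\alpha_k+\xi/(2(k+K)))^{2+\beta}v_k+\alpha_k^{1+\beta/2}$ can be solved with contraction rate $\approx(2+\beta)\alpha_k$ instead of $\iota_V\alpha_k$. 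This yields $\mathcal{O}(\alpha_k^{\beta/2})=\mathcal{O}(\sqrt{\alpha_k}\log(1/\alpha_k))$.

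\textbf{Step 3: the $\xi=1$ case, which I expect to be the main obstacle.} Here the contraction factor is $1-(\alpha-1/2)/(k+K)$. Solving the $\Theta$-sum recursion with $\rho_2=\beta/2\approx 1/2$ requires $(2+\beta)(\alpha-\tfrac12)>\beta/2$. This holds for $\alpha\ge1$, whereas Lemma \ref{lem:rec_sol} with $\iota_V$ demanded $\alpha>4$. The condition $K\ge 2\alpha-1$ is what ensures $1-(\alpha-1/2)/(k+K)\ge 0$ for all $k$, so that the factor stays a valid contraction and the power inequalities hold. I would therefore carry out the recursion by hand with an induction $v_k\le c\,\alpha_k^{\beta/2}$. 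The same care is needed to bound $\kappa$ and $\tilde\sigma_{min}$ uniformly.

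\textbf{Step 4: covariances.} For $\xi=1$ I would verify directly that $\tilde\Sigma_k=\Sigma_b$ exactly. For instance, with $\alpha=1$ and $K=1$, $\Sigma_b\sum_i\Theta_i^2=\Sigma_b$, and in general the recursion fixes the point $\Sigma_b(2\alpha-1)^{-1}\alpha$. Since the $y_0$ contribution vanishes, the limit Gaussian needs only a Lemma \ref{lem:wass-2} correction, which is $\mathcal{O}(\alpha_k)$. For $\xi<1$ the fixed point of the Lyapunov equation is $\Sigma_b/2$. The gap $\|\Sigma_k-\Sigma_b/2\|$ is bounded via Lemma \ref{lem:matrix_theta_bound}, giving $\mathcal{O}(\alpha_k+(k+K)^{\xi-1})$ and hence the second term in statement 2. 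The scalar-matrix structure makes all of these explicit.

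Combining Steps 1–4 by the triangle inequality gives the three claimed rates.
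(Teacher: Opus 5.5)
Your architecture matches the paper's: synchronous coupling of $y_k$ with the DOUG iterate, Stein's method in the $\ell_2$ norm using that $J_k^{(\alpha,\xi)}$ is a multiple of $I$, and a Gaussian-to-Gaussian correction via Lemma \ref{lem:wass-2}. The $\Theta$-sum you flag as the main obstacle is actually routine, since Lemma \ref{lem:theta_bound_symm} covers every $\alpha>5/6$. The real difficulties at $\xi=1$, $\alpha=1$ are in your Steps 1 and 4, and there your claims are false.

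In Step 1 the remainder is not $\mathcal{O}(\alpha_k^2)$. With $r_k=\sqrt{\alpha_k/\alpha_{k+1}}$ one has $y_{k+1}=r_k(1-\alpha_k)y_k+r_k\sqrt{\alpha_k}\,b_k$. So besides the drift remainder there is a noise remainder $(r_k-1)\sqrt{\alpha_k}\,b_k$ (the paper's $T_2$), whose first moment is of order $\sqrt{\alpha_k}/(k+K)$.
\begin{itemize}
\item For $\xi\in(0,1)$ this still gives $\E\|y_k-z_k\|=\mathcal{O}((k+K)^{-(1-\xi/2)})$, which is higher order.
\item For $\xi=1$ it is not higher order. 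The contraction factor is only $1-(\alpha-\frac12)/(k+K)$, i.e.\ $\mu_1\alpha=\alpha-\frac12$ in Lemma \ref{lem:rec_sol}. With forcing $(k+K)^{-3/2}$ (so $\rho_2=\frac12$), the value $\alpha=1$ is exactly the critical case 3(c), giving $\mathcal{O}(\log(k+K)/\sqrt{k+K})$. This is Eq.~\eqref{eq:err_clt}.
\item Even your $\mathcal{O}((k+K)^{-2})$ forcing would give only $\mathcal{O}((k+K)^{-1/2})$ at $\alpha=1$, never $\mathcal{O}(\alpha_k)$.
\end{itemize}
Nor is this slack in the bound. For $\alpha=1$ the coefficient of $b_i$ is $\sqrt{k+K}/(k+K-1)$ in $y_k$ but roughly $(k+K)^{-1/2}\bigl(1+\frac{1}{8(i+K)}\bigr)$ in $z_k$. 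Hence under your coupling $\E\|y_k-z_k\|^2\asymp 1/(k+K)$. The coupling error is of the same order as the final bound, and the claimed rate survives only because it already carries the logarithm.

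In Step 4, $\tilde\Sigma_k=\Sigma_b$ is false. In your own example $\alpha=K=1$ one gets $\tilde\Sigma_2=(\frac34)^2\Sigma_b+\frac12\Sigma_b=\frac{17}{16}\Sigma_b$. For $\alpha=1$, $K\ge 2$, already $\tilde\Sigma_1=\Sigma_b/K$.
\begin{itemize}
\item So $\|\tilde\Sigma_k-\Sigma_b\|$ must actually be bounded. Lemma \ref{lem:matrix_theta_bound_symm} does not apply at $\alpha=1$, because it needs $\frac32(\alpha-\frac12)>1$. This is why the paper runs a separate recursion (using $K\ge 2\alpha-1$ to get $(1-x)^2\le 1-\frac32x$) and settles for $\mathcal{O}((k+K)^{-2/3})$.
\item In your scalar setting the repair is easy. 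Write $\tilde\Sigma_k=s_k\Sigma_b$; for $\alpha=1$ one has $s_{k+1}-1=(1-\frac{1}{2(k+K)})^2(s_k-1)+\frac{1}{4(k+K)^2}$. This is again borderline and gives $\mathcal{O}(\log(k+K)/(k+K))$.
\item For $\alpha>1$, the forcing acquires an extra $\frac{1-\alpha}{k+K}$, and $s_k\to\frac{\alpha}{2\alpha-1}\neq 1$. As your own fixed-point remark shows, statement 3 with target $\Sigma_b^{1/2}Z$ holds only at $\alpha=1$. Otherwise the target must be $(\frac{\alpha}{2\alpha-1}\Sigma_b)^{1/2}Z$. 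The paper's recursion for $\Sigma_k-\Sigma_b$ drops this term as well.
\end{itemize}

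Three smaller mismatches with the displayed statement are worth recording.
\begin{itemize}
\item For $\xi<1$ you correctly target covariance $\Sigma_b/2$, i.e.\ $\Sigma_b^{1/2}Z/\sqrt{2}$, rather than the displayed $\Sigma_b^{1/2}Z/2$.
\item Your extra $\mathcal{O}((k+K)^{\xi-1})$ term is absent from the displayed statement 2. It cannot be dropped for $\xi>2/3$ (cf.\ Lemma \ref{lem:sigma_diff_lower_bound}). Your version, which matches the main-text Proposition \ref{prop:clt}, is the correct one.
\item For $\xi=0$ the transient $\|\tilde\Sigma_0-\Sigma\|e^{-c\alpha k}$ from Lemma \ref{lem:matrix_theta_bound} must be kept. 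It is not $\mathcal{O}(\sqrt{\alpha}\log(1/\alpha))$ for $k\lesssim 1/\alpha$.
\end{itemize}
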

\begin{proof}
    We only present the proof for the case of $\xi=1$. The modifications are exactly the same for other choices of $\xi$. Furthermore, the proof in this special setting closely mirrors the general Hurwitz case. Therefore, we will use the $\mathcal{O}(\cdot)$ notation for analysis and focus our derivation only on the steps where the symmetric nature of the matrix is exploited.  

    The SA recursion for the CLT setting is given by
    \begin{align*}
        x_{k+1}&=x_k+\frac{\alpha}{k+K}(-x_k+b_k).
    \end{align*}
    where $\{b_k\}_{k\geq 0}$ is a zero i.i.d. sequence of noise with bounded third moment. In this case, we can substitute $\Phi(x)=\|x\|^2/2$ to get $\gamma=1$ and use Lemma \ref{lem:iterate_bounds} to obtain $\E[\|x_k\|^2]\leq \mathcal{O}(1/(k+K))$. Next, the rescaled iterates $y_k=x_k\sqrt{k+K}$ are given as
    \begin{align*}
        y_{k+1}&=\left(1-\frac{2\alpha-1}{2(k+1)}\right)y_k+\frac{\sqrt{\alpha}}{\sqrt{k+K}}b_k+\underbrace{y_k\left(\frac{\sqrt{k+K+1}}{\sqrt{k+K}}-1-\frac{1}{2(k+K)}\right)}_{T_1}\\
                &~+\underbrace{\frac{\sqrt{\alpha}}{k+K}(-x_k+b_k)\left(\sqrt{k+K}-\sqrt{k+K}\right)}_{T_2}
    \end{align*}
    Using a straightforward analysis, we obtain $\E[\|T_1\|]\leq \mathcal{O}(1/(k+K)^2)$ and $\E[\|T_2\|]\leq \mathcal{O}(1/(k+K)^{3/2})$.
    The DOUG iterates in this instance is 
    \begin{align*}
        z_{k+1}=\left(1-\frac{2\alpha-1}{2(k+K)}\right)z_k+\frac{\sqrt{\alpha}}{\sqrt{k+K}}b_k.
    \end{align*}
    Thus, the error dynamics are
    \begin{align*}
         y_{k+1}-z_{k+1}&=\left(1-\frac{2\alpha-1}{2(k+1)}\right)(y_k-z_k)+T_1+T_2.
    \end{align*}
    Taking norm and then expectation both sides, we get
    \begin{align*}
        \E[\|y_{k+1}-z_{k+1}\|]&\leq \left(1-\frac{2\alpha-1}{2(k+K)}\right)\E[\|y_k-z_k\|]+\mathcal{O}\left(\frac{1}{(k+K)^{3/2}}\right).
    \end{align*}
    Recall that our aim is to cover typical time-averaging $\sum_{i=0}^{k-1}b(w_k)/k$ as a special case of step-size based averaging. Thus, we need to consider $\alpha=1$ as a valid choice for $\alpha$. Therefore, we apply a combination of statement 3(a) and 3(c) of Lemma \ref{lem:rec_sol} with $u_0=\E[y_0]$, $\mu_1=(2\alpha-1)/(2\alpha)$, $\mu_2=0$ and $\rho_2=1/2$ to get
    \begin{align}\label{eq:err_clt}
        \E[\|y_k-z_k\|]\leq \frac{\E[y_0]}{\sqrt{k+K}}+\mathcal{O}\left(\frac{\log(k+K)}{\sqrt{k+K}}\right).
    \end{align}
    
    Now, we turn our attention to the distributional convergence of the corresponding DOUG process. The covariance matrix for $z_k$ satisfies
    \begin{align*}
        \Sigma^{(\alpha, 1)}_{k+1}=\left(1-\frac{2\alpha-1}{2(k+1)}\right)^2\Sigma^{(\alpha, 1)}_k+\alpha_k\Sigma_b.
    \end{align*}
    Thus, the Stein's operator is given by
    \begin{align*}
        \mathcal{L}_k f(x) : =   -\langle x, \nabla f(x) \rangle + 
     \mathrm{Tr}(\Sigma^{(\alpha, 1)}_k \nabla^2f(x)).
    \end{align*}
    Following the exact decomposition as in the proof of Theorem \ref{thm_main:weighted_stein}, we get
    \begin{align*}
             |\E[\mathcal{L}_k f(z_k)]|&\leq \underbrace{\left|
         \E \left[ \sum_{i=0}^{k-1} \langle \Theta_i b(w_i),  (\nabla^2 f(Uz_k+(1-U)\zeta_i)-\nabla^2 f(\zeta_i)) \Theta_i b(w_i)\rangle  \right] \right|}_{\rm (i)} \\
        &\quad +\underbrace{\left| \E \left[ \sum_{i=0}^{k-1} \langle  \Theta_i b(w_i), \nabla^2 f(\zeta_i) \Theta_i b(w_i)\rangle  - \mathrm{Tr}(\Sigma^{(\alpha, 1)}_k\nabla^2f(z_k))\right] \right|}_{\rm (ii)}.
    \end{align*}
    Let $\kappa^{(\alpha, \xi)}=\max_{k\geq 1}\max_{\beta\in [0,1]}\|(\Sigma_k^{(\alpha, \xi)})^{\frac{1}{2}}\|\|(\Sigma_k^{(\alpha, \xi)})^{-\frac{1}{2}}\|^{2+\beta}$, where the maximum over $k$ is well-defined due to Lemma \ref{lem:bound_sigma_k}, Lemma \ref{lem:inv_bound}, and Lemma \ref{lem:matrix_theta_bound}. For $\rm(i)$, we use Cauchy-Schwartz, Lemma \ref{lem:derivative_bound} and follow identical steps as in the proof of Theorem \ref{thm_main:weighted_stein}, to get
    \begin{align*}
        \mathrm{(i)}\leq C_1(d, \beta)\kappa^{(\alpha, \xi)}B_{2+\beta} \sum_{i=0}^{k-1} \| \Theta_i\|^{2+\beta}.
    \end{align*}
    Now, we use Lemma \ref{lem:theta_bound_symm} to get
    \begin{align*}
        \sum_{i=0}^{k-1} \| \Theta_i\|^{2+\beta}\leq \frac{4\alpha^{1+\beta/2}}{(k+K)^{\beta/2}(6\alpha-5)}\tag{$\iota_J=1$}
    \end{align*}
    which leads us to 
    \begin{align*}
         \mathrm{(i)}\leq  \frac{4\alpha^{1+\beta/2}C_1(d, \beta)\kappa^{(\alpha, \xi)}B_{2+\beta}}{(k+K)^{\beta/2}(6\alpha-5)}.
    \end{align*}
    For $\rm(ii)$ also, we use the following lemma which is similar to Lemma \ref{lemma:vec-im-results} to upper bound.
    \begin{lemma}\label{lem:vec-im-results_clt}
    The following relation hold for all $k\geq 0$:
        \begin{align*}
            \Bigg| \E \Bigg[ \mathrm{Tr}\left(  \sum_{i=0}^{k-1} \nabla^2 f(\zeta_i) \Theta_i \Sigma_b \Theta_i^T \right) - \mathrm{Tr}\left(\Sigma^{(\alpha, 1)}_k\nabla^2 f(z_k) \right)\Bigg]\Bigg|\leq 2\alpha\alpha_k^{\beta/2}dC_1(d, \beta)\kappa^{(\alpha, \xi)}B_2B_{max}.
        \end{align*}
    \end{lemma}
    \begin{align*}
        \mathrm{(ii)}& \leq 2\alpha\alpha_k^{\beta/2}dC_1(d, \beta)\kappa^{(\alpha, \xi)}B_2B_{max}.
    \end{align*}
    Setting $\beta$ as $1 + 2/\log(\alpha_k)$ leads to
    \begin{align*}
        \alpha_k^{\beta/2}=\sqrt{\alpha_k}\alpha_k^{1/\log(\alpha_k)}=e\sqrt{\alpha_k}.
    \end{align*}
    Furthermore, $C_1(d, \beta)=(\tilde{C}_1(d)+\log(k+1))$. 
    Combining all the relations, we obtain
    \begin{align*}
        d_{\mathcal{W}}(z_k, (\Sigma^{(\alpha, 1)}_k)^{1/2}Z)\leq \mathcal{O}\left(\frac{\log{(k+1)}}{\sqrt{k+1}}\right).
    \end{align*}
    Using Eq. \eqref{eq:err_clt} for the error dynamics $\E[\|y_k-z_k\|]$ and the standard coupling argument, we obtain
    \begin{align*}
        d_{\mathcal{W}}(y_k, (\Sigma^{(\alpha, 1)}_k)^{1/2}Z)\leq \mathcal{O}\left(\frac{\log{(k+1)}}{\sqrt{k+1}}\right).
    \end{align*}

     To obtain the rate of convergence to the asymptotic distribution we use triangle inequality and Lemma \ref{lem:wass-2} to get
    \begin{align*}
        d_{\mathcal{W}}(y_k, \Sigma_b^{1/2}Z)&\leq d_{\mathcal{W}}(y_k, (\Sigma^{(\alpha, 1)}_k)^{1/2}Z)+d_{\mathcal{W}}(\Sigma_b^{1/2}Z, (\Sigma^{(\alpha, 1)}_k)^{1/2}Z) \\
        &\leq d_{\mathcal{W}}(y_k, (\Sigma^{(\alpha, 1)}_k)^{1/2}Z)+\frac{\sqrt{d}}{\sqrt{\lambda_{min}(\Sigma_b)}}\|\Sigma_k^{(\alpha, 1)}-\Sigma_b\|.
    \end{align*}
        Note that now we could apply Lemma \ref{lem:matrix_theta_bound_symm} to bound the last term. However, since we are specifically given $\iota_J=1$ and $\alpha\geq 1$, the condition $\frac{3}{2}\left(\iota_J-\frac{\alpha^{-1}}{2}\right)\alpha> 1$ is not satisfied for all choices of $\alpha$. Thus, we proceed with a more delicate analysis which will eventually give us $\mathcal{O}(1/(k+K)^{2/3})$ rate of convergence instead of faster $\mathcal{O}(1/(k+K))$ in Lemma \ref{lemma:vec-im-results}. Nevertheless, this loose bound is sufficient for us as we know that the dominant term for CLT convergence is $\mathcal{O}(1/\sqrt{k+K})$.

        Using same steps as in the proof of Lemma \ref{lem:matrix_theta_bound_symm}, we get
        \begin{align*}
            \Sigma_{k+1}^{(\alpha, 1)}-\Sigma_b&=\left(1 - \frac{2\alpha-1}{2(k+K)}\right)^2(\Sigma_{k}^{(\alpha, 1)}-\Sigma_b)+\frac{\alpha^2}{(k+K)^2}\Sigma_b.
        \end{align*}
        Taking matrix norm both sides, we get
        \begin{align*}
            \|\Sigma_{k+1}^{(\alpha, 1)}-\Sigma_b\|&\leq \left(1 - \frac{2\alpha-1}{k+K}+\frac{(2\alpha-1)^2}{4(k+K)^2}\right)\|\Sigma_{k}^{{(\alpha, 1)}}-\Sigma_b\|+\frac{\alpha^2}{(k+K)^2}\|\Sigma_b\|\\
            &\leq \left(1 - \frac{3(2\alpha-1)}{4(k+K)}\right)\|\Sigma_{k}^{(\alpha, 1)}-\Sigma_b\|+\frac{\alpha^{5/3}}{(k+K)^{5/3}}B_2\tag{$\|\Sigma_b\|\leq \textrm{Tr}(\Sigma_b)=\E[\|b(w_1)\|^2]$}.
        \end{align*}
        where for the last inequality we used the fact that $K\geq 2\alpha-1$ and $\alpha_k\leq 1$. Note that $3/4>2/3$, thus we can apply statement 3(a) of Lemma \ref{lem:rec_sol} with $u_0=B_2$, $\mu_1=3(2\alpha-1)/(4\alpha)$, $\mu_2=0$, $\mu_3=B_2$, and $\rho_2=2/3$, to get
        \begin{align*}
            \|\Sigma_{k}^{(\alpha, 1)}-\Sigma_b\|\leq B_2\left(\left(\frac{K}{k+K}\right)^{\frac{3(2\alpha-1)}{4}}+\frac{12\alpha^{5/3}}{(k+K)^{2/3}(18\alpha-17)}\right).
        \end{align*}
        Combining all the bounds, we have the claim.
\end{proof}

\begin{proof}[Proof of Lemma \ref{lem:vec-im-results_clt}]
        We follow the exact steps as in the proof for the second statement of Lemma \ref{lemma:vec-im-results}, to get
        \begin{align*}
            \Bigg| \E \Bigg[ \mathrm{Tr}\left(  \sum_{i=0}^{k-1} \nabla^2 f(\zeta_i) \Theta_i \Sigma_b \Theta_i^T \right) &- \mathrm{Tr}\left(  \sum_{i=0}^{k-1} \nabla^2 f(z_k) \Theta_i \Sigma_b \Theta_i^T \right)\Bigg]\Bigg|\\
            &\leq dC_1(d, \beta)\kappa^{(\alpha, \xi)}B_2B_{max} \left(  \sum_{i=0}^{k-1} \|\Theta_i\|^{2+\beta}\right).
        \end{align*}
        Now, we use Lemma \ref{lem:theta_bound_symm} to get
        \begin{align*}
            \Bigg| \E \Bigg[ \mathrm{Tr}\left(  \sum_{i=0}^{k-1} \nabla^2 f(\zeta_i) \Theta_i \Sigma_b \Theta_i^T \right) &- \mathrm{Tr}\left(  \sum_{i=0}^{k-1} \nabla^2 f(z_k) \Theta_i \Sigma_b \Theta_i^T \right)\Bigg]\Bigg|\leq \frac{2\alpha^{1+\beta/2}dC_1(d, \beta)\kappa^{(\alpha, \xi)}B_2B_{max}}{(k+K)^{\beta/2}(6\alpha-5)}. 
        \end{align*}
\end{proof}

\subsection{Lower bound for SA}\label{sec:lower_boundSA}
\textbf{Time-varying Gaussian:} Let $\{b_k\}_{k\geq 0}$ be a sequence of zero mean i.i.d. random variables in $\mathbb{R}^d$ such that $\E[\|b_i\|^4]<\infty$. For simplicity, we will assume that $\E[b_ib_i^T]=I$. Consider the following recursion:
\begin{align*}
    x_{k+1} = \left(1 - \alpha_k\right) x_k + \alpha_k b_k,
\end{align*}
where $x_0$ is arbitrary, $\alpha_k=\alpha/(k+K)^{\xi}\leq 1$, and $\xi\in [0, 1)$. Let $\Sigma_k$ denote the covariance of $z_k$ and let $Y_k\sim \mathcal{N}(0, \Sigma_k)$. Let $y_k=x_k/\sqrt{\alpha_k}$. Then, the rescaled iterate $y_k$ is given by
\begin{align*}
    y_{k+1}=(1+\alpha_kJ_k^{(\alpha, \xi)})y_k+\sqrt{\alpha_k}b_k+\underbrace{y_k\left(\frac{\sqrt{\alpha_k}}{\sqrt{\alpha_{k+1}}}-1-\frac{\xi}{2(k+K)}\right)}_{T_1}+\underbrace{\alpha_k(-x_k+b_k)\left(\frac{1}{\sqrt{\alpha_{k+1}}}-\frac{1}{\sqrt{\alpha_k}}\right)}_{T_2}.
\end{align*}
where 
\begin{align*}
    J_k^{(\alpha, \xi)} = -1+\frac{\alpha^{-1}\xi}{2(k+K)^{1-\xi}}.
\end{align*}
We assume that $K$ is chosen large enough such that $J_k^{(\alpha, \xi)}\leq 1/2$ for $\xi\in (0,1)$. Using reverse triangle inequality for Wasserstein-1 distance $d_{\mathcal{W}}(y_k, Y_k)$, we get
\begin{align*}
    d_{\mathcal{W}}(y_k, Y_k)\geq d_{\mathcal{W}}(z_k, Y_k)-d_{\mathcal{W}}(y_k, z_k)
\end{align*}
From Proposition \ref{lem:lower_bound}, we know that the first term is lower bounded by $\Omega(\sqrt{\alpha_k})$. For the second term, we will use the same coupling trick as in the proof of Theorem \ref{thm_main:main_thm}. Specifically, using the coupling such that $\{y_k\}_{k\geq 0}$ and $\{z_k\}_{k\geq 0}$ share the same underlying data stream $\{b_k\}_{k\geq 0}$, we get
\begin{align*}
    d_{\mathcal{W}}(y_k, z_k)\leq \E[\|y_k-z_k\|].
\end{align*}
Now, our goal is to show that $d_{\mathcal{W}}(y_k, z_k)$ is a higher order term and does not affect the lower bound significantly. To this end, we will divide the analysis into two cases $\xi=0$ and $\xi\in (0,1)$. 
\begin{enumerate}
    \item $\xi=0:$ In this case, both $T_1$ and $T_2$ are $0$ since $\alpha_k=\alpha$ for all $k\geq 0$. Thus, the iteration is simplified to
    \begin{align*}
        y_{k+1}&=(1+\alpha_kJ_F)y_k+\sqrt{\alpha_k}b_k\\
        &=(1-\alpha_k)y_k+\sqrt{\alpha_k}b_k.
    \end{align*}
    By replicating the proof of part (1) of Theorem \ref{thm_main:main_thm} for this special case, we get
    \begin{align*}
        \E[\|y_k-z_k\|]\leq \E[\|y_0\|]e^{-\iota_V\alpha k/2}.
    \end{align*}
    Thus, we have
    \begin{align*}
        d_{\mathcal{W}}(y_k, Y_k)&\geq \Omega(\sqrt{\alpha})-e^{-\iota_V\alpha k/2}=\Omega(\sqrt{\alpha}).
    \end{align*}

    \item $\xi\in (0, 1):$ Using Lemma \ref{lem:step-size_prop} for $T_1$, we have
    \begin{align*}
        \left|\frac{\sqrt{\alpha_k}}{\sqrt{\alpha_{k+1}}}-1-\frac{\xi}{2(k+K)}\right|&\leq \frac{\alpha_k^2}{4\alpha}.
    \end{align*}
    Furthermore, recall that $\E[\|y_k\|]=\E[\|x_k\|/\sqrt{\alpha_k}]\leq \sqrt{\E[\|x_k\|^2/\alpha_k]}$. Thus, from Lemma \ref{lem:iterate_bounds}, we get
    \begin{align*}
        \E[\|y_k\|]\leq \sqrt{\E[\|x_k\|^2/\alpha_k]}\leq \mathcal{O}(1).
    \end{align*}
    Combining all the bounds leads us to
    \begin{align*}
        \E[\|T_1\|]\leq \mathcal{O}(\alpha_k^2).
    \end{align*}
    For $T_2$, we proceed as follows:
    \begin{align*}
        \E[\|T_2\|]&\leq \alpha_k(\E[\|x_k\|+\|b_k\|])\left|\frac{1}{\sqrt{\alpha_{k+1}}}-\frac{1}{\sqrt{\alpha_k}}\right|.
    \end{align*}
    Using Lemma \ref{lem:iterate_bounds}, Assumption \ref{assump:noise} on $\E[\|b_k\|]$, and Lemma \ref{lem:step-size_prop}, we get
    \begin{align*}
        \E[\|T_2\|]\leq \mathcal{O}\left(\frac{\sqrt{\alpha_k}}{k+K}\right).
    \end{align*}
    Using the aforementioned coupling, the error dynamics is given by
    \begin{align*}
        y_{k+1}-z_{k+1}=(1+\alpha_kJ_k^{(\alpha, \xi)})(y_k-z_k)+T_1+T_2.
    \end{align*}
    Taking norm $\|\cdot\|$ on both sides and using triangle-inequality, we obtain
    \begin{align*}
        \|y_{k+1}-z_{k+1}\|&\leq |I+\alpha_kJ_k^{(\alpha, \xi)}|\|y_k-
        z_k\|+\|T_1\|+\|T_2\|.
    \end{align*}
    Using Assumption \ref{assump:operator} for the second term, we get
    \begin{align*}
        \|y_{k+1}-z_{k+1}\|&\leq \left(1-\frac{\alpha_k}{2}\right)\|y_k-z_k\|+\|T_1\|+\|T_2\|.
    \end{align*}
    Taking expectation on both sides and using the upper bounds on $T_1$ and $T_2$, we get
    \begin{align*}
        \E[\|y_{k+1}-z_{k+1}\|]&\leq \left(1-\frac{\alpha_k}{2}\right)\E[\|y_k-z_k\|]+\mathcal{O}(\alpha_k^2)+\mathcal{O}\left(\frac{\sqrt{\alpha_k}}{k+K}\right).
    \end{align*}
    Now we apply Lemma \ref{lem:rec_sol} with $u_0=\E[\|y_0\|]$, $\mu_1=1/2$, $\mu_2=0$, $\rho_2=\min(1, 1/\xi-1/2)$ and for some constant $\mu_3$, we get
    \begin{align*}
        \E[\|y_k-z_k\|]\leq \E[\|y_0\|]e^{-\frac{\alpha}{2(1-\xi)}\left((k+K)^{1-\xi}-K^{1-\xi}\right)}+\mathcal{O}\left(\max\left(\alpha_k, \alpha_k^{(2-\xi)/(2\xi)}\right)\right)\\
        \implies \E[\|y_k-z_k\|]\leq \E[\|y_0\|]e^{-\frac{\alpha}{2(1-\xi)}\left((k+K)^{1-\xi}-K^{1-\xi}\right)}+\mathcal{O}\left(\max\left(\alpha_k, \alpha_k^{(2-\xi)/(2\xi)}\right)\right).
    \end{align*}
    Note that $\sqrt{\alpha_k}$ is dominant over $\mathcal{O}\left(\max\left(\alpha_k, \alpha_k^{(2-\xi)/(2\xi)}\right)\right)$ since $\xi<1$. Thus, we have
    \begin{align*}
        d_{\mathcal{W}}(y_k, Y_k)\geq \Omega(\sqrt{\alpha_k}).
    \end{align*}
\end{enumerate}

\noindent\textbf{Asymptotic Gaussian:} Again using reverse-triangle inequality, we get
\begin{align*}
    d_{\mathcal{W}}(y_k, \Sigma^{1/2}Z)\geq \left|d_{\mathcal{W}}(y_{k}, (\Sigma_k^{(\alpha, \xi)})^{1/2}Z)-d_{\mathcal{W}}((\Sigma_k^{(\alpha, \xi)})^{1/2}Z, \Sigma^{1/2}Z)\right|
\end{align*}
Recall from Corollary \ref{cor:asymp_gaussian_rate}, we showed that $d_{\mathcal{W}}((\Sigma_k^{(\alpha, \xi)})^{1/2}Z, \Sigma^{1/2}Z)\leq \mathcal{O}(\alpha_k, (k+K)^{\xi-1})$. Thus, our goal is to match this rate with the convergence rate for lower bound. Combining this tight lower bound with the above relation yields the desired lower bound for $d_{\mathcal{W}}(y_k, \Sigma^{1/2}Z)$.

Let $\mathfrak{u}\in \mathbb{R}^d$ be a unit vector. To establish a sharp lower bound, we will use Lemma \ref{lem:wass1_lower_bound} which shows that 1-D projection of Wasserstein-1 distance is less than the Wasserstein-1 distance in the full space. Thus, we have
\begin{align*}
    d_{\mathcal{W}}((\Sigma_k^{(\alpha, \xi)})^{1/2}Z, \Sigma^{1/2}Z)\geq d_{\mathcal{W}}(\mathfrak{u}^T(\Sigma_k^{(\alpha, \xi)})^{1/2}Z, \mathfrak{u}^T\Sigma^{1/2}Z).
\end{align*}
Since the closed form solution for the Wasserstein-1 distance between two Gaussians is known, we get
\begin{align*}
    d_{\mathcal{W}}(\mathfrak{u}^T(\Sigma_k^{(\alpha, \xi)})^{1/2}Z, \mathfrak{u}^T\Sigma^{1/2}Z)&=\sqrt{\frac{2}{\pi}}|\sqrt{\mathfrak{u}^T\Sigma_k^{(\alpha, \xi)}\mathfrak{u}}-\sqrt{\mathfrak{u}^T\Sigma \mathfrak{u}}|\\
    &=\sqrt{\frac{2}{\pi}}\frac{|\mathfrak{u}^T\Sigma_k^{(\alpha, \xi)}\mathfrak{u}-\mathfrak{u}^T\Sigma \mathfrak{u}|}{\sqrt{\mathfrak{u}^T\Sigma_k^{(\alpha, \xi)}\mathfrak{u}}+\sqrt{\mathfrak{u}^T\Sigma \mathfrak{u}}}\\
    &\geq \sqrt{\frac{2}{\pi}}\frac{|\mathfrak{u}^T\Sigma_k^{(\alpha, \xi)}\mathfrak{u}-\mathfrak{u}^T\Sigma \mathfrak{u}|}{\sqrt{\mathfrak{u}^T\Sigma \mathfrak{u}}}.\tag{$\sqrt{\mathfrak{u}^T\Sigma_k^{(\alpha, \xi)}\mathfrak{u}}\geq 0$}
\end{align*}
Using Lemma \ref{lem:sigma_diff_lower_bound}, we obtain
\begin{align*}
    d_{\mathcal{W}}(\mathfrak{u}^T(\Sigma_k^{(\alpha, \xi)})^{1/2}Z, \mathfrak{u}^T\Sigma^{1/2}Z)\geq \Omega(\max(\alpha_k, (k+K)^{\xi-1})).
\end{align*}
Therefore, the convergence rate for upper and lower bound match. Combining this bound with the lower bound for Wasserstein-1 distance to the time-varying Gaussian gives us
\begin{align*}
    d_{\mathcal{W}}(y_k, \Sigma^{1/2}Z)\geq \Omega(\max(\sqrt{\alpha_k}, (k+K)^{\xi-1})).
\end{align*}

\begin{lemma}\label{lem:sigma_diff_lower_bound}
    The following relation holds:
    \begin{align*}
        |\mathfrak{u}^T\Sigma_k^{(\alpha, \xi)}\mathfrak{u}-\mathfrak{u}^T\Sigma \mathfrak{u}|\geq \Omega(\max(\alpha_k, (k+K)^{\xi-1})).
    \end{align*}
\end{lemma}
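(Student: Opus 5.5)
In the lower-bound instance everything is scalar: $J_F=-1$ and $\Sigma_b=I$, so $\Sigma_k^{(\alpha,\xi)}=s_kI$ and $\Sigma=\tfrac12 I$, the latter from the Lyapunov equation $-2\Sigma+I=0$ (note $\mathbbm{1}_{\xi=1}=0$ here). Hence $\mathfrak{u}^T\Sigma_k^{(\alpha,\xi)}\mathfrak{u}-\mathfrak{u}^T\Sigma\mathfrak{u}=s_k-\tfrac12=:e_k$ for every unit $\mathfrak{u}$. The covariance recursion reads $s_{k+1}=(1+\alpha_kJ_k^{(\alpha,\xi)})^2s_k+\alpha_k$ with $J_k^{(\alpha,\xi)}=-1+\frac{\alpha^{-1}\xi}{2(k+K)^{1-\xi}}$. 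The plan is to write an exact recursion for $e_k$ and show that its forcing term is \emph{nonnegative} and of the correct order. Then $e_k$ is eventually positive and bounded below by its quasi-stationary value.

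\textbf{Step 1 (exact error recursion).} Expanding, $(1+\alpha_kJ_k)^2=1-2\alpha_k+\frac{\xi}{k+K}+\alpha_k^2J_k^2$, using $\alpha_k\cdot\frac{\alpha^{-1}\xi}{(k+K)^{1-\xi}}=\frac{\xi}{k+K}$. Substituting $s_k=e_k+\tfrac12$ gives
\begin{align*}
e_{k+1}=(1+\alpha_kJ_k)^2e_k+g_k,\qquad g_k:=\frac{\xi}{2(k+K)}+\frac{\alpha_k^2J_k^2}{2}\ge 0 .
\end{align*}
For $K$ large we have $J_k\le-\tfrac12$, so $J_k^2\ge\tfrac14$ and $g_k\ge c_0\,\alpha_k m_k$, where $m_k:=\max\bigl(\alpha_k,(k+K)^{\xi-1}\bigr)$. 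This uses $\frac{1}{k+K}=\alpha^{-1}\alpha_k(k+K)^{\xi-1}$. In addition $1-3\alpha_k\le(1+\alpha_kJ_k)^2\le 1-\alpha_k$.

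\textbf{Step 2 (unrolling).} Unrolling, $e_k=\Pi_{0,k}e_0+\sum_{i<k}\Pi_{i+1,k}\,g_i$ with $\Pi_{i,k}=\prod_{l=i}^{k-1}(1+\alpha_lJ_l)^2$. The first term is at most $|e_0|\exp\bigl(-c\sum_l\alpha_l\bigr)$, which is exponential or stretched-exponential in $k$ and therefore eventually $o(m_k)$. For the sum, restrict to the window $i\in[k-\lceil 1/\alpha_k\rceil,k)$. Over this window $\alpha_i\asymp\alpha_k$ and $m_i\asymp m_k$ (polynomial step sizes, $K$ large), and $\Pi_{i+1,k}\ge(1-3\alpha_k')^{1/\alpha_k}\ge e^{-c}$. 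The window therefore contributes at least $c\,\alpha_k^{-1}\cdot\alpha_k m_k=c\,m_k$. Alternatively, I would run a direct induction. Assuming $e_k\ge c\,m_k$, we get $e_{k+1}\ge(1-3\alpha_k)c\,m_k+c_0\alpha_km_k\ge c\,m_k\ge c\,m_{k+1}$ once $c\le c_0/3$, using that $m_k$ is nonincreasing. The base case is handled at the first time the transient has died out.

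\textbf{Step 3 (conclusion and main obstacle).} Combining the two terms gives $|e_k|\ge e_k\ge c\,m_k-o(m_k)=\Omega\bigl(\max(\alpha_k,(k+K)^{\xi-1})\bigr)$ for all sufficiently large $k$, which is the claim. The main obstacle is the initial transient: with $z_0=0$ we have $e_0=-\tfrac12<0$, so the bound can only hold eventually, or uniformly after enlarging $K$ or the implicit constant. I would handle this by choosing $k_0$ such that $|e_0|\Pi_{0,k}\le\tfrac{c}{2}m_k$ for all $k\ge k_0$, which is possible because $\Pi_{0,k}$ decays faster than any power of $k+K$. The remaining care is in the uniform comparability $m_i\asymp m_k$ across the window, including the edge case $\xi=0$. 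There the $(k+K)^{-1}$ part is dominated by $\alpha$, and the forcing $\alpha^2J^2/2$ alone yields $e_k\to\alpha J^2/(4\cdot\text{const})=\Omega(\alpha)$.
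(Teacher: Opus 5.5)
Your proposal is correct and takes essentially the paper's route: the paper uses the same exact recursion $v_{k+1}=(1+\alpha_kJ_k^{(\alpha,\xi)})^2v_k+\alpha_k^2(J_k^{(\alpha,\xi)})^2\mathfrak{u}^T\Sigma\mathfrak{u}+\frac{\xi}{k+K}\mathfrak{u}^T\Sigma\mathfrak{u}$ with nonnegative forcing, unrolls it, discards the initial term as rapidly decaying, and lower-bounds the two forcing sums by an induction mirroring Lemma \ref{lem:bounds}; your window argument and your induction are equivalent ways of doing that last step. You are also more careful than the paper's sketch, correctly requiring $J_k^{(\alpha,\xi)}\le-\tfrac12$ (the paper writes $\ge \tfrac12$) and noting that, since $e_0=-\tfrac12$, the bound can only be claimed for sufficiently large $k$.
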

\begin{proof}
    Recall that $\Sigma_k^{(\alpha, \xi)}$ is given by the following recursion:
    \begin{align*}
        \Sigma_{k+1}^{(\alpha, \xi)}&=(1 + \alpha_k J^{(\alpha, \xi)}_k)^2\Sigma_k^{(\alpha, \xi)}+\alpha_{k}\Sigma_b\\
        \implies \Sigma_{k+1}^{(\alpha, \xi)}-\Sigma&=(1 + \alpha_{k} J^{(\alpha, \xi)}_k)^2(\Sigma_k^{(\alpha, \xi)}-\Sigma)+\alpha_{k}^2(J^{(\alpha, \xi)}_k)^2\Sigma+\frac{\xi\mathbbm{1}_{\xi<1}}{(k+K)}\Sigma.
    \end{align*}
    where for the last equation, we used the fact that $\Sigma$ is the solution to Lyapunov equation \eqref{eq:lyap_eq}. Define $v_k=\mathfrak{u}^T\Sigma_k^{(\alpha, \xi)}\mathfrak{u}-\mathfrak{u}^T\Sigma \mathfrak{u}$. Then, from the above equation, we get
    \begin{align*}
        v_{k+1}&=(1 + \alpha_{k} J^{(\alpha, \xi)}_k)^2v_k+\alpha_{k}^2(J^{(\alpha, \xi)}_k)^2u^T\Sigma \mathfrak{u}+\frac{\xi \mathfrak{u}^T\Sigma \mathfrak{u}}{(k+K)}\\
        &=v_0\prod_{i=0}^k(1 + \alpha_{i} J^{(\alpha, \xi)}_i)^2+\mathfrak{u}^T\Sigma \mathfrak{u}\sum_{i=0}^k\alpha_i^2(J^{(\alpha, \xi)}_i)^2\prod_{l=i+1}^k(1 + \alpha_{l} J^{(\alpha, \xi)}_l)^2\\
        &~+\xi \mathfrak{u}^T\Sigma \mathfrak{u}\sum_{i=0}^k\frac{1}{k+K}\prod_{l=i+1}^k(1 + \alpha_{l} J^{(\alpha, \xi)}_l)^2\\
        &\geq v_0\prod_{i=0}^k(1 + \alpha_{i} J^{(\alpha, \xi)}_i)^2+\frac{\mathfrak{u}^T\Sigma \mathfrak{u}}{4}\underbrace{\sum_{i=0}^k\alpha_i^2\prod_{l=i+1}^k(1 + \alpha_{l} J^{(\alpha, \xi)}_l)^2}_{\Omega(\alpha_k)}+\xi \mathfrak{u}^T\Sigma \mathfrak{u}\underbrace{\sum_{i=0}^k\frac{1}{k+K}\prod_{l=i+1}^k(1 + \alpha_{l} J^{(\alpha, \xi)}_l)^2}_{\Omega((k+K)^{\xi-1})}.
    \end{align*}
    where for the last inequality, we suppose that $K$ is large enough such that $J^{(\alpha, \xi)}_k\geq 1/2$. It is a straightforward exercise to show that the first term in the above expression is a higher order term and decays rapidly. Thus, the dominant rates in the lower order are given by the second and third terms. The explicit rates for these sums can be derived by mirroring the arguments in the proof of Lemma \ref{lem:bounds}; we omit the repetitive details here for brevity.
\end{proof}

\begin{lemma}\label{lem:wass1_lower_bound}
    Let $\nu_X$ and $\nu_Y$ be two distributions in $\mathbb{R}^d$. Let $\mathfrak{u}\in \mathbb{R}^d$ such that $\|\mathfrak{u}\|=1$. Then, we have
    \begin{align*}
        d_{\mathcal{W}}(\mathfrak{u}^TX, \mathfrak{u}^TY)\leq d_{\mathcal{W}}(X, Y)
    \end{align*}
\end{lemma}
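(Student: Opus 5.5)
The natural route is the coupling characterization of $d_{\mathcal{W}}$ in Eq. \eqref{eq:wass_dist}. A coupling of $X$ and $Y$ pushes forward to a coupling of their projections, and projection onto a unit vector is $1$-Lipschitz.

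First, fix an arbitrary coupling $\gamma\in\Gamma(\nu_X,\nu_Y)$ and let $(X,Y)\sim\gamma$. Under $\gamma$, the pair $(\mathfrak{u}^TX,\mathfrak{u}^TY)$ has marginals equal to the laws of $\mathfrak{u}^TX$ and $\mathfrak{u}^TY$. Its law is therefore an admissible coupling in the infimum defining $d_{\mathcal{W}}(\mathfrak{u}^TX,\mathfrak{u}^TY)$, so
\begin{align*}
d_{\mathcal{W}}(\mathfrak{u}^TX,\mathfrak{u}^TY)\leq \E_{\gamma}\big[|\mathfrak{u}^T(X-Y)|\big].
\end{align*}

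Second, Cauchy--Schwarz with $\|\mathfrak{u}\|=1$ gives $|\mathfrak{u}^T(X-Y)|\leq \|X-Y\|$ pointwise. Hence the right-hand side is at most $\E_{\gamma}[\|X-Y\|]$. Taking the infimum over $\gamma\in\Gamma(\nu_X,\nu_Y)$ yields the claim.

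As an equivalent check through the dual form \eqref{eq:wass_func}: for any $1$-Lipschitz $h:\mathbb{R}\to\mathbb{R}$, the map $x\mapsto h(\mathfrak{u}^Tx)$ is $1$-Lipschitz on $\mathbb{R}^d$. Thus the supremum defining the one-dimensional distance runs over a subclass of the test functions used in $d$ dimensions. There is no real obstacle here. The only detail worth stating is that the pushforward of a coupling is again a coupling, which holds because both marginal constraints are preserved under the measurable map $(x,y)\mapsto(\mathfrak{u}^Tx,\mathfrak{u}^Ty)$. One should also note that finiteness of first moments is inherited, so both distances are well defined (possibly infinite, in which case the inequality is trivial).
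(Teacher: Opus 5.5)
Your proof is correct and is the same argument as the paper's: push a coupling of $(X,Y)$ forward under $(x,y)\mapsto(\mathfrak{u}^Tx,\mathfrak{u}^Ty)$, then apply Cauchy--Schwarz with $\|\mathfrak{u}\|=1$. The only difference is that the paper works directly with an optimal coupling $\gamma^*$, while you bound over an arbitrary coupling and then take the infimum, which avoids having to justify that an optimal coupling exists.
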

\begin{proof}
    Let $\gamma^*$ be the optimal coupling between $\nu_X$ and $\nu_Y$. Then, by definition, we have
    \begin{align*}
        d_{\mathcal{W}}(\mathfrak{u}^TX, \mathfrak{u}^TY)&\leq \E_{(X, Y)\sim \gamma^*}[|\mathfrak{u}^TX-\mathfrak{u}^TY|]\\
        &\leq \|\mathfrak{u}\|\E_{(X, Y)\sim \gamma^*}[\|X-Y\|]\\
        &\leq \E_{(X, Y)\sim \gamma^*}[|\|X-Y\|]=d_{\mathcal{W}}(X, Y).
    \end{align*}
\end{proof}

\subsection{Proof of Proposition \ref{cor:tail_bounds}}\label{sec:tail_bounds}
\begin{proposition} \label{lem: from_W1_to_tail_bound}
Let $Y$ be a real-valued random vector in $\mathbb{R}^d$. Then, for any $a>0$, $\rho\in[0,1)$, positive symmetric matrix $\Sigma\in \mathbb{R}^d$, a standard Gaussian vector $Z\in \mathbb{R}^d$, and unit vector $\mathfrak{u}\in\mathbb{R}^d$, we have
\begin{equation*}
\big|\mathbb{P}(\langle Y, \mathfrak{u}\rangle > a) - \mathbb{P}(\langle \Sigma^{1/2}Z, \mathfrak{u}\rangle > a)\big|
\;\le\;
\frac{(1-\rho) a}{\sqrt{\mathfrak{u}^T \Sigma \mathfrak{u}}}\,\phi( \frac{\rho a}{\sqrt{\mathfrak{u}^T \Sigma \mathfrak{u}}}) + \frac{d_{\mathcal{W}}(Y,\Sigma^{1/2}Z)}{(1-\rho)a} 
\end{equation*}
where $\phi(x)=\frac{1}{\sqrt{2\pi}}e^{-x^2/2}$ is the standard normal density.
\end{proposition}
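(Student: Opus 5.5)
Write $\sigma^2=\mathfrak{u}^T\Sigma\mathfrak{u}$, $G=\langle \Sigma^{1/2}Z,\mathfrak{u}\rangle\sim\mathcal{N}(0,\sigma^2)$ and $W=\langle Y,\mathfrak{u}\rangle$. The plan is to smooth the indicator $\mathbbm{1}\{x>a\}$ by piecewise linear Lipschitz functions on the window $[\rho a,a]$ (on the left) or $[a,(2-\rho)a]$ (on the right). We then pay two costs. The first is a Wasserstein cost, controlled through the Kantorovich--Rubinstein duality \eqref{eq:wass_func}. The second is a Gaussian mass cost on the window, controlled by the density $\phi$.

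\emph{Step 1: reduce to one dimension.} By Lemma \ref{lem:wass1_lower_bound}, $d_{\mathcal{W}}(W,G)\le d_{\mathcal{W}}(Y,\Sigma^{1/2}Z)$, so it suffices to work with $W$ and $G$.

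\emph{Step 2: upper bound on $\mathbb{P}(W>a)$.} Define the lower smoothing $g(x)=\min\{1,\max\{0,(x-\rho a)/((1-\rho)a)\}\}$. This function is $\frac{1}{(1-\rho)a}$-Lipschitz and satisfies $\mathbbm{1}\{x>a\}\le g(x)\le \mathbbm{1}\{x>\rho a\}$. Applying duality to $(1-\rho)a\,g$ gives
\[
\mathbb{P}(W>a)\le \E g(W)\le \E g(G)+\frac{d_{\mathcal{W}}(W,G)}{(1-\rho)a}\le \mathbb{P}(G>a)+\mathbb{P}(\rho a<G\le a)+\frac{d_{\mathcal{W}}}{(1-\rho)a}.
\]
The Gaussian mass of the window is $\int_{\rho a/\sigma}^{a/\sigma}\phi\le \frac{(1-\rho)a}{\sigma}\phi(\rho a/\sigma)$, because $\phi$ is decreasing on $[0,\infty)$ and $\rho a\ge 0$. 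This yields exactly the stated bound in this direction.

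\emph{Step 3: lower bound, and the main obstacle.} The natural symmetric choice is $\tilde g(x)=\min\{1,\max\{0,(x-a)/((1-\rho)a)\}\}$, which satisfies $\mathbbm{1}\{x>(2-\rho)a\}\le\tilde g\le\mathbbm{1}\{x>a\}$. It gives
\[
\mathbb{P}(W>a)\ge \mathbb{P}(G>a)-\mathbb{P}(a<G\le(2-\rho)a)-\frac{d_{\mathcal{W}}}{(1-\rho)a}.
\]
The right window mass is at most $\frac{(1-\rho)a}{\sigma}\phi(a/\sigma)\le \frac{(1-\rho)a}{\sigma}\phi(\rho a/\sigma)$, again by monotonicity of $\phi$. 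So this direction is no harder than Step 2: both windows have the same length $(1-\rho)a$, and the left endpoint $\rho a\ge0$ dominates the density on either window.

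The only care needed is bookkeeping. The Lipschitz constant must be exactly $1/((1-\rho)a)$. The case $\rho=0$ is still valid, since the window is then $[0,a]$ and $\phi(0)$ is the maximum. No step requires the condition $d_{\mathcal{W}}\le 0.5$; that condition only enters later, when $\rho$ is optimized to prove Proposition \ref{cor:tail_bounds}. Combining the two one-sided bounds gives the absolute-value statement.
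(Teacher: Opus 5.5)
Your proof is correct. It reaches the same bound by the dual route, whereas the paper works on the primal side.

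\textbf{The paper's route.} The paper fixes a near-optimal coupling of $Y$ and $\Sigma^{1/2}Z$ and uses the inclusion $\{\langle Y,\mathfrak{u}\rangle>a\}\subseteq\{|\langle Y-\Sigma^{1/2}Z,\mathfrak{u}\rangle|\ge(1-\rho)a\}\cup\{\langle\Sigma^{1/2}Z,\mathfrak{u}\rangle\ge\rho a\}$. For the reverse inequality it uses the mirror inclusion with the window $[a,(2-\rho)a]$. It bounds the first event by Markov's inequality and then lets the coupling approach optimality.

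\textbf{Your route.} You sandwich the indicator between ramp functions with Lipschitz constant $1/((1-\rho)a)$ and apply Kantorovich--Rubinstein duality. No coupling, and no limiting argument over couplings, is needed. Your Step 1 is in fact unnecessary, since $y\mapsto(1-\rho)a\,g(\langle y,\mathfrak{u}\rangle)$ is already $1$-Lipschitz on $\mathbb{R}^d$. Both arguments bound the Gaussian window mass by the monotonicity of $\phi$ on $[0,\infty)$ and end with the same constants.

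\textbf{What your route buys.} It has slack you did not use. The ramp has average weight $1/2$ on the window, so computing $\E[g(G)]-\mathbb{P}(G>a)$ and $\mathbb{P}(G>a)-\E[\tilde g(G)]$ exactly halves the Gaussian term. The same template also converts bounds in smoother integral probability metrics, by using smoother ramps.

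\textbf{What the paper's route buys.} It extends in a different direction. Markov's inequality with $p$-th moments turns a Wasserstein-$p$ bound into a residual $d_{\mathcal{W}_p}^p/((1-\rho)a)^p$, which decays faster in $a$. That is the natural path toward the sharper tail bounds the paper lists as future work, whereas Lipschitz duality is tied to $p=1$.
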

\begin{proof}
Denote $\Phi(a)$ as cumulative distribution function (CDF) of standard normal distribution, $\Phi^c(a) = 1 - \Phi(a)$ as its complementary CDF, and $\phi(a)=e^{-\frac{x^2}{2}}$ as the probability density function (PDF) of standard normal distribution. Then 
for any $\rho \in [0,1)$ and $a\geq 0$, we have
\begin{align*}
    \mathbb{P}(\langle Y, \mathfrak{u}\rangle > a) &\leq \mathbb{P}(\langle Y - \Sigma^{1/2}Z, \mathfrak{u}\rangle \geq (1-\rho)a) + \mathbb{P}(\langle \Sigma^{1/2}Z,\mathfrak{u}\rangle \geq \rho a) \\
    &\leq \mathbb{P}(|\langle \mathfrak{u}, Y - \Sigma^{1/2}Z\rangle |\geq (1-\rho)a) + \mathbb{P}(\langle \Sigma^{1/2}Z,\mathfrak{u}\rangle \geq \rho a) \\
    &\overset{(a)}{\leq} \frac{\|\mathfrak{u}\| \mathbb{E}[\| Y - \Sigma^{1/2}Z\|]}{(1-\rho) a} + \mathbb{P}(\langle \Sigma^{1/2}Z,\mathfrak{u}\rangle \geq \rho a) \\
    &\overset{(b)}{=} \frac{\|\mathfrak{u}\| d_{\mathcal{W}}(Y,\Sigma^{1/2}Z)}{(1-\rho) a} + \mathbb{P}(\langle \Sigma^{1/2}Z,\mathfrak{u}\rangle \geq \rho a) \\
    &\overset{(c)}{=} \frac{d_{\mathcal{W}}(Y,\Sigma^{1/2}Z)}{(1-\rho) a} + \Phi^c(\frac{\rho a}{\sqrt{\mathfrak{u}^T \Sigma \mathfrak{u}}}) \\
    &\overset{(d)}{\leq} \frac{d_{\mathcal{W}}(Y,\Sigma^{1/2}Z)}{(1-\rho) a} + \frac{(1-\rho) a}{\sqrt{\mathfrak{u}^T \Sigma \mathfrak{u}}}\,\phi( \frac{\rho a}{\sqrt{\mathfrak{u}^T \Sigma \mathfrak{u}}}) + \Phi^c(\frac{ a}{\sqrt{\mathfrak{u}^T \Sigma \mathfrak{u}}}) 
\end{align*}

Where inequality $(a)$ is from Markov inequality. Inequality $(b)$ follows from the definition of Wasserstein-$1$ distance. Note that we can choose a coupling $(Y,Z)$ such that the $L^1$ distance between $Y$ and $Z$ is within error $\epsilon$ to the Wasserstein-$1$ distance $d_{\mathcal{W}}(Y,\Sigma^{1/2}Z)$. Such error $\epsilon$ can be arbitrary small since Wasserstein-$1$ distance is defined as infimum over all couplings. Thus, letting $\epsilon \downarrow 0$ we have the equality in $(b)$.
 Equality $(c)$ is from the fact $\|\mathfrak{u}\|=1$.
Inequality
$(d)$ follows from Taylor Expansion $\Phi^c(\rho b) \leq \Phi^c(b) + (b - \rho b) \sup_{\tilde{x} \in [\rho b, b]} \phi(\tilde{x})$. Meanwhile, similar argument shows the lower bound
\begin{align*} 
    \mathbb{P}(\langle \Sigma^{1/2}Z,\mathfrak{u}\rangle  > a) &\leq \mathbb{P}(\langle Y, \mathfrak{u}\rangle > a) + \mathbb{P}(|\langle Y, \mathfrak{u}\rangle - \langle \Sigma^{1/2}Z,\mathfrak{u}\rangle| \geq (1 - \rho) a)\\
    &+ \mathbb{P}(a \leq \langle \Sigma^{1/2}Z,\mathfrak{u}\rangle\leq(2-\rho)a) \\
    &\leq \mathbb{P}(\langle Y, \mathfrak{u}\rangle > a) + [(1-\rho)a]^{-1}d_{\mathcal{W}}(Y,\Sigma^{1/2}Z) + \frac{(1-\rho) a}{\sqrt{\mathfrak{u}^T \Sigma \mathfrak{u}}}\,\phi( \frac{\rho a}{\sqrt{\mathfrak{u}^T \Sigma \mathfrak{u}}})
\end{align*}
The claim follows.
\end{proof}
To proceed, we optimize the choice of $\rho$ in Lemma~\ref{lem: from_W1_to_tail_bound} by setting $\rho := 1 - \sqrt{d_{\mathcal{W}}(Y, \Sigma^{1/2}Z)}$. 
With such choice of $\rho$, we have the following concentration bound.
\begin{align*}
  |\mathbb{P}(\langle Y, \mathfrak{u}\rangle > a) - \mathbb{P}(\langle \Sigma^{1/2} Z,\mathfrak{u}\rangle > a) | 
  &\leq \frac{(1-\rho) a}{\sqrt{\mathfrak{u}^T \Sigma \mathfrak{u}}}\,\phi( \frac{\rho a}{\sqrt{\mathfrak{u}^T \Sigma \mathfrak{u}}}) + \frac{d_{\mathcal{W}}(Y, \Sigma^{1/2}Z)}{(1-\rho)a} \\
  &\overset{(a)}{\leq} \frac{(1-\rho) a}{\sqrt{\mathfrak{u}^T \Sigma \mathfrak{u}}}\,\phi( \frac{a}{2\sqrt{\mathfrak{u}^T \Sigma \mathfrak{u}}}) + \frac{d_{\mathcal{W}}(Y, \Sigma^{1/2}Z)}{(1-\rho)a} \\
  &= \sqrt{d_{\mathcal{W}}(Y, \Sigma^{1/2}Z)} \left( \frac{ a}{\sqrt{\mathfrak{u}^T \Sigma \mathfrak{u}}}\exp(-\frac{a^2}{8u^T \Sigma \mathfrak{u}}) + \frac{1}{a} \right) \\
  &\overset{(b)}{\leq} (8\sqrt{\mathfrak{u}^T \Sigma \mathfrak{u}}+1)\cdot\frac{\sqrt{d_{\mathcal{W}}(Y, \Sigma^{1/2}Z)}}{a}.
\end{align*}
Inequality $(a)$ holds for sufficiently small $d_{\mathcal{W}}(Y, \Sigma^{1/2}Z)$. According to Theorem \ref{thm_main:main_thm}, we can achieve such small Wasserstein distance once $\alpha$ is small enough in case (1), or
$k$ is large enough in cases (2) and (3). Inequality $(b)$ follows from the fact that $be^{-b^2/8} \leq 8/b$ for all $b > 0$.
Finally, we have $0\leq \mathfrak{u}^T \Sigma \mathfrak{u} \leq \|\Sigma\|_{op}$ for any unit vector $\mathfrak{u}$.

We will use the above bounds and set $Y=y_k$, $\Sigma=\Sigma^{(\alpha, \xi)}_k$. We can now plug in the upper bounds from Theorem \ref{thm_main:main_thm} to achieve non-asymptotic concentration bounds for each of the three cases.

\subsubsection{Tightness of the tail bound}\label{sec:tightness_tail}
It appears that the tail bounds in Proposition \ref{cor:tail_bounds} provide a tighter characterization of the tail, especially for moderate values of $k$ than directly using Corollary \ref{cor:asymp_gaussian_rate}. Consider the one-dimensional case for simplicity. Then, with Corollary \ref{cor:asymp_gaussian_rate}, the tail bound is given as
\begin{align}\label{eq:tail_asymp}
    \mathbb{P}(y_k > a)  &\leq \frac{1}{\sqrt{2\pi\Sigma^{(\alpha)}}}\frac{e^{-\frac{a^2}{2\Sigma^{(\alpha)}}}}{a}+\tilde{\mathcal{O}}\left(\frac{1}{(k+K)^{\delta/4}}\right) \frac{1}{a}+\mathcal{O}\left(\frac{1}{k+K}\right)\frac{1}{a}.
\end{align}
In other words, the above relation is established by using triangle inequality on the Wasserstein-1 distance: 
$$d_{\mathcal{W}}(y_k, (\Sigma^{(\alpha, \xi)})^{1/2}Z)\leq d_{\mathcal{W}}(y_k, (\Sigma_k^{(\alpha, \xi)})^{1/2}Z)+d_{\mathcal{W}}((\Sigma_k^{(\alpha, \xi)})^{1/2}Z, (\Sigma^{(\alpha, \xi)})^{1/2}Z).$$ 
Therefore, the additional $\mathcal{O}\left(1/(k+K)\right)/a$ term appears due to the mismatch between the time-varying and the asymptotic Gaussians, captured by the second term.
On the other hand, we show in Lemma \ref{lem:matrix_theta_bound} that $|\Sigma_k^{(\alpha, 1)}-\Sigma^{(\alpha)}|\leq \mathcal{O}(1/(k+K))$. By using Taylor series expansion for the exponential term around $\Sigma^{(\alpha)}$, we get
\begin{align*}
    \frac{1}{\sqrt{2\pi\Sigma_k^{(\alpha, 1)}}}\frac{e^{-\frac{a^2}{2\Sigma_k^{(\alpha, 1)}}}}{a} \approx \frac{1}{\sqrt{2\pi\Sigma^{(\alpha)}}}\frac{e^{-\frac{a^2}{2\Sigma^{(\alpha)}}}}{a}+\mathcal{O}\left(\frac{1}{k+K}\right)\frac{e^{-\frac{a^2}{2\Sigma^{(\alpha)}}}}{2a\sqrt{2\pi\Sigma^{(\alpha)}}}\left(\frac{a^2}{(\Sigma^{(\alpha)})^2}-\frac{1}{\Sigma^{(\alpha)}}\right).
\end{align*}
In particular, the difference between the tail of two Gaussians is of the order of $\mathcal{O}(a\exp(-a^2/(2\Sigma^{(\alpha)})))$. Substituting the above approximation into the upper bound \eqref{eq:mill's}, we get
\begin{align}\label{eq:tail_finite}
    \mathbb{P}(y_k > a)  &\lessapprox  \frac{1}{\sqrt{2\pi\Sigma^{(\alpha)}}}\frac{e^{-\frac{a^2}{2\Sigma^{(\alpha)}}}}{a}+\tilde{\mathcal{O}}\left(\frac{1}{(k+K)^{\delta/4}}\right) \frac{1}{a}+\mathcal{O}\left(\frac{1}{k+K}\right)\mathcal{O}(ae^{-\frac{a^2}{2\Sigma^{(\alpha)}}}).
\end{align}
Since $\mathcal{O}(a\exp(-a^2/(2\Sigma^{(\alpha)})))\ll a^{-1}$ as $a$ grows, it becomes evident that the tail bound in \eqref{eq:tail_finite} is strictly tighter with respect to $a$ compared to the tail bound in \eqref{eq:tail_asymp}. While extending this argument to the multidimensional matrix setting appears to follow from standard algebraic techniques, we omit the formal derivation here for brevity.

\section{Simulations}\label{sec:simulations}
\subsection{Simulation details for Figure \ref{fig:comparison}}
\begin{figure*}[t!]
     \centering
     \begin{subfigure}[b]{0.25\textwidth}
         \centering
         \includegraphics[width=\linewidth]{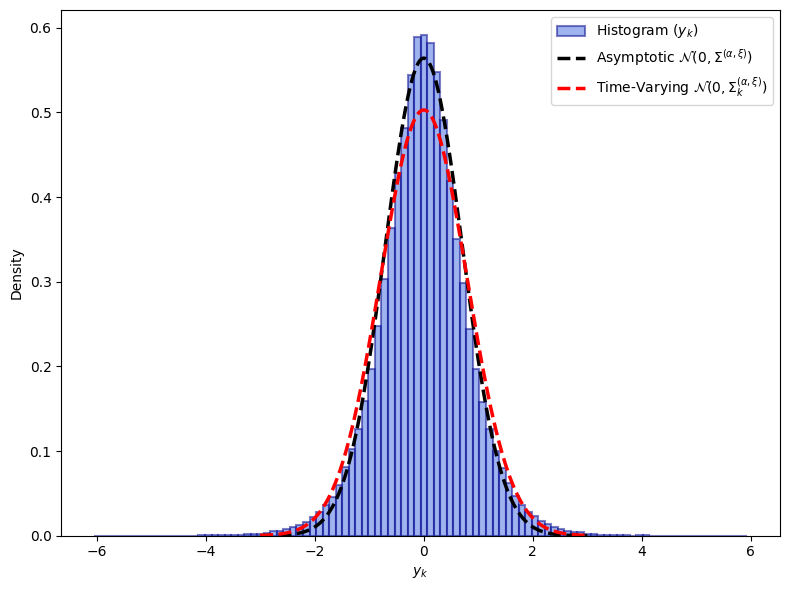}
         \caption{$\xi=0.3$, $k=100$}
         \label{fig:0.3-100}
     \end{subfigure}
     \hspace{5mm} 
     \begin{subfigure}[b]{0.25\textwidth}
         \centering
         \includegraphics[width=\linewidth]{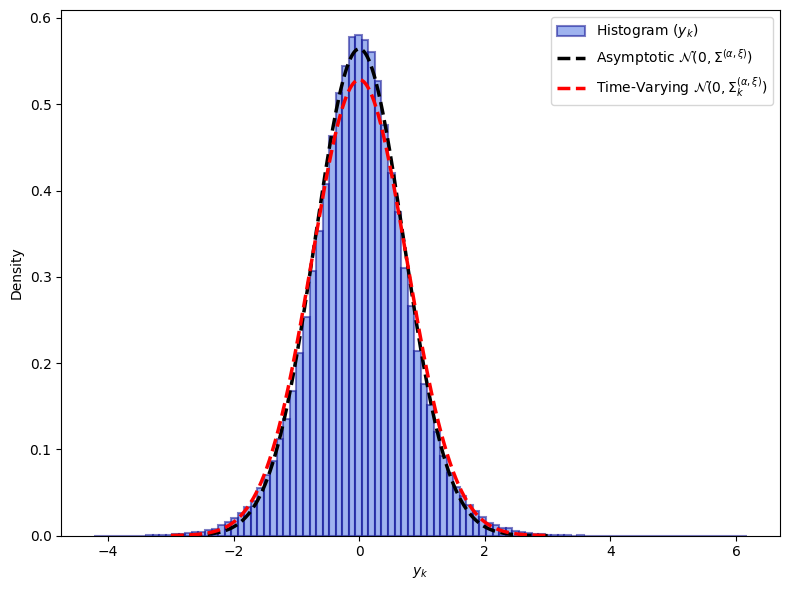}
         \caption{$\xi=0.3$, $k=1000$}
         \label{fig:0.3-1000}
     \end{subfigure}
     \hspace{5mm}
     \begin{subfigure}[b]{0.25\textwidth}
         \centering
         \includegraphics[width=\linewidth]{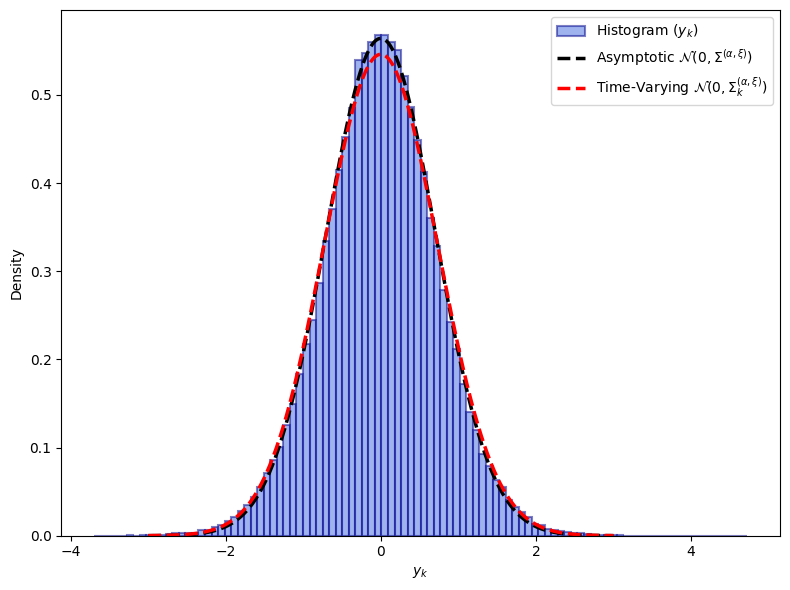}
         \caption{$\xi=0.3$, $k=5000$}
         \label{fig:0.3-5000}
     \end{subfigure}
     \\
     \vspace{2mm}
     \begin{subfigure}[b]{0.25\textwidth}
         \centering
         \includegraphics[width=\linewidth]{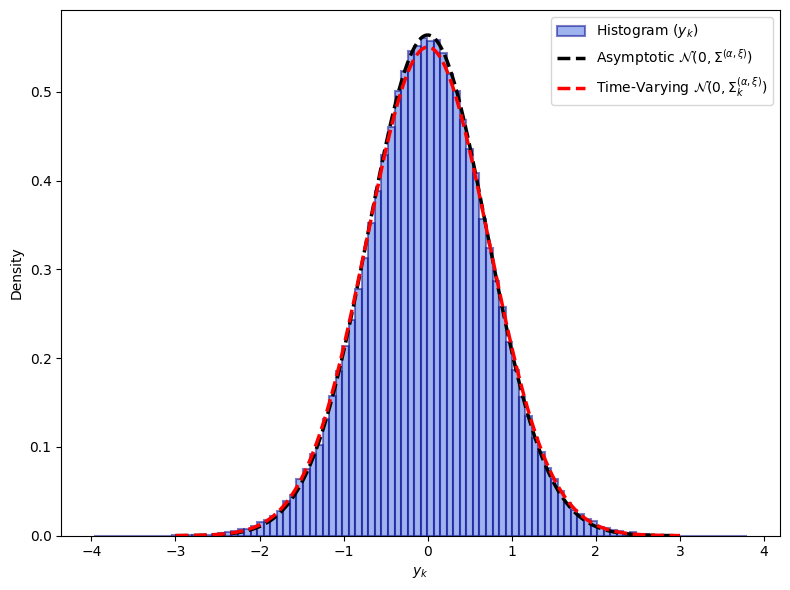}
         \caption{$\xi=0.6$, $k=100$}
         \label{fig:0.6-100}
     \end{subfigure}
     \hspace{5mm}
     \begin{subfigure}[b]{0.25\textwidth}
         \centering
         \includegraphics[width=\linewidth]{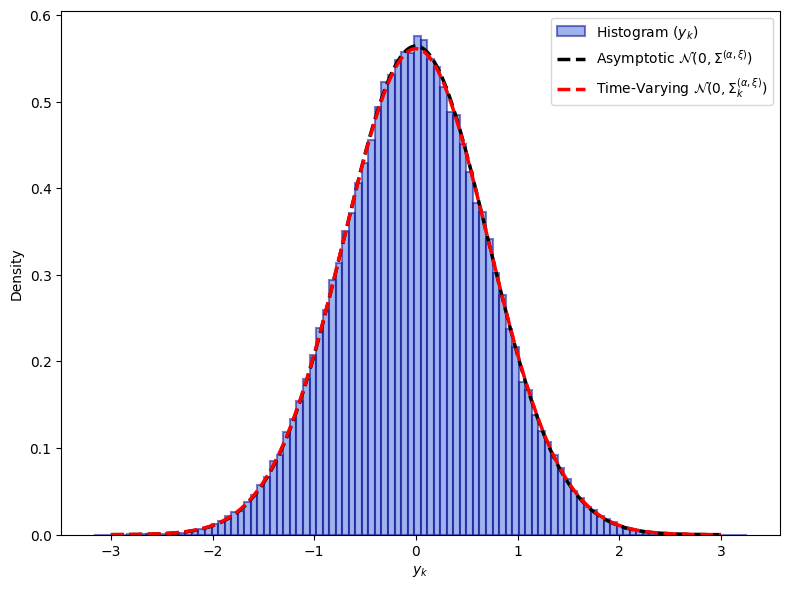}
         \caption{$\xi=0.6$, $k=1000$}
         \label{fig:0.6-1000}
     \end{subfigure}
     \hspace{5mm}
     \begin{subfigure}[b]{0.25\textwidth}
         \centering
         \includegraphics[width=\linewidth]{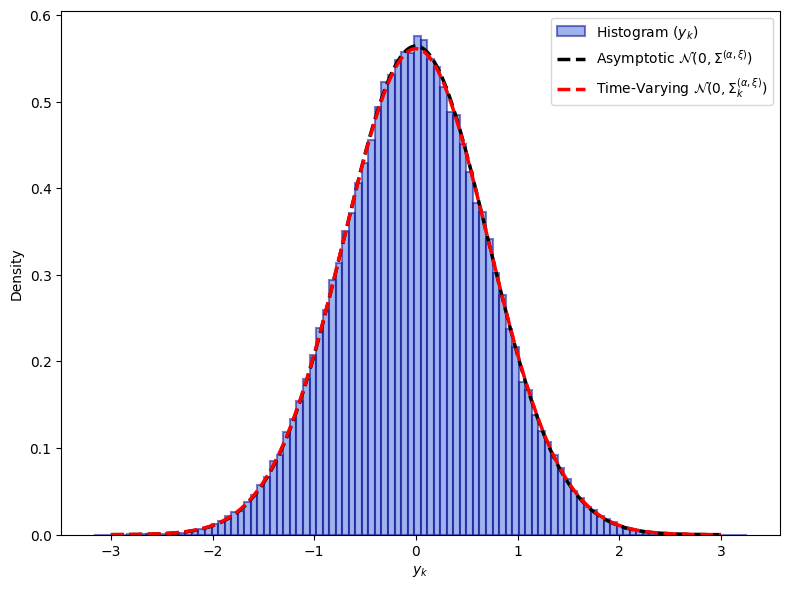}
         \caption{$\xi=0.6$, $k=5000$}
         \label{fig:0.6-5000}
     \end{subfigure}
        \caption{Comparing the Gaussian approximations for $y_k$ at various time instants.}
        \label{fig:comparison_bad}
\end{figure*}
The experiments were conduction using JAX on Tensor Processing Units (TPUs) to efficiently track the empirical distribution of the iterates. 
\begin{itemize}
    \item Algorithm Dynamics: We simulated a one-dimensional linear scalar SA update $x_{k+1} = (1 - \alpha_k A)x_k + \alpha_k M_k$ with the drift coefficient $A = 2.0$.
    \item Step-Size Schedule: The step-size schedule is set $\alpha_k = 1 / (k + 100)^\xi$ for various choice of $\xi$.
    \item Noise Distribution: We only have additive noise in the SA update. The noise was drawn from a Laplace distribution with mean zero and scale parameter as $1$. 
    \item Number of sample paths: For each value of $\xi$, we simulated $N = 2\times 10^5$ independent paths with $k_{max} = 10,000$ iterations.
    \item Evaluation Metric: The precise 1D Wasserstein-1 distance was recorded every 50 iterations. We calculated this by tracking the exact scaled theoretical variance $\Sigma_k^{(\alpha, \beta)}$ at each evaluation checkpoint, drawing $N$ target Gaussian samples, and computing the exact $L_1$ distance between the sorted empirical rescaled iterates $y_k = x_k / \sqrt{\alpha_k}$ and the sorted theoretical Gaussian samples.
\end{itemize}

\subsubsection{Simulation for $\xi<2/3$}
We also ran similar simulations for $\xi<2/3$ and identical choice of other parameters. As shown in Figure \ref{fig:comparison_bad}, when $\xi=0.3$, the asymptotic distribution provides a better approximation for the distribution of $y_k$ than the proposed time-varying distribution. However, as $\xi$ increases to $0.6$, the two approximations are almost equally good. 

Furthermore, Figure \ref{fig:comparison_good} reveals an interesting dynamic: for a fixed time instant, the accuracy of the time-varying Gaussian approximation can be significantly enhanced by appropriately tuning the hyperparameter $K$ in the step-size schedule. This indicates that Theorem \ref{thm_main:main_thm} provides a tight characterization of the distribution of $y_k$ for appropriate choice of step-size schedules. Intuitively, the reason for this competing approximation comes from the transient effect of dropping higher order terms. For certain values of $\xi$, the influence of these higher-order terms for $\Sigma_k$ dominates over the approximation error from the asymptotic Gaussian in finite time, making the tuning of $K$ critical to minimizing the approximation gap.

\begin{figure*}[t!]
     \centering
     \begin{subfigure}[b]{0.25\textwidth}
         \centering
         \includegraphics[width=\linewidth]{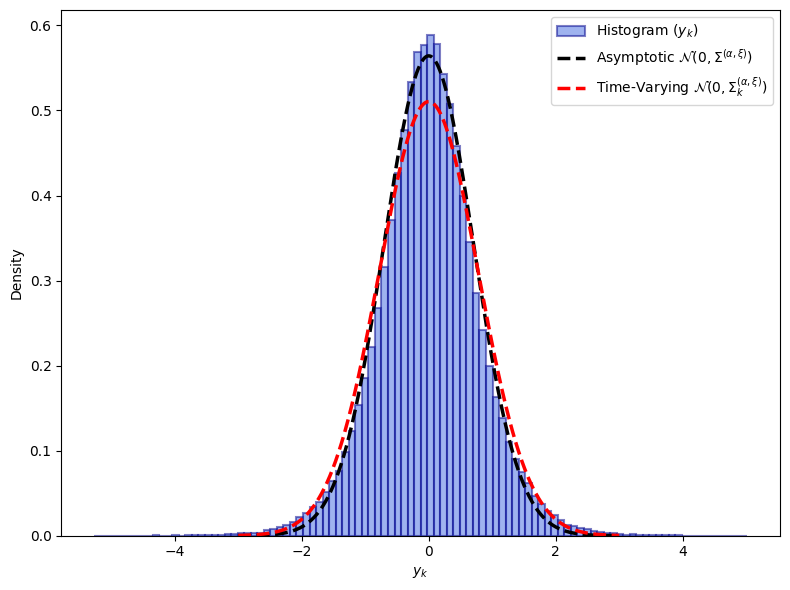}
         \caption{$\xi=0.3$, $K=100$}
         \label{fig:0.3-K100}
     \end{subfigure}
     \hspace{5mm} 
     \begin{subfigure}[b]{0.25\textwidth}
         \centering
         \includegraphics[width=\linewidth]{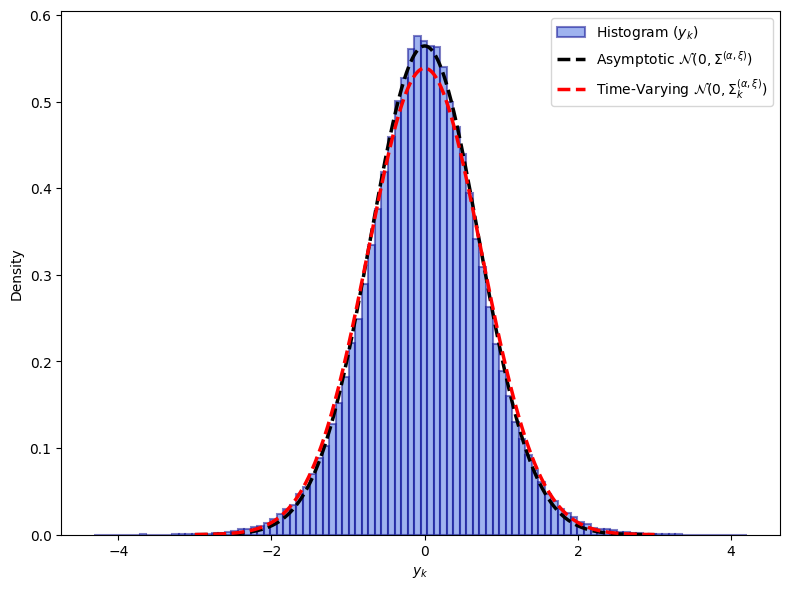}
         \caption{$\xi=0.3$, $K=3000$}
         \label{fig:0.3-K3000}
     \end{subfigure}
     \hspace{5mm}
     \begin{subfigure}[b]{0.25\textwidth}
         \centering
         \includegraphics[width=\linewidth]{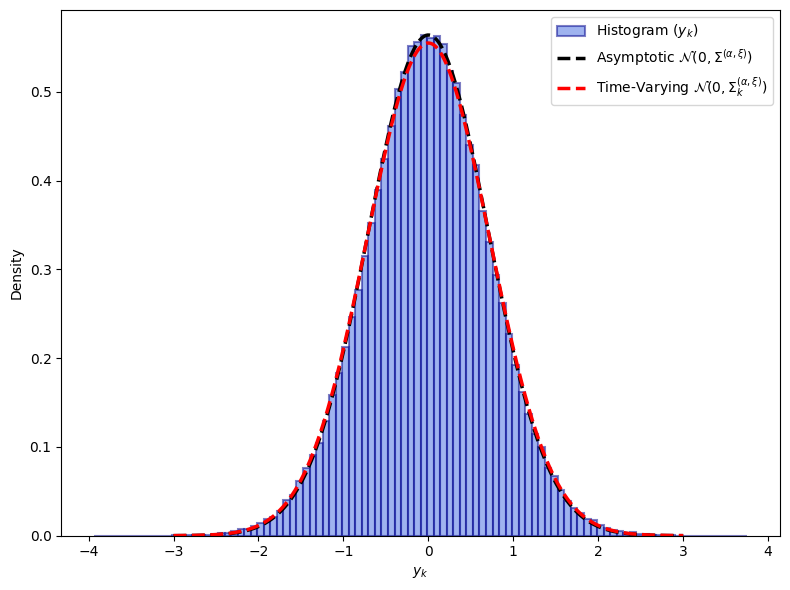}
         \caption{$\xi=0.3$, $K=100000$}
         \label{fig:0.3-K100000}
     \end{subfigure}
     \\
     \vspace{2mm}
     \begin{subfigure}[b]{0.25\textwidth}
         \centering
         \includegraphics[width=\linewidth]{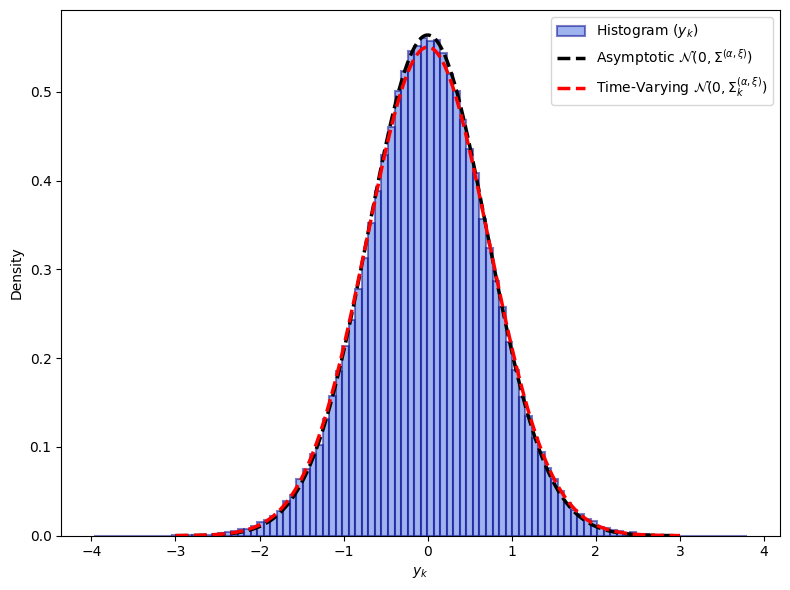}
         \caption{$\xi=0.6$, $K=100$}
         \label{fig:0.6-K100}
     \end{subfigure}
     \hspace{5mm}
     \begin{subfigure}[b]{0.25\textwidth}
         \centering
         \includegraphics[width=\linewidth]{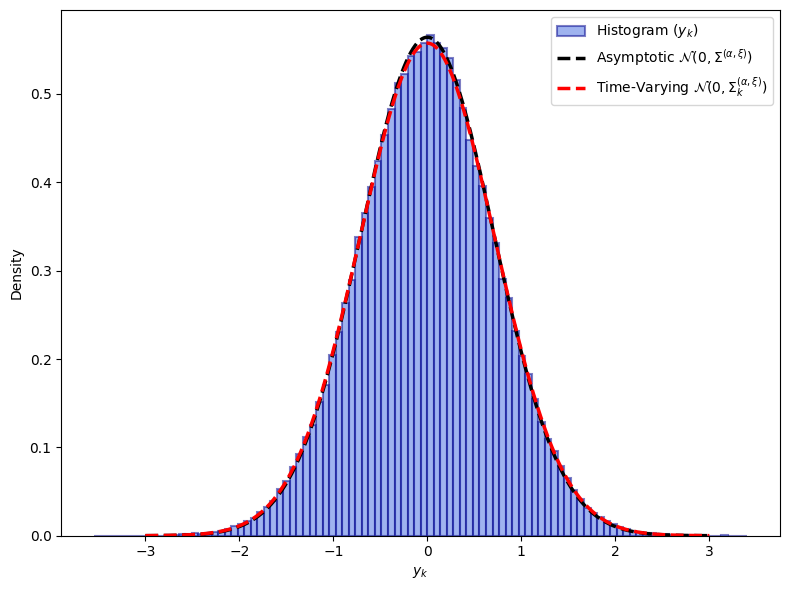}
         \caption{$\xi=0.6$, $K=1000$}
         \label{fig:0.6-K1000}
     \end{subfigure}
     \hspace{5mm}
     \begin{subfigure}[b]{0.25\textwidth}
         \centering
         \includegraphics[width=\linewidth]{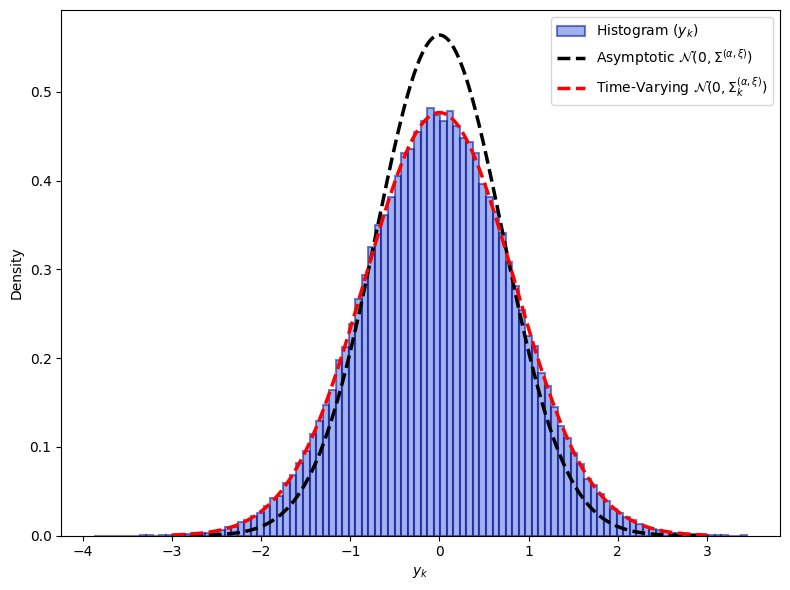}
         \caption{$\xi=0.6$, $K=3000$}
         \label{fig:0.6-K3000}
     \end{subfigure}
        \caption{Comparing the Gaussian approximations for $y_k$ at a fixed time instant $k=200$ (or fixed number of samples) for various choice of hyperparameter $K$.}
        \label{fig:comparison_good}
\end{figure*}

\subsection{Simulation details for Figure \ref{fig:slope}}
As with our previous experiments, these simulations were conducted using JAX on Tensor Processing Units (TPUs). Almost all the parameters for the SA iteration remains identical to the previous experiment. We changed the noise distribution to exponential distribution with scale parameter as 1 (we centered the noise so that it is mean zero). Furthermore, we increased the number of sample paths to $N=2\times 10^8$ independent paths with $k_{max}=100,000$ iterations for accurate results. The offset parameter $K$ in the step size was set as $1000$.

\subsection{Simulation details for Figure \ref{fig:slope_dist}}
These simulations were also conducted using JAX on Tensor Processing Units (TPUs). All the parameters for the SA iteration remains identical to the previous experiments, with the exception that the offset hyperparameter $K$ was set to $1000$. The scale parameters for Laplace distribution is set as $1$ while the support for uniform distribution is in $[-1, 1]$.

\section{Auxiliary Lemmas}\label{sec:aux_lemmas}

\begin{lemma}\label{lem:inv_bound}
    Let $\{\Upsilon_k^{(\alpha, \xi)}\}_{k\geq 0}$ be given by 
    \begin{align*}
        \Upsilon_{k+1}^{(\alpha, \xi)}=(I + \alpha_k J^{(\alpha, \xi)}_k)\Upsilon_k^{(\alpha, \xi)}(I + \alpha_k J^{(\alpha, \xi)}_k)^T+\alpha_k\Sigma_b
    \end{align*}
    where $\Upsilon_0^{(\alpha, \xi)}$ is arbitrary. Then, we have the following bounds for all $k\geq 1$:
    \begin{enumerate}
        \item For $\xi=0$, we have
        \begin{align*}
            \|\Upsilon_k^{(\alpha, 0)}\|^{-1/2}\leq \alpha^{-1/2}\|\Sigma_b\|^{-1/2}.
        \end{align*}
        \item For $\xi\in (0, 1)$, we have
        \begin{align*}
            \|\Upsilon_k^{(\alpha, \xi)}\|^{-1/2}\leq \left(\max\left(\alpha_k\|\Sigma_b\|, \lambda_{min}(\Sigma)-\|\Upsilon_k^{(\alpha, \xi)}-\Sigma\|\right)\right)^{-1/2}.
        \end{align*}
        \item For $\xi=1$, we have
        \begin{align*}
            \|\Upsilon_k^{(\alpha, 1)}\|^{-1/2}\leq \left(\max\left(\alpha_k\|\Sigma_b\|, \lambda_{min}(\Sigma^{(\alpha)})-\|\Upsilon_k^{(\alpha, 1)}-\Sigma^{(\alpha)}\|\right)\right)^{-1/2}.
        \end{align*}
    \end{enumerate}
    where $\lambda_{min}(\cdot)$ denotes the smallest eigenvalue of the corresponding matrix.
\end{lemma}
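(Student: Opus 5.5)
The plan is to lower bound the operator norm $\|\Upsilon_k^{(\alpha,\xi)}\|$ directly, since $\|\Upsilon_k^{(\alpha,\xi)}\|^{-1/2}$ is decreasing in $\|\Upsilon_k^{(\alpha,\xi)}\|$. I read ``$\Upsilon_0^{(\alpha,\xi)}$ arbitrary'' as an arbitrary positive semidefinite matrix. This is the case in every application, where $\Upsilon_0$ is $\E[y_0y_0^T]$ or $0$, and I will state this convention explicitly. By induction on the recursion, each $\Upsilon_k^{(\alpha,\xi)}$ is then symmetric positive semidefinite. Indeed, a congruence $(I+\alpha_kJ_k)\,\cdot\,(I+\alpha_kJ_k)^T$ preserves positive semidefiniteness, and adding $\alpha_k\Sigma_b\succeq 0$ preserves it as well. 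Consequently $\|\Upsilon_k^{(\alpha,\xi)}\|=\lambda_{max}(\Upsilon_k^{(\alpha,\xi)})$. For each case I will produce two lower bounds on this quantity and then take their maximum.

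\textbf{First bound (noise injection).} For $k\geq 1$, positive semidefiniteness of the congruence term gives $\Upsilon_k^{(\alpha,\xi)}\succeq \alpha_{k-1}\Sigma_b$ in the Loewner order. Testing against a unit top eigenvector $\mathfrak{u}$ of $\Sigma_b$ yields
\begin{align*}
\|\Upsilon_k^{(\alpha,\xi)}\|\geq \mathfrak{u}^T\Upsilon_k^{(\alpha,\xi)}\mathfrak{u}\geq \alpha_{k-1}\|\Sigma_b\|\geq \alpha_k\|\Sigma_b\|,
\end{align*}
where the last step uses that $\alpha_k=\alpha/(k+K)^\xi$ is nonincreasing. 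For $\xi=0$ we have $\alpha_k=\alpha$, so $\|\Upsilon_k^{(\alpha,0)}\|\geq\alpha\|\Sigma_b\|$. Inverting and taking square roots gives statement 1 immediately.

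\textbf{Second bound (closeness to the limit).} For $\xi\in(0,1)$, Weyl's inequality for symmetric matrices gives
\begin{align*}
\lambda_{max}(\Upsilon_k^{(\alpha,\xi)})\geq \lambda_{max}(\Sigma)-\|\Upsilon_k^{(\alpha,\xi)}-\Sigma\|\geq \lambda_{min}(\Sigma)-\|\Upsilon_k^{(\alpha,\xi)}-\Sigma\|.
\end{align*}
Combining this with the first bound shows that $\|\Upsilon_k^{(\alpha,\xi)}\|\geq \max\bigl(\alpha_k\|\Sigma_b\|,\ \lambda_{min}(\Sigma)-\|\Upsilon_k^{(\alpha,\xi)}-\Sigma\|\bigr)$. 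This maximum is strictly positive because $\alpha_k\|\Sigma_b\|>0$, assuming $\Sigma_b\neq0$. Since $t\mapsto t^{-1/2}$ is decreasing on $(0,\infty)$, statement 2 follows. Statement 3 is identical, with $\Sigma$ replaced by $\Sigma^{(\alpha)}$.

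\textbf{Main obstacle.} There is essentially no analytic difficulty here; the only delicate points are bookkeeping. First, positive semidefiniteness of $\Upsilon_0^{(\alpha,\xi)}$ must be assumed or noted. Without it, the operator norm and $\lambda_{max}$ need not coincide, and the injection bound can fail. Second, the statement requires $k\geq 1$ so that at least one noise injection has occurred. Third, the index shift $\alpha_{k-1}\geq\alpha_k$ must be handled. The usefulness of the second branch for large $k$ relies on $\|\Upsilon_k-\Sigma\|\to0$, but that is established separately (Lemma \ref{lem:matrix_theta_bound}) and is not needed for the inequality itself.
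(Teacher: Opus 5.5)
Your argument is correct. It uses the same two ingredients as the paper: the noise-injection bound $\Upsilon_k^{(\alpha,\xi)}\succeq\alpha_{k-1}\Sigma_b$, and Weyl's inequality against the limit $\Sigma$ (resp.\ $\Sigma^{(\alpha)}$), followed by taking the maximum and inverting.

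The one real difference is which eigenvalue gets controlled. You work with $\lambda_{\max}(\Upsilon_k^{(\alpha,\xi)})=\|\Upsilon_k^{(\alpha,\xi)}\|$ throughout and apply Weyl to the top eigenvalue. This matches the statement exactly as written, including the $\|\Sigma_b\|$ in the first branch, which is legitimate only for the top eigenvalue. The paper instead applies Weyl to $\lambda_{\min}(\Upsilon_k^{(\alpha,\xi)})$, so its second branch lower-bounds the smallest eigenvalue, i.e.\ upper-bounds $\|(\Upsilon_k^{(\alpha,\xi)})^{-1/2}\|$.

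That stronger form is what the lemma is actually invoked for. The constant $\kappa^{(\alpha,\xi)}$ involves $\|(\tilde{\Sigma}_k^{(\alpha,\xi)})^{-1/2}\|=\lambda_{\min}(\tilde{\Sigma}_k^{(\alpha,\xi)})^{-1/2}$, and your $\lambda_{\max}$ bound does not supply it. In that form, however, the first branch has to read $\alpha_{k-1}\lambda_{\min}(\Sigma_b)$ rather than $\alpha_k\|\Sigma_b\|$, a point the paper's final display glosses over.

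Your positive-semidefiniteness caveat on $\Upsilon_0^{(\alpha,\xi)}$ is genuinely necessary. Taking $\Upsilon_0^{(\alpha,\xi)}=-\alpha_0(I+\alpha_0J_0^{(\alpha,\xi)})^{-1}\Sigma_b(I+\alpha_0J_0^{(\alpha,\xi)})^{-T}$ gives $\Upsilon_1^{(\alpha,\xi)}=0$. The paper uses this assumption silently when it asserts the congruence term is nonnegative.
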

\begin{proof}
    Note that $\Sigma_b$ by definition is a positive definite matrix. Thus, for any $x\in \mathbb{R}^d/\{0\}$, we have
    \begin{align*}
        x^T\Upsilon_{k+1}^{(\alpha, \xi)}x=x^T(I + \alpha_k J^{(\alpha, \xi)}_k)\Upsilon_k^{(\alpha, \xi)}(I + \alpha_k J^{(\alpha, \xi)}_k)^Tx+\alpha_kx^T\Sigma_bx.
    \end{align*}
    \begin{enumerate}
        \item $\xi=0$: Note that $x^T(I + \alpha J_F)\Upsilon_k^{(\alpha, 0)}(I + \alpha J_F)^Tx\geq 0$. Thus, we have
        \begin{align*}
            x^T\Upsilon_{k+1}^{(\alpha, 0)}x\geq \alpha x^T\Sigma_bx>0.
        \end{align*}
        Since $\Upsilon_k^{(\alpha, 0)}$ is symmetric by construction, we get
        \begin{align*}
            \|\Upsilon_k^{(\alpha, 0)}\|^{-1/2}\leq \alpha^{-1/2}\|\Sigma_b\|^{-1/2}.
        \end{align*}
        \item $\xi\in (0, 1)$: For this regime, we can use the previous approach to obtain an upper bound on the inverse, however recall that for $\xi>0$, $\alpha_k\downarrow 0$. Thus, the upper bound blows to infinity in the limit. To circumvent this, we will use a corollary due to Weyl's inequality and the fact that $\Upsilon_k^{(\alpha, \xi)}\to \Sigma$ in the following manner:
        \begin{align*}
            |\lambda_{min}(\Upsilon_k^{(\alpha, \xi)})-\lambda_{min}(\Sigma)|\leq \|\Upsilon_k^{(\alpha, \xi)}-\Sigma\|\\
            \implies \lambda_{min}(\Upsilon_k^{(\alpha, \xi)})\geq \lambda_{min}(\Sigma)-\|\Upsilon_k^{(\alpha, \xi)}-\Sigma\|.
        \end{align*}
         We also have the following lower bound from the analysis in the previous case:
        \begin{align*}
            \lambda_{min}(\Upsilon_k^{(\alpha, \xi)})\geq \alpha_k\lambda_{min}(\Sigma).
        \end{align*}
        Combining both the lower bounds, we get
        \begin{align*}
            \lambda_{min}(\Upsilon_k^{(\alpha, \xi)})&\geq \max\left(\alpha_k\lambda_{min}(\Sigma), \lambda_{min}(\Sigma)-\|\Upsilon_k^{(\alpha, \xi)}-\Sigma\|\right)\\
            \implies \|\Upsilon_k^{(\alpha, \xi)}\|^{-1/2}&\leq \left(\max\left(\alpha_k\|\Sigma_b\|, \lambda_{min}(\Sigma)-\|\Upsilon_k^{(\alpha, \xi)}-\Sigma\|\right)\right)^{-1/2}.
        \end{align*}
        \item $\xi=1$: The bound for $\xi=1$ can be obtained by following the exact analysis as in the previous case and replacing $\Sigma$ with $\Sigma^{(\alpha)}$. 
    \end{enumerate}
\end{proof}

\begin{lemma}\label{lem:wass-2}
    Consider two centered Gaussian distributions $\mathcal{N}(0, \Sigma_1)$ and $\mathcal{N}(0, \Sigma_2)$ where $\Sigma_1, \Sigma_2\in \mathbb{R}^{d\times d}$. Then, the Wasserstein-2 distance between them is bounded by
    \begin{align*}
        d_{\mathcal{W}_2}(\Sigma_1^{1/2}Z, \Sigma_2^{1/2}Z)&\leq \frac{\sqrt{d}}{\sqrt{\lambda_{min}(\Sigma_1)}+\sqrt{\lambda_{min}(\Sigma_2)}}\|\Sigma_1-\Sigma_2\|.
    \end{align*}
    where $\lambda_{min}(\cdot)$ is the minimum eigenvalue of the corresponding matrix.
\end{lemma}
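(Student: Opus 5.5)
The plan is to compute the Wasserstein-2 distance between two centered Gaussians exactly, then compare square roots of positive definite matrices via a Sylvester-type perturbation bound. With the synchronous coupling $X=\Sigma_1^{1/2}Z$, $Y=\Sigma_2^{1/2}Z$ (same $Z$) we get the upper bound
\begin{align*}
d_{\mathcal{W}_2}(\Sigma_1^{1/2}Z, \Sigma_2^{1/2}Z)^2\leq \E[\|(\Sigma_1^{1/2}-\Sigma_2^{1/2})Z\|^2]=\|\Sigma_1^{1/2}-\Sigma_2^{1/2}\|_F^2 .
\end{align*}
This coupling need not be optimal, but it suffices for an upper bound. Since $\|A\|_F\leq \sqrt{d}\|A\|$ for any $d\times d$ matrix $A$, it remains to show
\begin{align*}
\|\Sigma_1^{1/2}-\Sigma_2^{1/2}\|\leq \frac{\|\Sigma_1-\Sigma_2\|}{\sqrt{\lambda_{min}(\Sigma_1)}+\sqrt{\lambda_{min}(\Sigma_2)}}.
\end{align*}

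For this operator-norm bound, set $A=\Sigma_1^{1/2}$, $B=\Sigma_2^{1/2}$ (both symmetric positive definite) and $D=A-B$, which is symmetric. Then
\begin{align*}
\Sigma_1-\Sigma_2=A^2-B^2=AD+DB .
\end{align*}
Because $D$ is symmetric, pick a unit eigenvector $v$ with $Dv=\lambda v$ and $|\lambda|=\|D\|$. Then
\begin{align*}
v^T(\Sigma_1-\Sigma_2)v=v^TADv+v^TDBv=\lambda\, v^TAv+\lambda\, v^TBv .
\end{align*}
Taking absolute values and using $v^TAv\geq \lambda_{min}(A)=\sqrt{\lambda_{min}(\Sigma_1)}$ and likewise for $B$, we obtain
\begin{align*}
\|\Sigma_1-\Sigma_2\|\geq |v^T(\Sigma_1-\Sigma_2)v|\geq \|D\|\left(\sqrt{\lambda_{min}(\Sigma_1)}+\sqrt{\lambda_{min}(\Sigma_2)}\right).
\end{align*}
Rearranging and combining with the first step gives the claim.

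The only delicate step is this matrix square-root perturbation inequality. The eigenvector trick handles it cleanly precisely because $D$ is symmetric, which in turn relies on choosing the symmetric positive definite square roots. The bound requires at least one of the covariances to be nonsingular, otherwise the denominator vanishes; the lemma is applied throughout the paper only with positive definite matrices, so this is harmless.
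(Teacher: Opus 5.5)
Your proof is correct, and both of its steps differ from the paper's.

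\textbf{First step.} The paper starts from the closed-form formula $d_{\mathcal{W}_2}^2=\mathrm{Tr}\bigl(\Sigma_1+\Sigma_2-2(\Sigma_2^{1/2}\Sigma_1\Sigma_2^{1/2})^{1/2}\bigr)$. It then applies the trace inequality $\mathrm{Tr}((A^TA)^{1/2})\geq \mathrm{Tr}(A)$ with $A=\Sigma_2^{1/2}\Sigma_1^{1/2}$ to reach $d_{\mathcal{W}_2}^2\leq \|\Sigma_1^{1/2}-\Sigma_2^{1/2}\|_F^2$. Your synchronous coupling gives exactly the same inequality in one line, with no spectral facts needed.

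\textbf{Second step.} The paper vectorizes the same Sylvester identity $\Sigma_1-\Sigma_2=AD+DB$ into $(I\otimes A+B\otimes I)\,\mathrm{vec}(D)$. It uses that this Kronecker sum has smallest eigenvalue $\lambda_{min}(A)+\lambda_{min}(B)$, which gives the Frobenius-norm bound $\|D\|_F\leq \|\Sigma_1-\Sigma_2\|_F/(\sqrt{\lambda_{min}(\Sigma_1)}+\sqrt{\lambda_{min}(\Sigma_2)})$. Only then does it pay the $\sqrt{d}$, via $\|\Sigma_1-\Sigma_2\|_F\leq \sqrt{d}\|\Sigma_1-\Sigma_2\|$. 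You instead pay $\sqrt{d}$ on $D$ and prove the operator-norm version with the eigenvector trick. That is more elementary: it needs no vectorization or Kronecker spectra.

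\textbf{Trade-off.} The paper's route also yields the intermediate estimate $d_{\mathcal{W}_2}\leq \|\Sigma_1-\Sigma_2\|_F/(\sqrt{\lambda_{min}(\Sigma_1)}+\sqrt{\lambda_{min}(\Sigma_2)})$. This is never worse than the stated bound, and it is dimension-free when $\Sigma_1-\Sigma_2$ has low rank. Your chain cannot recover it from the operator-norm inequality alone. If you wanted it, you could keep your framework and replace the eigenvector step with $\langle D, AD+DB\rangle_F=\mathrm{Tr}(DAD)+\mathrm{Tr}(DBD)\geq (\lambda_{min}(A)+\lambda_{min}(B))\|D\|_F^2$, followed by Cauchy--Schwarz. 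For the lemma as stated, the two arguments give the same bound.
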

\begin{proof}
    The Wasserstein-2 distance between two mean-zero Gaussian is given by
    \begin{align*}
        d_{\mathcal{W}_2}^2(\Sigma_1^{1/2}Z, \Sigma_2^{1/2}Z)= \mathrm{Tr}\left(\Sigma_2+\Sigma_k^{(\alpha, \xi)}-2\left(\Sigma_2^{1/2}\Sigma_k^{(\alpha, \xi)}\Sigma_2^{1/2}\right)^{1/2}\right).
    \end{align*}
    Using standard linear algebra properties, we can easily show that $\mathrm{Tr}\left((A^TA)^{1/2}\right)\geq \mathrm{Tr}(A)$ for any square matrix $A$. Setting $A=(\Sigma_2)^{1/2}\Sigma_1^{1/2}$, we obtain
    \begin{align*}
       \mathrm{Tr}\left(\left(\Sigma_2^{1/2}\Sigma_1\Sigma_2^{1/2}\right)^{1/2}\right)\geq \mathrm{Tr}(\Sigma_2^{1/2}\Sigma_1^{1/2}). 
    \end{align*}
    Furthermore, for any two square matrices $A$ and $B$, we have the identity
    \begin{align*}
        \|A - B\|_F^2 = \text{Tr}((A-B)^2) = \text{Tr}(A^2 + B^2 - 2AB),
    \end{align*}
    where $\|\cdot\|_F$ is the Frobenius norm. Combining the above two relations, we get
    \begin{align*}
        \mathrm{Tr}\left(\Sigma_1+\Sigma_2-2\left(\Sigma_2^{1/2}\Sigma_1\Sigma_2^{1/2}\right)^{1/2}\right)\leq \|\Sigma_1^{1/2}-\Sigma_2^{1/2}\|_F^2.
    \end{align*}
    Finally, to relate the difference of the square roots with the difference of the covariances, we use the following identity
    \begin{align*}
        \Sigma_1-\Sigma_2=\Sigma_1^{1/2}\left(\Sigma_1^{1/2}-\Sigma_2^{1/2}\right)+\left(\Sigma_1^{1/2}-\Sigma_2^{1/2}\right)\Sigma_2^{1/2}.
    \end{align*}
    Using Kronecker product notation and vectorization, we get
    \begin{align*}
        \text{vec}(\Sigma_1-\Sigma_2)=\left(I\otimes\Sigma_1^{1/2}+\Sigma_2^{1/2}\otimes I\right)\text{vec}\left(\Sigma_1^{1/2}-\Sigma_2^{1/2}\right).
    \end{align*}
    Note that Kronecker product terms are symmetric and thus the resultant matrix is symmetric. Thus, we have
    \begin{align*}
        \|\text{vec}(\Sigma_1-\Sigma_2)\|&\geq \lambda_{min}\left(\left(I\otimes\Sigma_1^{1/2}+\Sigma_2^{1/2}\otimes I\right)\right)\|\text{vec}\left(\Sigma_1^{1/2}-\Sigma_2^{1/2}\right)\|\\
        &=(\lambda_{min}(\Sigma_1^{1/2})+\lambda_{min}(\Sigma_2^{1/2}))\|\text{vec}\left(\Sigma_1^{1/2}-\Sigma_2^{1/2}\right)\|
    \end{align*}
    where the equality is obtained by noting that the eigenvectors of $I\otimes\Sigma_1^{1/2}+\Sigma_2^{1/2}\otimes I$ is given by the Kronecker product of eigenvectors of each matrices. The above relation implies
    \begin{align*}
        \|\Sigma_1^{1/2}-\Sigma_2^{1/2}\|_F&\leq \frac{1}{\sqrt{\lambda_{min}(\Sigma_1)}+\sqrt{\lambda_{min}(\Sigma_2)}}\|\Sigma_1-\Sigma_2\|_F\\
        &\leq \frac{\sqrt{d}}{\sqrt{\lambda_{min}(\Sigma_1)}+\sqrt{\lambda_{min}(\Sigma_2)}}\|\Sigma_1-\Sigma_2\|.
    \end{align*}
    The above relation leads us to
    \begin{align*}
         d_{\mathcal{W}_2}(\Sigma_1^{1/2}Z, \Sigma_2^{1/2}Z)&\leq \frac{\sqrt{d}}{\sqrt{\lambda_{min}(\Sigma_1)}+\sqrt{\lambda_{min}(\Sigma_2)}}\|\Sigma_1-\Sigma_2\|.
    \end{align*}
\end{proof}

\begin{lemma}\label{lem:rec_sol}
    Let $\rho_1,\rho_2\geq 0$. Consider the sequence $\{u_k\}_{k\geq 0}$ satisfying the following bound:
    \begin{align*}
        u_{k+1}\leq (1-\mu_1\alpha_k)u_k+\mu_2\mathbbm{1}_{\xi<1}\alpha_k\exp{(-\frac{\rho_1\alpha}{1-\xi}(k+K)^{1-\xi})}+\mu_3\alpha_k^{1+\rho_2}.
    \end{align*}
    where $\mu_1, \mu_2, \mu_3, u_0\geq 0$ and $\alpha_k$ satisfies Assumption \ref{assump:step-size}. Then, the following relations hold:
    \begin{enumerate}
        \item For $\xi=0$ and $\mu_1\alpha\leq 1$, we have
        \begin{align*}
            u_k\leq u_0e^{-\mu_1\alpha k}+\frac{\mu_2e^{\mu_1\alpha}}{\mu_1-\rho_1}\left(e^{-\rho_1\alpha k}-e^{-\mu_1\alpha k}\right)+\frac{\mu_3\alpha^{\rho_2}}{\mu_1}.
        \end{align*}
        \item For $\xi\in(0,1)$ and $K\geq (1/(\mu_1\alpha))^{1/(1-\xi)}$, we have
        \begin{align*}
            u_k&\leq u_0e^{-\frac{\mu_1\alpha}{1-\xi}\left((k+K)^{1-\xi}-K^{1-\xi}\right)}+\frac{\mu_2 e^{\mu_1\alpha}}{\mu_1-\rho_1} \left(e^{-\frac{\rho_1\alpha}{1-\xi}(k+K)^{1-\xi}}-e^{-\frac{\mu_1\alpha}{1-\xi}(k+K)^{1-\xi}}\right)+\frac{2\mu_3\alpha_k^{\rho_2}}{\mu_1}.
        \end{align*}
        \item For $\xi=1$, we have the following three cases:
        \begin{enumerate}
            \item[(a)] When $\mu_1\alpha>\rho_2$, we have
            \begin{align*}
                u_k\leq u_0\left(\frac{K}{k+K}\right)^{\mu_1\alpha}+\frac{\alpha\mu_3\alpha_k^{\rho_2}}{\mu_1\alpha-\rho_2}.
            \end{align*}
            \item[(b)] When $\mu_1\alpha<\rho_2$, we have
            \begin{align*}
                u_k\leq u_0\left(\frac{K}{k+K}\right)^{\mu_1\alpha}+\frac{(2\alpha)^{1+\rho_2}\mu_3\alpha_k^{\mu_1\alpha}}{\alpha^{\mu_1\alpha}(\rho_2-\mu_1\alpha)}.
            \end{align*}
            \item[(c)] When $\mu_1\alpha=\rho_2$, we have
            \begin{align*}
                u_k\leq u_0\left(\frac{K}{k+K}\right)^{\mu_1\alpha}+2^{1+\rho_2}\mu_3\alpha\log(k+K)\alpha_k^{\rho_2}.
            \end{align*}
        \end{enumerate}
    \end{enumerate}
\end{lemma}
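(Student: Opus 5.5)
Since the recursion is linear with nonnegative coefficients, I will unroll it and treat the three forcing terms separately. Set $P_{i,k}:=\prod_{l=i}^{k-1}(1-\mu_1\alpha_l)$, which lies in $[0,1]$ under the stated conditions ($\mu_1\alpha_l\le \mu_1\alpha\le1$ for $\xi=0$, and small $\alpha_l$ for large $K$ otherwise). Induction then gives
\begin{align*}
u_k\le u_0P_{0,k}+\mu_2\mathbbm{1}_{\xi<1}\sum_{i=0}^{k-1}\alpha_i e^{-\frac{\rho_1\alpha}{1-\xi}(i+K)^{1-\xi}}P_{i+1,k}+\mu_3\sum_{i=0}^{k-1}\alpha_i^{1+\rho_2}P_{i+1,k}.
\end{align*}
I will bound the products with $1-x\le e^{-x}$ and the Riemann-sum comparison $\sum_{l=i}^{k-1}\alpha_l\ge\int_{i}^{k}\alpha/(t+K)^{\xi}\,dt$, which holds because $\alpha_l$ is decreasing. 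This yields $P_{i,k}\le \exp(-\mu_1\alpha(k-i))$ for $\xi=0$, $P_{i,k}\le\exp(-\frac{\mu_1\alpha}{1-\xi}((k+K)^{1-\xi}-(i+K)^{1-\xi}))$ for $\xi\in(0,1)$, and $P_{i,k}\le ((i+K)/(k+K))^{\mu_1\alpha}$ for $\xi=1$. Taking $i=0$ gives the $u_0$ term in each case directly.

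For the middle ($\mu_2$) term, when $\xi=0$ the sum is geometric: $\mu_2\alpha\sum_i e^{-\rho_1\alpha i}e^{-\mu_1\alpha(k-1-i)}=\mu_2\alpha e^{-\mu_1\alpha(k-1)}\frac{e^{(\mu_1-\rho_1)\alpha k}-1}{e^{(\mu_1-\rho_1)\alpha}-1}$. Using $e^{x}-1\ge x$ (with the sign handled symmetrically when $\mu_1<\rho_1$), this becomes $\frac{\mu_2e^{\mu_1\alpha}}{\mu_1-\rho_1}(e^{-\rho_1\alpha k}-e^{-\mu_1\alpha k})$. For $\xi\in(0,1)$ I will change variables to $s=\frac{\alpha}{1-\xi}(t+K)^{1-\xi}$, so that $\alpha_i\,di=ds$. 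The sum becomes a Riemann sum of $e^{-\rho_1 s}e^{-\mu_1(S_k-s)}$, where $S_k=\frac{\alpha}{1-\xi}(k+K)^{1-\xi}$. Its integral is $\frac{1}{\mu_1-\rho_1}(e^{-\rho_1S_k}-e^{-\mu_1S_k})$ after dropping the lower endpoint, which only enlarges the bound. The extra factor $e^{\mu_1\alpha}$ absorbs the one-step shift between $P_{i+1,k}$ and $P_{i,k}$, since $\mu_1\alpha_i\le\mu_1\alpha$.

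For the $\mu_3$ term with $\xi=0$, the bound is $\mu_3\alpha^{1+\rho_2}\sum_j(1-\mu_1\alpha)^j\le\mu_3\alpha^{\rho_2}/\mu_1$. For $\xi\in(0,1)$ I will argue by induction that $w_k:=\frac{2\mu_3}{\mu_1}\alpha_k^{\rho_2}$ dominates the $\mu_3$-part of the sequence. This requires $(1-\mu_1\alpha_k)w_k+\mu_3\alpha_k^{1+\rho_2}\le w_{k+1}$, which is equivalent to $\frac{2}{\mu_1}(\alpha_k^{\rho_2}-\alpha_{k+1}^{\rho_2})\le\alpha_k^{1+\rho_2}$. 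By the mean value theorem the left side is $O(\rho_2\xi\alpha_k^{\rho_2}/(k+K))$, so it suffices that $1/(k+K)\lesssim\mu_1\alpha_k$, i.e.\ $(k+K)^{1-\xi}\gtrsim 1/(\mu_1\alpha)$. This is exactly the hypothesis $K\ge(1/(\mu_1\alpha))^{1/(1-\xi)}$, up to constants that I will have to track carefully. This is the step where the factor $2$ and the precise $K$ condition must be checked, and I expect it to be the most delicate part.

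For $\xi=1$, the sum equals $\mu_3\alpha^{1+\rho_2}(k+K)^{-\mu_1\alpha}\sum_i (i+K)^{-1-\rho_2}(i+1+K)^{\mu_1\alpha}$. Since $(i+1+K)\le 2(i+K)$, the summand is at most $2^{\mu_1\alpha}(i+K)^{\mu_1\alpha-1-\rho_2}$, and I compare the sum with an integral in three cases. In case (a), $\mu_1\alpha>\rho_2$, the integral is $\approx(k+K)^{\mu_1\alpha-\rho_2}/(\mu_1\alpha-\rho_2)$, giving $\alpha_k^{\rho_2}\alpha/(\mu_1\alpha-\rho_2)$. Here I will need a sharper induction, again with $w_k=c\,\alpha_k^{\rho_2}$, to avoid the $2^{\mu_1\alpha}$ factor; the condition reduces to $\rho_2\le\mu_1\alpha-\alpha/c$. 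In case (b) the sum converges, giving $(2\alpha)^{1+\rho_2}\ldots\alpha_k^{\mu_1\alpha}/(\rho_2-\mu_1\alpha)$. In case (c) it gives $\log(k+K)$. Matching the exact constants stated in each case is the main bookkeeping obstacle.
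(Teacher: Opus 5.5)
Your plan follows the paper's proof essentially step for step. The shared steps are: the same unrolling into a $u_0$-, $\mu_2$- and $\mu_3$-part; $1-x\le e^{-x}$ plus sum--integral comparison for the products; a geometric series for $\xi=0$; and, for the $\mu_2$-part when $\xi\in(0,1)$, the substitution $t\propto(x+K)^{1-\xi}$, dropping the lower limit and absorbing the one-step shift into $e^{\mu_1\alpha}$. The $\mu_3$-parts also match: the induction $v_k\le 2\alpha_k^{\rho_2}/\mu_1$ for $\xi\in(0,1)$, the induction with $c=\alpha/(\mu_1\alpha-\rho_2)$ in case 3(a), and direct sum--integral comparison in 3(b),(c). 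In 3(b),(c), shifting the product rather than $\alpha_j^{1+\rho_2}$ gives you a factor $2^{\mu_1\alpha}$, slightly better than the paper's $2^{1+\rho_2}$.

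Two pieces of the bookkeeping you deferred do not come out as stated, and the paper glosses over both.

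First, for $\xi=0$ with $\rho_1>\mu_1$, treating the geometric ratio ``symmetrically'' only yields $1/(1-e^{-y})\le e^{y}/y$ with $y=(\rho_1-\mu_1)\alpha$. That gives a prefactor $e^{\rho_1\alpha}$ instead of $e^{\mu_1\alpha}$. In fact your displayed sum equals $\mu_2\alpha$ at $k=1$, which exceeds the target $\mu_2(1-e^{-y})/(\rho_1-\mu_1)$. The fix is to keep the factor $e^{-\rho_1\alpha K}$ that you discarded: since $K\ge 1$, $e^{-\rho_1\alpha K}e^{\rho_1\alpha}\le 1\le e^{\mu_1\alpha}$. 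The paper also drops $K$ and uses $1/(e^x-1)\le 1/x$, which needs $x>0$.

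Second, consider the $\xi\in(0,1)$ induction. The mean value bound $\alpha_k^{\rho_2}-\alpha_{k+1}^{\rho_2}\le \xi\rho_2\,\alpha^{\rho_2}(k+K)^{-1-\xi\rho_2}$ closes the step once $(k+K)^{1-\xi}\ge 2\xi\rho_2/(\mu_1\alpha)$. So the stated hypothesis $K\ge(1/(\mu_1\alpha))^{1/(1-\xi)}$ only suffices when $\xi\rho_2\le 1/2$. For large $\xi\rho_2$ the claimed bound is actually false. Take $\xi=1/2$, $K=2$, $\alpha=1$, $\mu_1=2^{-1/2}$, $\rho_2=7$, $\mu_3=1$, $u_0=\mu_2=0$; then one may take $u_1=\alpha_0^{8}=1/16$, whereas $2\alpha_1^{7}/\mu_1\approx 0.0605$. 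The paper's proof in fact assumes $K\ge(2/(\mu_1\alpha))^{1/(1-\xi)}$ and tacitly $\xi\rho_2\le 1$ (it replaces $\xi\rho_2$ by $1$ in the mean value bound). You should therefore carry an explicit requirement such as $K^{1-\xi}\ge 2\xi\rho_2/(\mu_1\alpha)$, rather than try to match the hypothesis verbatim.
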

\begin{proof}
    Note that we can unroll $u_k$ to re-write it as
    \begin{align*}
        u_k&=u_0\underbrace{\prod_{j=0}^{k-1}(1-\mu_1\alpha_j)}_{T_1}+\mu_3\underbrace{\sum_{j=0}^{k-1}\alpha_j^{1+\rho_2}\prod_{l=j+1}^{k-1}(1-\mu_1\alpha_l)}_{T_2}\\
        &~+\mu_2\underbrace{\sum_{j=0}^{k-1}\alpha_j\exp{\left(-\frac{\rho_1\alpha}{1-\xi}\left((j+K)^{1-\xi}\right)\right)}\prod_{l=j+1}^{k-1}(1-\mu_1\alpha_l)}_{T_3}.
    \end{align*}
    Now we will analyze these terms for each case separately. 
    \begin{enumerate}
        \item For $T_1$, we have
        \begin{align*}
            T_1=(1-\mu_1\alpha)^k\leq e^{-\mu_1\alpha k}.\tag{$e^x\geq 1+x$}
        \end{align*}
        For $T_2$, we have
        \begin{align*}
            T_2=\alpha^{1+\rho_2}\sum_{j=0}^{k-1}(1-\mu_1\alpha)^{k-j-1}\leq \frac{\alpha^{\rho_2}}{\mu_1}.
        \end{align*}
        For $T_3$, we have
        \begin{align*}
            T_3&=\alpha\sum_{j=0}^{k-1}e^{-\rho_1\alpha j}(1-\mu_1\alpha)^{k-j-1}\\
            &\leq \alpha e^{-\mu_1\alpha(k-1)}\sum_{j=0}^{k-1}e^{(\mu_1-\rho_1)\alpha j}\tag{$1-x\leq e^{-x}$}\\
            &=\alpha e^{-\mu_1\alpha(k-1)}\left(\frac{e^{(\mu_1-\rho_1)\alpha k}-1}{e^{(\mu_1-\rho_1)\alpha}-1}\right)\\
            &\leq e^{\mu_1\alpha}\left(\frac{e^{-\rho_1\alpha k}-e^{-\mu_1\alpha k}}{\mu_1-\rho_1}\right)\tag{$\frac{1}{e^x-1}\leq \frac{1}{x}$}.
        \end{align*}
        \item Using the expression for $\alpha_k$ and the fact that $e^x\geq 1+x$, we have
        \begin{align*}
            T_1=\prod_{j=0}^{k-1}\left(1-\mu_1\frac{\alpha}{(k+K)^\xi}\right)\leq \exp\left(-\mu_1\alpha\sum_{j=0}^{k-1}\frac{1}{(k+K)^{\xi}}\right)
        \end{align*}
        Since the function $h(x)=1/(x+K)^{\xi}$ is non-increasing, we use the inequality $\int_a^{b+1}h(x)dx\leq \sum_{j=a}^bh(j)\leq \int_{a-1}^bh(x)dx$, we obtain
        \begin{align*}
            T_1\leq \exp\left(-\frac{\mu_1\alpha}{1-\xi}\left((k+K)^{1-\xi}-K^{1-\xi}\right)\right).
        \end{align*}
        For $T_2$, we define $\{v_k\}_{k\geq0}$ as
        \begin{align*}
            v_0=0;~~~v_{k+1}=(1-\mu_1\alpha_k)v_k+\alpha_k^{1+\rho_2}.
        \end{align*}
        Clearly, $v_k=\sum_{j=0}^{k-1}\alpha_j^{1+\rho_2}\prod_{l=j+1}^{k-1}(1-\mu_1\alpha_l)=T_2$. We will use induction to show that for $K\geq (2/(\mu_1\alpha))^{1/(1-\xi)}$, we have that $v_k\leq 2\alpha_k^{\rho_2}/\mu_1$. The base case, $v_0=0$ satisfies the condition by construction. Suppose that the statement is true for $k$. Then, for $v_{k+1}$, we have
        \begin{align*}
            \frac{2\alpha_{k+1}^{\rho_2}}{\mu_1}-v_{k+1}&=\frac{2\alpha_{k+1}^{\rho_2}}{\mu_1}-(1-\mu_1\alpha_k)v_k-\alpha_k^{1+\rho_2}\\
            &\geq \frac{2\alpha_{k+1}^{\rho_2}}{\mu_1}-(1-\mu_1\alpha_k)\frac{2\alpha_k^{\rho_2}}{\mu_1}-\alpha_k^{1+\rho_2}\\
            &=\frac{2\alpha_{k+1}^{\rho_2}}{\mu_1}-\frac{2\alpha_k^{\rho_2}}{\mu_1}+\alpha_k^{1+\rho_2}\\
            &=\frac{2\alpha^{\rho_2}}{\mu_1}\left(\frac{1}{(k+K+1)^{\xi\rho_2}}-\frac{1}{(k+K)^{\xi\rho_2}}+\frac{\mu_1\alpha}{2}\frac{1}{(k+K)^{\xi(1+\rho)}}\right)\\
            &\geq \frac{2\alpha^{\rho}}{\mu_1}\left(-\frac{1}{(k+K)^{1+\xi\rho_2}}+\frac{\mu_1\alpha}{2}\frac{1}{(k+K)^{\xi(1+\rho_2)}}\right)\tag{Lemma \ref{lem:step-size_prop}}\\
            &=\frac{2\alpha^{\rho_2}}{\mu_1}\frac{1}{(k+K)^{\xi\rho_2}}\left(-\frac{1}{(k+K)}+\frac{\mu_1\alpha}{2}\frac{1}{(k+K)^{\xi}}\right)\\
            &\geq 0.\tag{$K\geq (2/(\mu_1\alpha))^{1/(1-\xi)}$}
        \end{align*}
        For $T_3$, we using the integral bound to get
        \begin{align*}
            T_3&\leq\sum_{j=0}^{k-1}\alpha_je^{-\frac{\rho_1\alpha}{1-\xi}(j+K)^{1-\xi}}e^{-\frac{\mu_1\alpha}{1-\xi}\left((k+K)^{1-\xi}-(j+1+K)^{1-\xi}\right)}\\
            &=e^{-\frac{\mu_1\alpha}{1-\xi}(k+K)^{1-\xi}}\sum_{j=0}^{k-1}\alpha_je^{-\frac{\rho_1\alpha}{1-\xi}(j+K)^{1-\xi}}e^{\frac{\mu_1\alpha}{1-\xi}(j+1+K)^{1-\xi}}.
        \end{align*}
        Using the fact $\left(\frac{j+1+K}{j+K}\right)^{1-\xi}\leq 1+\frac{1-\xi}{j+K}$, we get
        \begin{align*}
            T_3&\leq \alpha e^{-\frac{\mu_1\alpha}{1-\xi}(k+K)^{1-\xi}}e^{\frac{\mu_1\alpha}{K^\xi}}\sum_{j=0}^{k-1}\frac{1}{(j+K)^\xi}e^{\left(\mu_1-\rho_1\right)\frac{\alpha}{1-\xi}(j+K)^{1-\xi}}.
        \end{align*}
        Using the integral bound for the sum, we obtain
        \begin{align*}
            T_3&\leq \alpha e^{-\frac{\mu_1\alpha}{1-\xi}(k+K)^{1-\xi}}e^{\frac{\mu_1\alpha}{K^\xi}}\int_{K-1}^{k+K}\frac{1}{x^\xi}e^{\left(\mu_1-\rho_1\right)\frac{\alpha}{1-\xi}x^{1-\xi}}dx\\
            &\leq \frac{\alpha}{1-\xi} e^{-\frac{\mu_1\alpha}{1-\xi}(k+K)^{1-\xi}}e^{\mu_1\alpha}\int_{(K-1)^{1-\xi}}^{(k+K)^{1-\xi}}e^{\left(\mu_1-\rho_1\right)\frac{\alpha}{1-\xi}t}dt\tag{$t=x^{1-\xi}$}\\
            &\leq \frac{\alpha}{1-\xi} e^{-\frac{\mu_1\alpha}{1-\xi}(k+K)^{1-\xi}}e^{\mu_1\alpha}\int_{0}^{(k+K)^{1-\xi}}e^{\left(\mu_1-\rho_1\right)\frac{\alpha}{1-\xi}t}dt\\
            &= \frac{ e^{\mu_1\alpha}}{\mu_1-\rho_1} \left(e^{-\frac{\rho_1\alpha}{1-\xi}(k+K)^{1-\xi}}-e^{-\frac{\mu_1\alpha}{1-\xi}(k+K)^{1-\xi}}\right).
        \end{align*}
        
        Combining the bounds, we have the claim.
        \item In this case, $T_3=0$. For $T_1$, we proceed in a similar fashion as in the previous part.
        \begin{align*}
            T_1=\prod_{j=0}^{k-1}\left(1-\mu_1\frac{\alpha}{(k+K)}\right)\leq \exp\left(-\mu_1\alpha\sum_{j=0}^{k-1}\frac{1}{(k+K)}\right).
        \end{align*}
        Bounding the summation with the integral, gives us
        \begin{align*}
            T_1\leq \left(\frac{K}{k+K}\right)^{\mu_1\alpha}.
        \end{align*}
        For $T_2$, we split the analysis into three cases:
        \begin{enumerate}
            \item[(a)] $\mu_1\alpha>\rho_2:$ We use exact same steps we used for induction in $\xi\in (0,1)$ case to obtain $T_2\leq \alpha\alpha_k^{\rho_2}/(\mu_1\alpha-\rho_2)$ for $k\geq 0$.
            \item[(b)] $\mu_1\alpha<\rho_2:$ For this case, we take a different approach. We use the integral bound to get
            \begin{align*}
                T_2&\leq \sum_{j=0}^{k-1}\alpha_j^{1+\rho_2}\exp\left(-\mu_1\alpha\sum_{l=j+1}^{k-1}\frac{1}{(l+K)}\right)\\
                &\leq \alpha^{1+\rho_2}\sum_{j=0}^{k-1}\frac{1}{(j+K)^{1+\rho_2}}\left(\frac{j+1+K}{k+K}\right)^{\mu_1\alpha}\\
                &\leq \frac{(2\alpha)^{1+\rho_2}}{(k+K)^{\mu_1\alpha}}\sum_{j=0}^{k-1}\frac{1}{(j+1+K)^{1+\rho_2-\mu_1\alpha}}\tag{$\left(1+\frac{1}{j+K}\right)^{1+\rho_2}\leq 2^{1+\rho_2}$}\\
                &\leq \frac{(2\alpha)^{1+\rho_2}}{\rho_2-\mu_1\alpha}\frac{1}{(k+K)^{\mu_1\alpha}}.
            \end{align*}
            \item[(c)] $\mu_1\alpha=\rho_2:$ For the final case, we again use the integral bound from previous case to get
            \begin{align*}
                T_2&\leq \frac{(2\alpha)^{1+\rho_2}}{(k+K)^{\rho_2}}\sum_{j=0}^{k-1}\frac{1}{j+1+K}\\
                &\leq \frac{(2\alpha)^{1+\rho_2}\log(k+K)}{(k+K)^{\rho_2}}.
            \end{align*}
        \end{enumerate}
        
        \end{enumerate}
\end{proof}

\begin{lemma}\label{lem:step-size_prop}
Let $\rho\geq 0$. Then, step-size sequence in Assumption \ref{assump:step-size} has following properties:
    \begin{align*}
        \alpha_k\leq \alpha_{k-1};&~~\alpha_{k-1}\leq 2\alpha_k;~~\frac{1}{(k+K)^{\rho}}-\frac{1}{(k+K+1)^{\rho}}\leq \frac{\rho}{(k+K)^{1+\rho}};\\
        &\left|\frac{\sqrt{\alpha_k}}{\sqrt{\alpha_{k+1}}}-1-\frac{\xi}{2(k+K)}\right|\leq \frac{\xi}{4}\left(1-\frac{\xi}{2}\right)\frac{1}{(k+K)^2}.
    \end{align*}
\end{lemma}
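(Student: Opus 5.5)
The four properties of Lemma \ref{lem:step-size_prop} are elementary, and I would check them one by one directly from $\alpha_k=\alpha/(k+K)^\xi$ with $K\geq 2$ and $\xi\in[0,1]$. Monotonicity $\alpha_k\leq\alpha_{k-1}$ holds because $x\mapsto (x+K)^{-\xi}$ is non-increasing for $\xi\ge 0$. For $\alpha_{k-1}\leq 2\alpha_k$, I would write
\[
\frac{\alpha_{k-1}}{\alpha_k}=\left(1+\frac{1}{k-1+K}\right)^{\xi}.
\]
Since $k-1+K\geq 1$ (using $K\ge 2$, and $k\geq 1$ whenever $\alpha_{k-1}$ is defined), this ratio is at most $2^\xi\leq 2$.

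For the third inequality, set $g(x)=x^{-\rho}$, which is convex and decreasing for $\rho\ge 0$. By the mean value theorem,
\[
g(n)-g(n+1)=\rho\,\eta^{-1-\rho}\quad\text{for some }\eta\in(n,n+1).
\]
Taking $n=k+K$, this is at most $\rho\, n^{-1-\rho}$, which is the claim. For $\rho=0$ both sides vanish.

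The fourth inequality is the only one with real content, and it is where I would spend the effort. Let $n=k+K$ and $t=1/n\in(0,1/2]$. Then
\[
\frac{\sqrt{\alpha_k}}{\sqrt{\alpha_{k+1}}}=(1+t)^{\xi/2}.
\]
Set $s=\xi/2\in[0,1/2]$ and $\psi(t)=(1+t)^{s}$. Taylor's theorem with Lagrange remainder gives
\[
\psi(t)=1+st+\frac{s(s-1)}{2}(1+\theta t)^{s-2}t^2
\]
for some $\theta\in(0,1)$. Because $s-2<0$, we have $(1+\theta t)^{s-2}\leq 1$, so
\[
\bigl|\psi(t)-1-st\bigr|\leq \frac{s(1-s)}{2}t^2 .
\]
Substituting $s=\xi/2$ gives $s(1-s)/2=\frac{\xi}{4}\bigl(1-\frac{\xi}{2}\bigr)$, and $st=\xi/(2(k+K))$. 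This is exactly the stated bound.

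The only things to watch are the sign and monotonicity of the remainder factor and the index range $k\ge 1$ in the second claim; no earlier results from the paper are needed.
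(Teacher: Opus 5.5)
Your proof is correct and follows the paper's argument essentially step for step. Both use the ratio bound $(1+1/(k-1+K))^{\xi}\le 2^{\xi}$, the mean value theorem for the third inequality, and a second-order Lagrange-remainder expansion of $(1+x)^{\xi/2}$ with $(1+z)^{\xi/2-2}\le 1$ for the last one; your handling of the index range ($k\ge 1$, $K\ge 2$) is slightly cleaner than the paper's, which also writes the final step as an equality where an inequality is meant.
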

\begin{proof}
    The step-size is non-decreasing by construction. Now consider the ratio 
    \begin{align*}
        \frac{\alpha_{k-1}}{\alpha_k}&=\left(\frac{K+k}{K+k-1}\right)^{\xi}\\
        &=\left(1+\frac{1}{K+k-1}\right)^{\xi}.
    \end{align*}
    Since $k\geq 0$ we have $\frac{1}{K+k-1}\leq \frac{1}{K}\leq 1$. Combining together with the expression above, we get
    \begin{align*}
        \frac{\alpha_{k-1}}{\alpha_k}\leq 2^\xi\leq 2.\tag{$\xi\leq 1$}
    \end{align*}
    For the third part, consider the function $f(x)=\frac{1}{(k+x)^{\rho}}$ for $x\in [0,1]$ and $k\geq 0$. Using Taylor's series expansion, there exists $z\in [x, 1]$ such that we have
    \begin{align*}
        f(1)&=f(x)+(1-x)f'(z)\\
        &=f(x)-\frac{(1-x)\rho}{(k+z)^{1+\rho}}\\
        f(x)-f(1)&= \frac{(1-x)\rho}{(k+z)^{1+\rho}}\\
        &\leq \frac{(1-x)\rho}{k^{1+\rho}}\tag{$z\geq 0$}
    \end{align*}
    $x=0$, we have $\forall~k\geq 0$
    \begin{align*}
        \frac{1}{k^{\rho}}-\frac{1}{(k+1)^{\rho}}\leq \frac{\rho}{k^{1+\rho}}.
    \end{align*}
    For the final part, we use Taylor series up to second order for the function $f(x)=(1+x)^{\xi/2}$ given as
    \begin{align*}
        f(x)&=f(0)+f'(0)x+\frac{1}{2}f''(z)x^2.\tag{$z\in [0,x]$}\\
        &=1+\frac{\xi}{2}x+\frac{\xi}{4}\left(\frac{\xi}{2}-1\right)(1+z)^{\xi/2-2}x^2.
    \end{align*}
    Note $f''(z)$ is a decreasing function. Thus,
    \begin{align*}
       \left|f(x)-1-\frac{\xi}{2}x\right|&\leq  \frac{\xi}{4}\left(1-\frac{\xi}{2}\right)x^2.
    \end{align*}
    Substituting $x=\frac{1}{k+K}$ in the above expansion leads us to
    \begin{align*}
        \left|\frac{\sqrt{\alpha_k}}{\sqrt{\alpha_{k+1}}}-1-\frac{\xi}{2(k+K)}\right|&=\left|\left(1+\frac{1}{k+K}\right)^{\xi/2}-1-\frac{\xi}{2(k+K)}\right|\\
        &=\frac{\xi}{4}\left(1-\frac{\xi}{2}\right)\frac{1}{(k+K)^2}.
    \end{align*}
\end{proof}

\subsection{Proof of Lemma \ref{lem:derivative_bound}}\label{appendix:derivative_bound}
Recall that the Lemma is proven for the special case of $\Sigma_X=I$ by \cite{Gallouet2018}. Thus, we perform a change of variables given as $Y=\Sigma_X^{1/2}Z$ and define $\tilde{h}(x):=h(\Sigma_X^{1/2}x)$, where $h\in Lip_L$. Note that this implies $\tilde{h}\in Lip_{L\|\Sigma^{1/2}\|}$. Now, suppose $\tilde{f}(x)$ solves the Gaussian-Stein equation for $\tilde{h}$ with identity covariance matrix,
\begin{align*}
        \tilde{h}(x)-\mathbb{E}[\tilde{h}(Z)]&=\operatorname{Tr}(\nabla^2\tilde{f}(x))-x^T\nabla \tilde{f}(x).
\end{align*}
Now set $f(y):=\tilde{f}(\Sigma_X^{-1/2}y)$ and let $x:=\Sigma_X^{-1/2}y$, then we have 
\begin{align*}
    h(y)-\mathbb{E}[h(Y)]&=\tilde{h}(\Sigma^{-\frac{1}{2}}y)-\mathbb{E}[\tilde{h}(Z)]\\
    &= \tilde{h}(x) - \mathbb{E}[\tilde{h}(Z)]\\
    &=\operatorname{Tr}(\nabla^2\tilde{f}(x))-x^T\nabla \tilde{f}(x)\\
    &\overset{(a)}{=}\operatorname{Tr}(\Sigma_X\nabla^2f(y))-y^T\nabla f(y).
\end{align*}
The equality $(a)$ holds from the following chain rule,
\begin{align*}
    &\nabla f(y) =\Sigma_X^{-\frac{1}{2}}\nabla\tilde{f}(\Sigma_X^{-\frac{1}{2}}y)=\Sigma_X^{-\frac{1}{2}}\nabla\tilde{f}(x),\quad
    \nabla^2f(y) = (\Sigma_X^{-\frac{1}{2}})^T(\nabla^2\tilde{f}(x))\Sigma_X^{-\frac{1}{2}}
\end{align*}
The above implies that $f(y)$ is the solution to Gaussian-Stein equation for $Y\sim \mathcal{N}(0, \Sigma_X)$. 
By \cite{Gallouet2018}[Proposition 2.2], we have
    \begin{align*}
        \| \nabla^2 \tilde{f}(x)-\nabla^2 \tilde{f}(y)\|&\leq \left(\widetilde C_{1}(d)+\frac{2}{1-\beta}\right)\|x-y\|^\beta \sup_{z\in\mathbb{R}^d}\|\nabla \tilde{h}(z)\|.
    \end{align*}
    Also, note that for $\nabla^2f$ and $\nabla^2\tilde{f}$, we have
    \begin{align*}
        \| \nabla^2 f(x)-\nabla^2 f(y)\|&\leq \|\Sigma_X^{-\frac{1}{2}}\|^2 \|\nabla^2 \tilde{f}(\Sigma_X^{-\frac{1}{2}}x)-\nabla^2 \tilde{f}(\Sigma_X^{-\frac{1}{2}}y)\|.
    \end{align*}
    The above bounds lead us to the following relation:
    \begin{align*}
        \| \nabla^2 f(x)-\nabla^2 f(y)\| &\leq \|\Sigma_X^{-\frac{1}{2}}\|^2 \left(\tilde{C}_1(d)+\frac{2}{1-\beta}\right)\|\Sigma_X^{-\frac{1}{2}}(x-y)\|^\beta \|\tilde{h}\|_{Lip}\\
        &\leq \left(\tilde{C}_1(d)+\frac{2}{1-\beta}\right)L\|\Sigma_X^{\frac{1}{2}}\|\|\Sigma_X^{-\frac{1}{2}}\|^{2+\beta}\|x-y\|^\beta\\
        &\leq  \left(\tilde{C}_1(d)+\frac{2}{1-\beta}\right)L\sigma_{max}\|x-y\|^\beta.
    \end{align*}

\subsection{Lemmas for general Hurwitz matrix}
\begin{lemma}\label{lem:theta_bound}
    Let $\Theta_i = \sqrt{\alpha_i} \prod_{l = i+1}^k (I+\alpha_l J_l^{(\alpha, \xi)}) \text{ and } \Theta_k=\sqrt{\alpha_k}$. Then, we have the following bounds:
    \begin{enumerate}
        \item When $\xi=0$ and $\alpha\leq \min(1, 2\iota_V/\|J\|^2_V)$, we have
        \begin{align*}
            \sum_{i=0}^{k-1}\|\Theta_i\|^{2+\beta}\leq \left(\frac{\lambda^{max}_V}{\lambda^{min}_V}\right)^{3/2}\frac{\alpha^{\beta/2}}{\iota_V}.
        \end{align*}
        \item When $\xi\in (0,1)$ and $K$ is chosen large enough such that $K\geq (1/(\iota_V\alpha))^{1/(1-\xi)}$, $\alpha_k\leq \min(1, 2\iota_V/\|J\|^2_V)$, and $\alpha^{-1}\xi K^{\xi-1}/2\leq \min(1, \iota_V/3)$, we have
       \begin{align*}
            \sum_{i=0}^{k-1}\|\Theta_i\|^{2+\beta}\leq 2\left(\frac{\lambda^{max}_V}{\lambda^{min}_V}\right)^{3/2}\frac{\alpha_k^{\beta/2}}{\iota_V}.
        \end{align*}
        \item When $\xi=1$, $\iota_V\alpha> 1/2$, and $K$ is chosen large enough such that $\alpha_k\leq 4\iota_V/(4\|J\|_V^2+\alpha^{-2})$, we have 
        \begin{align*}
            \sum_{i=0}^{k-1}\|\Theta_i\|^{2+\beta}\leq \left(\frac{\lambda^{max}_V}{\lambda^{min}_V}\right)^{3/2}\frac{2\alpha\alpha_k^{\beta/2}}{2\iota_V\alpha-1}.
        \end{align*}
    \end{enumerate}
\end{lemma}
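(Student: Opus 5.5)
We bound $\|\Theta_i\|$ in the $V$-weighted operator norm and reduce the sum $\sum_i\|\Theta_i\|^{2+\beta}$ to the scalar recursion handled by Lemma \ref{lem:rec_sol}. Norm equivalence gives $\|\Theta_i\|\le\sqrt{\lambda^{max}_V/\lambda^{min}_V}\,\|\Theta_i\|_V$, and $\|\Theta_i\|^{2+\beta}\le(\lambda^{max}_V/\lambda^{min}_V)^{(2+\beta)/2}\|\Theta_i\|_V^{2+\beta}$, whose prefactor is at most $(\lambda^{max}_V/\lambda^{min}_V)^{3/2}$ since $\beta\le1$. This is the source of the leading constant in all three cases.

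\textbf{Contraction step.} We show $\|I+\alpha_lJ_l^{(\alpha,\xi)}\|_V\le 1-\iota_V\alpha_l$ (or a constant fraction of this) under the stated step-size conditions; this is essentially statement 1 and 2 of Lemma \ref{lem:contraction_prop}. Writing $\|(I+\alpha J)x\|_V^2=x^TVx+\alpha x^T(J^TV+VJ)x+\alpha^2x^TJ^TVJx$ and using the Lyapunov equation \eqref{eq:lyap_gen_<1_I}, the middle term equals $-\alpha\|x\|^2\le-\alpha\|x\|_V^2/\lambda_V^{max}=-4\iota_V\alpha\|x\|_V^2$. The last term is at most $\alpha^2\|J\|_V^2\|x\|_V^2$, so for $\alpha\le2\iota_V/\|J\|_V^2$ we get $\|I+\alpha J\|_V^2\le1-2\iota_V\alpha$, hence $\|I+\alpha J\|_V\le1-\iota_V\alpha$.

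The step-size conditions then cover each regime. For $\xi\in(0,1)$, the extra shift $\alpha_l\alpha^{-1}\xi/(2(l+K)^{1-\xi})\le\alpha_l\iota_V/3$ (by the condition on $K$) is absorbed by the triangle inequality, leaving a contraction factor like $1-\tfrac23\iota_V\alpha_l$. For $\xi=1$, the shift $J^{(\alpha,1)}=J+\tfrac{\alpha^{-1}}{2}I$ adds $\alpha_l\alpha^{-1}$ to the middle term, giving $\|I+\alpha_lJ^{(\alpha,1)}\|_V^2\le1-(4\iota_V-\alpha^{-1})\alpha_l+\alpha_l^2(\cdots)$. The condition $\alpha_k\le4\iota_V/(4\|J\|_V^2+\alpha^{-2})$ is designed so that the quadratic term is at most half of $4\iota_V\alpha_l$, yielding a rate of order $(2\iota_V-\alpha^{-1}/2)$; this is what produces the constant $2\iota_V\alpha-1$.

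\textbf{Recursion step.} With $q_l$ denoting the contraction factor, $\|\Theta_i\|_V^{2+\beta}\le\alpha_i^{1+\beta/2}\prod_{l=i+1}^{k}q_l^{2+\beta}\le\alpha_i^{1+\beta/2}\prod_l q_l^2$. The sum therefore equals $v_k$ for the recursion $v_{j+1}=q_j^2v_j+\alpha_j^{1+\beta/2}$ (up to the indexing of the final factor). With $q_j^2\le1-c\,\iota_V\alpha_j$ this is exactly Lemma \ref{lem:rec_sol} with $\mu_3=1$, $\rho_2=\beta/2$, $\mu_2=0$, $u_0=0$. For $\xi=0$ this gives $\alpha^{\beta/2}/\mu_1$. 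For $\xi\in(0,1)$ it gives $2\alpha_k^{\beta/2}/\mu_1$, where the requirement $K\ge(1/(\iota_V\alpha))^{1/(1-\xi)}$ matches that lemma's hypothesis. For $\xi=1$, case 3(a) applies because $\mu_1\alpha>\beta/2$ follows from $\iota_V\alpha>1/2$, giving $\alpha\alpha_k^{\beta/2}/(\mu_1\alpha-\beta/2)$. Bounding $\beta\le1$ in the denominator yields the form $2\alpha\alpha_k^{\beta/2}/(2\iota_V\alpha-1)$.

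\textbf{Main obstacle.} The delicate part is bookkeeping the exact contraction constant $\mu_1$ in each regime so that the stated constants ($1/\iota_V$, $2/\iota_V$, $2\alpha/(2\iota_V\alpha-1)$) come out. In particular, for $\xi=1$ the positive drift $\alpha^{-1}/2$ must be shown not to destroy contraction, using only $\iota_V\alpha>1/2$ together with the smallness of $\alpha_k$. If the constant does not close exactly, we would raise the contraction to power $2+\beta$ rather than $2$, i.e. use $q^{2+\beta}\le1-(2+\beta)\iota_V\alpha_l/2$, which yields the extra slack needed.
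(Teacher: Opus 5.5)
Your proposal follows the paper's proof: norm equivalence supplies the $(\lambda^{max}_V/\lambda^{min}_V)^{3/2}$ prefactor, a $V$-norm contraction of $I+\alpha_lJ_l^{(\alpha,\xi)}$ plays the role of the paper's Lemma \ref{lem:contraction_prop}, the power $2+\beta$ is dropped to $2$, and Lemma \ref{lem:rec_sol} is applied with $\mu_2=0$, $\mu_3=1$, $\rho_2=\beta/2$. The constant bookkeeping you flag closes with $\mu_1=\iota_V$ in every regime, exactly as in the paper, so no fallback is needed: $\|J\|_V\ge 2\iota_V$ forces $\iota_V\alpha_l\le 1/2$, so squaring your factor $1-\tfrac23\iota_V\alpha_l$ gives at most $1-\iota_V\alpha_l$; and for $\xi=1$ the step-size condition bounds the quadratic term by $\iota_V\alpha_l$ (a quarter, not half, of $4\iota_V\alpha_l$), leaving rate $3\iota_V-\alpha^{-1}\ge\iota_V$ whenever $\iota_V\alpha\ge 1/2$.
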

\begin{proof}
For $\xi<1$, note that $\|\Theta_i\|^{2+\beta}\leq \alpha_i^{(2+\beta)/2}(\lambda^{max}_V/\lambda^{min}_V)^{3/2}\prod_{l=i+1}^{k-1}\|I+\alpha_lJ_l^{(\alpha, \xi)}\|_V^{2+\beta}$. Recall 
    \begin{align*}
        J_l^{(\alpha, \xi)}=J+\frac{\alpha^{-1}\xi}{2(l+K)^{1-\xi}}I.
    \end{align*}
    Using the first statement of Lemma \ref{lem:contraction_prop}  by setting $\frac{\alpha^{-1}\xi}{2(l+K)^{1-\xi}}=\epsilon_2$, the bound simplifies to 
    \begin{align*}
        \|\Theta_i\|^{2+\beta}\leq \alpha_i^{(2+\beta)/2}(\lambda^{max}_V/\lambda^{min}_V)^{3/2}\prod_{l=i+1}^{k-1}(1-\iota_V\alpha_l).
    \end{align*} 
    Consider the sequence $\{v_k\}_{k\geq 0}$
    \begin{align*}
        v_0=0;~~v_{k+1}=(1-\iota_V\alpha_l)v_k+\left(\frac{\lambda^{max}_V}{\lambda^{min}_V}\right)^{3/2}\alpha_i^{1+\beta/2}.
    \end{align*}
    Clearly, $v_k\geq \sum_{i=0}^{k-1} \|\Theta_i\|^{2+\beta}$. Applying Lemma \ref{lem:rec_sol} to $v_k$ by setting $u_0=0$, $\mu_1=\iota_V$, $\mu_2=0$, $\mu_3=(\lambda^{max}_V/\lambda^{min}_V)^{3/2}$, and $\rho_2=\beta/2$, we obtain the claim.
    
    For $\xi=1$, we proceed with the same steps as before but use statement 2 of Lemma \ref{lem:contraction_prop}  to obtain the bounds.
    
\end{proof}

\begin{lemma}\label{lem:bound_sigma_k}
    Let $\{\tilde{\Sigma}_k^{(\alpha, \xi)}\}_{k\geq 0}$ be given by Eq. \eqref{eq:sigma_k_eq_pf}. Assume that for each specific choice of $\xi$, all the conditions on the step-sizes stated in Lemma \ref{lem:theta_bound} hold. Then, we have the following bounds for all $k\geq 0$:
    \begin{align*}
        \left\|\tilde{\Sigma}_k^{(\alpha, \xi)}\right\|\leq \sqrt{\frac{\lambda^{max}_V}{\lambda^{min}_V}}\frac{\|\Sigma_b\|_V}{\iota_V}.
    \end{align*}
\end{lemma}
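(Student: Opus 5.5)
We plan to bound $\tilde{\Sigma}_k^{(\alpha,\xi)}$ in the $V$-weighted operator norm by unrolling the recursion \eqref{eq:sigma_k_eq_pf}, and then convert back to the Euclidean operator norm. Since $\tilde{\Sigma}_0^{(\alpha,\xi)}=0$, the recursion gives the closed form already used in the proof of Lemma \ref{lemma:vec-im-results}:
\begin{align*}
    \tilde{\Sigma}_k^{(\alpha,\xi)}=\sum_{i=0}^{k-1}\Theta_i\Sigma_b\Theta_i^T,\qquad \Theta_i=\sqrt{\alpha_i}\prod_{l=i+1}^{k-1}(I+\alpha_lJ_l^{(\alpha,\xi)}).
\end{align*}
Rather than summing term by term, it is cleaner to work directly with the recursion. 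Taking $\|\cdot\|_V$ of both sides and using submultiplicativity of the induced norm, we get
\begin{align*}
    \|\tilde{\Sigma}_{k+1}^{(\alpha,\xi)}\|_V\leq \|I+\alpha_kJ_k^{(\alpha,\xi)}\|_V\,\|\tilde{\Sigma}_k^{(\alpha,\xi)}\|_V\,\|(I+\alpha_kJ_k^{(\alpha,\xi)})^T\|_V+\alpha_k\|\Sigma_b\|_V.
\end{align*}

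The key input is the contraction estimate of Lemma \ref{lem:contraction_prop} (statement 1 for $\xi<1$, statement 2 for $\xi=1$), which, under exactly the step-size conditions assumed in Lemma \ref{lem:theta_bound}, gives $\|I+\alpha_kJ_k^{(\alpha,\xi)}\|_V^2\leq 1-\iota_V\alpha_k$. The transposed factor needs care, because $\|A^T\|_V\neq\|A\|_V$ in general. The first thing I would try is to use that the congruence map $X\mapsto AXA^T$ on symmetric matrices is controlled by $\|A\|_V^2$ when one measures $X$ through $V^{-1/2}XV^{-1/2}$ (equivalently, use the Lyapunov equation \eqref{eq:lyap_gen_<1_I} to show $(I+\alpha_kJ)V(I+\alpha_kJ)^T\preceq(1-\iota_V\alpha_k)V$ and work in the Loewner order). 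If that fails, I would absorb the mismatch between the norms into the constant $\sqrt{\lambda_V^{\max}/\lambda_V^{\min}}$, which is what the stated bound permits. Either way we reach the scalar recursion
\begin{align*}
    u_{k+1}\leq(1-\iota_V\alpha_k)u_k+\alpha_k\|\Sigma_b\|_V,\qquad u_0=0.
\end{align*}

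For this recursion, a simple induction shows $u_k\leq\|\Sigma_b\|_V/\iota_V$ for all $k$, uniformly in $\xi$: if $u_k\leq c:=\|\Sigma_b\|_V/\iota_V$, then
\begin{align*}
    u_{k+1}\leq c-\iota_V\alpha_kc+\alpha_k\|\Sigma_b\|_V=c.
\end{align*}
This avoids the $\xi$-dependent constants of Lemma \ref{lem:rec_sol}. Finally, norm equivalence between $\|\cdot\|$ and $\|\cdot\|_V$ yields the factor $\sqrt{\lambda_V^{\max}/\lambda_V^{\min}}$.

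The main obstacle is the transpose. Lemma \ref{lem:contraction_prop} controls $\|I+\alpha_kJ_k\|_V$, whereas the covariance recursion involves $AXA^T$. Getting exactly the stated constant, with no extra condition-number factor, likely requires the Loewner-order argument via the Lyapunov equation rather than naive submultiplicativity, and that is the step I would check most carefully.
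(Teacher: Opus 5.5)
Your outline is the paper's proof. You take $\|\cdot\|_V$ of \eqref{eq:sigma_k_eq_pf}, invoke Lemma~\ref{lem:contraction_prop}, solve $u_{k+1}\le(1-\iota_V\alpha_k)u_k+\alpha_k\|\Sigma_b\|_V$ by induction, and convert norms. The transpose issue you flag is genuine, and the paper's one-line proof passes over it. That proof effectively uses $\|AXA^T\|_V\le\|A\|_V^2\|X\|_V$, but $\|A^T\|_V=\|V^{-1/2}AV^{1/2}\|$ is not what Lemma~\ref{lem:contraction_prop} controls. For non-normal $J$ the product $\|A\|_V\|A^T\|_V$ can exceed $1$ at first order in $\alpha_k$. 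An example is $J=-I+2e_1e_2^T$ in $\mathbb{R}^2$ with the $V$ of that lemma, for which $JV+VJ^T$ is indefinite. So submultiplicativity alone gives no contraction.

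Your fallback does not work. A mismatch factor $c>1$ paid at every step gives $u_{k+1}\le c(1-\iota_V\alpha_k)u_k+\cdots$, which is not a contraction for small $\alpha_k$; a condition number can only be paid once. Two repairs do work.

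\emph{(a) Closed form, term by term.} Keep the closed form you wrote first and bound each term in the Euclidean norm. Since $\Theta_i$ contains no transposes,
$\|\Theta_i\|^2\le\frac{\lambda^{max}_V}{\lambda^{min}_V}\|\Theta_i\|_V^2\le\frac{\lambda^{max}_V}{\lambda^{min}_V}\alpha_i\prod_{l=i+1}^{k-1}(1-\iota_V\alpha_l)$.
Your induction then gives $\|\tilde\Sigma_k^{(\alpha,\xi)}\|\le\frac{\lambda^{max}_V}{\lambda^{min}_V}\frac{\|\Sigma_b\|}{\iota_V}$. This is not literally the stated constant, but it is all the lemma is used for, namely finiteness of $\kappa^{(\alpha,\xi)}$.

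\emph{(b) Your Loewner idea.} Use $V$ solving $J_FV+VJ_F^T+I=0$ as in \eqref{eq:lyap_gen_<1_I}. This is the transpose of the equation in Lemma~\ref{lem:contraction_prop}, which is exactly why it controls $A(\cdot)A^T$. However, your claim that ``either way'' you land on the same scalar recursion is inaccurate. The contraction holds in $X\mapsto\|V^{-1/2}XV^{-1/2}\|$, not in $\|\cdot\|_V$ with forcing $\|\Sigma_b\|_V$, and converting back costs $\lambda^{max}_V/\lambda^{min}_V$, not its square root. To obtain the stated bound, do not convert the $-\alpha_kI$ term into $V$. With $\epsilon_k=\frac{\alpha^{-1}\xi}{2(k+K)^{1-\xi}}$,
\[
(I+\alpha_kJ_k^{(\alpha,\xi)})V(I+\alpha_kJ_k^{(\alpha,\xi)})^T=(1+\alpha_k\epsilon_k)^2V-(1+\alpha_k\epsilon_k)\alpha_kI+\alpha_k^2J_FVJ_F^T .
\]
So the induction $\tilde\Sigma_k^{(\alpha,\xi)}\preceq 2\|\Sigma_b\|V$ closes once $4\epsilon_k\lambda^{max}_V<1$ and $\alpha_k$ is small. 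The first condition is guaranteed by $\epsilon_k\le\iota_V/3$ for $\xi<1$ and by $\iota_V\alpha>1/2$ for $\xi=1$. This yields $\|\tilde\Sigma_k^{(\alpha,\xi)}\|\le 2\lambda^{max}_V\|\Sigma_b\|$. That is below the stated bound, because $\|\Sigma_b\|\le\|\Sigma_b\|_V$ and $\sqrt{\lambda^{max}_V/\lambda^{min}_V}\ge 1$.
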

\begin{proof}
    For $\xi<1$, taking matrix norm both sides and applying statement 1 of Lemma \ref{lem:contraction_prop}, we obtain
    \begin{align*}
        \|\tilde{\Sigma}_{k}^{(\alpha, \xi)}\|_V&\leq (1 - \iota_V\alpha_{k-1})\|\tilde{\Sigma}_{k-1}^{(\alpha, \xi)}\|_V+\alpha_{k-1}\|\Sigma_b\|_V\\
        \implies \|\tilde{\Sigma}_{k}^{(\alpha, \xi)}\|_V&\leq (1 - \iota_V\alpha_{k-1})\|\tilde{\Sigma}_{k-1}^{(\alpha, \xi)}\|_V+\iota_V\alpha_{k-1}\frac{\|\Sigma_b\|_V}{\iota_V}.
    \end{align*}
    It is straightforward to show that $\|\tilde{\Sigma}_k^{(\alpha, \xi)}\|_V\leq \Sigma_b/\iota_V$ using induction. For $\xi=1$, we use statement 2 of Lemma \ref{lem:contraction_prop} to obtain the bounds.
\end{proof}

\begin{lemma}\label{lem:matrix_theta_bound}
    Let $\{\Upsilon_k^{(\alpha, \xi)}\}_{k\geq 0}$ be given by 
    \begin{align*}
        \Upsilon_{k+1}^{(\alpha, \xi)}=(I + \alpha_k J^{(\alpha, \xi)}_k)\Upsilon_k^{(\alpha, \xi)}(I + \alpha_k J^{(\alpha, \xi)}_k)^T+\alpha_k\Sigma_b
    \end{align*}
    where $\Upsilon_0^{(\alpha, \xi)}$ is arbitrary. Assume that for each specific choice of $\xi$, all the conditions on the step-sizes stated in Lemma \ref{lem:theta_bound} hold. Then, we have the following bounds for all $k\geq 0$:
    \begin{enumerate}
        \item When $\xi=0$ and $\alpha\leq \min(1, 2\iota_V/\|J\|^2_V)$, we have
        \begin{align*}
            \left\|\Upsilon_k^{(\alpha, 0)}-\Sigma\right\|\leq \sqrt{\frac{\lambda^{max}_V}{\lambda^{min}_V}}\left(\|\Upsilon_0^{(\alpha, 0)}-\Sigma\|_Ve^{-\iota_V\alpha k}+\frac{\|J_F\|_V^2\|\Sigma\|_V\alpha}{\iota_V}\right).
        \end{align*}
        \item When $\xi\in (0,1)$ and $K$ is chosen large enough such that $K\geq (1/(\iota_V\alpha))^{1/(1-\xi)}$, $\alpha_k\leq \min(1, 2\iota_V/\|J\|^2_V)$, and $\alpha^{-1}\xi K^{\xi-1}/2\leq \min(1, \iota_V/3)$, we have
       \begin{align*}
            \left\|\Upsilon_k^{(\alpha, \xi)}-\Sigma\right\|&\leq \sqrt{\frac{\lambda^{max}_V}{\lambda^{min}_V}}\Bigg(\|\Upsilon_0^{(\alpha, \xi)}-\Sigma\|_Ve^{-\frac{\iota_V\alpha}{1-\xi}\left((k+K)^{1-\xi}-K^{1-\xi}\right)}+\frac{2\left(\|J_F\|_V+\frac{\alpha^{-1}}{2}\right)^2\|\Sigma\|_V\alpha_k}{\iota_V}\\
            &~+\frac{2\alpha^{-1}\|\Sigma\|_V}{(k+K)^{1-\xi}\iota_V}\Bigg).
        \end{align*}
        \item When $\xi=1$, $\iota_V\alpha> 1$, and $K$ is chosen large enough such that $\alpha_k\leq 4\iota_V/(4\|J\|_V^2+\alpha^{-2})$, we have 
        \begin{align*}
            \left\|\Upsilon_k^{(\alpha, 1)}-\Sigma^{(\alpha)}\right\|\leq \sqrt{\frac{\lambda^{max}_V}{\lambda^{min}_V}}\left(\|\Upsilon_k^{(\alpha, 1)}-\Sigma^{(\alpha)}\|_V\left(\frac{K}{k+K}\right)^{\iota_V\alpha}+\frac{\alpha\|\left(\|J_F\|_V+\frac{\alpha^{-1}}{2}\right)^2\|\Sigma^{(\alpha)}\|_V\alpha_k}{\iota_V\alpha-1}\right).
        \end{align*}
    \end{enumerate}
\end{lemma}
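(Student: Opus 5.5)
The plan is to write a linear recursion for the error $E_k:=\Upsilon_k^{(\alpha,\xi)}-\Sigma$ (respectively $\Upsilon_k^{(\alpha,1)}-\Sigma^{(\alpha)}$ when $\xi=1$), contract it in the $V$-weighted operator norm, and solve the resulting scalar inequality with Lemma \ref{lem:rec_sol}. Write $J_k:=J_k^{(\alpha,\xi)}$ and $\Sigma^\infty$ for the limiting covariance. The Lyapunov equation \eqref{eq:lyap_eq} gives $J^{(\alpha,\xi)}\Sigma^\infty+\Sigma^\infty (J^{(\alpha,\xi)})^T+\Sigma_b=0$. Expanding $(I+\alpha_kJ_k)\Sigma^\infty(I+\alpha_kJ_k)^T$ and subtracting from the recursion yields
\begin{align*}
E_{k+1}=(I+\alpha_kJ_k)E_k(I+\alpha_kJ_k)^T-\alpha_k^2J_k\Sigma^\infty J_k^T-\alpha_k\bigl((J_k-J^{(\alpha,\xi)})\Sigma^\infty+\Sigma^\infty(J_k-J^{(\alpha,\xi)})^T\bigr).
\end{align*}
Here $J_k-J^{(\alpha,\xi)}=\frac{\alpha^{-1}\xi}{2(k+K)^{1-\xi}}I$ for $\xi\in(0,1)$, and it vanishes for $\xi\in\{0,1\}$.

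Next, take $\|\cdot\|_V$ of both sides. Statement 1 of Lemma \ref{lem:contraction_prop} applies for $\xi<1$ (with $\epsilon_2$ the drift correction) and statement 2 applies for $\xi=1$. As in Lemma \ref{lem:bound_sigma_k}, they give $\|(I+\alpha_kJ_k)E(I+\alpha_kJ_k)^T\|_V\le(1-\iota_V\alpha_k)\|E\|_V$. For the forcing terms:
\begin{itemize}
\item the quadratic term is bounded by $\alpha_k^2\|J_k\|_V^2\|\Sigma^\infty\|_V$, with $\|J_k\|_V\le\|J_F\|_V+\alpha^{-1}/2$ once $K$ is large (for $\xi=0$ it is just $\|J_F\|_V$);
\item the drift-correction term is bounded by $\alpha_k\,\alpha^{-1}\xi(k+K)^{\xi-1}\|\Sigma^\infty\|_V$.
\end{itemize}
So $u_k=\|E_k\|_V$ satisfies $u_{k+1}\le(1-\iota_V\alpha_k)u_k+c_1\alpha_k^2+c_2\alpha_k(k+K)^{\xi-1}$.

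Now solve case by case with Lemma \ref{lem:rec_sol}.
\begin{itemize}
\item For $\xi=0$, use statement 1 with $\rho_2=1$. This gives $u_0e^{-\iota_V\alpha k}+\|J_F\|_V^2\|\Sigma\|_V\alpha/\iota_V$.
\item For $\xi=1$, use statement 3(a) with $\mu_1=\iota_V$ and $\rho_2=1$. This needs $\iota_V\alpha>1$, which is assumed, and gives the $\alpha\alpha_k/(\iota_V\alpha-1)$ term.
\item For $\xi\in(0,1)$, the $\alpha_k^2$ term is handled by statement 2 with $\rho_2=1$. The term $\alpha_k(k+K)^{\xi-1}$ is not literally of the form $\alpha_k^{1+\rho_2}$, so it needs a separate argument.
\end{itemize}

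For that separate term, rerun the induction from the proof of statement 2 of Lemma \ref{lem:rec_sol} with target $w_k=\frac{2c_2}{\iota_V}(k+K)^{\xi-1}$. The induction step requires $w_k-w_{k+1}\le \frac{\iota_V\alpha_k}{2}w_k$. This reduces to $(1-\xi)(k+K)^{-1}\le \frac{\iota_V\alpha}{2}(k+K)^{-\xi}$, which holds under the stated largeness $K\ge(1/(\iota_V\alpha))^{1/(1-\xi)}$, possibly after adjusting constants. Finally, convert back to the $\ell_2$ norm using $\|E\|\le\sqrt{\lambda^{max}_V/\lambda^{min}_V}\,\|E\|_V$.

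The main obstacle is the contraction step for $\xi\in(0,1)$ and $\xi=1$. There the correction in $J_k$ partially cancels the drift, and one must check that the conditions listed ($\alpha^{-1}\xi K^{\xi-1}/2\le\iota_V/3$, and $\alpha_k$ small relative to $\iota_V/\|J\|_V^2$) indeed deliver the factor $1-\iota_V\alpha_k$. The cleanest route is to cite Lemma \ref{lem:contraction_prop} directly, exactly as Lemma \ref{lem:bound_sigma_k} does.
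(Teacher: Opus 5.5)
Your outline matches the paper's proof almost line for line: the error recursion from the Lyapunov equation \eqref{eq:lyap_eq}, a one-step contraction taken from Lemma \ref{lem:contraction_prop}, then Lemma \ref{lem:rec_sol} case by case. There are two minor points. First, the forcing terms enter with a plus sign,
\[
E_{k+1}=(I+\alpha_kJ_k)E_k(I+\alpha_kJ_k)^T+\alpha_k^2J_k\Sigma^\infty J_k^T+\frac{\xi}{k+K}\,\Sigma^\infty
\]
(the last term only for $\xi\in(0,1)$); this is harmless once you take norms. Second, the drift-correction forcing $\xi/(k+K)=\xi\alpha^{-1/\xi}\alpha_k^{1/\xi}$ is of the form $\mu_3\alpha_k^{1+\rho_2}$ with $\rho_2=1/\xi-1$. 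That is how the paper feeds it into statement 2 of Lemma \ref{lem:rec_sol}, so your separate induction just reproduces that computation.

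The genuine gap is the contraction step $\|(I+\alpha_kJ_k)E(I+\alpha_kJ_k)^T\|_V\le(1-\iota_V\alpha_k)\|E\|_V$. You attribute it to Lemma \ref{lem:contraction_prop}, as the paper does here and in Lemma \ref{lem:bound_sigma_k}. You are right that this is the delicate point, but the difficulty is not the drift correction, which the $\epsilon_2$ in Lemma \ref{lem:contraction_prop} handles; it is the transpose.

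\begin{itemize}
\item With $\|\cdot\|_V$ the induced operator norm and $A=I+\alpha_kJ_k$, submultiplicativity only gives $\|A\|_V\|A^T\|_V\|E\|_V$.
\item Lemma \ref{lem:contraction_prop} controls $\|A\|_V=\|V^{1/2}AV^{-1/2}\|$ but not $\|A^T\|_V=\|V^{-1/2}AV^{1/2}\|$.
\item When $J$ is not normal, the map $E\mapsto AEA^T$ can in fact expand $\|\cdot\|_V$. Take $J=\begin{pmatrix}-1&2\\0&-1\end{pmatrix}$; then $V=\frac12\begin{pmatrix}1&1\\1&3\end{pmatrix}$ solves $J^TV+VJ+I=0$. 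With $E=ww^T$ and $w=(1,1)^T$ you have $\|E\|_V=\|w\|_V\|w\|_{V^{-1}}$ and $\|AEA^T\|_V=\|Aw\|_V\|Aw\|_{V^{-1}}$.
\item Since $w^T(J^TV+VJ)w=-2$ while $w^T(J^TV^{-1}+V^{-1}J)w=4$, one gets $\|AEA^T\|_V/\|E\|_V=1+\tfrac23\alpha_k+O(\alpha_k^2)>1$. This case is squarely within the $\xi=0$ hypotheses.
\end{itemize}

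The fix is to measure the error in the congruence norm $N(E):=\|V^{1/2}EV^{1/2}\|$. Since $V^{1/2}AEA^TV^{1/2}=(V^{1/2}AV^{-1/2})(V^{1/2}EV^{1/2})(V^{1/2}AV^{-1/2})^T$, you get $N(AEA^T)\le\|A\|_V^2N(E)\le(1-\iota_V\alpha_k)N(E)$. Likewise $N(J_k\Sigma^\infty J_k^T)\le\|J_k\|_V^2N(\Sigma^\infty)$, so the rest of your argument goes through verbatim.

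The only cost is in the constants. Converting back uses $\|E\|\le N(E)/\lambda^{min}_V$ and $N(\cdot)\le\lambda^{max}_V\|\cdot\|$. So the prefactor $\sqrt{\lambda^{max}_V/\lambda^{min}_V}$ becomes $\lambda^{max}_V/\lambda^{min}_V$, and $\|\Upsilon_0-\Sigma\|_V$, $\|\Sigma\|_V$ are replaced by their $\ell_2$ operator norms. The rates in all three cases are unchanged.
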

\begin{proof}
    Recall that $\Upsilon_k^{(\alpha, \xi)}$ is given by
    \begin{align*}
        \Upsilon_{k+1}^{(\alpha, \xi)}&=(I + \alpha_k J^{(\alpha, \xi)}_k)\Upsilon_k^{(\alpha, \xi)}(I + \alpha_{k} J^{(\alpha, \xi)}_k)^T+\alpha_{k}\Sigma_b\\
        &=\sum_{i=0}^{k}\Theta_i\Sigma_b\Theta_i^T\\
        \implies \Upsilon_{k+1}^{(\alpha, \xi)}-\Sigma^{(\alpha, \xi)}&=(I + \alpha_{k} J^{(\alpha, \xi)}_k)(\Upsilon_k^{(\alpha, \xi)}-\Sigma^{(\alpha, \xi)})(I + \alpha_{k} J^{(\alpha, \xi)}_k)^T+\alpha_{k}^2J^{(\alpha, \xi)}_k\Sigma^{(\alpha, \xi)} (J^{(\alpha, \xi)}_k)^T\\
        &~+\frac{\xi\mathbbm{1}_{\xi<1}}{(k+K)}\Sigma^{(\alpha, \xi)}.
    \end{align*}
    where for the last equation, we used the fact that $\Sigma^{(\alpha, \xi)}$ is the solution to Lyapunov equation \eqref{eq:lyap_eq}. 
    \begin{enumerate}
        \item $\xi=0$: In this case, the last term in the above equation is 0. Thus, using the statement 1 Lemma \ref{lem:contraction_prop}, we have
        \begin{align*}
            \|\Upsilon_{k+1}^{(\alpha, 0)}-\Sigma\|_V&\leq (1-\iota_V\alpha)\|\Upsilon_k^{(\alpha, 0)}-\Sigma\|_V+\alpha^2\|J_F\Sigma J_F^T\|_V.
        \end{align*}
        Now it is easy to apply Lemma \ref{lem:rec_sol} by setting $u_0=\|\Upsilon_0^{(\alpha, \xi)}-\Sigma\|_V, \mu_1=\iota_V$, $\mu_2=0$, $\mu_3=\|J_F\|_V^2\|\Sigma\|_V$, and $\rho_2=1$.
        \item $\xi\in (0, 1)$: In this case, the last term in non-zero and needs to be considered in the analysis. Again using statement 1 of Lemma \ref{lem:contraction_prop}, we have
        \begin{align*}
            \|\Upsilon_{k+1}^{(\alpha, \xi)}-\Sigma\|_V&\leq (1-\iota_V\alpha_{k})\|\Upsilon_k^{(\alpha, \xi)}-\Sigma\|_V+\alpha_{k}^2\|J^{(\alpha, \xi)}_k\Sigma (J^{(\alpha, \xi)}_k)^T\|_V+\alpha_k^{\frac{1}{\xi}}\alpha^{-1}\|\Sigma\|_V.
        \end{align*}
         Now we just need to apply Lemma \ref{lem:rec_sol} twice by setting $\rho_2=1$ and $\rho_2=1/\xi-1$ and then combine the upper bounds to get the claim.
        
        \item $\xi=1$: In this case, the last term is also 0. Thus, using the statement 2 Lemma \ref{lem:contraction_prop}, we have
        \begin{align*}
            \|\Upsilon_{k+1}^{(\alpha, 1)}-\Sigma^{(\alpha)}\|_V&\leq (1-\iota_V\alpha_{k})\|\Upsilon_k^{(\alpha, 1)}-\Sigma^{(\alpha)}\|_V+\alpha_{k}^2\|J_F^{(\alpha)}\Sigma^{(\alpha)} (J_F^{(\alpha)})^T\|_V.
        \end{align*}
        We again apply Lemma \ref{lem:rec_sol} by setting the appropriate constants to get the claim.
    \end{enumerate}
\end{proof}

\begin{lemma}\label{lem:sigmak_and_hatsigmak}
    Let $\{\Sigma_k^{(\alpha, \xi)}\}_{k\geq 0}$ be given by Eq. \eqref{eq:sigma_k_eq} and $\{\hat{\Sigma}_k^{(\alpha, \xi)}\}_{k\geq 0}$ be given by the solution the Lyapunov equation \eqref{eq:time_varying_lyap}. Assume that for each specific choice of $\xi$, all the conditions on the step-sizes stated in Lemma \ref{lem:theta_bound} hold. Then, we have the following bounds for all $k\geq 0$:
    \begin{enumerate}
        \item When $\xi\in (0,1)$ and $K$ is chosen large enough such that $K\geq (1/(\iota_V\alpha))^{1/(1-\xi)}$, $\alpha_k\leq \min(1, 2\iota_V/\|J\|^2_V)$, and $\alpha^{-1}\xi K^{\xi-1}/2\leq \min(1, \iota_V/3)$, we have
       \begin{align*}
            \left\|\Sigma_k^{(\alpha, \xi)}-\hat{\Sigma}_k^{(\alpha, \xi)}\right\|&\leq \sqrt{\frac{\lambda^{max}_V}{\lambda^{min}_V}}\Bigg(\|\tilde{\Sigma}_0^{(\alpha, \xi)}-\hat{\Sigma}_0^{(\alpha, \xi)})\|_Ve^{-\frac{\iota_V\alpha}{1-\xi}\left((k+K)^{1-\xi}-K^{1-\xi}\right)}+\sqrt{\frac{\lambda^{max}_V}{\lambda^{min}_V}}\frac{2\left(\|J_F\|_V+\frac{\alpha^{-1}}{2}\right)^2\hat{\sigma}_{max}^{(\alpha, \xi)}\alpha_k}{\iota_V}\\
            &~+\sqrt{\frac{\lambda^{max}_V}{\lambda^{min}_V}}\frac{2\alpha^{-2}\xi(1-\xi)\psi_{max}^{(\alpha, \xi)}}{(k+K)^{2-2\xi}\iota_V}\Bigg).
        \end{align*}
    \end{enumerate}
\end{lemma}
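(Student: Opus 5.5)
The plan is to compare both sequences with a common reference. Write $\Sigma_k:=\Sigma_k^{(\alpha,\xi)}$ and $\hat\Sigma_k:=\hat\Sigma_k^{(\alpha,\xi)}$, and set $D_k:=\Sigma_k-\hat\Sigma_k$. We derive a one-step recursion for $D_k$ of the same form as the one used in Lemma \ref{lem:matrix_theta_bound}. Start from Eq. \eqref{eq:sigma_k_eq} and subtract $\hat\Sigma_{k+1}$:
\begin{align*}
D_{k+1}&=(I+\alpha_kJ_k)D_k(I+\alpha_kJ_k)^T+\big[(I+\alpha_kJ_k)\hat\Sigma_k(I+\alpha_kJ_k)^T+\alpha_k\Sigma_b-\hat\Sigma_{k+1}\big],
\end{align*}
where $J_k:=J_k^{(\alpha,\xi)}$. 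The time-varying Lyapunov equation \eqref{eq:time_varying_lyap} gives $\alpha_k(J_k\hat\Sigma_k+\hat\Sigma_kJ_k^T+\Sigma_b)=0$. The bracket therefore reduces to
\begin{align*}
\alpha_k^2J_k\hat\Sigma_kJ_k^T+(\hat\Sigma_k-\hat\Sigma_{k+1}).
\end{align*}
The first piece has $V$-norm at most $(\|J_F\|_V+\alpha^{-1}/2)^2\hat\sigma_{max}\alpha_k^2$.

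\textbf{Key step: the drift $\hat\Sigma_k-\hat\Sigma_{k+1}$.} This is where $\Psi_k$ enters. Write $J_{k+1}=J_k+c_kI$ with
\begin{align*}
c_k=\tfrac{\alpha^{-1}\xi}{2}\big((k+K+1)^{\xi-1}-(k+K)^{\xi-1}\big)<0,\qquad |c_k|\le \tfrac{\alpha^{-1}\xi(1-\xi)}{2}(k+K)^{\xi-2}.
\end{align*}
Subtracting the Lyapunov equations at $k$ and $k+1$ gives
\begin{align*}
J_{k+1}(\hat\Sigma_{k+1}-\hat\Sigma_k)+(\hat\Sigma_{k+1}-\hat\Sigma_k)J_{k+1}^T+2c_k\hat\Sigma_k=0,
\end{align*}
so $\hat\Sigma_{k+1}-\hat\Sigma_k=2c_k\Psi_k$, with $\Psi_k$ as defined before Eq. \eqref{eq:const_time_varying_lyap}. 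Hence $\|\hat\Sigma_{k+1}-\hat\Sigma_k\|\le \alpha^{-1}\xi(1-\xi)\psi_{max}(k+K)^{\xi-2}$. Uniform boundedness of $\hat\Sigma_k$ and $\Psi_k$ (finite $\hat\sigma_{max}$, $\psi_{max}$) follows because $J_k$ is uniformly Hurwitz for $K$ large. Using $\|\cdot\|_V$ versus spectral norm equivalence then costs a factor $\sqrt{\lambda^{max}_V/\lambda^{min}_V}$.

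\textbf{Contraction and summation.} By statement 1 of Lemma \ref{lem:contraction_prop}, we have $\|D_{k+1}\|_V\le(1-\iota_V\alpha_k)\|D_k\|_V+\mu\alpha_k^2+\nu(k+K)^{\xi-2}$. Rewrite $(k+K)^{\xi-2}=\alpha_k\cdot\alpha^{-1}(k+K)^{2\xi-2}$, which is $\alpha_k$ times $\alpha_k^{\rho_2}$ with $\rho_2=(2-2\xi)/\xi$. Apply Lemma \ref{lem:rec_sol} (case $\xi\in(0,1)$) twice, with $\rho_2=1$ and with $\rho_2=2/\xi-2$, and add the results. This yields the transient term $\|D_0\|_Ve^{-\frac{\iota_V\alpha}{1-\xi}((k+K)^{1-\xi}-K^{1-\xi})}$, plus $2\mu\alpha_k/\iota_V$, plus a term of order $(k+K)^{-(2-2\xi)}/\iota_V$ with the stated constant.

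Finally, convert to the spectral norm. The main obstacle I expect is showing that $\psi_{max}$ and $\hat\sigma_{max}$ are finite uniformly in $k$ under the large-$K$ conditions. I would handle this via the $V$-weighted contraction $\|I+\alpha_lJ_l\|_V\le 1-\iota_V\alpha_l$, which bounds the integral representation of Lyapunov solutions uniformly.
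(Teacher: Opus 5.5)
Your proposal is essentially the paper's proof. It uses the same error recursion driven by $\alpha_k^2J_k\hat\Sigma_kJ_k^T$ and $\hat\Sigma_k-\hat\Sigma_{k+1}$, the same identity $\hat\Sigma_{k+1}-\hat\Sigma_k=2c_k\Psi_k$ with the step-size difference bound (the paper's Lemma~\ref{lem:lyap_sol_diff}), the $V$-norm contraction of Lemma~\ref{lem:contraction_prop}, and two applications of Lemma~\ref{lem:rec_sol} with $\rho_2=1$ and $\rho_2=2/\xi-2$. The one difference is that you run the recursion on $\Sigma_k$ itself, so your transient constant is $\|\Sigma_0-\hat\Sigma_0\|_V$, whereas the paper runs it on $\tilde\Sigma_k$ started at $0$, which is where the $\|\tilde\Sigma_0^{(\alpha,\xi)}-\hat\Sigma_0^{(\alpha,\xi)}\|_V$ in the stated bound comes from; your constant is the one consistent with $\Sigma_k$ on the left-hand side.
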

\begin{proof}
    Recall that $\tilde{\Sigma}_{k}^{(\alpha, \xi)}$ is given by
    \begin{align*}
        \tilde{\Sigma}_{k+1}^{(\alpha, \xi)}&=(I + \alpha_k J^{(\alpha, \xi)}_k)\tilde{\Sigma}_{k}^{(\alpha, \xi)}(I + \alpha_{k} J^{(\alpha, \xi)}_k)^T+\alpha_{k}\Sigma_b\\
        \implies \tilde{\Sigma}_{k+1}^{(\alpha, \xi)}-\hat{\Sigma}^{(\alpha, \xi)}_{k+1}&=(I + \alpha_{k} J^{(\alpha, \xi)}_k)(\tilde{\Sigma}_{k}^{(\alpha, \xi)}-\hat{\Sigma}_k^{(\alpha, \xi)})(I + \alpha_{k} J^{(\alpha, \xi)}_k)^T+\alpha_{k}^2J^{(\alpha, \xi)}_k\hat{\Sigma}_k^{(\alpha, \xi)} (J^{(\alpha, \xi)}_k)^T\\
        &~+\hat{\Sigma}^{(\alpha, \xi)}_k-\hat{\Sigma}^{(\alpha, \xi)}_{k+1}.
    \end{align*}
    where for the last equation, we used the fact that $\hat{\Sigma}^{(\alpha, \xi)}_k$ is the solution to Lyapunov equation \eqref{eq:time_varying_lyap}. Taking $V$-weighted norm both sides and using statement 1 of Lemma \ref{lem:contraction_prop}, we get
    \begin{align*}
        \|\tilde{\Sigma}_{k+1}^{(\alpha, \xi)}-\hat{\Sigma}^{(\alpha, \xi)}_{k+1}\|_V&\leq (1-\iota_V\alpha)\|\tilde{\Sigma}_{k}^{(\alpha, \xi)}-\hat{\Sigma}_k^{(\alpha, \xi)})\|_V+\alpha_{k}^2\|J^{(\alpha, \xi)}_k\|_V^2\|\hat{\Sigma}_k^{(\alpha, \xi)}\|_V+\|\hat{\Sigma}^{(\alpha, \xi)}_k-\hat{\Sigma}^{(\alpha, \xi)}_{k+1}\|_V\\
        &\leq (1-\iota_V\alpha)\|\tilde{\Sigma}_{k}^{(\alpha, \xi)}-\hat{\Sigma}_k^{(\alpha, \xi)})\|_V+\sqrt{\frac{\lambda^{max}_V}{\lambda^{min}_V}}\alpha_{k}^2\left(\|J_F\|_V+\frac{\alpha^{-1}}{2}\right)^2\hat{\sigma}_k^{(\alpha, \xi)}\\
        &~+\sqrt{\frac{\lambda^{max}_V}{\lambda^{min}_V}}\|\hat{\Sigma}^{(\alpha, \xi)}_k-\hat{\Sigma}^{(\alpha, \xi)}_{k+1}\|\\
        &\leq (1-\iota_V\alpha)\|\tilde{\Sigma}_{k}^{(\alpha, \xi)}-\hat{\Sigma}_k^{(\alpha, \xi)})\|_V+\sqrt{\frac{\lambda^{max}_V}{\lambda^{min}_V}}\left(\alpha_{k}^2\left(\|J_F\|_V+\frac{\alpha^{-1}}{2}\right)^2\hat{\sigma}_{max}^{(\alpha, \xi)}+\frac{\alpha^{-1}\xi(1-\xi)}{(k+K)^{2-\xi}}\psi_{max}^{(\alpha, \xi)}\right),
    \end{align*}
    where for the last inequality, we used Lemma \ref{lem:lyap_sol_diff}. Applying Lemma \ref{lem:rec_sol}, we get the claim.
\end{proof}

\begin{lemma}\label{lem:lyap_sol_diff}
    Let $\{\Sigma_k^{(\alpha, \xi)}\}_{k\geq 0}$ be given by Eq. \eqref{eq:sigma_k_eq} and $\{\hat{\Sigma}_k^{(\alpha, \xi)}\}_{k\geq 0}$ be given by the solution the Lyapunov equation \eqref{eq:time_varying_lyap}. The following relation holds:
    \begin{align*}
        \|\hat{\Sigma}^{(\alpha, \xi)}_{k+1}-\hat{\Sigma}^{(\alpha, \xi)}_k\|&\leq \frac{\alpha^{-1}\xi(1-\xi)}{(k+K)^{2-\xi}}\psi_{max}^{(\alpha, \xi)}.
    \end{align*}
\end{lemma}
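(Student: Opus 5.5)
The plan is to take the Lyapunov equation \eqref{eq:time_varying_lyap} at times $k$ and $k+1$, subtract them, and read the difference $\hat{\Sigma}^{(\alpha, \xi)}_{k+1}-\hat{\Sigma}^{(\alpha, \xi)}_k$ off as the solution of another Lyapunov equation. The only thing that changes between consecutive times is the scalar shift in $J^{(\alpha, \xi)}_k$. Write
\begin{align*}
J^{(\alpha, \xi)}_{k+1}=J^{(\alpha, \xi)}_k-\epsilon_k I,\qquad \epsilon_k:=\frac{\alpha^{-1}\xi}{2}\left(\frac{1}{(k+K)^{1-\xi}}-\frac{1}{(k+K+1)^{1-\xi}}\right)\ge 0 .
\end{align*}
Denote $D_k:=\hat{\Sigma}^{(\alpha, \xi)}_{k+1}-\hat{\Sigma}^{(\alpha, \xi)}_k$. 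Subtracting the two equations (the $\Sigma_b$ terms cancel) gives
\begin{align*}
J^{(\alpha, \xi)}_{k+1}D_k+D_k(J^{(\alpha, \xi)}_{k+1})^T=2\epsilon_k\hat{\Sigma}^{(\alpha, \xi)}_k .
\end{align*}

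For $K$ large, $J^{(\alpha, \xi)}_{k+1}$ is Hurwitz, so this Lyapunov equation has a unique solution. Compare it with the definition of $\Psi_k^{(\alpha, \xi)}$, which solves the same equation with right-hand side $-\hat{\Sigma}^{(\alpha, \xi)}_k$. By linearity and uniqueness,
\begin{align*}
D_k=-2\epsilon_k\Psi_k^{(\alpha, \xi)},
\end{align*}
and hence $\|D_k\|\le 2\epsilon_k\psi_{max}^{(\alpha, \xi)}$, using the definition of $\psi_{max}^{(\alpha, \xi)}$ in \eqref{eq:const_time_varying_lyap}.

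It then remains to bound $\epsilon_k$. By Lemma \ref{lem:step-size_prop} with $\rho=1-\xi$, the bracket in $\epsilon_k$ is at most $(1-\xi)/(k+K)^{2-\xi}$. Therefore $2\epsilon_k\le \alpha^{-1}\xi(1-\xi)/(k+K)^{2-\xi}$, which is exactly the stated bound.

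I expect the main obstacle to be making sure the objects involved are well defined rather than any estimate. Both $\hat{\Sigma}^{(\alpha, \xi)}_k$ and $\Psi_k^{(\alpha, \xi)}$ must be unique solutions, so I need $J^{(\alpha, \xi)}_{k+1}$ to be Hurwitz. This follows because the shift $\alpha^{-1}\xi K^{\xi-1}/2\le \iota_V/3$ under the standing conditions on $K$, and Lemma \ref{lem:contraction_prop} then gives contraction in the $V$-norm. I also need $\psi_{max}^{(\alpha, \xi)}<\infty$ uniformly in $k$. For that I would use the integral representation $\Psi_k^{(\alpha, \xi)}=\int_0^\infty e^{J_{k+1}t}\hat{\Sigma}^{(\alpha, \xi)}_k e^{J_{k+1}^Tt}\,dt$ together with uniform exponential decay in the $V$-norm and a uniform bound on $\hat{\Sigma}^{(\alpha, \xi)}_k$. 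The bound on $\hat{\Sigma}^{(\alpha, \xi)}_k$ comes from the same representation with $\Sigma_b$ in place of $\hat{\Sigma}^{(\alpha, \xi)}_k$.
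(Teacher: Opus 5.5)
Your proposal is correct and follows the paper's proof. You subtract the two Lyapunov equations, identify $\hat{\Sigma}^{(\alpha,\xi)}_{k+1}-\hat{\Sigma}^{(\alpha,\xi)}_k$ as a scalar multiple of $\Psi_k^{(\alpha,\xi)}$ (the paper does this via the integral representation, you via linearity and uniqueness), and bound the scalar with Lemma \ref{lem:step-size_prop} at $\rho=1-\xi$. Your sign $D_k=-2\epsilon_k\Psi_k^{(\alpha,\xi)}$ is actually the correct one: the paper's subtracted equation carries the $(J_k-J_{k+1})$ terms with the opposite sign, though this does not affect the norm bound. Your remarks on the Hurwitzness of $J_{k+1}^{(\alpha,\xi)}$ and the finiteness of $\psi_{max}^{(\alpha,\xi)}$ supply details the paper leaves implicit.
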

\begin{proof}
    Recall that $\hat{\Sigma}^{(\alpha, \xi)}_k$ and $\hat{\Sigma}^{(\alpha, \xi)}_{k+1}$ satisfy the following Lyapunov equations:
    \begin{align*}
        J^{(\alpha, \xi)}_k\hat{\Sigma}^{(\alpha, \xi)}_k +\hat{\Sigma}^{(\alpha, \xi)}_k(J^{(\alpha, \xi)}_k)^T+\Sigma_b&=0\\
        J^{(\alpha, \xi)}_{k+1}\hat{\Sigma}^{(\alpha, \xi)}_{k+1} +\hat{\Sigma}^{(\alpha, \xi)}_{k+1}(J^{(\alpha, \xi)}_{k+1})^T+\Sigma_b&=0.
    \end{align*}
    Subtracting both the equations, we get 
    \begin{align*}
        J^{(\alpha, \xi)}_{k+1}(\hat{\Sigma}^{(\alpha, \xi)}_{k+1}-\hat{\Sigma}^{(\alpha, \xi)}_k) +(\hat{\Sigma}^{(\alpha, \xi)}_{k+1}-\hat{\Sigma}^{(\alpha, \xi)}_k)(J^{(\alpha, \xi)}_{k+1})^T+(J^{(\alpha, \xi)}_k-J^{(\alpha, \xi)}_{k+1})\hat{\Sigma}^{(\alpha, \xi)}_k+\hat{\Sigma}^{(\alpha, \xi)}_k(J^{(\alpha, \xi)}_k-J^{(\alpha, \xi)}_{k+1})^T&=0\\
        J^{(\alpha, \xi)}_{k+1}(\hat{\Sigma}^{(\alpha, \xi)}_{k+1}-\hat{\Sigma}^{(\alpha, \xi)}_k) +(\hat{\Sigma}^{(\alpha, \xi)}_{k+1}-\hat{\Sigma}^{(\alpha, \xi)}_k)(J^{(\alpha, \xi)}_{k+1})^T+\left(\frac{\alpha^{-1}\xi}{(k+K)^{1-\xi}}-\frac{\alpha^{-1}\xi}{(k+K+1)^{1-\xi}}\right)\hat{\Sigma}^{(\alpha, \xi)}_k&=0.
    \end{align*}
    From the above equation it is easy to verify that $\hat{\Sigma}^{(\alpha, \xi)}_{k+1}-\hat{\Sigma}^{(\alpha, \xi)}_k$ is the solution to the Lyapunov equation. Note that the solution to the Lyapunov equation is given by
    \begin{align*}
        \hat{\Sigma}^{(\alpha, \xi)}_{k+1}-\hat{\Sigma}^{(\alpha, \xi)}_k&=\left(\frac{\alpha^{-1}\xi}{(k+K)^{1-\xi}}-\frac{\alpha^{-1}\xi}{(k+K+1)^{1-\xi}}\right)\int_0^{\infty} e^{J^{(\alpha, \xi)}_{k+1}\tau}\hat{\Sigma}^{(\alpha, \xi)}_k e^{(J^{(\alpha, \xi)}_{k+1})^T\tau}d\tau\\
        &=\left(\frac{\alpha^{-1}\xi}{(k+K)^{1-\xi}}-\frac{\alpha^{-1}\xi}{(k+K+1)^{1-\xi}}\right)\Phi_k^{(\alpha, \xi)}.
    \end{align*}
    Taking norm both sides, we get
    \begin{align*}
        \|\hat{\Sigma}^{(\alpha, \xi)}_{k+1}-\hat{\Sigma}^{(\alpha, \xi)}_k\|&\leq \left|\frac{\alpha^{-1}\xi}{(k+K)^{1-\xi}}-\frac{\alpha^{-1}\xi}{(k+K+1)^{1-\xi}}\right|\|\Phi_k^{(\alpha, \xi)}\|\\
        &\leq \frac{\alpha^{-1}\xi(1-\xi)}{(k+K)^{2-\xi}}\|\Phi_k^{(\alpha, \xi)}\|\tag{Using Lemma \ref{lem:step-size_prop} with $\rho=1-\xi$}\\
        &\leq \frac{\alpha^{-1}\xi(1-\xi)}{(k+K)^{2-\xi}}\psi_{max}^{(\alpha, \xi)}.
    \end{align*}
\end{proof}

\begin{lemma}\label{lem:contraction_prop}
Denote $V$ as the solution to Lyapunov equation,
$$J^T V+VJ+I=0.$$
Define $\iota_V:=1/(4\|V\|)$. Then, we have
\begin{enumerate}
    \item When $\xi<1$, then for all $\epsilon_1\in \left[0, \min\left(1, 2\iota_V/\|J\|^2_V\right)\right]$ and $\epsilon_2\in \left[0, \min\left(1, \iota_V/3\right)\right]$
    \begin{align*}
        \|I+\epsilon_1 J+\epsilon_1\epsilon_2I\|_V^2\leq \left(1-\iota_V\epsilon_1\right).
    \end{align*}
    \item When $\xi=1$ and $2\iota_V\alpha\geq 1$, then for all $\epsilon\in [0, 4\iota_V/(4\|J\|_V^2+\alpha^{-2})]$
    \begin{align*}
        \|I+\epsilon J^{(\alpha, 1)}\|_V^2\leq \left(1-\iota_V\epsilon\right).
    \end{align*}
\end{enumerate}
\end{lemma}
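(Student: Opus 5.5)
The plan is a direct one-step computation of the $V$-weighted quadratic form, using the Lyapunov identity to make the first-order term negative. Fix $x$ with $\|x\|_V=1$, i.e.\ $x^TVx=1$. Write the perturbed drift as $A:=J+\epsilon_2 I$ in statement 1, and as $A:=J^{(\alpha,1)}=J+\tfrac{\alpha^{-1}}{2}I$ in statement 2. In both cases the quantity of interest is $\|(I+\epsilon A)x\|_V^2$; in statement 1, $\epsilon=\epsilon_1$ and $\epsilon_1\epsilon_2 I=\epsilon_1(A-J)$. Expanding gives
\begin{align*}
\|(I+\epsilon A)x\|_V^2 = x^TVx+\epsilon\, x^T(A^TV+VA)x+\epsilon^2 \|Ax\|_V^2 .
\end{align*}
The middle matrix is computed exactly from the Lyapunov equation $J^TV+VJ=-I$. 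In statement 1 it equals $-I+2\epsilon_2 V$, and in statement 2 it equals $-I+\alpha^{-1}V$. Using $\|x\|^2\ge x^TVx/\lambda^{max}_V=4\iota_V$, the first-order term is at most $\epsilon(-4\iota_V+2\epsilon_2)$, respectively $\epsilon(-4\iota_V+\alpha^{-1})$. The hypotheses then give the following.
\begin{itemize}
\item In statement 1, $\epsilon_2\le\iota_V/3$ gives first-order term $\le -\tfrac{10}{3}\iota_V\epsilon$.
\item In statement 2, $2\iota_V\alpha\ge1$ gives first-order term $\le -2\iota_V\epsilon$.
\end{itemize}

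The second step bounds the quadratic term by $\epsilon^2\|A\|_V^2\le \epsilon^2(\|J\|_V+c)^2$, where $c=\epsilon_2$ or $c=\alpha^{-1}/2$, and then uses the step-size restriction to absorb it into the negative first-order term. The target is $\epsilon\|A\|_V^2\le (\text{negative margin})-\iota_V$.
\begin{itemize}
\item In statement 1 the target is $\epsilon\|A\|_V^2\le \tfrac{7}{3}\iota_V$. I would split $(\|J\|_V+\epsilon_2)^2$ and use $\epsilon\|J\|_V^2\le 2\iota_V$ together with $\epsilon_2\le\iota_V/3$ and $\epsilon\le1$ to control the cross term $2\epsilon\epsilon_2\|J\|_V$ and the term $\epsilon\epsilon_2^2$.
\item In statement 2 the target is $\epsilon\|A\|_V^2\le\iota_V$. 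Using $(a+b)^2\le 2a^2+2b^2$ gives $\epsilon\|A\|_V^2\le \epsilon(4\|J\|_V^2+\alpha^{-2})/2$, and the assumed range of $\epsilon$ controls this.
\end{itemize}

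The main obstacle is the bookkeeping of constants in this absorption step; the structure of the argument is otherwise routine. With the stated ranges, the crude bounds above only give $\epsilon\|A\|_V^2\le 2\iota_V$ in statement 2, which cancels the margin $-2\iota_V\epsilon$ exactly and leaves no room for the $-\iota_V\epsilon$ that the statement requires. Similar tightness appears in statement 1 through the $\|J\|_V^2$ term. To recover the missing slack I would try three things, in this order.
\begin{itemize}
\item Use the sharper inequality $(a+b)^2\le(1+\eta)a^2+(1+\eta^{-1})b^2$ with a tuned $\eta$ in place of $(a+b)^2\le 2a^2+2b^2$.
\item Use the structural bound $\iota_V\lesssim\|J\|_V$, which follows from the Lyapunov equation since $1=\|x\|^2\le 2\|V\|\|J\|$ for unit $x$ in a suitable norm. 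This lets cross terms in $\epsilon_2\|J\|_V$ be dominated by $\|J\|_V^2$.
\item If neither suffices, shrink the admissible step-size range by a constant factor (for example, replacing $2\iota_V$ by $\iota_V$ in the upper limit on $\epsilon_1$) so that the $\|J\|_V^2$ term consumes at most half of the available margin.
\end{itemize}
The conclusion $\|I+\epsilon A\|_V^2\le 1-\iota_V\epsilon$ then follows by taking the supremum over $\|x\|_V=1$.
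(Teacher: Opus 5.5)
Your overall structure matches the paper's: expand the $V$-quadratic form, apply the Lyapunov identity to the first-order term, use $\|x\|^2\ge 4\iota_V$, and absorb the $\epsilon^2$ term with the step-size range. However, the absorption step as you plan it does not close, and none of your three remedies recovers the lemma as stated. The loss comes from bounding the quadratic term through $\|J+cI\|_V\le\|J\|_V+c$, with $c=\epsilon_2$ or $c=\alpha^{-1}/2$. This creates a positive cross term $2\epsilon^2 c\|J\|_V$ that is genuinely too large at the right end of the admissible range.

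For instance, take $J=-2I$, so $V=I/4$, $\iota_V=1$ and $\|J\|_V=2$.
In statement 1 with the admissible choice $\epsilon_1=1/2$, $\epsilon_2=1/3$, your bound gives $1-\tfrac{5}{3}+\tfrac{49}{36}=\tfrac{25}{36}>\tfrac{1}{2}=1-\iota_V\epsilon_1$.
In statement 2 with the admissible choice $\alpha=1/2$, $\epsilon=1/5$, it gives $1-\tfrac{2}{5}+\tfrac{9}{25}=\tfrac{24}{25}>\tfrac{20}{25}$.
Your three remedies fail as follows.
A tuned Young inequality cannot help, because $\min_{\eta>0}[(1+\eta)a^2+(1+\eta^{-1})b^2]=(a+b)^2$ is exactly the triangle-inequality bound.
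The structural bound $\|J\|_V\ge 2\iota_V$ holds with equality in this example, so it adds nothing.
Shrinking the step-size range proves a weaker statement than the one asked.

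The missing idea is to use the Lyapunov equation a second time, inside the quadratic term, instead of the triangle inequality. For $\|x\|_V=1$ and $c\ge 0$,
\[
\|(J+cI)x\|_V^2=\|Jx\|_V^2+c\,x^T(J^TV+VJ)x+c^2=\|Jx\|_V^2-c\|x\|^2+c^2\le \|J\|_V^2+c^2,
\]
so the cross term has a favorable sign and can be dropped.
With this, statement 2 closes exactly: $\epsilon^2\|J^{(\alpha,1)}\|_V^2\le \epsilon^2(4\|J\|_V^2+\alpha^{-2})/4\le \iota_V\epsilon$, which combined with your first-order bound $-2\iota_V\epsilon$ gives $1-\iota_V\epsilon$.
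Statement 1 closes too. Since $\epsilon_1^2\|J\|_V^2\le 2\iota_V\epsilon_1$ and $\epsilon_1^2\epsilon_2^2\le \epsilon_1\epsilon_2\le \iota_V\epsilon_1/3$ (using $\epsilon_1,\epsilon_2\le 1$), the total is at most $1-4\iota_V\epsilon_1+\tfrac{2}{3}\iota_V\epsilon_1+2\iota_V\epsilon_1+\tfrac{1}{3}\iota_V\epsilon_1=1-\iota_V\epsilon_1$.

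The paper gets the same effect by grouping the shift with the identity, writing the matrix as $(1+\epsilon c)I+\epsilon J$. Then the only order-$\epsilon^2$ term involving $J$ is $\epsilon^2J^TVJ$, and the shift enters only through $(1+\epsilon c)^2$. The mixed contribution appears as $-(1+\epsilon c)\epsilon\|x\|^2\le -\epsilon\|x\|^2$, so no positive $c\|J\|_V$ term is ever produced.
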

\begin{proof}
    \begin{enumerate}
        \item Using the definition of matrix norm we have:
        \begin{align*}
            \|I+\epsilon_1 J+\epsilon_1\epsilon_2I\|_V^2&=\max_{\|x\|_{V}= 1}x^T((1+\epsilon_1\epsilon_2)I+\epsilon_1 J)^T V((1+\epsilon_1\epsilon_2)I+\epsilon_1 J)x\\
            &=\max_{\|x\|_{V}= 1} \big((1+\epsilon_1\epsilon_2)^2x^T Vx+(1+\epsilon_1\epsilon_2)\epsilon_1 x^T(J^T V+VJ)x\\
            &~~+\epsilon_1^2x^T J^TVJx\big)\\
            &\leq (1+\epsilon_1\epsilon_2)^2-(1+\epsilon_1\epsilon_2)\epsilon_1\min_{\|x\|_V=1}\|x\|^2+\epsilon_1^2 \max_{\|x\|_V=1}\|Jx\|_V^2\\
            &\leq 1+3\epsilon_1\epsilon_2-(1+\epsilon_1\epsilon_2)\epsilon_1\frac{1}{\|V\|}+\epsilon_1^2\|J\|_V^2.\tag{$\epsilon_1, \epsilon_2\leq 1$}
        \end{align*}
        For $\epsilon_1\in\left[0, \frac{1}{2\|V\|\|J\|_V^2}\right]$ and using the fact that $\epsilon_1, \epsilon_2>0$, we have:
        \begin{align*}
            \|I+\epsilon_1 J+\epsilon_1\epsilon_2I\|_V^2&\leq 1+3\epsilon_1\epsilon_2-\epsilon_1\frac{1}{2\|V\|}\\
            &\leq 1-\epsilon_1\frac{1}{4\|V\|}.
        \end{align*}
        \item For $\xi=1$, we have
        \begin{align*}
            \|I+\epsilon J^{(\alpha, 1)}\|_V^2&=\left\|\left(1+\frac{\epsilon\alpha^{-1}}{2}\right)I+\epsilon J^{(\alpha)}\right\|_V^2\\
            &=\max_{\|x\|_{V}= 1} \Bigg(\left(1+\frac{\epsilon\alpha^{-1}}{2}\right)^2x^T Vx+\left(1+\frac{\epsilon\alpha^{-1}}{2}\right)\epsilon x^T(J^T V+VJ)x\\
            &~~+\epsilon^2x^T J^TVJx\Bigg)\\
            &\leq \left(1+\frac{\epsilon\alpha^{-1}}{2}\right)^2-\left(1+\frac{\epsilon\alpha^{-1}}{2}\right)\epsilon\min_{\|x\|_V=1}\|x\|^2+\epsilon^2\|J\|_V^2\\
            &\leq \left(1+\frac{\epsilon\alpha^{-1}}{2}\right)^2-\epsilon\min_{\|x\|_V=1}\|x\|^2+\epsilon^2\|J\|_V^2\tag{$\epsilon, \alpha>0$}\\
            &\leq 1+\epsilon\alpha^{-1}-\epsilon\frac{1}{\|V\|}+\epsilon^2\left(\|J\|_V^2+\frac{\alpha^{-2}}{4}\right)\\
            &\leq 1-\epsilon\frac{1}{2\|V\|}+\epsilon^2\left(\|J\|_V^2+\frac{\alpha^{-2}}{4}\right)\tag{$2\iota_V>\alpha^{-1}$}
        \end{align*}
        For $\epsilon\in \left[0, \frac{1}{\|V\|\left(4\|J\|_V^2+\alpha^{-2}\right)}\right]$, we have:
        \begin{align*}
            \|I+\epsilon J\|_V^2\leq 1-\epsilon\frac{1}{4\|V\|}.
        \end{align*}
    \end{enumerate}
    
\end{proof}

\subsection{Lemmas for symmetric negative definite matrix}
\begin{lemma}\label{lem:theta_bound_symm}
    Let $\Theta_i = \sqrt{\alpha_i} \prod_{l = i+1}^k (I+\alpha_l J_l^{(\alpha, \xi)}) \text{ and } \Theta_k=\sqrt{\alpha_k}$. Furthermore, suppose $J$ is a symmetric negative definite matrix and $\iota_J$ is the magnitude of the largest eigenvalue of $J$. Then, we have the following bounds:
    \begin{enumerate}
        \item When $\xi=0$ and $\alpha\leq 8\iota_J/9$, we have
        \begin{align*}
            \sum_{i=0}^{k-1}\|\Theta_i\|^{2+\beta}\leq \frac{\alpha^{\beta/2}}{\iota_J}.
        \end{align*}
        \item When $\xi\in (0,1)$ and $K$ is chosen large enough such that $K\geq (1/(\iota_J\alpha))^{1/(1-\xi)}$, $\alpha_k\leq 8\iota_J/9$, and $ 2\alpha^{-1}\xi/K^{1-\xi}\leq \iota_J$, we have
       \begin{align*}
            \sum_{i=0}^{k-1}\|\Theta_i\|^{2+\beta}\leq \frac{2\alpha_k^{\beta/2}}{\iota_J}.
        \end{align*}
        \item When $\xi=1$, $3\left(\iota_J-\frac{\alpha^{-1}}{2}\right)\alpha> 1$, and $K$ is chosen large enough such that $\alpha_k\leq (\iota_J-\alpha^{-1}/2)/2$, we have 
        \begin{align*}
            \sum_{i=0}^{k-1}\|\Theta_i\|^{2+\beta}\leq \frac{4\alpha\alpha_k^{\beta/2}}{6\iota_J\alpha-5}.
        \end{align*}
    \end{enumerate}
\end{lemma}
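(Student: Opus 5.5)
The plan is to mirror the proof of Lemma \ref{lem:theta_bound}, replacing the $V$-weighted norm by the plain operator norm. That substitution is available because a symmetric $J$ makes $I+\alpha_l J_l^{(\alpha,\xi)}$ symmetric, so its norm is just its largest eigenvalue in absolute value.

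\textbf{Step 1: one-step contraction.} Write $\lambda\in[-\|J\|,-\iota_J]$ for the eigenvalues of $J$ (here $\iota_J$ is the smallest magnitude, so that it acts as the decay rate). The eigenvalues of $I+\alpha_l J_l^{(\alpha,\xi)}$ are then
\[
1+\alpha_l\lambda+\frac{\alpha_l\alpha^{-1}\xi}{2(l+K)^{1-\xi}}.
\]
For $\xi<1$, the condition $2\alpha^{-1}\xi/K^{1-\xi}\le\iota_J$ caps the correction term at $\alpha_l\iota_J/4$. The step-size condition keeps every eigenvalue nonnegative, or at least of modulus below one. Together these give
\[
\|I+\alpha_lJ_l^{(\alpha,\xi)}\|\le 1-\tfrac34\iota_J\alpha_l .
\]
Raising to the power $2+\beta\ge 2$ yields the factor $1-\tfrac32\iota_J\alpha_l\le 1-\iota_J\alpha_l$, using $(1-x)^{p}\le 1-px$. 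For $\xi=1$ the correction equals $\alpha^{-1}\alpha_l/2$ exactly, so the norm is at most $1-(\iota_J-\alpha^{-1}/2)\alpha_l$. Raising to the power $2+\beta\ge 2$ gives at most
\[
1-2\Big(\iota_J-\frac{\alpha^{-1}}{2}\Big)\alpha_l .
\]
Because no Lyapunov norm is involved, the condition-number factor $(\lambda_V^{max}/\lambda_V^{min})^{3/2}$ drops out.

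\textbf{Step 2: scalar recursion.} As in Lemma \ref{lem:theta_bound}, define $v_0=0$ and
\[
v_{k+1}=(1-\mu_1\alpha_k)v_k+\alpha_k^{1+\beta/2}.
\]
Then $v_k$ upper bounds $\sum_{i=0}^{k-1}\|\Theta_i\|^{2+\beta}$. Apply Lemma \ref{lem:rec_sol} with $u_0=0$, $\mu_2=0$, $\mu_3=1$ and $\rho_2=\beta/2$.
\begin{itemize}
\item Case $\xi=0$, with $\mu_1=\iota_J$: this gives $\alpha^{\beta/2}/\iota_J$.
\item Case $\xi\in(0,1)$, with $\mu_1=\iota_J$ and $K\ge(1/(\iota_J\alpha))^{1/(1-\xi)}$: this gives $2\alpha_k^{\beta/2}/\iota_J$.
\item Case $\xi=1$: statement 3(a) applies with $\mu_1\alpha>\rho_2$, and gives $\alpha\alpha_k^{\beta/2}/(\mu_1\alpha-\beta/2)$.
\end{itemize}

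\textbf{Main obstacle: the constant in the $\xi=1$ case.} The target denominator $(6\iota_J\alpha-5)/4$ corresponds to $\mu_1\alpha-\tfrac12$ with $\mu_1=\tfrac32(\iota_J-\alpha^{-1}/2)$, since $\tfrac32\iota_J\alpha-\tfrac34-\tfrac12=(6\iota_J\alpha-5)/4$. The hypothesis $3(\iota_J-\alpha^{-1}/2)\alpha>1$ gives $\mu_1\alpha>\tfrac12\ge\beta/2$, so statement 3(a) applies. It therefore suffices to show
\[
\big(1-(\iota_J-\alpha^{-1}/2)\alpha_l\big)^{2+\beta}\le 1-\tfrac32(\iota_J-\alpha^{-1}/2)\alpha_l ,
\]
after absorbing the second-order terms of the expansion into the margin. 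That absorption is exactly where $\alpha_k\le(\iota_J-\alpha^{-1}/2)/2$ is used. I expect this bookkeeping, together with choosing the constants $8\iota_J/9$ and $3/4$ in Step 1, to be the only delicate part; the rest is the mechanical calculation of Lemma \ref{lem:theta_bound}.
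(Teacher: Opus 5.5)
Your route is the paper's. You use symmetry to work in the spectral norm (the paper packages your Step 1 as Lemma \ref{lem:contraction_prop_sym}) and dominate $\sum_i\|\Theta_i\|^{2+\beta}$ by the scalar recursion. You then apply Lemma \ref{lem:rec_sol} with $\mu_1=\iota_J$ for $\xi<1$, and with $\mu_1=\tfrac32(\iota_J-\alpha^{-1}/2)$ and statement 3(a) for $\xi=1$. Your identity $\mu_1\alpha-\tfrac12=(6\iota_J\alpha-5)/4$ is exactly where the stated constant comes from.

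The step that fails as written is the passage from the norm to its $(2+\beta)$-th power. You cite $(1-x)^p\le 1-px$, but for $p\ge 1$ and $x\in[0,1]$ Bernoulli's inequality gives the reverse, $(1-x)^p\ge 1-px$. Consequently two of your factors do not hold uniformly in $\beta\in(0,1)$: the intermediate factor $1-\tfrac32\iota_J\alpha_l$ for $\xi<1$, and the factor $1-2(\iota_J-\alpha^{-1}/2)\alpha_l$ for $\xi=1$. Letting $\beta\downarrow0$ they would give $(1-x)^2\le 1-2x$, which is false since $(1-x)^2=1-2x+x^2$. This is why your last paragraph has to retreat to the constant $\tfrac32$.

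The correct step, as in the paper, starts from the fact that the norm is at most $1$, so $\|I+\alpha_lJ_l^{(\alpha,\xi)}\|^{2+\beta}\le\|I+\alpha_lJ_l^{(\alpha,\xi)}\|^{2}$. Then expand the square and absorb the quadratic term using the step-size restriction. With $c=\iota_J-\alpha^{-1}/2$,
\[
\bigl(1-\tfrac34\iota_J\alpha_l\bigr)^2\le 1-\iota_J\alpha_l\iff \iota_J\alpha_l\le\tfrac89,\qquad (1-c\,\alpha_l)^2\le 1-\tfrac32 c\,\alpha_l\iff c\,\alpha_l\le\tfrac12 .
\]
So in both regimes the step-size hypothesis is what pays for this absorption; for $\xi<1$ you spent it on eigenvalue nonnegativity instead. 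It must also be read as $\iota_J\alpha_k\le 8/9$ and $c\,\alpha_k\le 1/2$. The printed forms $\alpha_k\le 8\iota_J/9$ and $\alpha_k\le c/2$ suffice only if $\iota_J\le 1$, resp.\ $c\le 1$.

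Finally, bounding $\|I+\alpha_lJ_l^{(\alpha,\xi)}\|$ by $1-\tfrac34\iota_J\alpha_l$ (or $1-c\,\alpha_l$) also requires the bottom of the spectrum to be controlled. That needs a condition like $\alpha_l\|J\|\le 1$, which involves the largest eigenvalue magnitude of $J$. It is not implied by the hypotheses, which only involve $\iota_J$, and Lemma \ref{lem:contraction_prop_sym} silently omits it as well, so you should state it explicitly.
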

\begin{proof}
For $\xi<1$, note that $\|\Theta_i\|^{2+\beta}\leq \alpha_i^{(2+\beta)/2}\prod_{l=i+1}^{k-1}\|I+\alpha_lJ_l^{(\alpha, \xi)}\|^{2+\beta}$. Furthermore, $\|I+\alpha_lJ_l^{(\alpha, \xi)}\|^{2+\beta}\leq \|I+\alpha_lJ_l^{(\alpha, \xi)}\|^{2}$. Using Lemma \ref{lem:contraction_prop_sym} and the above relations, the bound simplifies to $\|\Theta_i\|^{2+\beta}\leq \alpha_i^{(2+\beta)/2}\prod_{l=i+1}^{k-1}(1-\iota_J\alpha_l)$. Consider the sequence $\{v_k\}_{k\geq 0}$
    \begin{align*}
        v_0=0;~~v_{k+1}=(1-\iota_J\alpha_k)v_k+\alpha_k^{1+\beta/2}.
    \end{align*}
    Clearly, $v_k\geq \sum_{i=0}^{k-1} \|\Theta_i\|^{2+\beta}$. Applying Lemma \ref{lem:rec_sol} to $v_k$ by setting $u_0=0$, $\mu_1=\iota_V$, $\mu_2=0$, $\mu_3=1$, and $\rho_2=\beta/2$, we obtain the claim. 
    
    For $\xi=1$, we use statement 2 of Lemma \ref{lem:contraction_prop_sym} .
    Following that, the proof for this case proceeds using the exact same steps as before.
    
\end{proof}

\begin{lemma}\label{lem:bound_sigma_k_symm}
    Let $\{\tilde{\Sigma}_k^{(\alpha, \xi)}\}_{k\geq 0}$ be given by Eq. \eqref{eq:sigma_k_eq_pf}. Furthermore, suppose $J$ is a symmetric negative definite matrix and $\iota_J$ is the magnitude of the largest eigenvalue of $J$. Assume that for each specific choice of $\xi$, all the conditions on the step-sizes stated in Lemma \ref{lem:theta_bound_symm} hold. Then, we have the following bounds for all $k\geq 0$:
    \begin{align*}
        \left\|\tilde{\Sigma}_k^{(\alpha, \xi)}\right\|\leq \frac{\|\Sigma_b\|}{\iota_J}.
    \end{align*}
\end{lemma}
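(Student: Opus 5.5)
The plan mirrors the proof of Lemma \ref{lem:bound_sigma_k}, with the $V$-weighted norm replaced by the plain spectral norm. This works because $J$ is symmetric negative definite, so no Lyapunov weighting is needed and no factor $\sqrt{\lambda^{max}_V/\lambda^{min}_V}$ appears.

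Recall the recursion \eqref{eq:sigma_k_eq_pf}:
\begin{align*}
\tilde{\Sigma}_{k+1}^{(\alpha, \xi)}=(I + \alpha_k J^{(\alpha, \xi)}_k)\tilde{\Sigma}_k^{(\alpha, \xi)}(I + \alpha_k J^{(\alpha, \xi)}_k)^T+\alpha_k\Sigma_b,\qquad \tilde{\Sigma}_0^{(\alpha, \xi)}=0.
\end{align*}
Taking the operator norm and using submultiplicativity gives
\begin{align*}
\|\tilde{\Sigma}_{k+1}^{(\alpha, \xi)}\|\le \|I+\alpha_kJ_k^{(\alpha,\xi)}\|^2\|\tilde{\Sigma}_{k}^{(\alpha, \xi)}\|+\alpha_k\|\Sigma_b\|.
\end{align*}
The contraction factor comes from the symmetric contraction lemma (Lemma \ref{lem:contraction_prop_sym}, the one used in Lemma \ref{lem:theta_bound_symm}). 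Under the step-size conditions of Lemma \ref{lem:theta_bound_symm}, it gives $\|I+\alpha_kJ_k^{(\alpha,\xi)}\|^2\le 1-\iota_J\alpha_k$ for $\xi<1$. For $\xi=1$ it gives the analogous bound with $\iota_J$ replaced by the effective rate that statement provides.

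Plugging this in yields the scalar inequality
\begin{align*}
u_{k+1}\le (1-\iota_J\alpha_k)u_k+\iota_J\alpha_k\frac{\|\Sigma_b\|}{\iota_J},\qquad u_k:=\|\tilde{\Sigma}_k^{(\alpha,\xi)}\|,
\end{align*}
which is a convex combination of $u_k$ and $\|\Sigma_b\|/\iota_J$. Induction then closes the argument. The base case is $u_0=0\le\|\Sigma_b\|/\iota_J$. For the step, if $u_k\le\|\Sigma_b\|/\iota_J$, then
\begin{align*}
u_{k+1}\le (1-\iota_J\alpha_k)\|\Sigma_b\|/\iota_J+\alpha_k\|\Sigma_b\|=\|\Sigma_b\|/\iota_J.
\end{align*}
This needs $\iota_J\alpha_k\le1$, which follows from $\alpha_k\le 8\iota_J/9$ together with $\iota_J$ being the spectral magnitude. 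If that is not automatic, I would fall back on the trivial bound $(1-x)_+$.

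The main obstacle is the $\xi=1$ case. There the contraction rate is $\iota_J-\alpha^{-1}/2$ (or a fraction of it), so the convex-combination argument yields $\|\Sigma_b\|/(\text{rate})$ rather than $\|\Sigma_b\|/\iota_J$. To recover the stated constant I would check how statement 2 of Lemma \ref{lem:contraction_prop_sym} is phrased.

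If it does not give exactly $1-\iota_J\alpha_k$, I would diagonalize instead. Write $J=Q\Lambda Q^T$, so that $I+\alpha_kJ_k=Q\,\mathrm{diag}(1+\alpha_k(\lambda_i+\alpha^{-1}/2))\,Q^T$. Then the eigen-decomposition of the squared factor shows the per-direction contraction is at least $\iota_J\alpha_k$ up to the $\alpha^{-1}$ correction. If that correction cannot be absorbed, I would accept stating the $\xi=1$ constant with the effective rate in place of $\iota_J$, as in Lemma \ref{lem:bound_sigma_k}.
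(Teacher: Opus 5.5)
For $\xi<1$ your argument is the paper's argument. You take operator norms, invoke statement 1 of Lemma \ref{lem:contraction_prop_sym} to get $\|\tilde{\Sigma}_{k+1}^{(\alpha,\xi)}\|\le(1-\iota_J\alpha_k)\|\tilde{\Sigma}_k^{(\alpha,\xi)}\|+\alpha_k\|\Sigma_b\|$, and induct from $\tilde{\Sigma}_0^{(\alpha,\xi)}=0$. One small correction: $\iota_J\alpha_k\le 1$ does not follow from $\alpha_k\le 8\iota_J/9$, which only gives $\iota_J\alpha_k\le 8\iota_J^2/9$. It is nevertheless automatic, because $0\le\|I+\alpha_kJ_k^{(\alpha,\xi)}\|^2\le 1-\iota_J\alpha_k$, so no $(1-x)_+$ fallback is needed.

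For $\xi=1$ the paper's proof is only the sentence ``use statement 2 of Lemma \ref{lem:contraction_prop_sym}'', and the obstacle you identified is real.
\begin{itemize}
\item That statement gives the rate $\tfrac32(\iota_J-\alpha^{-1}/2)$. The induction then yields $\|\tilde{\Sigma}_k^{(\alpha,1)}\|\le\|\Sigma_b\|/\bigl(\tfrac32(\iota_J-\alpha^{-1}/2)\bigr)$, which is at most $\|\Sigma_b\|/\iota_J$ only if $\iota_J\alpha\ge 3/2$. Lemma \ref{lem:theta_bound_symm}, however, only assumes $\iota_J\alpha>5/6$.
\item No refinement can close this, because the stated bound is false in that regime. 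Take $d=1$, $J=-\iota_J$, $\Sigma_b=1$ and $\iota_J\alpha\in(5/6,1)$, and set $r=\iota_J-\alpha^{-1}/2>0$. The recursion is $\tilde{\Sigma}_{k+1}=(1-r\alpha_k)^2\tilde{\Sigma}_k+\alpha_k$, which converges to $1/(2r)=1/(2\iota_J-\alpha^{-1})>1/\iota_J$. For example, $\iota_J=1$, $\alpha=0.9$ gives the limit $1.125$.
\end{itemize}
So the defect lies in the statement, and your repair is the correct one: state the $\xi=1$ case with the effective rate, or add the hypothesis $\iota_J\alpha\ge 3/2$.

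Your diagonalization gives the sharp form. The factors $I+\alpha_lJ_l^{(\alpha,1)}$ are symmetric and commute, so $\tilde{\Sigma}_k^{(\alpha,1)}\preceq\|\Sigma_b\|\sum_i\Theta_i^2$. This reduces the bound to the scalar recursion along the eigenvalue $-\iota_J$, with limit $\|\Sigma_b\|/(2\iota_J-\alpha^{-1})$.

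The analogy with Lemma \ref{lem:bound_sigma_k} does not carry over. There $\iota_V=1/(4\|V\|)$ is a quarter of the natural rate, and statement 2 of Lemma \ref{lem:contraction_prop} uses this slack (via $2\iota_V\alpha\ge1$) to absorb the $\alpha^{-1}/2$ shift. In the symmetric case $\iota_J$ is the full rate and leaves no slack. Nothing downstream uses the exact constant, so the corrected bound costs nothing.
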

\begin{proof}
    For $\xi<1$, taking matrix norm both sides and applying statement 1 of Lemma \ref{lem:contraction_prop_sym}, we obtain
    \begin{align*}
        \|\tilde{\Sigma}_{k}^{(\alpha, \xi)}\|&\leq (1 - \iota_J\alpha_{k-1})\|\tilde{\Sigma}_{k-1}^{(\alpha, \xi)}\|+\alpha_{k-1}\|\Sigma_b\|\\
        \implies \|\tilde{\Sigma}_{k}^{(\alpha, \xi)}\|&\leq (1 - \iota_J\alpha_{k-1})\|\tilde{\Sigma}_{k-1}^{(\alpha, \xi)}\|+\iota_J\alpha_{k-1}\frac{\|\Sigma_b\|}{\iota_J}.
    \end{align*}
    It is straightforward to show that $\|\tilde{\Sigma}_k^{(\alpha, \xi)}\|\leq \Sigma_b/\iota_J$ using induction. For $\xi=1$, we use statement 2 of Lemma \ref{lem:contraction_prop_sym} to obtain the bounds.
\end{proof}

\begin{lemma}\label{lem:matrix_theta_bound_symm}
    Let $\{\Upsilon_k^{(\alpha, \xi)}\}_{k\geq 0}$ be given by 
    \begin{align*}
        \Upsilon_{k+1}^{(\alpha, \xi)}=(I + \alpha_k J^{(\alpha, \xi)}_k)\Upsilon_k^{(\alpha, \xi)}(I + \alpha_k J^{(\alpha, \xi)}_k)^T+\alpha_k\Sigma_b
    \end{align*}
    where $\Upsilon_0^{(\alpha, \xi)}$ is arbitrary. Furthermore, suppose $J$ is a symmetric negative definite matrix and $\iota_J$ is the magnitude of the largest eigenvalue of $J$. Assume that for each specific choice of $\xi$, all the conditions on the step-sizes stated in Lemma \ref{lem:theta_bound_symm} hold. Then, we have the following bounds for all $k\geq 0$:
    \begin{enumerate}
        \item When $\xi=0$ and $\alpha\leq 8\iota_J/9$, we have
        \begin{align*}
            \left\|\Upsilon_k^{(\alpha, 0)}-\Sigma\right\|\leq \|\Upsilon_0^{(\alpha, \xi)}-\Sigma\|e^{-\iota_J\alpha k}+\frac{\|J_F\|^2\|\Sigma\|\alpha}{\iota_J}.
        \end{align*}
        \item When $\xi\in (0,1)$ and $K$ is chosen large enough such that $K\geq (1/(\iota_V\alpha))^{1/(1-\xi)}$, $\alpha_k\leq 8\iota_J/9$, and $2\alpha^{-1}\xi K^{\xi-1}\leq \iota_J$, we have
       \begin{align*}
            \left\|\Upsilon_k^{(\alpha, \xi)}-\Sigma\right\|&\leq \|\Upsilon_0^{(\alpha, \xi)}-\Sigma\|e^{-\frac{\iota_J\alpha}{1-\xi}\left((k+K)^{1-\xi}-K^{1-\xi}\right)}+\frac{2\left(\|J_F\|+\frac{\alpha^{-1}}{2}\right)^2\|\Sigma\|\alpha_k}{\iota_J}+\frac{2\alpha^{-1}\|\Sigma\|}{(k+K)^{1-\xi}\iota_J}.
        \end{align*}
        \item When $\xi=1$, $\frac{3}{2}\left(\iota_J-\frac{\alpha^{-1}}{2}\right)\alpha> 1$, and $K$ is chosen large enough such that $\alpha_k\leq (\iota_J-\alpha^{-1}/2)/2$, we have 
        \begin{align*}
            \left\|\Upsilon_k^{(\alpha, 1)}-\Sigma^{(\alpha)}\right\|\leq \|\Upsilon_0^{(\alpha, \xi)}-\Sigma^{(\alpha)}\|\left(\frac{K}{k+K}\right)^{\iota_J\alpha}+\frac{\alpha\|\left(\|J_F\|+\frac{\alpha^{-1}}{2}\right)^2\|\Sigma^{(\alpha)}\|\alpha_k}{\iota_J\alpha-1}.
        \end{align*}
    \end{enumerate}
\end{lemma}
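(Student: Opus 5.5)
The plan mirrors the proof of Lemma \ref{lem:matrix_theta_bound}, but uses the plain operator norm, since $J$ is symmetric negative definite and Lemma \ref{lem:contraction_prop_sym} gives $\|I+\alpha_k J_k^{(\alpha,\xi)}\|\le 1-\iota_J\alpha_k$ (or the $\xi=1$ analogue with $\iota_J-\alpha^{-1}/2$) under the stated step-size conditions.

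\textbf{Step 1: error recursion.} Subtract the Lyapunov identity \eqref{eq:lyap_eq} from the $\Upsilon$ recursion. Expanding $(I+\alpha_kJ_k)\Sigma(I+\alpha_kJ_k)^T$ and using $J\Sigma+\Sigma J^T+\Sigma_b=0$ gives
\begin{align*}
\Upsilon_{k+1}^{(\alpha,\xi)}-\Sigma^{(\alpha,\xi)}=(I+\alpha_kJ_k^{(\alpha,\xi)})(\Upsilon_k^{(\alpha,\xi)}-\Sigma^{(\alpha,\xi)})(I+\alpha_kJ_k^{(\alpha,\xi)})^T+\alpha_k^2J_k^{(\alpha,\xi)}\Sigma^{(\alpha,\xi)}(J_k^{(\alpha,\xi)})^T+\frac{\xi\mathbbm{1}_{\xi<1}}{k+K}\Sigma^{(\alpha,\xi)}.
\end{align*}
The last term comes from the correction $\alpha_k(J_k-J)=\xi/(2(k+K))$, which appears twice.

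\textbf{Step 2: scalar recursion.} Take operator norms and use submultiplicativity. Since the contraction bounds the squared norm, $\|I+\alpha_kJ_k\|^2\le 1-\iota_J\alpha_k$, and this factor enters only once even though it appears on both sides of the error. Bound $\|J_k^{(\alpha,\xi)}\|\le \|J_F\|+\alpha^{-1}/2$; for $\xi=0$ this is just $\|J_F\|$. Finally write $\xi/(k+K)\le \alpha^{-1}\alpha_k\cdot (k+K)^{\xi-1}$, which is $\alpha_k^{1+\rho_2}$ with $\rho_2=1/\xi-1$ up to constants. This yields
\begin{align*}
u_{k+1}\le(1-\iota_J\alpha_k)u_k+\left(\|J_F\|+\tfrac{\alpha^{-1}}{2}\right)^2\|\Sigma\|\alpha_k^{2}+\alpha^{-1}\|\Sigma\|\alpha_k(k+K)^{\xi-1}.
\end{align*}

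\textbf{Step 3: solve with Lemma \ref{lem:rec_sol}.} Apply the lemma with $\mu_1=\iota_J$ and $\mu_2=0$, using linearity to treat the two forcing terms separately.
\begin{enumerate}
\item For $\xi=0$ only the $\rho_2=1$ term is present, giving $\|J_F\|^2\|\Sigma\|\alpha/\iota_J$.
\item For $\xi\in(0,1)$, statement 2 gives the factor $2/\iota_J$ for each term. The first yields $\alpha_k$. The second, with $\alpha_k^{\rho_2}=\alpha^{\rho_2}(k+K)^{\xi-1}$, yields $2\alpha^{-1}\|\Sigma\|(k+K)^{\xi-1}/\iota_J$ after absorbing $\alpha$ powers; it can also be handled directly by the same induction as in Lemma \ref{lem:rec_sol}.
\item For $\xi=1$, statement 3(a) gives $\alpha\alpha_k/(\iota_J\alpha-1)$.
\end{enumerate}
In each case the transient term is $u_0$ times the product bound.

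\textbf{Main obstacle.} The delicate part is $\xi=1$, where the effective contraction rate is $\iota_J-\alpha^{-1}/2$ rather than $\iota_J$. The hypothesis $\tfrac32(\iota_J-\alpha^{-1}/2)\alpha>1$ has to be matched to the condition $\mu_1\alpha>\rho_2=1$ in Lemma \ref{lem:rec_sol}(3a). I would first check exactly which rate statement 2 of Lemma \ref{lem:contraction_prop_sym} delivers, likely $\tfrac32(\iota_J-\alpha^{-1}/2)$. I would then adjust the constants so that the displayed $\iota_J$ in the exponent and in the denominator $\iota_J\alpha-1$ are consistent, possibly replacing them by that effective rate.
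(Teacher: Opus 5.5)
Your plan is the paper's proof. It uses the same error recursion obtained from the Lyapunov identity and the same operator-norm contraction from Lemma~\ref{lem:contraction_prop_sym}. It also makes the same uses of Lemma~\ref{lem:rec_sol}: once with $\rho_2=1$ for $\xi=0$, twice with $\rho_2=1$ and $\rho_2=1/\xi-1$ for $\xi\in(0,1)$, and case 3(a) for $\xi=1$. The $\xi=1$ mismatch you flag is real, and the paper does not resolve it. Its proof simply writes the factor $(1-\iota_V\alpha_k)$ while citing statement 2 of Lemma~\ref{lem:contraction_prop_sym}, which only gives the rate $\tfrac32(\iota_J-\alpha^{-1}/2)$.

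Your fallback of replacing $\iota_J$ by that effective rate would prove a different statement. When $\iota_J\alpha\ge 3/2$ it is harmless, since then $\tfrac32(\iota_J-\alpha^{-1}/2)\ge\iota_J$. For $7/6<\iota_J\alpha<3/2$, however, it gives a strictly weaker bound: slower transient and a larger constant.

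You can instead recover the displayed constants exactly. Set $a=\iota_J-\alpha^{-1}/2$ and start from the identity $\|I+\alpha_kJ^{(\alpha,1)}_k\|=1-a\alpha_k$ used in the proof of Lemma~\ref{lem:contraction_prop_sym}. Take $K$ large enough that $a\alpha_k\le 1/4$; this is a slightly stronger requirement on $K$ than the one displayed. Then $(1-a\alpha_k)^2\le 1-\tfrac74 a\alpha_k$. Moreover, $\tfrac74 a\ge \iota_J$ is equivalent to $\iota_J\alpha\ge 7/6$, which is exactly the hypothesis $\tfrac32(\iota_J-\alpha^{-1}/2)\alpha>1$. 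So the contraction factor $1-\iota_J\alpha_k$ is legitimate. Now apply Lemma~\ref{lem:rec_sol}(3a) with $\mu_1=\iota_J$, $\rho_2=1$ and $\mu_3=(\|J_F\|+\alpha^{-1}/2)^2\|\Sigma^{(\alpha)}\|$; this is valid because $\iota_J\alpha>7/6>1$. It yields precisely the stated bound, with exponent $\iota_J\alpha$ and denominator $\iota_J\alpha-1$.
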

\begin{proof}
    Recall that $\Upsilon_k^{(\alpha, \xi)}$ is given by
    \begin{align*}
        \Upsilon_{k+1}^{(\alpha, \xi)}&=(I + \alpha_k J^{(\alpha, \xi)}_k)\Upsilon_k^{(\alpha, \xi)}(I + \alpha_{k} J^{(\alpha, \xi)}_k)^T+\alpha_{k}\Sigma_b\\
        &=\sum_{i=0}^{k}\Theta_i\Sigma_b\Theta_i^T\\
        \implies \Upsilon_{k+1}^{(\alpha, \xi)}-\Sigma^{(\alpha, \xi)}&=(I + \alpha_{k} J^{(\alpha, \xi)}_k)(\Upsilon_k^{(\alpha, \xi)}-\Sigma^{(\alpha, \xi)})(I + \alpha_{k} J^{(\alpha, \xi)}_k)^T+\alpha_{k}^2J^{(\alpha, \xi)}_k\Sigma^{(\alpha, \xi)} (J^{(\alpha, \xi)}_k)^T\\
        &~+\frac{\xi\mathbbm{1}_{\xi<1}}{(k+K)}\Sigma^{(\alpha, \xi)}.
    \end{align*}
    where for the last equation, we used the fact that $\Sigma^{(\alpha, \xi)}$ is the solution to Lyapunov equation \eqref{eq:lyap_eq}. 
    \begin{enumerate}
        \item $\xi=0$: In this case, the last term in the above equation is 0. Thus, using the statement 1 Lemma \ref{lem:contraction_prop_sym}, we have
        \begin{align*}
            \|\Upsilon_{k+1}^{(\alpha, 0)}-\Sigma\|&\leq (1-\iota_J\alpha)\|\Upsilon_k^{(\alpha, 0)}-\Sigma\|+\alpha^2\|J_F\Sigma J_F^T\|.
        \end{align*}
        Now it is easy to apply Lemma \ref{lem:rec_sol} by setting $u_0=\|\Upsilon_0^{(\alpha, \xi)}-\Sigma\|, \mu_1=\iota_J$, $\mu_2=0$, $\mu_3=\|J_F\|^2\|\Sigma\|$, and $\rho_2=1$.
        \item $\xi\in (0, 1)$: In this case, the last term in non-zero and needs to be considered in the analysis. Again using statement 1 of Lemma \ref{lem:contraction_prop_sym}, we have
        \begin{align*}
            \|\Upsilon_{k+1}^{(\alpha, \xi)}-\Sigma\|&\leq (1-\iota_J\alpha_{k})\|\Upsilon_k^{(\alpha, \xi)}-\Sigma\|_V+\alpha_{k}^2\|J^{(\alpha, \xi)}_k\Sigma (J^{(\alpha, \xi)}_k)^T\|+\alpha_k^{\frac{1}{\xi}}\alpha^{-1}\|\Sigma\|.
        \end{align*}
         Now we just need to apply Lemma \ref{lem:rec_sol} twice by setting $\rho_2=1$ and $\rho_2=1/\xi-1$ and then combine the upper bounds to get the claim.
        
        \item $\xi=1$: In this case, the last term is also 0. Thus, using the statement 2 Lemma \ref{lem:contraction_prop_sym}, we have
        \begin{align*}
            \|\Upsilon_{k+1}^{(\alpha, 1)}-\Sigma^{(\alpha)}\|&\leq (1-\iota_V\alpha_{k})\|\Upsilon_{k}^{(\alpha, 1)}-\Sigma^{(\alpha)}\|+\alpha_{k}^2\|J_F^{(\alpha)}\Sigma^{(\alpha)} (J_F^{(\alpha)})^T\|.
        \end{align*}
        We again apply Lemma \ref{lem:rec_sol} by setting the appropriate constants to get the claim.
    \end{enumerate}
\end{proof}

\begin{lemma}\label{lem:contraction_prop_sym}
Suppose that $J$ is a symmetric negative definite matrix. Then, we have the following:
\begin{enumerate}
    \item When $\xi<1$ and $K$ is chosen large enough such that $\iota_J\geq 2\alpha^{-1}\xi/K^{1-\xi}$ and $\alpha_k\leq 8\iota_J/9$, we have
    \begin{align*}
        \|I+\alpha_lJ_l^{(\alpha, \xi)}\|^{2}\leq 1-\iota_J\alpha_k.
    \end{align*}
    \item When $\xi=1$ and $K$ is chosen large enough such that $\alpha_l\leq (\iota_J-\alpha^{-1}/2)/2$, we have
    \begin{align*}
        \|I+\alpha_lJ_l^{(\alpha, 1)}\|^{2}&\leq 1-\frac{3}{2}\left(\iota_J-\frac{\alpha^{-1}}{2}\right)\alpha_k.
    \end{align*}
\end{enumerate}
\end{lemma}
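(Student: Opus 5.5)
This is a contraction estimate in the spectral norm: for symmetric $J$ the operator norm of a symmetric matrix equals its largest eigenvalue in absolute value, so everything reduces to a scalar inequality on the eigenvalues. I write $J_l^{(\alpha,\xi)}=J+\epsilon_l I$ with $\epsilon_l=\alpha^{-1}\xi/(2(l+K)^{1-\xi})$. Then $I+\alpha_lJ_l^{(\alpha,\xi)}$ is symmetric, and its eigenvalues are $1+\alpha_l(\lambda+\epsilon_l)$ as $\lambda$ ranges over the spectrum of $J$. Here $\lambda\in[-\|J\|,-\iota_J]$, where $\iota_J$ is the smallest magnitude of the eigenvalues; this is the convention under which the bound can hold. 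Hence
\begin{align*}
\|I+\alpha_lJ_l^{(\alpha,\xi)}\|^2=\max_{\lambda}\bigl(1+\alpha_l(\lambda+\epsilon_l)\bigr)^2,
\end{align*}
and it suffices to bound each $(1+\alpha_l\mu)^2$ with $\mu=\lambda+\epsilon_l$.

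For statement 1, the assumption $\iota_J\ge 2\alpha^{-1}\xi/K^{1-\xi}$ gives $\epsilon_l\le \iota_J/4$, so $\mu\le -\tfrac34\iota_J$. Expanding, $(1+\alpha_l\mu)^2=1+2\alpha_l\mu+\alpha_l^2\mu^2$. At the end closest to zero, $\mu\approx-\iota_J$, this is at most $1-\tfrac32\iota_J\alpha_l+\alpha_l^2\iota_J^2$, and the condition $\alpha_l\le 8\iota_J/9$ is used to absorb the quadratic term and leave $1-\iota_J\alpha_l$.

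At the other end, $\mu$ very negative, the quadratic is no longer dominated by the linear term. I will use convexity of $\mu\mapsto(1+\alpha_l\mu)^2$, so its maximum over the spectral interval sits at an endpoint. To control the left endpoint I will need $\alpha_l\|J\|\lesssim 1$, which should follow from $K$ large (or $\alpha$ small), since $\iota_J\le\|J\|$. The stated condition $\alpha_k\le 8\iota_J/9$ looks dimensionally odd, so I expect to tacitly rely on "$K$ large enough" and to reinterpret constants where the inequality needs it. This is the step I expect to be the main obstacle: making the endpoint analysis match exactly the stated constants. Finally, monotonicity $\alpha_l\ge\alpha_k$ for $l\le k$ converts $1-\iota_J\alpha_l$ into the form written, as it is applied inside the products over $l$.

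For statement 2, $\epsilon=\alpha^{-1}/2$ is constant, so $\mu=\lambda+\alpha^{-1}/2\le-(\iota_J-\alpha^{-1}/2)=:-c$ with $c>0$. The same expansion gives $(1+\alpha_l\mu)^2\le 1-2c\alpha_l+c^2\alpha_l^2$ at the near endpoint. Using $\alpha_l\le c/2$ this is at most $1-2c\alpha_l+\tfrac12c\alpha_l$, and with a little slack (e.g.\ $c\alpha_l\le 1/2$) this yields $1-\tfrac32c\alpha_l$. The far endpoint is again handled by convexity together with $K$ large.
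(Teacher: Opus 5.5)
Your plan is essentially the paper's argument: the shift is at most $\iota_J/4$, so every eigenvalue of $J_l^{(\alpha,\xi)}$ sits below $-\tfrac34\iota_J$, and then the quadratic term is absorbed. Your reinterpretation of the step-size conditions as $\alpha_l\iota_J\le 8/9$ and $c\,\alpha_l\le 1/2$, plus an explicit bound such as $\alpha_l\|J\|\le 1$ for the most negative eigenvalue, is what that argument actually requires, since the paper just asserts $\|I+\alpha_lJ_l^{(\alpha,\xi)}\|^2\le(1-\tfrac34\iota_J\alpha_k)^2$ and never checks the far end of the spectrum.

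One small fix: to reach $1-\iota_J\alpha_l$ with the constant $8/9$, bound the square directly by $(1-\tfrac34\iota_J\alpha_l)^2=1-\tfrac32\iota_J\alpha_l+\tfrac{9}{16}\iota_J^2\alpha_l^2$ (valid once $1+\alpha_l\mu\ge 0$), as the paper does. Bounding with $\mu^2\le\iota_J^2$ instead would force $\alpha_l\iota_J\le 1/2$.
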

\begin{proof}
    \begin{enumerate}
        \item Recall 
        \begin{align*}
            J_l^{(\alpha, \xi)}=J+\frac{\alpha^{-1}\xi}{2(l+K)^{1-\xi}}I.
        \end{align*}
        Since $K$ is chosen large enough such that $\iota_J\geq 2\alpha^{-1}\xi/K^{1-\xi}$, we have 
        \begin{align*}
            \|I+\alpha_lJ_l^{(\alpha, \xi)}\|^{2}&\leq \left(1-\frac{3\iota_J\alpha_k}{4}\right)^2\\
            &= 1-\frac{3\iota_J\alpha_k}{2}+\frac{9\iota_J^2\alpha_k^2}{16}.
        \end{align*}
        Recall that $\alpha_k$ is chosen such that $\alpha_l\leq 8\iota_J/9$. Thus,
        \begin{align*}
            \|I+\alpha_lJ_l^{(\alpha, \xi)}\|^{2}
            &\leq 1-\iota_J\alpha_k.
        \end{align*}
        \item Note that in this case
        \begin{align*}
            J_l^{(\alpha, 1)}=J+\frac{\alpha^{-1}}{2}I.
        \end{align*}
        Thus, we have
        \begin{align*}
            \|I+\alpha_lJ_l^{(\alpha, 1)}\|^{2}&= \left(1-\left(\iota_J-\frac{\alpha^{-1}}{2}\right)\alpha_k\right)^2\\
            &= 1-(2\iota_J-\alpha^{-1})\alpha_k+\left(\iota_J-\frac{\alpha^{-1}}{2}\right)^2\alpha_k^2.
        \end{align*}
        Recall that $K$ is chosen large enough such that $\alpha_l\leq (\iota_J-\alpha^{-1}/2)/2$. Thus,
        \begin{align*}
            \|I+\alpha_lJ_l^{(\alpha, 1)}\|^{2}&\leq 1-\frac{3}{2}\left(\iota_J-\frac{\alpha^{-1}}{2}\right)\alpha_k.
        \end{align*}
    \end{enumerate}
\end{proof}
\end{document}